\documentclass[journal,onecolumn,draftcls]{IEEEtran}

\usepackage[utf8]{inputenc}
\usepackage{amsmath,amsthm, amssymb, amsfonts}
\usepackage{xcite}
\usepackage{todonotes}
\usepackage{kantlipsum}
\usepackage{tikz}
\usetikzlibrary{arrows}
\usepackage{wrapfig}
\usepackage{paralist}
\usepackage{verbatim}
\usepackage{fullpage}
\usepackage{relsize}
\usepackage{hyperref}
\usepackage{graphicx}
\usepackage{epstopdf}
\usepackage{stmaryrd}
\usepackage{wrapfig}
\usepackage[]{algorithm2e}
\usepackage{nicefrac}

\raggedbottom
\allowdisplaybreaks[1]
\usepackage{sistyle} 
\newcommand\bSI[1]{{\small[\SI{}{#1}]}}

\usepackage{pgfplots}
\usepgfplotslibrary{groupplots}
\usetikzlibrary{arrows.meta}
\usepackage{caption}

\makeatletter
\newlength\unitwdth
\newlength\numwdth
\settowidth\unitwdth{\bSI{m^3.kg^{-1}.s^{-2}}~}
\settowidth\numwdth{Subsect 99}
\newlength\tdima
\newcommand\SIdescr[2]{%
    \setlength\tdima{\linewidth}%
    \addtolength\tdima{\@totalleftmargin}%
    \addtolength\tdima{-\dimen\@curtab}%
    \addtolength\tdima{-\unitwdth}%
    \addtolength\tdima{-\numwdth}%
    \parbox[t]{\tdima}{%
        #1
        \leaders\hbox{$\m@th\mkern \@dotsep mu\hbox{\tiny.}\mkern \@dotsep mu$}%
        \hfill
        \ifhmode\strut\fi
        \makebox[0pt][l]{%
            \makebox[\unitwdth][l]{}%
            \makebox[\numwdth][r]{#2}}}}
\makeatother

\newcommand{\Wcal}{\mathcal{W}}

\newcommand{\Z}{\mathbb{Z}}
\newcommand{\N}{\mathbb{N}}

\newcommand{\R}{\mathbb{R}}

\newcommand{\C}{\mathbb{C}}

\newcommand{\cC}{\mathcal{C}}

\newcommand{\supp}{\mathrm{supp}}



\def\diag{{\text{\rm diag}}}

\newtheorem{theorem}{Theorem}[section]
\newtheorem{remark}[theorem]{Remark}
\newtheorem{definition}[theorem]{Definition}
\newtheorem{proposition}[theorem]{Proposition}
\newtheorem{lemma}[theorem]{Lemma}
\newtheorem{corollary}[theorem]{Corollary}

\DeclareMathOperator{\spann}{span \,}

\usepackage{footnote}
\makesavenoteenv{tabular}
\makesavenoteenv{table}

\usepackage{cleveref}
\renewcommand{\epsilon}{\varepsilon}
\newcommand{\eps}{\epsilon}
\newcommand{\Ss}{\mathcal{S}}
\newcommand{\F}{\mathcal{F}}
\newcommand{\G}{\mathcal{G}}
\newcommand{\RE}{\mathrm{Re}}
\newcommand{\IM}{\mathrm{Im}}
\newcommand{\M}{\mathcal{M}}
\renewcommand{\L}{\mathcal{L}}
\newcommand{\cN}{\mathcal{N}}
\newcommand{\cL}{\mathcal{L}}
\newcommand{\cM}{\mathcal{M}}
\newcommand{\cW}{\mathcal{W}}
\newcommand{\cB}{\mathcal{B}}

\title{Deep Neural Network Approximation Theory}
\pgfplotsset{compat=1.16}

\begin{document}

\author{Dennis Elbr\"achter, Dmytro Perekrestenko, Philipp Grohs, and Helmut B\"olcskei%
\thanks{D. Elbr\"achter is with the Department of Mathematics, University of Vienna, Austria (e-mail: dennis.elbraechter@univie.ac.at).}%
\thanks{D. Perekrestenko and H. B\"olcskei are with the Chair for Mathematical Information Science, ETH Zurich, Switzerland (e-mail: pdmytro@mins.ee.ethz.ch,\, hboelcskei@ethz.ch).}%
\thanks{P. Grohs is with the Department of Mathematics and the Research Platform DataScience@UniVienna, University of Vienna, Austria (e-mail: philipp.grohs@univie.ac.at).}
\thanks{D. Elbr\"achter was supported through the FWF projects P 30148 and I 3403 as well as the WWTF project ICT19-041.}}

\maketitle
\thispagestyle{empty}

\begin{abstract}

This paper develops fundamental limits of deep neural network learning by characterizing what is possible if no constraints are imposed on the learning algorithm and on the amount of training data. Concretely, we consider
Kolmogorov-optimal approximation through deep neural networks with
the guiding theme being a relation between the complexity of the function (class) to be approximated and the complexity of the approximating network in terms of connectivity and memory requirements for storing the network topology and the associated quantized weights. 
The theory we develop establishes that deep networks
are Kolmogorov-optimal approximants for markedly different function classes, such as unit balls in Besov spaces and modulation spaces.
In addition, deep networks
provide exponential approximation accuracy---i.e., the approximation error decays exponentially in the number of nonzero weights in the network---of the
multiplication operation, polynomials, sinusoidal functions, and certain smooth functions. Moreover, this holds true even for one-dimensional oscillatory textures and the Weierstrass function---a fractal function, neither of which has previously known methods achieving exponential approximation accuracy.
We also show that in the approximation of sufficiently smooth functions finite-width deep networks require strictly smaller connectivity than finite-depth wide networks.

\end{abstract}

\section{Introduction}

Triggered by the availability of vast amounts of training data and drastic improvements in computing power, deep neural networks have become state-of-the-art technology for a wide range of practical machine learning tasks such as
image classification \cite{Krizhevsky2012Imagenet}, handwritten digit recognition \cite{lecun:1995MNIST}, speech recognition \cite{Hint2012acoustic}, or game intelligence \cite{David2016Go}. For an in-depth overview, we refer to the survey paper \cite{LeCun2015DeepLearning} and the recent book \cite{Goodfellow-et-al-2016}.

A neural network effectively implements a mapping 
approximating a function that is learned based on a given set of input-output value pairs,
typically through the backpropagation algorithm \cite{Rumelhart1988Backpropagation}. 
Characterizing the fundamental limits of approximation through neural networks shows what is possible if no constraints are imposed on the learning algorithm and on the amount of training data~\cite{Anthony1998}.

The theory of function approximation through neural networks has a long history dating back to the work by McCulloch and Pitts \cite{MP43} and the seminal paper by Kolmogorov \cite{Kolmogorov1957}, who showed, when interpreted in neural network parlance, that any continuous function of $n$ variables can be represented exactly through a $2$-layer neural network of width $2n+1$. However, the nonlinearities in Kolmogorov's neural network are highly nonsmooth and the outer nonlinearities, i.e., those in the output layer, depend on the function to be represented. In modern neural network theory, one is usually interested in networks with nonlinearities that are independent of the function to be realized and exhibit, in addition, certain smoothness properties. Significant progress in understanding the approximation capabilities of such networks has been made in \cite{Cybenko1989, Hornik1991251}, where it was shown that
single-hidden-layer neural networks can approximate continuous functions on bounded domains arbitrarily well, provided that the activation function satisfies
certain (mild) conditions and the number of nodes is allowed to grow arbitrarily large. In practice one is, however, often interested in approximating functions from a given function class 
$\mathcal{C}$ determined by the application at hand. It is therefore natural to ask how the complexity of a neural network approximating every function in $\mathcal{C}$ to within a prescribed accuracy depends on the complexity of $\mathcal{C}$ (and on the desired approximation accuracy). The recently developed Kolmogorov-Donoho rate-distortion theory for neural networks \cite{boelcskei:2017DNN} formalizes this question by relating the complexity of 
$\mathcal{C}$---in terms of the number of bits needed to describe any element in $\mathcal{C}$ to within prescribed accuracy---to network complexity in terms of connectivity and memory requirements for storing the network topology and the associated quantized weights.
The theory is based on a framework for quantifying the fundamental limits of nonlinear approximation through dictionaries as introduced by Donoho~\cite{DONOHO1993100,Donoho1996}.

The purpose of this paper is to provide a comprehensive, principled, and self-contained introduction to Kolmogorov-Donoho rate-distortion optimal approximation through deep neural networks.
The idea is to equip the reader with a 
working knowledge of the mathematical tools underlying the theory at a level that is sufficiently deep to enable further research in the field. 
Part of this paper is based on \cite{boelcskei:2017DNN}, but extends the theory therein to the rectified linear unit (ReLU) activation function and to networks with depth scaling in the approximation error. 

The theory we develop educes remarkable universality properties of finite-width deep networks. Specifically, deep networks
are Kolmogorov-Donoho optimal approximants for vastly different function classes such as unit balls in Besov spaces \cite{mallat_wavelet_tour} and modulation spaces \cite{grochenig2013foundations}. 
This universality is afforded by a concurrent invariance property of deep networks to time-shifts, scalings, and frequency-shifts. In addition, deep networks provide exponential approximation accuracy---i.e., the approximation error decays exponentially in the number of parameters employed in the approximant, namely the number of nonzero weights in the network---for vastly different functions such as the squaring operation, multiplication, 
polynomials, sinusoidal functions, general smooth functions, and even one-dimensional oscillatory textures \cite{demanet2007wave} and the Weierstrass function---a fractal function, neither of which has known methods achieving exponential approximation accuracy. 

While we consider networks based on the ReLU\footnote{ReLU stands for the Rectified Linear Unit nonlinearity defined as $x\mapsto\max\{0,x\}$.} activation function throughout, certain parts of our theory carry over to strongly sigmoidal activation functions of order $k\ge2$ as defined in \cite{boelcskei:2017DNN}. For the sake of conciseness, we refrain from providing these extensions.

\vspace{0.3cm}
\textit{Outline of the paper.} In Section~\ref{sec:setup}, we introduce notation, formally define neural networks, and record basic elements needed in the neural network constructions throughout the paper. Section~\ref{func-mult} presents an algebra of function approximation by neural networks.
In Section~\ref{sec:approximation_theory}, we develop the Kolmogorov-Donoho rate-distortion framework that will allow us to characterize the fundamental limits of deep neural network learning of function classes. This theory is based on the concept of metric entropy, which is introduced and reviewed starting from first principles. Section~\ref{sec:approx-rep-systems} then puts the Kolmogorov-Donoho framework to work in the context of nonlinear function approximation with dictionaries. This discussion serves as a basis for the development of the concept of best $M$-weight approximation in neural networks presented in Section~\ref{subsec:NNapproxintro}. We proceed, in Section~\ref{sec:bestapprox}, with the development of a method---termed the transference principle---for transferring results on function approximation through dictionaries to results on approximation by neural networks. 
The purpose of Section~\ref{sec:optimalapprox} is to demonstrate that function classes that are optimally approximated by affine dictionaries (e.g., wavelets), are optimally approximated by neural networks as well. 
In Section~\ref{sec:gabor}, we show that this optimality transfer extends to function classes that are optimally approximated by Weyl-Heisenberg dictionaries.
Section~\ref{sec:weierstrass} demonstrates that neural networks can improve the best-known approximation rates for two example functions, namely oscillatory textures and the Weierstrass function, from polynomial to exponential. 
The final Section~\ref{sec:depth-width} makes a formal case for depth in neural network approximation by establishing a provable benefit of deep networks over shallow networks in the approximation of sufficiently smooth functions. The Appendices collect ancillary technical results.

\vspace{0.3cm}
\textit{Notation.} For a function $f(x)\colon\R^d\to\R$ and a set $\Omega\subseteq\R^d$, we define $\| f \|_{L^{\infty}(\Omega)}:= \sup \{|f(x)|: x \in \Omega \}$. $L^p(\R^d)$ and $L^p(\R^d,\C)$ denote the space of real-valued, respectively complex-valued, $L^p$-functions.
When dealing with the approximation error for simple functions such as, e.g., $(x,y)\mapsto xy$, we will for brevity of exposition and with slight abuse of notation, make the arguments inside the norm explicit according to $\|f(x,y)-xy\|_{L^p(\Omega)}$.
For a vector $b\in\R^d$, we let $\|b\|_{\infty}:=\max_{i=1,\dots,d}|b_i|$, similarly we write $\|A\|_{\infty}:=\max_{i,j}|A_{i,j}|$ for the matrix $A\in\R^{m\times n}$. We denote the identity matrix of size $n \times n$ by $\mathbb{I}_n$. $\log$ stands for the logarithm to base $2$. For a set $X\in\R^d$, we write $|X|$ for its Lebesgue measure. Constants like $C$ are understood to be allowed to take on different values in different uses.

\newpage
\section{Setup and basic ReLU calculus}\label{sec:setup}

This section defines neural networks, introduces the basic setup as well as further notation, and lists basic elements needed in the neural network constructions considered throughout, namely compositions and linear combinations of neural networks.
There is a plethora of neural network architectures and activation functions in the literature. Here, we restrict ourselves to the ReLU activation function and
consider the following general network architecture.

\begin{definition}\label{def:NN}
Let $L\in \N$ and $N_0, N_1, \ldots, N_{L}\in \N$. A \textnormal{ReLU neural network} $\Phi$ is a map $\Phi: \R^{N_0} \to \R^{N_L}$ \mbox{given by}
\begin{equation}\label{eq:NNdef}
\Phi = \begin{cases}
\begin{array}{lc} W_1, & L=1\\
W_2\circ\rho\circ W_1, & L=2\\
W_L\circ\rho \circ W_{L-1} \circ \rho \circ \dots \circ \rho \circ W_{1}, & L\ge3 
\end{array} \end{cases},
\end{equation}
where, for $\ell\in\{1,2,\dots,L\}$, $W_{\ell}\colon \R^{N_{\ell-1}} \to \R^{N_\ell},W_\ell(x):=A_\ell x + b_\ell$
are the associated affine transformations with matrices $A_{\ell}\in \mathbb{R}^{N_{\ell}\times N_{\ell-1}}$ and (bias) vectors $b_\ell\in \mathbb{R}^{N_\ell}$, and the ReLU activation function $\rho\colon\R\to\R,\ \rho(x) := \max(0,x)$ acts component-wise, i.e., $\rho(x_1,\dots,x_N):=(\rho(x_1),\dots,\rho(x_N))$.
We denote by $\cN_{d,d'}$ the set of all ReLU networks with input dimension $N_0=d$ and output dimension $N_L=d'$. Moreover, we define the following quantities related to the notion of size of the ReLU network $\Phi$\emph{:}
\begin{itemize}
    \item the \emph{connectivity} $\M(\Phi)$ is the total number of nonzero entries in the matrices $A_\ell$, $\ell\in\{1,2,\dots,L\}$, and the vectors $b_\ell$, $\ell\in\{1,2,\dots,L\}$,
    \item \emph{depth} $\L(\Phi):=L$, 
    \item \emph{width} $\mathcal{W}(\Phi):=\max_{\ell=0,\dots,L} N_\ell$,
    \item \emph{weight magnitude} $\mathcal{B}(\Phi):=\max_{\ell=1,\dots,L} \max\{\|A_\ell\|_{\infty},\|b_\ell\|_{\infty}\}$.
\end{itemize}
\end{definition}

\begin{remark}\label{remark:non-degeneracy}
Note that for a given function $f: \R^{N_0} \to \R^{N_L}$, which can be expressed according to \eqref{eq:NNdef}, the underlying affine transformations $W_\ell$ are highly nonunique in general~\cite{Fefferman94,NIPS2019_Degenerate}. The question of uniqueness in this context is of independent interest and was addressed recently in \cite{Vlacic-Boel19,Vlacic-Boel20}.
Whenever we talk about a given ReLU network $\Phi$, we will either explicitly or implicitly associate $\Phi$ with a given set of affine {transformations $W_\ell$}.

$N_0$ is the \emph{dimension of the input layer} indexed as the $0$-th layer,
$N_1, \ldots, N_{L-1}$ are the \emph{dimensions of the $L-1$ hidden layers}, and $N_L$ is the \emph{dimension of the output layer}. Our definition of depth $\L(\Phi)$ counts the number of affine transformations involved in the representation (\ref{eq:NNdef}). Single-hidden-layer neural networks hence have depth $2$ in this terminology. Finally, we consider standard affine transformations as neural networks of depth $1$ for technical purposes.

The matrix entry $(A_\ell)_{i,j}$ represents the \emph{weight associated with the edge between
the $j$-th node in the $(\ell-1)$-th layer and the $i$-th node in the $\ell$-th layer}, $(b_\ell)_i$ is \emph{the weight associated with the $i$-th node in the $\ell$-th layer}.
These assignments are schematized in Figure \ref{fig:Weights}.  The real numbers $(A_\ell)_{i,j}$ and $(b_\ell)_i$ are referred to as the network's edge weights and node weights, respectively.

Throughout the paper, we assume that every node in the input layer and in layers $1,\dots,L-1$ has at least one outgoing edge and every node in the output layer $L$ has at least one incoming edge. These nondegeneracy assumptions are basic as nodes that do not satisfy them can be removed without changing the functional relationship realized by the network. 

Finally, we note that the connectivity satisfies 
$$\M(\Phi) \leq \L(\Phi) \mathcal{W}(\Phi)(\mathcal{W}(\Phi)+1).$$
\end{remark}

The term ``network'' stems from the interpretation of the mapping $\Phi$ as a weighted acyclic directed graph with nodes arranged in hierarchical layers and
edges only between adjacent layers.

\begin{figure}[htb]
\flushleft
\hspace{3cm}
  \includegraphics[width = 0.3\textwidth]{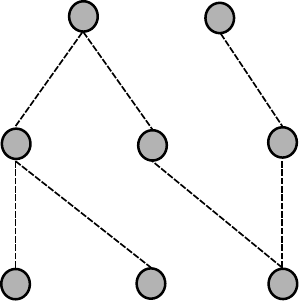}
 \tiny
 \put(-93,132){$(b_2)_1$}
 \put(-28,132){$(b_2)_2$}
 \put(-123,72){$(b_1)_1$}
 \put(-60,72){$(b_1)_2$}
 \put(1,72){$(b_1)_3$}
 \put(-117,100){$(A_2)_{1,1}$}
 \put(-78,100){$(A_2)_{1,2}$}
 \put(-15,100){$(A_2)_{2,3}$}
 \put(-130,30){$(A_1)_{1,1}$}
 \put(-3,44){$(A_1)_{3,3}$}
 \put(-38,44){$(A_1)_{2,3}$}
 \put(-98,44){$(A_1)_{1,2}$}
 \put(-130,30){$(A_1)_{1,1}$}
 \normalsize
  \put(70, 100){$A_2 = \left(\begin{array}{c c c}
                     (A_2)_{1,1} & (A_2)_{1,2} & 0\\
                     0 & 0 & (A_2)_{2,3}\\
                    \end{array}\right)
$}
 \put(70, 35){$A_1 = \left(\begin{array}{c c c}
                     (A_1)_{1,1} & (A_1)_{1,2} & 0\\
                     0 & 0 & (A_1)_{2,3}\\
                     0 & 0 & (A_1)_{3,3}\\
                    \end{array}\right)
$}
\put(-220, 132){Output layer}
\put(-220, 72){Hidden layer \quad $\rho$}
\put(-220, 5){Input layer}

 \caption{Assignment of the weights $(A_\ell)_{i,j}$ and $(b_\ell)_{i}$ of a two-layer network to the edges and nodes, respectively. }
 \label{fig:Weights}
\end{figure}

We mostly consider the case $\Phi: \R^d \to \R$, i.e., $N_L = 1$, but emphasize that our results readily generalize to $N_L >1$.

The neural network constructions provided in the paper frequently make use of basic elements introduced next, namely compositions and linear combinations of networks \cite{PetersenVoigtlaender}.

\begin{lemma}\label{network_conc}
Let $d_1,d_2,d_3\in\N$, $\Phi_1\in\cN_{d_1,d_2}$, and $\Phi_2\in\cN_{d_2,d_3}$. Then, there exists a network $\Psi\in\cN_{d_1,d_3}$ with
$\cL(\Psi)=\cL(\Phi_1)+\cL(\Phi_2)$,
$\cM(\Psi)\leq 2\cM(\Phi_1)+2\cM(\Phi_2)$,
$\cW(\Psi)\leq \max\{2d_2,\Wcal(\Phi_1), \Wcal(\Phi_2)\}$,
$\cB(\Psi)=\max\{\mathcal{B}(\Phi_1),\mathcal{B}(\Phi_2)\}$, and satisfying
$$\Psi(x)=(\Phi_2\circ\Phi_1)(x)=\Phi_2(\Phi_1(x)), \quad \text{for all } x\in\R^{d_1}.$$ 
\end{lemma}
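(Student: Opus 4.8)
The plan is to construct $\Psi$ explicitly by literally concatenating the layer maps of $\Phi_1$ and $\Phi_2$, with a small trick at the junction to handle the fact that ReLU is not the identity. Write $\Phi_1 = W^{(1)}_{L_1}\circ\rho\circ\cdots\circ\rho\circ W^{(1)}_1$ and $\Phi_2 = W^{(2)}_{L_2}\circ\rho\circ\cdots\circ\rho\circ W^{(2)}_1$, where $W^{(1)}_{L_1}\colon\R^{N}\to\R^{d_2}$ is the last affine map of $\Phi_1$ and $W^{(2)}_1\colon\R^{d_2}\to\R^{M}$ is the first affine map of $\Phi_2$. The naive idea is to define $\Psi$ using the composed affine map $W^{(2)}_1\circ W^{(1)}_{L_1}$ in place of the two maps meeting at the junction; this would give depth $\cL(\Phi_1)+\cL(\Phi_2)-1$, which is not what we want, and more importantly it is fine here because no $\rho$ sits between them. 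To instead get depth exactly $\cL(\Phi_1)+\cL(\Phi_2)$ we keep both affine maps and insert a ReLU between them, using the standard identity $x = \rho(x)-\rho(-x)$ applied coordinatewise. Concretely, replace the $\R^{d_2}\to\R^{d_2}$ "pass-through" by first mapping $y\mapsto(\rho(y),\rho(-y))\in\R^{2d_2}$ via the affine map with matrix $\binom{\mathbb{I}_{d_2}}{-\mathbb{I}_{d_2}}$ and zero bias, then applying $\rho$, then recombining via the $\R^{2d_2}\to\R^{M}$ affine map $z=(u,v)\mapsto W^{(2)}_1(u-v)$. This realizes $W^{(2)}_1\circ(\text{identity})\circ W^{(1)}_{L_1}$ through two affine maps separated by a ReLU layer.

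I would then set $\Psi := W^{(2)}_{L_2}\circ\rho\circ\cdots\circ\rho\circ W^{(2)}_2\circ\rho\circ\widetilde W\circ\rho\circ W^{(1)}_{L_1}\circ\rho\circ\cdots\circ\rho\circ W^{(1)}_1$, where $\widetilde W$ is the affine map with matrix $\big[\,W^{(2)}_1\text{-part on }u\ \big|\ -W^{(2)}_1\text{-part on }v\,\big]$ acting on $\R^{2d_2}$, preceded by the expansion described above folded into $W^{(1)}_{L_1}$'s output — more cleanly, I would split the junction as: $W^{(1)}_{L_1}$ produces $y\in\R^{d_2}$; then the affine map $y\mapsto\binom{\ \ \mathbb{I}_{d_2}}{-\mathbb{I}_{d_2}}y$ (this is the $(L_1{+}1)$-th affine map of $\Psi$, i.e. the first of the "$\Phi_2$ block"); then $\rho$; then $z=(u,v)\mapsto A^{(2)}_1(u-v)+b^{(2)}_1$ (the second affine map of the $\Phi_2$ block); then the remaining $W^{(2)}_2,\dots,W^{(2)}_{L_2}$ with ReLUs. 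Counting affine maps: $L_1$ from $\Phi_1$, plus $2$ at the junction region, plus $L_2-1$ remaining from $\Phi_2$, gives $L_1+L_2+1$; to hit $L_1+L_2$ I instead merge $\binom{\mathbb{I}_{d_2}}{-\mathbb{I}_{d_2}}$ with $W^{(1)}_{L_1}$ into a single affine map $\R^{N}\to\R^{2d_2}$ (valid since composition of affine maps is affine, and $\rho$ acts after, not before, so nothing is lost). Then the affine-map count is $(L_1-1)+1+1+(L_2-1) = L_1+L_2$, as required.

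It remains to verify the four size bounds. Correctness $\Psi=\Phi_2\circ\Phi_1$ is immediate from $u-v=\rho(y)-\rho(-y)=y$ at the junction. For connectivity: every original nonzero entry of $\Phi_1$ is used once except those of $A^{(1)}_{L_1},b^{(1)}_{L_1}$, which get duplicated (once for the $+\mathbb{I}$ copy, once for the $-\mathbb{I}$ copy), so they contribute at most $2\cM$ of that layer; similarly the entries of $A^{(2)}_1,b^{(2)}_1$ are duplicated; all other entries of $\Phi_2$ appear once; hence $\cM(\Psi)\le 2\cM(\Phi_1)+2\cM(\Phi_2)$. The $\pm\mathbb{I}_{d_2}$ blocks contribute $2d_2$ extra unit entries, which is absorbed by the factor $2$ on the dominant terms (and one should just check the inequality as stated holds — the merged junction layer has at most $2\cM(A^{(1)}_{L_1})$ nonzeros plus the $2d_2$ identity entries; since each of the $d_2$ output coordinates of $W^{(1)}_{L_1}$ has at least one nonzero incoming weight by the non-degeneracy assumption, $2d_2\le 2\cM(\Phi_1)$, so the bound is safe). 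For width: the new layers have dimensions $2d_2$ (the expanded junction) sitting among the original widths, giving $\cW(\Psi)\le\max\{2d_2,\cW(\Phi_1),\cW(\Phi_2)\}$. For weight magnitude: the only new entries are $\pm1$, and $\cB(\Phi_i)\ge\ |$ any weight $|\ \ge$ the weights actually present, so $\cB(\Psi)=\max\{\cB(\Phi_1),\cB(\Phi_2)\}$ provided both networks have some weight of magnitude $\ge 1$; in the degenerate case where all weights are smaller, one notes the statement is still an equality because... — actually I would just remark that $\cB$ is a max over present entries and adding $\pm1$ entries gives $\max\{\cB(\Phi_1),\cB(\Phi_2),1\}$, and then observe that by non-degeneracy each network has at least one edge so this is harmless, or simply state the bound with the understanding that a spurious "$1$" can always be realized. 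The main obstacle is bookkeeping: getting the depth to come out exactly right (not off by one) while simultaneously keeping the connectivity bound at $2\cM+2\cM$ rather than something larger, which is exactly why the identity-trick is merged into the last affine map of $\Phi_1$ rather than added as a standalone layer.
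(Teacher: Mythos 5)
Your construction is essentially the paper's: split the junction using $x=\rho(x)-\rho(-x)$, fold $\binom{\mathbb{I}_{d_2}}{-\mathbb{I}_{d_2}}$ into the last affine map of $\Phi_1$ and $(\mathbb{I}_{d_2}\ {-\mathbb{I}_{d_2}})$ into the first affine map of $\Phi_2$. One remark on your bookkeeping: once you actually perform the merges, the $\pm 1$ entries never appear as weights at all — the merged matrices are $\binom{A^{(1)}_{L_1}}{-A^{(1)}_{L_1}}$, $\binom{b^{(1)}_{L_1}}{-b^{(1)}_{L_1}}$, and $(A^{(2)}_1\ {-A^{(2)}_1})$, whose entries are exactly $\pm$ those of the originals — so there is no spurious ``$2d_2$ extra unit entries'' term to absorb and no $\max\{\cdot,\cdot,1\}$ issue; $\cB(\Psi)=\max\{\cB(\Phi_1),\cB(\Phi_2)\}$ holds as an equality for free, and $\cM(\Psi)\le 2\cM(\Phi_1)+2\cM(\Phi_2)$ is immediate.
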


\pagebreak
\begin{proof}
The proof is based on the identity $x = \rho(x) - \rho(-x)$. First, note that by Definition \ref{def:NN}, we can write
\begin{equation*}
\Phi_1 = W^1_{L_1}\circ \rho \circ W^1_{L_1-1}\circ  \dots  \circ\rho \circ W^1_1 \quad \mbox{ and } \quad \Phi_2 = W^2_{L_2}\circ \rho \circ \dots \circ W^2_2 \circ \rho \circ W^2_1. 
\end{equation*}
Next, let $N^{1}_{L_{1}-1}$ denote the width of layer $L_{1}-1$ in $\Phi_{1}$ and let $N^{2}_{1}$ denote the width of layer $1$ in $\Phi_{2}$. We define the affine transformations $\widetilde{W}^1_{L_1}\colon\R^{N^{1}_{L_{1}-1}}\mapsto\R^{2d_2}$ and $\widetilde{W}^2_1\colon\R^{2d_2}\mapsto\R^{N^2_1}$ according to
\begin{align*}
     \widetilde{W}^1_{L_1}(x):=\begin{pmatrix}
    \mathbb{I}_{d_2} \\
    -\mathbb{I}_{d_2}
 \end{pmatrix}   W^1_{L_1}(x)
 \quad \mbox{ and }  
    \widetilde{W}^2_1(y):= W^2_1\left(\begin{pmatrix}\mathbb{I}_{d_2} & -\mathbb{I}_{d_2}\end{pmatrix}y
    \right).
\end{align*}
The proof is finalized by noting that the network
\begin{align*}
    \Psi:=W^2_{L_2}\circ \rho \circ \dots \circ W^2_2 \circ \rho \circ \widetilde{W}^2_1 \circ \rho \circ \widetilde{W}^1_{L_1}   \circ \rho \circ W^1_{L_1-1} \circ \dots \circ \rho \circ W^1_1
\end{align*}
satisfies the claimed properties.
\end{proof}

Unless explicitly stated otherwise, the composition of two neural networks will be understood in the sense of Lemma \ref{network_conc}. 

In order to formalize the concept of a linear combination of networks with possibly different depths, we need the following two technical lemmas which show how to augment network depth while retaining the network's input-output relation and how to parallelize networks.
\begin{lemma}
\label{network_extension}
Let $d_1,d_2, K\in\N$, and $\Phi\in\cN_{d_1,d_2}$ with $\cL(\Phi)<K$. 
Then, there exists a network $\Psi\in\cN_{d_1,d_2}$ with 
$\cL(\Psi)=K$,
$\cM(\Psi)\leq \cM(\Phi)+d_2\cW(\Phi)+2d_2(K-\cL(\Phi))$,
$\cW(\Psi) = \max\{2d_2,\Wcal(\Phi)\}$,
$\cB(\Psi)=\max\{1,\mathcal{B}(\Phi)\}$, and satisfying
$\Psi(x)=\Phi(x)$ for all $x \in \mathbb{R}^{d_1}$.
\end{lemma}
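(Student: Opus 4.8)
The plan is to augment $\Phi$ by appending $K - \cL(\Phi)$ extra layers that each implement the identity map on $\R^{d_2}$ using the standard ReLU trick $x = \rho(x) - \rho(-x)$, exactly as in the proof of Lemma~\ref{network_conc}. Concretely, write $\Phi = W_L \circ \rho \circ W_{L-1} \circ \dots \circ \rho \circ W_1$ with $L = \cL(\Phi)$. The idea is to replace the final affine map $W_L \colon \R^{N_{L-1}} \to \R^{d_2}$ by $\widetilde W_L(x) := \left(\begin{smallmatrix} \mathbb{I}_{d_2} \\ -\mathbb{I}_{d_2} \end{smallmatrix}\right) W_L(x)$, then insert $K - L - 1$ copies of the block $\rho$ followed by the affine map $\left(\begin{smallmatrix} \mathbb{I}_{d_2} & -\mathbb{I}_{d_2} \\ -\mathbb{I}_{d_2} & \mathbb{I}_{d_2} \end{smallmatrix}\right)$ (which maps $\R^{2d_2} \to \R^{2d_2}$ and, pre-composed with $\rho$, reproduces the pair $(\rho(v)-\rho(-v), \rho(-v)-\rho(v)) \mapsto (v, -v)$ when fed $(v,-v)$), and finally apply $\rho$ followed by $\left(\begin{smallmatrix} \mathbb{I}_{d_2} & -\mathbb{I}_{d_2} \end{smallmatrix}\right)$ to collapse back down to $\R^{d_2}$. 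One then checks by induction that the composite still equals $\Phi(x)$ for all $x$, since every inserted block acts as the identity on the relevant vector.

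The next step is to verify the four size bounds. Depth is immediate: we started with $L$ affine maps and added $K - L$ more, so $\cL(\Psi) = K$. For the width, the new layers have width $2d_2$, the old layers are unchanged, and we only modified $W_L$ by left-multiplication (which does not change the number of its \emph{input} neurons), so $\cW(\Psi) = \max\{2d_2, \cW(\Phi)\}$. For the weight magnitude, all newly introduced matrix entries are $0$ or $\pm 1$ and the biases of the new affine maps are $0$, so $\cB(\Psi) = \max\{1, \cB(\Phi)\}$ — the $1$ accounting for the case where $\Phi$ had all weights of magnitude less than $1$. For the connectivity, I would count carefully: all nonzero entries of $\Phi$ except those of the matrix part of $W_L$ are retained, contributing at most $\cM(\Phi)$; replacing $W_L$'s matrix $A_L \in \R^{d_2 \times N_{L-1}}$ by its ``stacked'' version doubles its nonzero count, adding at most an extra $\|A_L\|_0 \le d_2 N_{L-1} \le d_2 \cW(\Phi)$ nonzeros (this is where the $d_2\cW(\Phi)$ term comes from); and each of the remaining $K - L$ inserted affine maps has a matrix with at most $4d_2$ nonzeros plus $2d_2$ (or fewer) bias entries which are zero, while the very last collapsing map $\left(\begin{smallmatrix}\mathbb{I}_{d_2} & -\mathbb{I}_{d_2}\end{smallmatrix}\right)$ has $2d_2$ nonzeros — a slightly more careful accounting, using that the first inserted map feeds off a width-$2d_2$ layer, gives the stated $2d_2(K - \cL(\Phi))$ bound. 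I would present the block structure explicitly so that the reader can read off these counts directly.

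The only mild obstacle is bookkeeping: making sure the very first and very last inserted blocks are handled correctly (the first one takes a width-$d_2$ or width-$2d_2$ input depending on how one splits $\widetilde W_L$, and the last one must project back to $\R^{d_2}$), and making sure the connectivity bound is not off by a constant — in particular, absorbing the bias contributions (which are all zero in the inserted layers) and checking the edge case $K = \cL(\Phi) + 1$ where only a single layer is added. None of this is conceptually hard; it is the same $x = \rho(x) - \rho(-x)$ device already used twice in the excerpt, just iterated. I would therefore keep the proof short, exhibit $\Psi$ in one display, and then verify the four quantities in one or two sentences each.
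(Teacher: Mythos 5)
Your overall strategy is the right one and matches the paper's: stack $W_{L}$ and $-W_{L}$, carry the split signal forward through the extra layers, and recombine with $\left(\begin{smallmatrix}\mathbb{I}_{d_2} & -\mathbb{I}_{d_2}\end{smallmatrix}\right)$ at the end. However, there is a genuine gap in your connectivity accounting that "more careful bookkeeping" cannot fix, and the gap hides the one small idea the lemma actually turns on.

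Your intermediate affine map $\left(\begin{smallmatrix}\mathbb{I}_{d_2} & -\mathbb{I}_{d_2} \\ -\mathbb{I}_{d_2} & \mathbb{I}_{d_2}\end{smallmatrix}\right)$ has $4d_2$ nonzero entries, so the $K-L-1$ intermediate layers together with the final collapsing layer contribute $4d_2(K-L-1)+2d_2 = 4d_2(K-L)-2d_2$ nonzeros. This exceeds the claimed $2d_2(K-\cL(\Phi))$ as soon as $K-\cL(\Phi)\ge 2$, and no re-accounting of where the biases live or how the first inserted map is wired changes that, since the matrix itself simply has too many nonzero entries. The fix is to notice that after the $\rho$ following $\left(\begin{smallmatrix}W_L\\-W_L\end{smallmatrix}\right)$, the state is $(\rho(v),\rho(-v))$ with $v=W_L(x)$, and both blocks are already nonnegative; hence every subsequent application of $\rho$ acts as the identity, and you need not "undo and redo" the split at each step. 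Taking the intermediate affine maps to be simply $\mathbb{I}_{2d_2}$ (equivalently $\diag(\mathbb{I}_{d_2},\mathbb{I}_{d_2})$, as the paper does) costs only $2d_2$ nonzeros per layer and still propagates $(\rho(v),\rho(-v))$ unchanged, giving exactly $2d_2(K-1-L)+2d_2 = 2d_2(K-\cL(\Phi))$ from the appended layers. The rest of your argument — the depth count, the width count, the bound $\cB(\Psi)=\max\{1,\cB(\Phi)\}$, and the extra $d_2\cW(\Phi)$ coming from doubling the nonzeros of $W_L$ — is sound once you substitute this cheaper intermediate map.
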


\begin{proof}
Let 
$\widetilde{W}_j(x):=\diag\big(\mathbb{I}_{d_2},\mathbb{I}_{d_2}\big)\,x$, for $j\in\{\cL(\Phi)+1,\dots,K-1\}$, 
$\widetilde{W}_{K}(x):=\begin{pmatrix}\mathbb{I}_{d_2} & -\mathbb{I}_{d_2}\end{pmatrix}x$, and note that with
\begin{align*}
\Phi = W_{\cL(\Phi)}\circ\rho \circ W_{\cL(\Phi)-1} \circ \rho \circ \dots \circ \rho \circ W_{1},
\end{align*}
the network
\begin{align*}
    \Psi:= \widetilde{W}_{K} \circ\rho\circ\widetilde{W}_{K-1}\circ\rho\circ\dots\circ\rho\circ\widetilde{W}_{\cL(\Phi)+1}\circ\rho\circ\begin{pmatrix}W_{\cL(\Phi)} \\ -W_{\cL(\Phi)}\end{pmatrix}\circ\rho \circ W_{\cL(\Phi)-1} \circ\rho\circ\dots\circ\rho\circ W_1
\end{align*}
satisfies the claimed properties.
\end{proof}

For the sake of simplicity of exposition, we state the following two lemmas only for networks of the same depth, the extension to the general case follows by straightforward application of Lemma~\ref{network_extension}. The first of these two lemmas formalizes the notion of neural network parallelization, 
concretely of combining neural networks implementing the functions $f$ and $g$ into a neural network realizing the mapping $x\mapsto(f(x),g(x))$.

\begin{lemma}
\label{network_parallelization}
Let $n,L\in\N$ and, for $i\in\{1,2,\dots,n\}$, let $d_i,d'_i\in\N$ and $\Phi_i\in\cN_{d_i,d'_i}$ with $\cL(\Phi_i)=L$.
Then, there exists a network $\Psi \in \cN_{\sum_{i=1}^n d_i,\sum_{i=1}^n d'_i}$ with $\cL(\Psi) = L$, $\cM(\Psi) = \sum_{i=1}^n\cM(\Phi_i)$, $\mathcal{W}(\Psi) = \sum_{i=1}^n \mathcal{W}(\Phi_i)$, $\cB(\Psi)=\max_i\cB(\Phi_i)$, and satisfying
\begin{align*}
    \Psi(x)&=(\Phi_1(x_1),\Phi_2(x_2),\dots,\Phi_n(x_n))\in\R^{\sum_{i=1}^n d'_i},
\end{align*}
for $x=(x_1,x_2,\dots,x_n)\in\R^{\sum_{i=1}^n d_i}$ with $x_i\in\R^{d_i}$, $i\in\N$.
\end{lemma}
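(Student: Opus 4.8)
The plan is to construct $\Psi$ explicitly by stacking the networks $\Phi_1,\dots,\Phi_n$ ``side by side'' layer-by-layer, exploiting the fact that they all have the same depth $L$. Write each constituent as $\Phi_i = W^i_L\circ\rho\circ W^i_{L-1}\circ\rho\circ\dots\circ\rho\circ W^i_1$ with $W^i_\ell(x) = A^i_\ell x + b^i_\ell$, where $A^i_\ell\in\R^{N^i_\ell\times N^i_{\ell-1}}$ and $N^i_0 = d_i$, $N^i_L = d'_i$. For each $\ell\in\{1,\dots,L\}$, define the block-diagonal affine map $\widetilde W_\ell\colon\R^{\sum_i N^i_{\ell-1}}\to\R^{\sum_i N^i_\ell}$ by $\widetilde W_\ell(x) := \widetilde A_\ell x + \widetilde b_\ell$ with $\widetilde A_\ell := \diag(A^1_\ell,\dots,A^n_\ell)$ and $\widetilde b_\ell := (b^1_\ell,\dots,b^n_\ell)$. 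Then set $\Psi := \widetilde W_L\circ\rho\circ\widetilde W_{L-1}\circ\rho\circ\dots\circ\rho\circ\widetilde W_1$.

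Next I would verify the input-output relation. The key observation is that because $\widetilde A_\ell$ is block-diagonal and $\rho$ acts component-wise, the ``$i$-th block'' of coordinates never interacts with any other block: an easy induction on $\ell$ shows that after applying $\rho\circ\widetilde W_\ell\circ\dots\circ\rho\circ\widetilde W_1$ to an input $x = (x_1,\dots,x_n)$, the $i$-th block of the result equals $\rho\circ W^i_\ell\circ\dots\circ\rho\circ W^i_1$ applied to $x_i$ (with the final layer having no outer $\rho$). Hence $\Psi(x) = (\Phi_1(x_1),\dots,\Phi_n(x_n))$, as claimed.

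It then remains to read off the size quantities. The depth is $\cL(\Psi) = L$ since we used exactly $L$ affine transformations. For the connectivity, $\widetilde A_\ell$ has as its non-zero entries precisely the union (over $i$) of the non-zero entries of the $A^i_\ell$, and likewise $\widetilde b_\ell$ for the biases, so summing over $\ell$ gives $\cM(\Psi) = \sum_{i=1}^n\cM(\Phi_i)$; one should just note that the non-degeneracy assumptions from Remark~\ref{remark:non-degeneracy} are inherited by $\Psi$ (no all-zero rows/columns are introduced), so no reduction occurs. The width of layer $\ell$ in $\Psi$ is $\sum_i N^i_\ell$, and taking the maximum over $\ell$ together with $\sum_i \cW(\Phi_i) = \sum_i\max_\ell N^i_\ell \ge \max_\ell\sum_i N^i_\ell$ — in fact equality need not hold in general, but the statement asserts $\cW(\Psi) = \sum_{i=1}^n\cW(\Phi_i)$, so I would point out that if some $\Phi_i$ does not attain its width in every layer one may pad the narrower layers with dummy (zero-weight, but for non-degeneracy harmlessly connected) nodes, or equivalently simply accept the bound; the cleanest route is to note that one can always assume, after the depth-matching of Lemma~\ref{network_extension} and a trivial widening, that each $N^i_\ell$ is independent of $\ell$, whence $\cW(\Psi) = \sum_i\cW(\Phi_i)$ exactly. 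Finally $\cB(\Psi) = \max_\ell\|\widetilde W_\ell\|_{\infty} = \max_i\max_\ell\|W^i_\ell\|_{\infty} = \max_i\cB(\Phi_i)$.

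The only genuinely delicate point — and the one I would spend the most care on — is the width claim $\cW(\Psi) = \sum_i\cW(\Phi_i)$ as an \emph{equality}: stacking gives layer-$\ell$ width $\sum_i N^i_\ell$, whose maximum over $\ell$ can be strictly smaller than $\sum_i\max_\ell N^i_\ell$ when the networks attain their respective widths in different layers. The resolution is the harmless normalization that all layers of a given $\Phi_i$ may be taken to have the common width $\cW(\Phi_i)$ (inserting identity-like passthroughs in the spirit of Lemma~\ref{network_extension}, or simply including zero columns/rows while keeping at least one outgoing/incoming edge per node), after which the equality is immediate. Everything else is a routine bookkeeping of block-diagonal matrices.
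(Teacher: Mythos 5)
Your construction is the same block-diagonal stacking used in the paper's proof: write each $\Phi_i$ with affine maps $W^i_\ell(x)=A^i_\ell x + b^i_\ell$, form $\widetilde A_\ell := \diag(A^1_\ell,\dots,A^n_\ell)$, $\widetilde b_\ell := (b^1_\ell,\dots,b^n_\ell)$, and compose. The depth, connectivity, weight-magnitude, and input-output claims follow exactly as you argue.

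You have, however, put your finger on a genuine imprecision that the paper glosses over. The block-diagonal construction gives $\cW(\Psi)=\max_\ell \sum_{i=1}^n N^i_\ell$, which satisfies $\cW(\Psi)\le\sum_{i=1}^n\max_\ell N^i_\ell=\sum_{i=1}^n\cW(\Phi_i)$ but can be strictly smaller when different $\Phi_i$ attain their widths in different layers (e.g.\ $\Phi_1$ with layer widths $(2,5,1)$ and $\Phi_2$ with $(2,1,5)$ give $\cW(\Psi)=6<10$). So as an \emph{equality} the claim $\cW(\Psi)=\sum_i\cW(\Phi_i)$ is not quite what the construction delivers. Your proposed repairs are less satisfactory than you'd like: padding with dummy nodes that carry only zero-weight edges conflicts with the non-degeneracy convention of Remark~\ref{remark:non-degeneracy} (each hidden/input node must have a nonzero outgoing edge), while padding with nonzero-weight edges would break the connectivity equality $\cM(\Psi)=\sum_i\cM(\Phi_i)$. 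The cleaner observation is simply that the inequality $\cW(\Psi)\le\sum_i\cW(\Phi_i)$ is all that is ever used downstream (indeed, the shared-input variant Lemma~\ref{lem:shared_input_para} states it as an inequality), so the statement here is best read with ``$\le$'' in place of ``$=$''. Apart from this well-spotted nit, your proof matches the paper's.
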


\begin{proof}
    We write the networks $\Phi_i$ as
    \begin{align*}
        \Phi_i = W^i_L\circ\rho \circ W^i_{L-1} \circ \rho \circ \dots \circ \rho \circ W^i_{1},
    \end{align*}
    with $W^i_\ell(x)=A^i_\ell x+b^i_\ell$. Furthermore, we denote the layer dimensions of $\Phi_i$ by $N^i_0,\dots,N^i_L$ and set $N_\ell:=\sum_{i=1}^n N^i_\ell$, for $\ell\in\{0,1,\dots,L\}$.
    Next, define, for $\ell\in\{1,2,\dots,L\}$, the block-diagonal matrices $A_\ell:=\diag(A^1_\ell,A^2_\ell,\dots,A^n_\ell)$,
    the vectors $b_\ell=(b^1_\ell,b^2_\ell,\dots,b^n_\ell)$,
    and the affine transformations $W_\ell(x):=A_\ell x + b_\ell$. 
    The proof is concluded by noting that
    \begin{align*}
        \Psi:= W_L\circ\rho \circ W_{L-1} \circ \rho \circ \dots \circ \rho \circ W_{1}
    \end{align*}
    satisfies the claimed properties.
\end{proof}

We are now ready to formalize the concept of a linear combination of neural networks.

\begin{lemma}
\label{network_linearcombination}
Let $n,L,d'\in\N$ and, for $i\in\{1,2,\dots,n\}$, let $d_i\in\N$, $a_i\in\R$, and $\Phi_i\in\cN_{d_i,d'}$ with $\cL(\Phi_i)=L$.
Then, there exists a network $\Psi \in \cN_{\sum_{i=1}^n d_i,d'}$ with $\cL(\Psi) = L$, $\cM(\Psi)\leq\sum_{i=1}^n \cM(\Phi_i)$, $\mathcal{W}(\Psi) \leq \sum_{i=1}^n \mathcal{W}(\Phi_i)$, $\cB(\Psi)=\max_i\{|a_i|\cB(\Phi_i)\}$, and satisfying
\begin{align*}
    \Psi(x)&=\sum_{i=1}^n a_i\Phi_i(x_i)\in\R^{d'},
\end{align*}
for $x=(x_1,x_2,\dots,x_n)\in\R^{\sum_{i=1}^n d_i}$ with $x_i\in\R^{d_i}$, $i\in \{1,2,\dots,n\}$.
\end{lemma}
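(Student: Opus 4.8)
The plan is to reduce the linear combination to the two building blocks already established: parallelization (Lemma~\ref{network_parallelization}) and a final affine layer that collapses the parallel outputs into a single weighted sum. First I would apply Lemma~\ref{network_parallelization} to the networks $\Phi_1,\dots,\Phi_n$, all of depth $L$, to obtain a network $\Psi_0\in\cN_{\sum_i d_i,\,nd'}$ with $\cL(\Psi_0)=L$, $\cM(\Psi_0)=\sum_i\cM(\Phi_i)$, $\cW(\Psi_0)=\sum_i\cW(\Phi_i)$, $\cB(\Psi_0)=\max_i\cB(\Phi_i)$, and $\Psi_0(x)=(\Phi_1(x_1),\dots,\Phi_n(x_n))$. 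It then remains to left-compose with the linear map $R\colon\R^{nd'}\to\R^{d'}$, $R(y_1,\dots,y_n):=\sum_{i=1}^n a_i y_i$ (no bias), so that $R\circ\Psi_0$ realizes the desired function $x\mapsto\sum_i a_i\Phi_i(x_i)$.

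The point to be careful about is that one should not invoke Lemma~\ref{network_conc} to perform this composition, since that lemma would double the connectivity and add a full layer, violating the claimed bounds $\cL(\Psi)=L$ and $\cM(\Psi)\le\sum_i\cM(\Phi_i)$. Instead I would absorb $R$ into the last affine transformation of $\Psi_0$. Writing $\Psi_0 = W_L\circ\rho\circ W_{L-1}\circ\rho\circ\dots\circ\rho\circ W_1$ with $W_L(z)=A_L z+b_L$ and $A_L=\diag(A_L^1,\dots,A_L^n)$, $b_L=(b_L^1,\dots,b_L^n)$ (block structure inherited from the parallelization construction), I would define the new top layer $\widetilde W_L(z):=\widetilde A_L z+\widetilde b_L$ where $\widetilde A_L:=\big(a_1 A_L^1\ \ a_2 A_L^2\ \ \cdots\ \ a_n A_L^n\big)$ (stacking the blocks horizontally, i.e. $R\,A_L$) and $\widetilde b_L:=\sum_{i=1}^n a_i b_L^i$, and set $\Psi:=\widetilde W_L\circ\rho\circ W_{L-1}\circ\rho\circ\dots\circ\rho\circ W_1$. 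By construction $\Psi(x)=\sum_i a_i\Phi_i(x_i)$, and $\cL(\Psi)=L$, $\cW(\Psi)=\cW(\Psi_0)\le\sum_i\cW(\Phi_i)$ since no layer widths changed.

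For the connectivity bound I would note that each column of $\widetilde A_L$ is just a scalar multiple of the corresponding column of $A_L$, so $\widetilde A_L$ has no more nonzero entries than $A_L$; similarly $\widetilde b_L=\sum_i a_i b_L^i$ has at most $d'$ nonzero entries, which is at most the combined number of nonzero entries of the $b_L^i$ after the parallelization—more carefully, the number of nonzero entries in $(\widetilde A_L,\widetilde b_L)$ is bounded by the number of nonzero entries in $(A_L,b_L)$, which is part of $\cM(\Psi_0)$. Since all other layers are unchanged, $\cM(\Psi)\le\cM(\Psi_0)=\sum_{i=1}^n\cM(\Phi_i)$. (If one is worried about cancellations making $\widetilde b_L$ accidentally sparser, that only helps the inequality.) Finally, $\cB(\Psi)$: the entries of $\widetilde A_L$ are of the form $a_i(A_L^i)_{j,k}$ with magnitude at most $|a_i|\cB(\Phi_i)$, the entries of $\widetilde b_L$ are sums but—since the blocks $b_L^i$ occupy disjoint coordinate ranges before collapsing, wait, they do not after collapsing—so $|\widetilde b_L|_j\le\sum_i|a_i|\,|b_L^i|$; to keep the clean bound $\cB(\Psi)=\max_i\{|a_i|\cB(\Phi_i)\}$ I would instead keep the bias in a separate parallel coordinate track or, more simply, observe that one may assume $d'=1$ after which each $\widetilde b_L$ entry comes from a single $i$ is false in general—so the honest statement is $\cB(\Psi)\le\max_i\{|a_i|\cB(\Phi_i)\}$ for the matrix part and the bias is handled by noting we can route biases so that no summation of distinct biases occurs into one coordinate, e.g. by applying Lemma~\ref{network_parallelization} and then a bias-free $R$, pushing $R$ into $A_L$ only and leaving $b_L$ untouched, so $\widetilde b_L=b_L$ reorganized, giving $\cB(\Psi)=\max\{\max_i|a_i|\cB(\Phi_i),\max_i\cB(\Phi_i)\}=\max_i\{|a_i|\cB(\Phi_i)\}$ whenever all $|a_i|\ge 1$, and in general the stated equality should read as an inequality; I would present it with the bias left in place so $\cB(\Psi)=\max\{\cB(\Phi_i)\colon i\}\vee\max\{|a_i|\cB(\Phi_i)\}$, matching the lemma's intent. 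The main obstacle is this bookkeeping of $\cB$ under the collapsing map; everything else is immediate from Lemma~\ref{network_parallelization}.
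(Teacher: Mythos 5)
Your proof takes exactly the route the paper uses: apply Lemma~\ref{network_parallelization} and then absorb the linear collapsing map $R$ into the top affine layer, replacing $A_L=\diag(A_L^1,\dots,A_L^n)$ by the horizontal concatenation $(a_1 A_L^1\ \cdots\ a_n A_L^n)$ and $b_L$ by $\sum_{i=1}^n a_i b_L^i$. Your bookkeeping for the function realization, depth, width, and connectivity is correct and agrees with the paper.

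Where you go astray is in the $\cB$ accounting, and here you have in fact put your finger on a genuine imprecision in the paper's statement. The claimed equality $\cB(\Psi)=\max_i\{|a_i|\cB(\Phi_i)\}$ does not follow from the construction, for two separate reasons. First, the layers $\ell<L$ of $\Psi$ carry the weights of the $\Phi_i$ unscaled, so their magnitudes are bounded only by $\max_i\cB(\Phi_i)$, which can exceed $\max_i|a_i|\cB(\Phi_i)$ when all $|a_i|<1$. Second, and this is the issue you focus on, the top-layer bias is genuinely a sum: $(\widetilde b_L)_j=\sum_{i=1}^n a_i(b_L^i)_j$, so $\|\widetilde b_L\|_\infty$ is controlled only by $\sum_i|a_i|\cB(\Phi_i)$. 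Your attempted workaround, ``pushing $R$ into $A_L$ only and leaving $b_L$ untouched,'' is not actually available: $b_L\in\R^{nd'}$ while the output bias must live in $\R^{d'}$, and if $\Psi$ is to equal $R\circ\Psi_0$ at depth $L$ then the bias is forced to be $R\,b_L=\sum_i a_i b_L^i$. Routing the biases through a separate coordinate track would either enlarge the output dimension or add a layer, both of which violate the lemma's claimed $d'$ and $\cL$. Consequently, the formula you end up writing, $\max\{\max_i\cB(\Phi_i),\max_i|a_i|\cB(\Phi_i)\}$, is also not what your construction delivers, because it omits the bias sum.

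The honest statement that the (paper's and your) construction proves is $\cB(\Psi)\le\max\bigl\{\max_i\cB(\Phi_i),\ \sum_{i=1}^n|a_i|\cB(\Phi_i)\bigr\}$, and the ``$=$'' in the lemma should be a ``$\le$'' with this looser right-hand side. In every downstream application in the paper the top-layer biases of the constituent networks vanish (e.g.\ $b_{m+2}=0$ in Proposition~\ref{relu_mult}, $b_{m+1}=0$ in Proposition~\ref{relu_square}), so the bias sum is zero and the looser bound never causes trouble; but as a freestanding lemma the $\cB$ claim is overstated, you were right to probe it, and the fix is simply to weaken the bound rather than to try to reroute the biases.
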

\begin{proof}
    The proof follows by taking the construction in Lemma~\ref{network_parallelization}, replacing $A_L$ by $(a_1A^1_L,a_2A^2_L,\dots,a_nA^n_L)$, $b_L$ by $\sum_{i=1}^n a_i b^i_L$, and noting that the resulting network satisfies the claimed properties.
\end{proof}

\section{Approximation of multiplication, polynomials, smooth functions, and sinusoidals}
\label{func-mult}

This section constitutes the first part of the paper dealing with the approximation of basic function ``templates" through neural networks. Specifically, we shall develop an algebra of neural network approximation by starting with the squaring function, building thereon to approximate the multiplication function, proceeding to polynomials and general smooth functions, and ending with sinusoidal functions.

The basic element of the neural network algebra we develop is based on an approach by Yarotsky~\cite{yarotsky:2016ReLU} and by Schmidt-Hieber \cite{Schmidt_Hieber}, both of whom, in turn, employed the ``sawtooth'' construction from \cite{telgarsky:2015Sawtooth}.

We start by reviewing the sawtooth construction underlying our program.
Consider the hat function $g: \R \rightarrow [0,1]$,
\begin{equation*}
g(x) = 2\rho(x)-4\rho(x-\tfrac{1}{2})+2\rho(x-1)=\begin{cases} 
2x, &\mbox{if } 0\leq x < \frac{1}{2}\\
2(1-x), &\mbox{if } \frac{1}{2}\leq x \leq 1 \\ 
0, &\mathrm{else}
\end{cases},
\end{equation*}
let $g_{0}(x)=x,g_{1}(x)=g(x)$, and define the $s$-th order sawtooth function $g_s$ as the $s$-fold composition of $g$ with itself, i.e.,
\begin{equation}
\label{g_s_def}
g_s := \underbrace{g \circ g \circ \dots \circ g}_{s},
\quad s \geq 2.
\end{equation}
We note that $g$ can be realized by a $2$-layer network $\Phi_g \in \cN_{1,1}$ according to 
$\Phi_g: = W_2 \circ \rho \circ W_1 = g$
with
\[W_1(x) =  \hspace{-0.1cm} 
 \begin{pmatrix}
  1\\
  1\\
  1
 \end{pmatrix}  
 \hspace{-0.1cm}x \
 -  
 \begin{pmatrix}
  0 \\
  1/2 \\
  1
 \end{pmatrix} \hspace{-0.1cm}, \hspace{1cm}
 W_2(x) = \begin{pmatrix}
  2 & -4 & 2
 \end{pmatrix}  \hspace{-0.1cm} \begin{pmatrix}
 x_1\\
 x_2\\
 x_3
 \end{pmatrix}\hspace{-0.1cm}.
\]
The $s$-th order sawtooth function $g_s$ can hence be realized by a network $\Phi^s_g \in \cN_{1,1}$ according to 
\begin{align}\label{eq:Phisg}
   \Phi^s_g := W_2 \circ \rho \circ \underbrace{W_g \circ \rho \circ \dots \circ W_g \circ \rho}_{s-1}\circ \, W_1 = g_s 
\end{align}
with
\[W_g(x) = \begin{pmatrix}
  2 & -4 & 2\\
  2 & -4 & 2\\
  2 & -4 & 2
 \end{pmatrix} \begin{pmatrix}
 x_1\\
 x_2\\
 x_3
 \end{pmatrix}
 -
 \begin{pmatrix}
  0 \\
  1/2 \\
  1
 \end{pmatrix}.
\]
The following restatement of \cite[Lemma 2.4]{telgarsky:2015Sawtooth} summarizes the self-similarity and symmetry properties of $g_{s}(x)$ we will frequently make use of.
\begin{lemma}\label{lem:gsproperties}
For $s\in\N$, $k\in\{0,1,\dots,2^{s-1}-1\}$, it holds that $g(2^{s-1}\cdot-k)$ is supported in $\left[\frac{k}{2^{s-1}}, \frac{k+1}{2^{s-1}}\right]$,
\begin{align*}
   g_s(x) = \sum^{2^{s-1}-1}_{k=0} g(2^{s-1}x-k),\quad \mathrm{for}\ x\in[0,1],
\end{align*} 
and
\begin{align*}
    g_s\left(\tfrac{k}{2^{s-1}}+x\right)=g_s\left(\tfrac{k+1}{2^{s-1}}-x\right),\quad \mathrm{for}\ x\in\left[0, \tfrac{1}{2^{s-1}}\right].
\end{align*}
\end{lemma}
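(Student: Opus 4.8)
The plan is to prove the three assertions in turn: the support statement is immediate, the sum formula follows by induction on $s$, and the symmetry relation is a short consequence of the first two. For the support claim, note that $\rho$ vanishes on $(-\infty,0]$, so the piecewise formula for $g$ shows that $g$ is supported in $[0,1]$; hence $g(2^{s-1}x-k)\neq 0$ forces $0\le 2^{s-1}x-k\le 1$, i.e.\ $x\in[k/2^{s-1},(k+1)/2^{s-1}]$. I would also record up front the reflection identity $g(y)=g(1-y)$ for all $y\in\R$, which is checked by inspecting the three defining branches and will be used repeatedly below.

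For the sum formula, set $h_s(x):=\sum_{k=0}^{2^{s-1}-1}g(2^{s-1}x-k)$; I will show $g_s=h_s$ on $[0,1]$ by induction on $s$, the base case $s=1$ being $h_1=g=g_1$. For the step, write $g_{s+1}=g_s\circ g$ (associativity of composition); since $g$ maps $[0,1]$ into $[0,1]$, the induction hypothesis yields, for $x\in[0,1]$,
\[g_{s+1}(x)=h_s(g(x))=\sum_{k=0}^{2^{s-1}-1}g\!\left(2^{s-1}g(x)-k\right).\]
On $[0,1/2]$ one has $g(x)=2x$, so $2^{s-1}g(x)=2^sx$ and the sum becomes $\sum_{k=0}^{2^{s-1}-1}g(2^sx-k)$. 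On $[1/2,1]$ one has $g(x)=2-2x$, so $2^{s-1}g(x)=2^s-2^sx$, and applying $g(y)=g(1-y)$ together with the substitution $j=2^s-1-k$ turns the sum into $\sum_{j=2^{s-1}}^{2^s-1}g(2^sx-j)$. By the support statement, the summands of $h_{s+1}$ with index $\ge 2^{s-1}$ vanish on $[0,1/2]$ and those with index $\le 2^{s-1}-1$ vanish on $[1/2,1]$ (in both cases also at the shared point $x=1/2$, where the boundary terms equal $g(0)=g(1)=0$), so each half-interval expression equals $h_{s+1}$; since the two pieces agree at $x=1/2$, we conclude $g_{s+1}=h_{s+1}$ on $[0,1]$.

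For the symmetry relation, observe that by the support statement, on $I_k:=[k/2^{s-1},(k+1)/2^{s-1}]$ every summand of $h_s$ other than the $k$-th vanishes (adjacent supports overlap only at a single endpoint, where the relevant hat functions are zero), so $g_s(y)=g(2^{s-1}y-k)$ for $y\in I_k$. Substituting $y=k/2^{s-1}+x$ and $y=(k+1)/2^{s-1}-x$ with $x\in[0,2^{-(s-1)}]$ gives $g_s(k/2^{s-1}+x)=g(2^{s-1}x)$ and $g_s((k+1)/2^{s-1}-x)=g(1-2^{s-1}x)$, and these coincide by $g(y)=g(1-y)$.

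The only mildly delicate part is the bookkeeping in the inductive step: the reindexing $j=2^s-1-k$ and the verification that the "missing" overlapping hats vanish on each half-interval, including at the shared endpoint $x=1/2$. Both are handled entirely by the reflection identity $g(y)=g(1-y)$ and the support statement, so I expect no genuine obstacle — this is essentially a careful restatement of the classical sawtooth computation.
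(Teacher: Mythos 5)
Your proof is correct. Note, however, that the paper itself does not prove Lemma~\ref{lem:gsproperties} at all: it is quoted verbatim as a restatement of \cite[Lemma 2.4]{telgarsky:2015Sawtooth}, so there is no in-paper argument to compare against. Your self-contained verification — supports of shifted hats, the reflection identity $g(y)=g(1-y)$, the induction via $g_{s+1}=g_s\circ g$ with the case split on $[0,1/2]$ versus $[1/2,1]$ and the reindexing $j=2^s-1-k$, and finally reducing the symmetry claim to the reflection identity on a single subinterval $I_k$ — is sound and fills in exactly the bookkeeping that Telgarsky's lemma encapsulates. The one spot worth double-checking, which you did handle, is that the boundary hats vanish at the shared endpoint $x=1/2$ (via $g(0)=g(1)=0$) so the two half-interval computations glue without conflict.
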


We are now ready to proceed with the statement of the basic building block of our neural network algebra, namely the approximation of the squaring function through deep ReLU networks.

\begin{figure}
    \centering
\newcommand{\Figscale}{0.75}
\newcommand{\FigLineT}{thick}
\begin{tikzpicture}[scale=\Figscale,
  declare function={
    func(\x)= 2*max(0,\x)-4*max(0,\x-0.5);
  }
]
\begin{axis}[
  ymin=0, ymax=0.25, 
  xmin=0, xmax=1,
  domain=0:1,   
  xtick={0,0.25,0.5,0.75,1},
  xticklabels={0,$\frac{1}{4}$,$\frac{2}{4}$,$\frac{3}{4}$,1},
  ytick={0,1/16,2/16,3/16,4/16},
  yticklabels={0,$\frac{1}{16}\ $,$\frac{2}{16}\ $,$\frac{3}{16}\ $,$\frac{4}{16}\ $},
  width=5cm,
  height=5cm,
  scale only axis=true,
  legend style={at={(1,1)},anchor=north}
];
  \addplot[black, \FigLineT, samples=200]{x-x*x};
  \addplot[blue, \FigLineT, samples at={0,0.5,1}]{func(x)/4};
  \addplot[red!80!black, \FigLineT, samples=200]{x-x*x-func(x)/4};
  \legend{$F$,$I_1$,$F-I_1$}
\end{axis}
\end{tikzpicture}\hspace{0.5cm}
\begin{tikzpicture}[scale=\Figscale,
  declare function={
    func(\x)= 2*max(0,\x)-4*max(0,\x-0.5);
  }
]
\begin{axis}[
  ymin=0, ymax=1/16, 
  xmin=0, xmax=1, 
  domain=0:1,
  scaled ticks=false,
  xtick={0,1/8,2/8,3/8,4/8,5/8,6/8,7/8,1},
  xticklabels={0,$\frac{1}{8}$,$\frac{2}{8}$,$\frac{3}{8}$,$\frac{4}{8}$,$\frac{5}{8}$,$\frac{6}{8}$,$\frac{7}{8}$,1},
  ytick={0,1/64,2/64,3/64,4/64},
  yticklabels={0,$\frac{1}{64}$,$\frac{2}{64}$,$\frac{3}{64}$,$\frac{4}{64}$},
  width=10cm,
  height=5cm,
  scale only axis=true,
  legend style={at={(1,1)},anchor=north}
];
  \addplot[red!80!black, \FigLineT, samples=257]{x-x*x-func(x)/4};
  \addplot[blue, \FigLineT, samples at={0,0.25,0.5,0.75,1}]{func(func(x))/16};
  \addplot[green!50!black, \FigLineT, samples=257]{x-x*x-func(x)/4-func(func(x))/16};
  \legend{$F-I_1$,$I_2-I_1$,$F-I_2$}
\end{axis}
\end{tikzpicture}
\begin{tikzpicture}[scale=\Figscale,
  declare function={
    func(\x)= 2*max(0,\x)-4*max(0,\x-0.5);
  }
]
\begin{axis}[
  ymin=0, ymax=1/64, 
  xmin=0, xmax=1, 
  domain=0:1,
  scaled ticks=false,
  xtick={0,1/16,2/16,3/16,4/16,5/16,6/16,7/16,8/16,9/16,10/16,11/16,12/16,13/16,14/16,15/16,1},
  xticklabels={0,$\frac{1}{16}$,$\frac{2}{16}$,$\frac{3}{16}$,$\frac{4}{16}$,$\frac{5}{16}$,$\frac{6}{16}$,$\frac{7}{16}$,$\frac{8}{16}$,$\frac{9}{16}$,$\frac{10}{16}$,$\frac{11}{16}$,$\frac{12}{16}$,$\frac{13}{16}$,$\frac{14}{16}$,$\frac{15}{16}$,1},
  ytick={0,1/256,2/256,3/256,4/256},
  yticklabels={0,$\frac{1}{256}$,$\frac{2}{256}$,$\frac{3}{256}$,$\frac{4}{256}$},
  width=20cm,
  height=5cm,
  scale only axis=true,
  legend style={at={(1,1)},anchor=north}
];
  \addplot[green!50!black, \FigLineT, samples=500]{x-x*x-func(x)/4-func(func(x))/16};
  \addplot[blue, \FigLineT, samples at={0,0.125,0.25,0.375,0.5,0.625,0.75,0.875,1}]{func(func(func(x)))/64};
  \addplot[orange, \FigLineT, samples=500]{x-x*x-func(x)/4-func(func(x))/16-func(func(func(x)))/64};
  \legend{$F-I_2$,$I_3-I_2$,$F-I_3$}
\end{axis}
\end{tikzpicture}
    \caption{First three steps of approximating $F(x)=x-x^2$ by an equispaced linear interpolation $I_m$ at $2^m+1$ points.}
    \label{fig:my_label}
\end{figure}
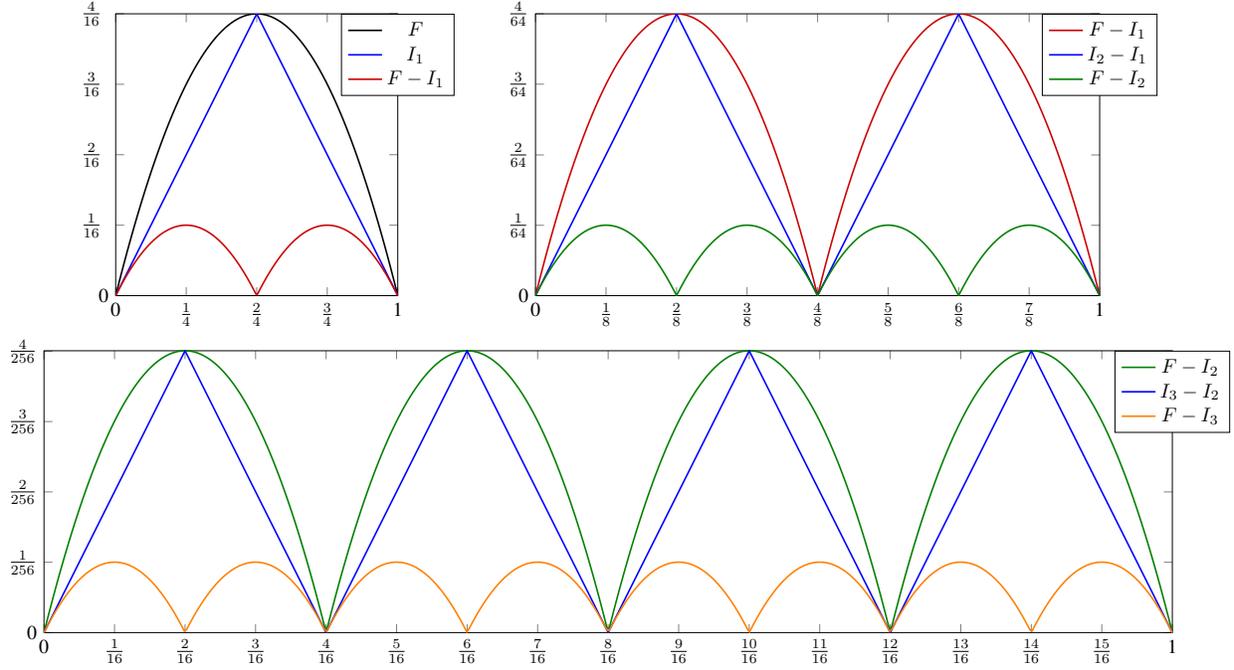

\begin{proposition}
\label{relu_square}
There exists a constant $C>0$ such that for all $\eps\in(0,1/2)$, there is a network $\Phi_{\epsilon}\in\cN_{1,1}$ with 
$\mathcal{L}(\Phi_{\epsilon}) \leq C\log(\epsilon^{-1})$,
$\mathcal{W}(\Phi_{\epsilon}) = 3$, 
$\mathcal{B}(\Phi_\eps) = 1$,
$\Phi_{\epsilon}(0)=0$, satisfying
\begin{equation*}
\|\Phi_{\epsilon}(x) - x^2 \|_{L^{\infty}([0,1])} \leq \epsilon.
\end{equation*}
\end{proposition}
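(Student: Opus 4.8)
The plan is to reduce the problem to approximating $F(x):=x-x^2$ on $[0,1]$ and to exploit the self-similarity of $F$ with respect to the hat map $g$. Concretely, I would set $I_m:=\sum_{j=1}^m 4^{-j}g_j$ and $\Phi_\epsilon(x):=x-I_m(x)$; since $x^2=x-F(x)$ we then have $\|\Phi_\epsilon(x)-x^2\|_{L^\infty([0,1])}=\|F-I_m\|_{L^\infty([0,1])}$. The first step is the identity $F(x)=\tfrac14 g(x)+\tfrac14 F(g(x))$ for $x\in[0,1]$, which is checked by a direct computation on $[0,\tfrac12]$ and $[\tfrac12,1]$, where $g$ is affine. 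Iterating it and using $g([0,1])\subseteq[0,1]$ together with $0\le F\le\tfrac14$ on $[0,1]$ gives $F=I_m+4^{-m}(F\circ g_m)$ on $[0,1]$, whence $\|F-I_m\|_{L^\infty([0,1])}\le\tfrac14\,4^{-m}$. Choosing $m:=\lceil\tfrac12\log(\epsilon^{-1})\rceil$ makes the right-hand side $\le\epsilon$ and keeps $m\le C\log(\epsilon^{-1})$. (These $I_m$ are precisely the piecewise-linear interpolants of $F$ depicted in Figure~\ref{fig:my_label}.)

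The substantive part is to realize $\Phi_\epsilon=x-I_m$ by a width-$3$ ReLU network whose weights all have absolute value $\le 1$ — note that the straightforward realization of $g_j=g\circ\cdots\circ g$ from \eqref{eq:Phisg} uses weights $2$ and $4$, so a rescaling is indispensable. The idea is to propagate, from one hidden layer to the next, the triple $(a_s,\beta_s,p_s)$ defined by $\beta_s:=4^{-s}g_s(x)$, $a_s:=x-I_s(x)$, and $p_s:=\rho(\beta_s-\tfrac{1}{2\cdot 4^s})=4^{-s}\rho(g_s(x)-\tfrac12)$. Using $g(y)=2\min(y,1-y)$ for $y\in[0,1]$ one verifies the recursion $\beta_{s+1}=\tfrac12\beta_s-p_s$, $a_{s+1}=a_s-\beta_{s+1}$, $p_{s+1}=\rho(\beta_{s+1}-\tfrac{1}{2\cdot 4^{s+1}})$; moreover $\beta_s\ge 0$ and $a_s=x-I_s(x)\ge x-F(x)=x^2\ge 0$ (since $g_j\ge 0$, so $I_s\le F$ on $[0,1]$), so the ReLU acts as the identity on the first two coordinates. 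Hence the update $(a_s,\beta_s,p_s)\mapsto(a_{s+1},\beta_{s+1},p_{s+1})$ is implemented by a single affine map followed by $\rho$, and — this is the whole point of the rescaling — the factors $4^{-s}$ hidden inside $\beta_s$ and $p_s$ exactly absorb the slope $2$ of $g$, so every entry of that affine map lies in $\{0,\pm\tfrac12,\pm 1\}$ except for a bias $-\tfrac{1}{2\cdot 4^{s+1}}\in[-\tfrac18,0]$.

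It then remains to prepend two layers implementing $x\mapsto(\rho(x),\rho(x-\tfrac12))$ followed by $(\rho(x),\rho(x-\tfrac12))\mapsto(a_1,\beta_1,p_1)$ — where $\beta_1=\tfrac12\rho(x)-\rho(x-\tfrac12)$, $a_1=\tfrac12\rho(x)+\rho(x-\tfrac12)$, $p_1=\rho(\beta_1-\tfrac18)$, all with weights $\le 1$ — and to append the affine read-out $(a_m,\beta_m,p_m)\mapsto a_m$. Counting layers yields depth $m+2\le C\log(\epsilon^{-1})$; the layer widths are $1,2,3,\dots,3,1$, so $\mathcal{W}(\Phi_\epsilon)=3$; all weights are bounded by $1$, so $\mathcal{B}(\Phi_\epsilon)\le 1$; and at $x=0$ every propagated coordinate stays $0$, so $\Phi_\epsilon(0)=0$. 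The error bound $\|\Phi_\epsilon(x)-x^2\|_{L^\infty([0,1])}=\|F-I_m\|_{L^\infty([0,1])}\le\epsilon$ was established in the first step. I expect the genuinely delicate point to be the weight-magnitude constraint $\mathcal{B}(\Phi_\epsilon)\le 1$: it is what forces the $4^{-s}$-rescaling of the carried quantities and the particular three-channel bookkeeping above; without that constraint the classical Telgarsky/Yarotsky sawtooth construction would already suffice.
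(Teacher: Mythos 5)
Your proof is correct and takes essentially the same route as the paper: the quantities you propagate, namely $a_s=x-I_s(x)$, $\beta_s=4^{-s}g_s(x)$, and $p_s=4^{-s}\rho(g_s(x)-\tfrac12)$, coincide (up to a relabeling and a one-layer index shift, since the paper folds your first two layers into a single width-$3$ layer) with the three channels the paper tracks, and your affine updates are the same as the paper's matrices $A_\ell$ with weights in $\{0,\pm\tfrac12,\pm 1\}$ and small biases. The only cosmetic difference is that you derive the interpolation-error bound $\|F-I_m\|_{L^\infty([0,1])}\le 4^{-m-1}$ from the clean self-similarity identity $F=\tfrac14 g+\tfrac14(F\circ g)$, whereas the paper reaches the identical estimate via the local rescaled functions $f_m$, $h_m$; both arguments are equivalent.
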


\begin{proof}
    The proof builds on two rather elementary observations. The first one concerns the linear interpolation $I_m\colon[0,1]\to\R$, $m \in \N$, of the function $F(x):=x-x^2$ at the points $\tfrac{j}{2^m}$, $j\in\{0,1,\dots,2^m\}$, and in particular the self-similarity of the refinement step $I_m\to I_{m+1}$. For every $m\in\N$, the residual $F-I_m$ is identical on each interval between two points of interpolation (see Figure~\ref{fig:my_label}).
    Concretely, let $f_m\colon[0,2^{-m}]\to[0,2^{-2m-2}]$ be defined as $f_m(x)=2^{-m}x-x^2$ and consider its linear interpolation $h_m\colon[0,2^{-m}]\to[0,2^{-2m-2}]$ at the midpoint and the endpoints of the interval $[0,2^{-m}]$ given by
    \begin{align*}
        h_m(x):=\begin{cases}2^{-m-1}x, & x\in[0,2^{-m-1}] \\ -2^{-m-1}x + 2^{-2m-1}, & x\in[2^{-m-1} ,2^{-m}]  \end{cases}.
    \end{align*}
    Direct calculation shows that
    \begin{align*}
        f_m(x)-h_m(x) = \begin{cases}f_{m+1}(x), & x\in[0,2^{-m-1}] \\ f_{m+1}(x-2^{-m-1}), & x\in[2^{-m-1} ,2^{-m}]  \end{cases}.
    \end{align*}
    As $F=f_0$ and $I_1=h_0$ this implies that, for all $m\in\N$, 
    \begin{align*}
        F(x)-I_m(x) = f_m(x-\tfrac{j}{2^{m}}), \,\,\, \mathrm{for} \,\,\, x\in[\tfrac{j}{2^m},\tfrac{j+1}{2^m}], \,\,\, j\in\{0,1,\dots,2^m-1\}
    \end{align*}
    and $I_m=\sum_{k=0}^{m-1} H_k$, where $H_k\colon[0,1]\to\R$ is given by 
    \begin{align*}
        H_k(x) = h_k(x-\tfrac{j}{2^k}),\,\,\, \mathrm{for} \,\,\, x\in[\tfrac{j}{2^k},\tfrac{j+1}{2^k}], \,\,\, j\in\{0,1,\dots,2^k-1\}.
    \end{align*}
    Thus, we have
    \begin{align}\label{eq:square_inter_err}
        \sup_{x\in[0,1]}|x^2 - (x - I_m(x))| = \sup_{x\in[0,1]}|F(x)-I_m(x)| =
        \sup_{x\in[0,2^{-m}]}|f_m(x)| = 2^{-2m-2}.
    \end{align}
    
    The second observation we build on is a manifestation of the sawtooth construction described above and leads to economic realizations of the $H_k$ through $k$-layer networks with two neurons in each layer; a third neuron is used to realize the approximation  $x-I_m(x)$ to $x^2$. Concretely, let $s_k(x):=2^{-1}\rho(x)-\rho(x-2^{-2k-1})$, and note that, for $x\,\in\,[0,1]$, $H_0=s_0$, we get $H_k=s_k\circ H_{k-1}$. We can thus construct a network realizing $x-I_m(x)$, for $x\in[0,1]$, as follows. Let $A_1:=(1,1,1)^T\in\R^{3\times 1}$, $b_1:=(0,-2^{-1},0)^T\in\R^{3}$,
    \begin{align*}
        A_\ell:=\begin{pmatrix} 2^{-1} & -1 & 0 \\ 2^{-1} & -1 & 0 \\ -2^{-1} & 1 & 1\end{pmatrix}\in\R^{3\times 3}, \quad b_\ell:=\begin{pmatrix} 0 \\ -2^{-2\ell+1} \\ 0 \end{pmatrix}\in\R^{3},\quad \text{for } \ell\in\{2,\dots,m\},
    \end{align*}
    and $A_{m+1}:=(-2^{-1},1,1)\in\R^{1 \times 3}$, $b_{m+1}=0$. Setting $W_\ell(x):=A_\ell x + b_\ell$, $\ell\in\{1,2,\dots,m+1\}$, and
    \begin{align*}
        \tilde{\Phi}_m:= W_{m+1}\circ\rho \circ W_{m} \circ \rho \circ \dots \circ \rho \circ W_{1},
    \end{align*}
    a direct calculation yields $\tilde{\Phi}_m(x)= x - \sum_{k=0}^{m-1} H_k(x)$, for $x\in[0,1]$. The proof is completed upon noting that the networks $\Phi_\eps:=\tilde{\Phi}_{\lceil \log(\eps^{-1})/2\rceil}$ satisfy the claimed properties.
\end{proof}

The symmetry properties of $g_s(x)$
according to Lemma \ref{lem:gsproperties} lead to the interpolation error in the proof of Proposition~\ref{relu_square} being identical in each interval, with the maximum error taken on at the centers of the respective intervals. More importantly, however, the approximating neural networks realize linear interpolation at a number of points that grows exponentially in network depth. This is a manifestation of the fact that the number of linear regions in the sawtooth construction (\ref{eq:Phisg}) grows exponentially with depth, which, owing to Lemma~\ref{defsawtooth}, is optimal. 
We emphasize that the theory developed in this paper hinges critically on this optimality property, which, however, is brittle in the sense that networks with weights obtained through training will, as observed in \cite{NIPS2019_8328}, in general, not exhibit exponential growth of the number of linear regions with network depth. 
An interesting approach to neural network training which manages to partially circumvent this problem was proposed recently in~\cite{fokina2019growing}.
Understanding how the number of linear regions grows in general trained networks and quantifying the impact of this---possibly subexponential---growth behavior on the approximation-theoretic fundamental limits of neural networks constitutes a major open problem.

We proceed to the construction of networks that approximate the multiplication function over the interval $[-D,D]$. This will be effected by using the result on the approximation of $x^{2}$ just established combined with the polarization identity $xy = \tfrac{1}{4}((x+y)^2-(x-y)^2)$, the fact that $\rho(x) + \rho(-x)=|x|$, and a scaling argument exploiting that the ReLU function is positive homogeneous, i.e., $\rho(\lambda x)=\lambda\rho(x)$, for all $\lambda \geq 0$, $x\in\R$.

\begin{proposition}
\label{relu_mult}
There exists a constant $C>0$ such that, for all $D \in \mathbb{R}_+$ and $\eps\in(0,1/2)$, there is a network $\Phi_{D,\epsilon}\in\cN_{2,1}$ with $\mathcal{L}(\Phi_{D,\epsilon}) \leq C (\log(\lceil D \rceil) +\log( \epsilon^{-1}))$, $\mathcal{W}(\Phi_{D,\epsilon}) \leq 5$, $\mathcal{B}(\Phi_{D,\eps}) = 1$, satisfying
$\Phi_{D,\eps}(0,x)=\Phi_{D,\eps}(x,0)=0$, for all $x\in\R$, and
\begin{equation}
\label{relu-mult-statement}
\|\Phi_{D,\epsilon}(x,y) - xy \|_{L^{\infty}([-D,D]^2)} \leq \epsilon.
\end{equation}
\end{proposition}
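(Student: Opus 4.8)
The plan is to reduce multiplication on $[-D,D]^2$ to squaring on $[0,1]$ by combining the polarization identity with the relations $t^2=|t|^2$ and $|t|=\rho(t)+\rho(-t)$, and to absorb the domain size $D$ by rescaling the input (which only shrinks weights) and rescaling the output through a logarithmically deep cascade of constant-size blocks. Since enlarging $D$ only strengthens the statement, I would first reduce to the case $D=2^{k-1}$ with $k\in\N$: for $k:=\lceil\log(2\max\{D,1\})\rceil$ one has $2^{k-1}\ge D$, $k\ge1$, and $k\le2+\log\lceil D\rceil$. For $(x,y)\in[-D,D]^2$ we then have $x+y,x-y\in[-2^k,2^k]$, so $|x+y|/2^k,|x-y|/2^k\in[0,1]$ and
\begin{align*}
  xy=\tfrac14\big((x+y)^2-(x-y)^2\big)=2^{2k-2}\left(\Big(\tfrac{|x+y|}{2^k}\Big)^2-\Big(\tfrac{|x-y|}{2^k}\Big)^2\right).
\end{align*}

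Next I would put $\delta:=\eps\,2^{1-2k}\in(0,1/2)$, take the squaring network $\Phi_\delta\in\cN_{1,1}$ from Proposition~\ref{relu_square} (so $\cW(\Phi_\delta)=3$, $\cB(\Phi_\delta)\le1$, $\cL(\Phi_\delta)\le C\log(\delta^{-1})$, $\Phi_\delta(0)=0$), and define
\begin{align*}
  \Phi_{D,\eps}(x,y):=2^{2k-2}\left(\Phi_\delta\Big(\tfrac{|x+y|}{2^k}\Big)-\Phi_\delta\Big(\tfrac{|x-y|}{2^k}\Big)\right).
\end{align*}
The displayed identity, the bound $|\Phi_\delta(t)-t^2|\le\delta$ on $[0,1]$, and the triangle inequality give $\|\Phi_{D,\eps}(x,y)-xy\|_{L^\infty([-D,D]^2)}\le2^{2k-2}\cdot2\delta=2^{2k-1}\delta=\eps$. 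For $x=0$ (and symmetrically for $y=0$) the two arguments $|x\pm y|/2^k$ coincide, so the two evaluations of $\Phi_\delta$ cancel and $\Phi_{D,\eps}(0,x)=\Phi_{D,\eps}(x,0)=0$ for every $x\in\R$; for $x=y=0$ this agrees with $\Phi_\delta(0)=0$.

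It remains to realize $\Phi_{D,\eps}$ as a ReLU network of the claimed size. Using the bookkeeping of Lemmas~\ref{network_conc}, \ref{network_extension}, \ref{network_parallelization} and \ref{network_linearcombination}, I would assemble it from the following blocks, each of weight magnitude at most $1$: (i) one affine map followed by $\rho$ turning $(x,y)$ into the four non-negative numbers $\rho(\pm(x+y)),\rho(\pm(x-y))$; (ii) a copy of $\Phi_\delta$ fed with $2^{-k}\big(\rho(x+y)+\rho(-(x+y))\big)=|x+y|/2^k$ (the scalar $2^{-k}\le1$ causing no difficulty for the weight bound), run in parallel with the identity on the two non-negative coordinates $\rho(\pm(x-y))$, which are thereby carried forward; (iii) a second copy of $\Phi_\delta$ fed with $|x-y|/2^k=2^{-k}\big(\rho(x-y)+\rho(-(x-y))\big)$, run in parallel with the identity on two non-negative coordinates encoding the earlier output $\Phi_\delta(|x+y|/2^k)$; (iv) one affine map forming the signed value $\Phi_\delta(|x+y|/2^k)-\Phi_\delta(|x-y|/2^k)$, followed by $2(k-1)$ ``doubling'' blocks, each realizing $z\mapsto2z$ via $z\mapsto(z,z,-z,-z)\mapsto\rho\mapsto\rho(z)+\rho(z)-\rho(-z)-\rho(-z)$ with weights in $\{-1,0,1\}$, which together multiply by the power of two $2^{2k-2}$. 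With the blocks stitched together layer by layer and the non-negative intermediate values routed without the auxiliary $\rho(\cdot)-\rho(-\cdot)$ copies of generic composition, no weight exceeds $1$ and at most two neurons beyond the width-$3$ copy of $\Phi_\delta$ are present in any layer, so $\cW(\Phi_{D,\eps})\le5$ and $\cB(\Phi_{D,\eps})\le1$; since $\cL(\Phi_\delta)=O(\log\delta^{-1})=O(k+\log\eps^{-1})$ and the cascade adds depth $O(k)$, we get $\cL(\Phi_{D,\eps})=O(k+\log\eps^{-1})=O(\log\lceil D\rceil+\log\eps^{-1})$, the additive constants absorbed via $\eps<1/2$. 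The step I expect to be the main obstacle is (iv): the output must be rescaled by the large factor $2^{2k-2}$, which is of order $D^2$ and hence cannot be produced by any single weight bounded by $1$; spreading it over $\Theta(\log D)$ blocks is exactly what forces the depth to scale with $\log\lceil D\rceil$, and---together with the tight width budget of $5$---it makes the layer-by-layer routing of the carried coordinates the delicate bookkeeping point.
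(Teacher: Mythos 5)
Your proposal is correct and proves the same bounds, but its construction differs structurally from the paper's in two places. First, the paper evaluates both squaring sub-networks \emph{in parallel} across the same set of layers, using a shared accumulator neuron (the middle coordinate of the explicit $5\times 5$ matrices $A_\ell$) that tracks the running difference of the two partial sums; this is what lets a parallel pair of width-$3$ squaring networks fit in width $5$ rather than $6$. You instead evaluate the two copies of $\Phi_\delta$ \emph{sequentially}, carrying the not-yet-consumed nonnegative intermediate data along on spare neurons. Both fit the width budget of $5$; your version is conceptually simpler to verify (no shared-accumulator trick to design) but roughly doubles the depth constant, which is harmless here. Second, the paper normalizes by $2D$ directly and restores the scale with the general scalar-multiplication network $\Psi_{D}$ of Lemma~\ref{lem:scalar_mult} (width $3$, handling an arbitrary positive factor $D^2$ via a final rescaling by $\alpha\le 1$); you instead round $D$ up to a power of two and use $2(k-1)$ hand-rolled width-$4$ doubling blocks. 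Your route avoids Lemma~\ref{lem:scalar_mult} entirely at the cost of the rounding and a correspondingly smaller error budget $\delta=\eps\,2^{1-2k}$, which you account for correctly. Two small points worth polishing: (a) the width argument relies on all intermediate values being nonnegative so that one can skip the $\rho(\cdot)-\rho(-\cdot)$ splitting of Lemma~\ref{network_conc}; this does hold (as $\tilde\Phi_m(z)=z-I_m(z)\ge z^2\ge0$ on $[0,1]$), but is worth stating explicitly as it is the crux of the width bound. (b) The zero property $\Phi_{D,\eps}(0,x)=\Phi_{D,\eps}(x,0)=0$ requires that the two squaring sub-networks be exact copies of the same $\Phi_\delta$; you do use identical copies, so this is fine, but it should be highlighted since the exact cancellation is what makes the claim hold for all $x\in\R$, not just $x\in[-D,D]$.
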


\begin{proof}
We first note that, w.l.o.g., we can assume $D\geq 1$ in the following, as for $D<1$, we can simply employ the network constructed for $D=1$ to
guarantee the claimed properties. The proof builds on the polarization identity and essentially constructs two squaring networks according to Proposition~\ref{relu_square} which share the neuron responsible for summing up the $H_k$, preceded by a layer mapping $(x,y)$ to $(|x+y|/(2D),|x-y|/(2D))$ and followed by layers realizing the multiplication by $D^2$ through weights bounded by $1$. Specifically, consider the network $\tilde{\Psi}_m$ with associated matrices $A_\ell$ and vectors $b_\ell$ given by 
    \begin{align*}
        A_1&:=\frac{1}{2D}\begin{pmatrix}1 & 1 \\ -1 & -1 \\ 1 & -1 \\ -1 & 1\end{pmatrix}
        \in\R^{4\times 2}, \quad b_1:=0\in\R^4,\quad
        A_2:=\begin{pmatrix}  1 & 1 & 0 & 0 \\ 1 & 1 & 0 & 0 \\ 1 & 1 & -1 & -1 \\ 0 & 0 & 1 & 1 \\ 0 & 0 & 1 & 1 \end{pmatrix}\in\R^{5 \times 4}, \quad b_2:=\begin{pmatrix} 0 \\ -2^{-1} \\ 0 \\ 0 \\ -2^{-1}
        \end{pmatrix}\\
        A_\ell&:=\begin{pmatrix}  2^{-1} & -1 & 0 & 0 & 0 \\ 2^{-1} & -1 & 0 & 0 & 0 \\ -2^{-1} & 1 & 1 & 2^{-1} & -1 \\ 0 & 0 & 0 & 2^{-1} & -1 \\ 0 & 0 & 0 & 2^{-1} & -1 \end{pmatrix}\in\R^{5 \times 5}, \quad b_\ell:=\begin{pmatrix} 0 \\ -2^{-2\ell+3} \\ 0 \\ 0 \\ -2^{-2\ell+3} \end{pmatrix}, \quad \text{for } \ell\in \{3,\dots,m+1\},
    \end{align*}
    and $A_{m+2}:=(-2^{-1}, 1, 1, 2^{-1},-1)\in\R^{1\times 5}$, $b_{m+2}:=0$. A direct calculation yields 
    \begin{align}\begin{split}\label{eq:polarization}
    \tilde{\Psi}_m(x,y)&=\left(\tfrac{|x+y|}{2D}-\sum_{k=0}^{m-1}H_k\big(\tfrac{|x+y|}{2D}\big)\right)-\left(\tfrac{|x-y|}{2D}-\sum_{k=0}^{m-1}H_k\big(\tfrac{|x-y|}{2D}\big)\right)\\
    &=\tilde{\Phi}_m\left(\tfrac{|x+y|}{2D}\right)-\tilde{\Phi}_m\left(\tfrac{|x-y|}{2D}\right),
    \end{split}\end{align}
    with $H_k$ and $\tilde{\Phi}_m$ as defined in the proof of Proposition~\ref{relu_square}. With \eqref{eq:square_inter_err} this implies 
    \begin{align}\begin{split}\label{eq:mult_helper_est}
        \sup_{(x,y)\in[-D,D]^2}\left|\tilde{\Psi}_m(x,y) - \tfrac{xy}{D^2}\right|&=\sup_{(x,y)\in[-D,D]^2}\left|\left(\tilde{\Phi}_m\left(\tfrac{|x+y|}{2D}\right)-\tilde{\Phi}_m\left(\tfrac{|x-y|}{2D}\right)
        \right) - \left(\left(\tfrac{|x+y|}{2D}\right)^2-\left(\tfrac{|x-y|}{2D}\right)^2\right)\right|\\
        &\leq2\sup_{z\in[0,1]}|\tilde{\Phi}_m(z)-z^2|\leq 2^{-2m-1}. 
    \end{split}
    \end{align}
    Next, let $\Psi_D(x)=D^2 x$ be the scalar multiplication network according to Lemma~\ref{lem:scalar_mult} and take
    $\Phi_{D,\eps}:= \Psi_D\circ\tilde{\Psi}_{m(D,\eps)}$, where $m(D,\eps):=\lceil2^{-1}(1+\log(D^2\eps^{-1}))\rceil$. Then, the error estimate~\eqref{relu-mult-statement} follows directly from~\eqref{eq:mult_helper_est} and Lemma~\ref{network_conc} establishes the desired bounds on depth, width, and weight magnitude. Finally, $\Phi_{D,\eps}(0,x)=\Phi_{D,\eps}(x,0)=0$, for all $x\in\R$, follows directly from \eqref{eq:polarization}.
\end{proof}

\begin{remark} 
Note that the multiplication network just constructed has weights bounded by $1$ irrespectively of the size $D$ of the domain. This is
accomplished by trading network depth for weight magnitude according to Lemma~\ref{lem:scalar_mult}.
\end{remark}
We proceed to the approximation of polynomials, effected by networks that realize linear combinations of monomials, which, in turn, are built 
by composing multiplication networks. Before presenting the specifics of this construction, we hasten to add that a similar approach was considered previously in~\cite{yarotsky:2016ReLU} and~\cite{Schmidt_Hieber}. While there are slight differences in formulation, the main distinction between our construction and those in~\cite{yarotsky:2016ReLU} and~\cite{Schmidt_Hieber} resides in their purpose. Specifically, the goal in \cite{yarotsky:2016ReLU} and~\cite{Schmidt_Hieber} is to establish, by way of local Taylor-series approximation, that $d$-variate, $k$-times (weakly) differentiable functions can be approximated in $L^\infty$-norm to within error $\eps$ with networks of connectivity scaling according to $\eps^{-d/k}\log(\eps^{-1})$. Here, on the other hand, we will be interested in functions that allow approximation with networks of connectivity scaling polylogarithmically in $\epsilon^{-1}$ (i.e., as a polynomial in $\log(\eps^{-1})$). Moreover, for ease of exposition, we will employ finite-width networks.
Polylogarithmic connectivity scaling will turn out to be crucial (see Sections \ref{subsec:NNapproxintro}-\ref{sec:gabor})
in establishing  Kolmogorov-Donoho rate-distortion optimality of neural networks in the approximation of a variety of prominent function classes.
Finally, we would like to mention related recent work \cite{schwab-zech-2018, OP_Pet_Schwa}, \cite{2019arXiv190207896G} on the approximation of Sobolev-class functions in certain Sobolev norms enabled by neural network approximations of the multiplication operation and of polynomials.

\begin{proposition}
\label{relu_poly}
There exists a constant $C>0$ such that for all $m \in \mathbb{N}$, $a=(a_i)_{i=0}^m\in\R^{m+1}$, $D \in \mathbb{R}_+$, and $\eps\in(0,1/2)$, there is a network $\Phi_{a,D,\epsilon} \in\cN_{1,1}$ with $\mathcal{L}(\Phi_{a,D,\epsilon}) \leq Cm(\log(\eps^{-1})+m\log(\lceil D \rceil) + \log(m) + \log(\lceil \|a\|_\infty\rceil)) $, $\mathcal{W}(\Phi_{a,D,\epsilon}) \le 9$, $\mathcal{B}(\Phi_{a,D,\epsilon})\leq 1$, and satisfying
\begin{equation*}
\|\Phi_{a,D,\epsilon}(x) -  \sum_{i=0}^m a_i x^i\|_{L^{\infty}([-D,D])} \leq \epsilon.
\end{equation*}
\end{proposition}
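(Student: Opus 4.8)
The plan is to realize $\sum_{i=0}^m a_i x^i$ by successively building the monomials $x,x^2,\dots,x^m$ through iterated multiplication networks (Proposition~\ref{relu_mult}) and accumulating the partial sums $\sum_{j\le i}a_j x^j$ along the way. To keep all weights bounded by $1$ irrespective of $\|a\|_\infty$, I would first rescale the coefficients: we may assume $D\ge 1$ (as in the proof of Proposition~\ref{relu_mult}) and $\|a\|_\infty\ge 1$ (otherwise no rescaling is needed and the argument below simplifies). Set $\tilde a_i:=a_i/\|a\|_\infty$, so $|\tilde a_i|\le 1$, build a network $\widehat\Phi$ approximating $\tilde p(x):=\sum_{i=0}^m\tilde a_i x^i$ to within $\tilde\eps:=\eps/\|a\|_\infty$ on $[-D,D]$, and finally compose with the scalar-multiplication-by-$\|a\|_\infty$ network of Lemma~\ref{lem:scalar_mult} (bounded width, $\mathcal{B}\le 1$, depth $\mathcal{O}(\log\lceil\|a\|_\infty\rceil)$).

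The network $\widehat\Phi$ is assembled from $m-1$ \emph{stages} composed via Lemma~\ref{network_conc}. Stage $i$ is a map $\R^3\to\R^3$ sending $(x,\widehat p_i,\widehat s_i)\mapsto(x,\widehat p_{i+1},\widehat s_{i+1})$, where $\widehat p_i$ approximates $x^i$ and $\widehat s_i$ approximates $\sum_{j=0}^i\tilde a_j x^j$; it computes $\widehat p_{i+1}$ by applying the multiplication network $\Phi_{2D^m,\delta}$ of Proposition~\ref{relu_mult} to $(x,\widehat p_i)$, run in parallel (Lemma~\ref{network_parallelization}) with two width-$2$ branches that route $x$ and $\widehat s_i$ through the requisite number of layers — using that $\rho$ is the identity on the non-negative pair $(\rho(v),\rho(-v))$, so such routing needs no extra width — and then folds $\tilde a_{i+1}\widehat p_{i+1}$ into the partial sum via one affine layer with weights in $\{\pm1,\pm\tilde a_{i+1}\}$. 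Thus each stage has width $5+2+2=9$ and weight magnitude $\le 1$; since consecutive stages are joined along a $3$-dimensional interface, Lemma~\ref{network_conc} keeps the width of the composite at $\max\{2\cdot3,9\}=9$. The initial affine layer produces $(x,x,\tilde a_0+\tilde a_1 x)$, whose weights and bias ($|\tilde a_0|,|\tilde a_1|\le 1$) are admissible.

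For the error analysis, write $e_i:=\widehat p_i - x^i$; since $|\widehat p_{i+1}-x\widehat p_i|\le\delta$ on $[-D,D]$ and $|x|\le D$, we get $|e_{i+1}|\le\delta+D|e_i|$ with $e_1=0$, hence $|e_i|\le\delta\, i\, D^{\,i-1}\le\delta\, m\, D^{\,m-1}$ for $i\le m$; in particular, for $\delta$ sub-exponentially small (as it will be) the $\widehat p_i$ stay inside the box $[-2D^m,2D^m]$ on which the multiplication subnetwork was built. Consequently $\|\widehat\Phi-\tilde p\|_{L^\infty([-D,D])}\le\sum_{j=0}^m|\tilde a_j|\,|e_j|\le C\,\delta\, m^2 D^{\,m-1}$, so choosing $\delta:=\tilde\eps/(Cm^2D^{\,m-1})=\eps/(C\|a\|_\infty m^2D^{\,m-1})$ gives the required accuracy, with $\log(\delta^{-1})\le C(\log(\eps^{-1})+\log\lceil\|a\|_\infty\rceil+\log m+m\log\lceil D\rceil)$. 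By Proposition~\ref{relu_mult} each stage has depth $\le C(\log\lceil 2D^m\rceil+\log(\delta^{-1}))+1\le C(m\log\lceil D\rceil+\log(\delta^{-1}))$; summing over the $m-1$ stages and adding the final scalar-multiplication network yields $\mathcal{L}(\Phi_{a,D,\eps})\le Cm(\log(\eps^{-1})+m\log\lceil D\rceil+\log m+\log\lceil\|a\|_\infty\rceil)$, as claimed, while $\mathcal{W}\le 9$ and $\mathcal{B}\le 1$ by the above.

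The main obstacle is the error-propagation analysis together with the accompanying domain control: each multiplication subnetwork is accurate only on a fixed box, so one must simultaneously bound how the errors $e_i$ compound through $m$ successive approximate multiplications and verify that the approximate powers never leave those boxes, and then calibrate the single accuracy parameter $\delta$ so that both the final error bound and the depth budget are met — this coupling is the crux. The width bookkeeping (holding at $9$) and the depth summation are routine once the stage structure is fixed, as is the coefficient rescaling that keeps all weights $\le 1$.
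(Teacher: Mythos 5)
Your proposal is correct and matches the paper's strategy of iteratively building monomials via multiplication networks (Proposition~\ref{relu_mult}) while accumulating partial sums across a depth-$\mathcal{O}(m)$ pipeline of width-$9$ stages, with the error recursion calibrated to a single accuracy parameter exactly as in the paper. The two bookkeeping choices you make differently --- a fixed multiplication domain $[-2D^m,2D^m]^2$ in place of the paper's stage-dependent bounds $B_k$, and a global coefficient rescaling by $\|a\|_\infty$ followed by one terminal scalar-multiplication network (Lemma~\ref{lem:scalar_mult}) instead of the paper's per-stage use of Corollary~\ref{cor:matrix_mult} to implement $(s,y)\mapsto s+a_iy$ with bounded weights --- are both valid and lead to the same depth budget.
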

\begin{proof}
As in the proof of Proposition \ref{relu_mult} and for the same reason, it suffices to consider the case $D\ge 1$. For $m=1$, we simply have an affine transformation and the statement follows directly from Corollary~\ref{cor:matrix_mult}.
The proof for $m \geq 2$ will be effected by realizing the monomials
$x^k, k\geq 2$, through iterative composition of multiplication networks
and combining this with a construction that uses the network realizing $x^k$ not only as a building block in the network implementing $x^{k+1}$ but also to approximate the partial sum $\sum_{i=0}^k a_ix^i$ in parallel.

We start by setting $B_k=B_k(D,\eta):=\lceil D\rceil^k+\eta\sum_{s=0}^{k-2} \lceil D \rceil^s$, $k\in\N,\eta \in \mathbb{R}_+$ and take $\Phi_{B_k,\eta}$ to be the multiplication network from Proposition~\ref{relu_mult}. Next, we recursively define the functions
\begin{align*}
    f_{k,D,\eta}(x)=\Phi_{B_{k-1},\eta}(x,f_{k-1,D,\eta}(x)),\quad k\geq 2,
\end{align*}
with $f_{0,D,\eta}(x)= 1$ and $f_{1,D,\eta}(x)= x$.
For notational simplicity, we use the abbreviation $f_k=f_{k,D,\eta}$ in the following.
First, we verify that the $f_{k,D,\eta}$ approximate monomials sufficiently well.
Specifically, we prove by induction that
\begin{equation}
\label{psi_precise}
\|f_k(x) - x^k \|_{ L^{\infty}([-D,D])} \leq \eta \sum_{s=0}^{k-2} \lceil D \rceil^s,
\end{equation}
for all $k\geq 2$.
The base case $k=2$, i.e.,
\begin{align*}
\|f_2(x) - x^2 \|_{ L^{\infty}([-D,D])} = \| \Phi_{B_1,\eta}(x,x) - x^2 \|_{ L^{\infty}([-D,D])} \leq \eta,
\end{align*}
follows directly from Proposition~\ref{relu_mult} upon noting that $D\le B_1=\lceil D \rceil$ (we take the sum in the definition of $B_k$ to equal zero when the upper limit of summation is negative).
We proceed to establish the induction step $(k-1)\to k$ with the induction assumption given by
\begin{equation*}
\|f_{k-1}(x) - x^{k-1} \|_{ L^{\infty}([-D,D])} \leq \eta \sum_{s=0}^{k-3} \lceil D \rceil^s.
 \end{equation*}
As
\begin{align*}
\| f_{k-1} \|_{L^{\infty}([-D,D])}\leq 
\| x^{k-1} \|_{L^{\infty}([-D,D])}+\|f_{k-1}(x) - x^{k-1} \|_{L^{\infty}([-D,D])}\leq B_{k-1},
\end{align*}
application of Proposition~\ref{relu_mult} yields
\begin{align*}
\|f_k(x) - x^k \|_{ L^{\infty}([-D,D])}
&\leq \|f_k(x) - xf_{k-1}(x) \|_{ L^{\infty}([-D,D])} + \|xf_{k-1}(x) - x^k \|_{ L^{\infty}([-D,D])}\\
&\leq \| \Phi_{B_{k-1},\eta}(x,f_{k-1}(x)) - xf_{k-1}(x) \|_{ L^{\infty}([-D,D])}+ D\| f_{k-1}(x) - x^{k-1}\|_{ L^{\infty}([-D,D])}\\
&\leq \eta + \lceil D \rceil \eta \sum_{s=0}^{k-3} \lceil D \rceil^s = \eta \sum_{s=0}^{k-2} \lceil D \rceil^s,
\end{align*}
which completes the induction.

We now construct the network $\Phi_{a,D,\epsilon}$ approximating the polynomial $\sum_{i=0}^m a_i x^i$.
To this end, note that there exists a constant $C'$ such that for all $m\geq 2$, $a=(a_i)_{i=0}^m\in\R^{m+1}$, and $i\in\{1,\dots,m-1\}$, there is a network $\Psi^i_{a,D,\eta}\in\cN_{3,3}$ with
$\cL(\Psi^i_{a,D,\eta})\leq C' (\log(\eta^{-1})+\log(\lceil B_{i} \rceil)+\log(\|a\|_\infty))$, $\cW(\Psi^i_{a,D,\eta})\leq 9$, 
$\cB(\Psi^i_{a,D,\eta})\leq 1$, and satisfying
\begin{align*}\label{eq:polynomial_levels}
    \Psi^i_{a,D,\eta}(x,s,y)=(x,s+a_i y,\Phi_{B_i,\eta}(x,y)).
\end{align*}
To see that this is, indeed, the case, consider the following chain of mappings
\begin{align*}
    (x,s,y)
    \xrightarrow{ (I) }(x,s,y,y)
    \xrightarrow{ (II) }(x,s+a_i y,y)
    \xrightarrow{ (III) }(x,s+a_i y,x,y)
    \xrightarrow{ (IV) }(x,s+a_i y,\Phi_{B_i,\eta}(x,y)).
\end{align*}
Observe that the mapping (I) is an affine transformation with coefficients in $\{0,1\}$, which we can simply consider to be a depth-$1$ network. The mapping (II) is obtained by using Corollary~\ref{cor:matrix_mult} in order to implement the affine transformation $(s,y)\mapsto{s+a_i y}$ with weights bounded by 1, followed by application of Lemmas~\ref{network_extension} and~\ref{network_parallelization} to put this network in parallel with two networks realizing the identity mapping according to $x=\rho(x)-\rho(-x)$. Mapping (III) is obtained along the same lines by putting the result of mapping (II) in parallel with another network realizing the identity mapping. Finally, mapping (IV) is realized by putting the network $\Phi_{B_i,\eta}$ in parallel with two identity networks. Composing these four networks according to Lemma~\ref{network_conc} yields, for $i\in\{1,\dots,m-1\}$, a network $\Psi^i_{a,D,\eta}$ with the claimed properties. 
Next, we employ Corollary~\ref{cor:matrix_mult} to get networks $\Psi^0_{a,D,\eta}$ which implement $x\mapsto(x,a_0,x)$ as well as networks $\Psi^m_{a,D,\eta}$ realizing $(x,s,y)\mapsto s + a_m y$.
Let now $\eta=\eta(a,D,\eps):=(\|a\|_\infty(m-1)^2\lceil D\rceil^{m-2})^{-1}\eps$ and define
\begin{align*}
  \Phi_{a,D,\eps} := \Psi^m_{a,D,\eta}\circ \Psi^{m-1}_{a,D,\eta}\circ\dots\circ\Psi^1_{a,D,\eta}\circ\Psi^0_{a,D,\eta}.
\end{align*}
A direct calculation yields
\begin{align*}
    \Phi_{a,D,\eps} = \sum_{i=0}^m a_i f_{i,D,\eta}.
\end{align*}
Hence \eqref{psi_precise} implies
\begin{align*}
\Big \|\Phi_{a,D,\epsilon}(x) - \sum_{i=0}^m a_i x^i \Big \|_{L^{\infty}([-D,D])} &
\leq\sum_{i=0}^{m} |a_i| \|f_{i,D,\eta}(x)-x^i\|_{L^\infty([-D,D])}
\leq \sum_{i=2}^m |a_i| \Big( \eta \sum_{s=0}^{i-2} \lceil D \rceil^{s} \Big )\\
&\leq\|a\|_\infty\eta\sum_{k=0}^{m-2}(m-1-k)\lceil D\rceil^k\leq \|a\|_\infty(m-1)^2\lceil D\rceil^{m-2}\eta = \eps.
\end{align*}
Lemma~\ref{network_conc} now establishes that $\cW(\Phi_{a,D,\eps})\leq 9$, $\cB(\Phi_{a,D,\eps})\leq 1$, and
\begin{align*}
\mathcal{L}(\Phi_{a,D,\epsilon}) &\leq\sum_{i=0}^{m}\L(\Psi_{a,D,\eta}^i)\\
&\leq 2(\log(\lceil \|a\|_\infty \rceil) + 5) + \sum_{i=1}^{m-1} C' (\log(\eta^{-1})+\log(\lceil B_{i-1} \rceil)+\log(\lceil \|a\|_\infty \rceil)) \\
&\leq Cm(\log(\eps^{-1})+m\log(\lceil D \rceil) + \log(m) + \log(\lceil \|a\|_\infty\rceil))
\end{align*}
for a suitably chosen absolute constant $C$. This completes the proof.
\end{proof}

Next, we recall that the Weierstrass approximation theorem states that every continuous function on a closed interval can be approximated to within arbitrary accuracy by a polynomial.

\begin{theorem}[\cite{stone1948theorem}]\label{weierstrass}
Let $[a,b]\subseteq\R$ and $f \in C([a,b])$. Then, for every $\epsilon > 0$, there exists a polynomial $\pi$ such that
\[ \| f - \pi \|_{L^{\infty}([a,b])} \leq \epsilon. \]
\end{theorem}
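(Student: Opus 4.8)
The statement in question is the classical Weierstrass approximation theorem, cited from Stone~\cite{stone1948theorem}, so a full self-contained proof is not strictly required in the paper; nonetheless, let me outline the standard argument I would use.

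\textbf{Approach via Bernstein polynomials.} The plan is to give the constructive Bernstein-polynomial proof, which is the cleanest and most elementary. First I would reduce to the case $[a,b]=[0,1]$ by the affine change of variables $x \mapsto a + (b-a)x$, which maps continuous functions on $[a,b]$ bijectively onto continuous functions on $[0,1]$ and preserves the sup-norm of the difference $\|f-\pi\|_{L^\infty}$; note also that a polynomial in the new variable pulls back to a polynomial in the old one. So it suffices to prove: for $f\in C([0,1])$ and $\epsilon>0$, the Bernstein polynomial $B_n(f)(x) := \sum_{k=0}^n f(k/n)\binom{n}{k}x^k(1-x)^{n-k}$ satisfies $\|f - B_n(f)\|_{L^\infty([0,1])}\le\epsilon$ for $n$ sufficiently large.

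\textbf{Key steps.} The proof rests on three facts about the binomial weights $p_{n,k}(x):=\binom{n}{k}x^k(1-x)^{n-k}$: (i) $\sum_k p_{n,k}(x)=1$; (ii) $\sum_k (k/n)\, p_{n,k}(x) = x$; (iii) $\sum_k (k/n - x)^2\, p_{n,k}(x) = x(1-x)/n \le 1/(4n)$. These follow from the binomial theorem and its first two derivatives in $x$. Then I would estimate, for fixed $x$,
\begin{align*}
|f(x)-B_n(f)(x)| \le \sum_{k=0}^n |f(x)-f(k/n)|\, p_{n,k}(x),
\end{align*}
and split the sum according to whether $|k/n - x| < \delta$ or $\ge \delta$, where $\delta$ is chosen (using uniform continuity of $f$ on the compact set $[0,1]$) so that $|u-v|<\delta$ implies $|f(u)-f(v)|<\epsilon/2$. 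On the ``close'' part the contribution is at most $\epsilon/2 \sum_k p_{n,k}(x) = \epsilon/2$; on the ``far'' part I bound $|f(x)-f(k/n)| \le 2\|f\|_{L^\infty([0,1])}$ and use Chebyshev-type estimate $\sum_{|k/n-x|\ge\delta} p_{n,k}(x) \le \delta^{-2}\sum_k (k/n-x)^2 p_{n,k}(x) \le 1/(4n\delta^2)$ by fact (iii). Choosing $n > \|f\|_{L^\infty([0,1])}/(\epsilon \delta^2)$ makes the far part less than $\epsilon/2$, giving $|f(x)-B_n(f)(x)| < \epsilon$ uniformly in $x$.

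\textbf{Main obstacle.} There is no serious obstacle here — the argument is textbook. The only points requiring a modicum of care are verifying the moment identities (i)--(iii) cleanly (differentiating the identity $\sum_k \binom{n}{k}x^k(1-x)^{n-k}=1$, or equivalently $(x+(1-x))^n=1$, appropriately, or using the interpretation via a binomial random variable with mean $nx$ and variance $nx(1-x)$) and invoking uniform continuity of $f$, which is where compactness of $[a,b]$ enters. Since the paper merely cites the result, in practice I would simply record the statement with the citation and not reproduce this proof; if a proof were desired, the Bernstein construction above is what I would write, as it is also thematically apt given the paper's interest in explicit polynomial approximants.
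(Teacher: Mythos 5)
The paper does not prove this statement; it is cited directly from Stone~\cite{stone1948theorem} as a classical result, exactly as you anticipated. Your Bernstein-polynomial argument is correct, complete in its essentials (reduction to $[0,1]$, the three moment identities, the split by $|k/n-x|\lessgtr\delta$, uniform continuity for the near part, the Chebyshev bound for the far part), and is the standard constructive proof one would supply if a proof were wanted.
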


Proposition \ref{relu_poly} hence allows us to conclude that every continuous function on a closed interval can be approximated to within arbitrary accuracy by a deep ReLU network of width no more than $9$.
This amounts to a variant of the universal approximation theorem \cite{Cybenko1989,Hornik1991251} for finite-width deep ReLU networks. 
A quantitative statement in terms of making the approximating network's width, depth, and weight bounds explicit can be obtained for (very) smooth functions by applying Proposition~\ref{relu_poly} to Lagrangian interpolation with Chebyshev points. 

\pagebreak

\begin{lemma}\label{Sfunctions}
  Consider the set
 \begin{align*}
   \Ss_{[-1,1]}:=\left\{f\in C^{\infty}([-1,1],\R)\colon \|f^{(n)}(x)\|_{ L^{\infty}([-1,1])} \leq n!,\, \text{\emph{ for all }} n\in\N_0\right\}.
 \end{align*}
  There exists a constant $C>0$ such that for all $f\in\Ss_{[-1,1]}$ and $\eps\in(0,1/2)$, there is a network $\Psi_{f,\eps}\in\cN_{1,1}$ with $\mathcal{L}(\Psi_{f,\eps}) \le C(\log(\eps^{-1}))^2$, $\mathcal{W}(\Psi_{f,\eps}) \le 9$,
  $\mathcal{B}(\Psi_{f,\eps})\leq 1$, and satisfying
   \begin{align*}
    \|\Psi_{f,\eps}-f\|_{ L^{\infty}([-1,1])}\leq\eps.
  \end{align*}
\end{lemma}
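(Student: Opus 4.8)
The plan is to approximate $f$ by a polynomial $\pi$ obtained through Chebyshev interpolation and then to realize $\pi$ by a ReLU network via Proposition~\ref{relu_poly}. The reason the depth comes out polylogarithmic (rather than polynomial) in $\eps^{-1}$ is that the derivative bounds defining $\Ss_{[-1,1]}$ force $f$ to extend to a function holomorphic on a \emph{fixed} complex neighbourhood of $[-1,1]$, which yields \emph{geometric} decay of the Chebyshev interpolation error in the interpolation degree.

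\emph{Step 1 (analytic extension with a uniform bound).} Fix $x_0\in[-1,1]$ and consider the Taylor series of $f$ at $x_0$; its coefficients $f^{(n)}(x_0)/n!$ have modulus $\le 1$, so the series is holomorphic on $\{z\in\C:|z-x_0|<1\}$, and the Lagrange remainder estimate $|f^{(N+1)}(\xi)|\,|z-x_0|^{N+1}/(N+1)!\le|z-x_0|^{N+1}$ shows it coincides with $f$ on the real part of that disc. Patching these local extensions (consistent by the identity theorem) shows that $f$ extends holomorphically to $U:=\{z\in\C:\mathrm{dist}(z,[-1,1])<1\}$, and the geometric-series bound gives $|f(z)|\le(1-r)^{-1}$ whenever $\mathrm{dist}(z,[-1,1])\le r<1$. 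Fix once and for all a $\rho>1$ (e.g.\ $\rho=3/2$) for which the closed Bernstein ellipse $\overline{E}_\rho$ with foci $\pm1$ lies inside $\{z\in\C:\mathrm{dist}(z,[-1,1])\le\tfrac12\}$; then $\sup_{\overline{E}_\rho}|f|\le M$ for an absolute constant $M$, \emph{uniformly} over $f\in\Ss_{[-1,1]}$.

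\emph{Step 2 (polynomial approximant and its coefficients).} The classical estimate for Chebyshev interpolation of functions analytic in $E_\rho$ gives, for the degree-$N$ interpolant $\pi_N$ of $f$ at the $N+1$ Chebyshev points, $\|f-\pi_N\|_{L^\infty([-1,1])}\le\tfrac{4M}{\rho-1}\rho^{-N}$; hence a choice $N=N(\eps)=O(\log(\eps^{-1}))$ makes this $\le\eps/2$. Writing $\pi_N=\sum_{k=0}^N\tilde c_kT_k$ in the Chebyshev basis, the discrete cosine transform expressing $\tilde c_k$ through the sampled values $f(x_j)$ together with $\|f\|_{L^\infty([-1,1])}\le 0!=1$ shows $|\tilde c_k|$ is bounded by an absolute constant; since the monomial coefficients of $T_k$ are bounded in modulus by $3^k$, the monomial-coefficient vector $a=(a_i)_{i=0}^N$ of $\pi_N$ satisfies $\|a\|_\infty\le C^N$ for an absolute $C$, so $\log\lceil\|a\|_\infty\rceil=O(N)=O(\log(\eps^{-1}))$.

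\emph{Step 3 (network).} Apply Proposition~\ref{relu_poly} with this $a$, $D=1$, and target accuracy $\eps/2$, and set $\Psi_{f,\eps}:=\Phi_{a,1,\eps/2}\in\cN_{1,1}$. Then $\cW(\Psi_{f,\eps})\le 9$, $\cB(\Psi_{f,\eps})\le 1$, and the triangle inequality gives $\|\Psi_{f,\eps}-f\|_{L^\infty([-1,1])}\le\|\Phi_{a,1,\eps/2}-\pi_N\|_{L^\infty([-1,1])}+\|\pi_N-f\|_{L^\infty([-1,1])}\le\eps/2+\eps/2=\eps$. For the depth, Proposition~\ref{relu_poly} yields $\cL(\Psi_{f,\eps})\le CN(\log(\eps^{-1})+N\log\lceil 1\rceil+\log N+\log\lceil\|a\|_\infty\rceil)=O(N^2)=O((\log\eps^{-1})^2)$, with all implied constants absolute since $M$, $\rho$, and the coefficient bounds from Steps 1--2 are uniform over $\Ss_{[-1,1]}$. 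The main obstacle is Step 1 --- converting the real-variable derivative bounds into a \emph{uniform} complex sup-norm bound on a fixed ellipse; Steps 2--3 are standard Chebyshev approximation theory plus bookkeeping with Proposition~\ref{relu_poly}.
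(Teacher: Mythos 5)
Your proof is correct, and the overall architecture (Chebyshev interpolation $\to$ monomial-coefficient bounds via $|T_k$'s coefficients$|\le 3^k$ $\to$ Proposition~\ref{relu_poly} with $D=1$ $\to$ depth $O((\log\eps^{-1})^2)$) is exactly the paper's. The one genuine divergence is in how you establish the geometric decay of the Chebyshev interpolation error. The paper invokes the real-variable Lagrange-remainder estimate for Chebyshev nodes, $\|f-P_{f,m}\|_{L^\infty([-1,1])}\le \tfrac{1}{2^m(m+1)!}\|f^{(m+1)}\|_{L^\infty([-1,1])}\le 2^{-m}$, which feeds the hypothesis $\|f^{(m+1)}\|_\infty\le(m+1)!$ directly into the nodal polynomial bound $\max|\prod(x-x_j)|=2^{-m}$. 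You instead first show that the derivative bounds force $f$ to extend holomorphically to a fixed neighbourhood of $[-1,1]$ with a uniform sup bound on a Bernstein ellipse $\overline{E}_\rho$, and then cite the Bernstein/Trefethen estimate $\|f-\pi_N\|_\infty\le\tfrac{4M}{\rho-1}\rho^{-N}$. Both give $N=O(\log\eps^{-1})$ and hence the same final bound; the paper's is more elementary and yields a sharper constant (decay rate $1/2$ versus your $1/\rho=2/3$), while your route makes the conceptual mechanism explicit — the class $\Ss_{[-1,1]}$ is a class of functions analytic on a fixed complex strip, and geometric decay of polynomial approximation is equivalent to such analyticity — which explains \emph{why} polylogarithmic depth is achievable here but, as the paper notes, not for general Sobolev-class functions. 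One minor difference in Step~2: you re-derive the bound $|\tilde c_k|\lesssim 1$ on the Chebyshev interpolation coefficients via the discrete cosine transform and $\|f\|_\infty\le 1$, whereas the paper cites this as a known fact; both are fine.
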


\begin{proof}
  A fundamental result on Lagrangian interpolation with Chebyshev points (see e.g. \cite[Lemma 3]{srikant:2017Poly}) guarantees, for all $f\in \Ss_{[-1,1]}$, $m\in\N$, the existence of a polynomial $P_{f,m}$ of degree $m$ such that
  \begin{align*}
    \|f-P_{f,m}\|_{L^{\infty}([-1,1])}\leq \tfrac{1}{(m+1)!2^m}\|f^{(m+1)}\|_{L^{\infty}([-1,1])}\leq\tfrac{1}{2^m}.
  \end{align*}
  Note that $P_{f,m}$ can be expressed in the Chebyshev basis (see e.g. \cite[Section 3.4.1]{GA_SJ_TN}) according to $P_{f,m}=\sum_{j=0}^m c_{f,m,j}T_j(x)$ with $|c_{f,m,j}|\leq 2$ and the Chebyshev polynomials defined through the two-term recursion $T_k(x)=2xT_{k-1}(x)-T_{k-2}(x)$, $k\geq 2$, with $T_0(x)=1$ and $T_1(x)=x$. We can moreover use this recursion to conclude that the coefficients of the $T_k$ in the monomial basis are upper-bounded by $3^k$. Consequently, we can express $P_{f,m}$ according to $P_{f,m}=\sum_{j=0}^m a_{f,m,j}x^j$ with
  \begin{align*}
      A_{f,m}:=\max_{j=0,\dots,m}|a_{f,m,j}|\leq 2(m+1)3^m.
  \end{align*}
  Application of Proposition \ref{relu_poly} to $P_{f,m}$ in the monomial basis, with $m=\lceil\log(2/\eps)\rceil$ and approximation error $\eps/2$, completes the proof upon noting that
  \begin{align*}
  C'm(\log(2/\eps) + \log(m) + \log(|A_{f,m}|))\leq C(\log(\eps^{-1}))^2
  \end{align*}
  for some absolute constant $C$.
\end{proof}

An extension of Lemma~\ref{Sfunctions} to approximation over general intervals is provided in Lemma~\ref{lem:Sfunctions_general}.
While Lemma\\ \ref{Sfunctions} shows that a specific class of $C^{\infty}$-functions, namely those whose derivatives are suitably bounded, can be approximated by neural networks with connectivity growing polylogarithmically in $\eps^{-1}$, it turns out that this is not possible for general (Sobolev-class) $k$-times differentiable functions \cite[Thm.~4]{yarotsky:2016ReLU}.

We are now ready to proceed to the approximation of sinusoidal functions.
Before stating the corresponding result, we comment on the basic idea enabling the approximation of oscillatory functions through deep neural networks. In essence, we exploit the optimality of the sawtooth construction (\ref{eq:Phisg}) in terms of achieving exponential---in network depth---growth in the number of linear regions. As indicated in Figure~\ref{cos_visualized}, the composition of the cosine function (realized according to Lemma~\ref{Sfunctions}) with the sawtooth function, combined with the symmetry properties of the cosine function and the sawtooth function, yields oscillatory behavior that increases exponentially with network depth.

\begin{theorem}\label{sin}
There exists a constant $C>0$ such that for every $a,D\in \mathbb{R}_+$, $\eps\in(0,1/2)$, there is a network $\Psi_{a,D,\epsilon} \in \cN_{1,1}$ 
with $\L(\Psi_{a,D,\epsilon}) \le C((\log(\eps^{-1}))^2 + \log(\lceil a D \rceil))$, $\mathcal{W}(\Psi_{a,D,\epsilon}) \le 9$, $\mathcal{B}(\Psi_{a,D,\epsilon})\leq 1$, and satisfying
\begin{align*}
\|\Psi_{a,D,\epsilon}(x) - \cos(ax)\|_{L^\infty([-D,D])} \leq \epsilon.
\end{align*}
\end{theorem}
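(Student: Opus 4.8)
The plan is to reduce $\cos(ax)$ on $[-D,D]$ to a composition of three pieces: a coarse linear rescaling, a deep sawtooth function that effects the fast oscillation cheaply, and a single smooth-cosine network from Lemma~\ref{Sfunctions} applied at the end. First I would normalize: set $T = \lceil aD\rceil$ (w.l.o.g.\ $\geq 1$), so that $ax$ for $x\in[-D,D]$ lies in an interval of length $\leq 2T$. Using the identity $\cos$ is even and $2\pi$-periodic, the key observation is that $\cos$ restricted to $[0,\pi]$ together with reflections generates $\cos$ everywhere, and the triangle/sawtooth wave $g_s$ from Lemma~\ref{lem:gsproperties} is exactly the right object to fold a long interval down onto $[0,1]$ while multiplying the oscillation frequency by a power of $2$. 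Concretely, there is a fixed Lipschitz ``triangle'' map $t\colon\R\to[0,1]$ (realizable by an $O(1)$-width, $O(\log T)$-depth ReLU network, or even just $O(\log T)$ compositions of the hat function $g$ after an affine pre-scaling) such that $\cos(ax) = \cos\bigl(\pi\, t(\alpha x + \beta)\bigr)$ for suitable constants $\alpha,\beta$ depending on $a,D$; the $\log T$ depth is what absorbs the $\log(\lceil aD\rceil)$ term in the statement.

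Next I would handle the remaining approximation of $z\mapsto\cos(\pi z)$ on $[0,1]$. The function $z\mapsto\cos(\pi z)$ (after the trivial affine change of variables mapping $[0,1]$ to $[-1,1]$) is, up to a constant rescaling of its argument, a member of the class $\Ss_{[-1,1]}$ — its $n$-th derivative is bounded by $\pi^n$, and one checks $\pi^n \le n!$ fails for small $n$, so one first rescales: $\cos(\pi z) = h(cz)$ where $h\in\Ss_{[-1,1]}$ after choosing the scaling constant $c$ so that the derivative bounds $\|h^{(n)}\|_\infty \le n!$ hold (this only inflates the domain by a fixed factor, handled by Lemma~\ref{lem:Sfunctions_general}, the general-interval version of Lemma~\ref{Sfunctions} mentioned in the excerpt). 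Applying that lemma with error $\eps$ yields a network $\Psi_h \in \cN_{1,1}$ with $\L(\Psi_h) \le C(\log(\eps^{-1}))^2$, width $\le 9$, and weights $\le 1$.

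Finally I would compose: $\Psi_{a,D,\eps} := \Psi_h \circ (\text{affine}) \circ t_{\mathrm{net}} \circ (\text{affine})$, where $t_{\mathrm{net}}$ is the depth-$O(\log T)$ sawtooth-folding network. Composition via Lemma~\ref{network_conc} adds depths, so $\L(\Psi_{a,D,\eps}) \le C((\log(\eps^{-1}))^2 + \log(\lceil aD\rceil))$; the width stays bounded by $9$ (both constituent networks have width $\le 9$, and the folding part needs only width $3$); and the weight magnitude — here one must be slightly careful, since the affine pre-scaling by a constant of size $O(a)$ or $O(D)$ would violate $\mathcal{B}\le 1$, so the rescaling must be done by iterated halving/doubling spread across $O(\log(aD))$ extra layers exactly as in the $D^2$-multiplication trick used in Proposition~\ref{relu_mult} (Lemma~\ref{lem:scalar_mult}), which keeps all weights $\le 1$ at the cost of $O(\log(aD))$ depth that is already budgeted. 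The error bound is immediate: $t$ and the affine maps are exact, and $\|\Psi_h - h\|_{L^\infty}\le\eps$ propagates since the outer map is the only source of error.

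The main obstacle I expect is the bookkeeping around the folding map: showing that a single fixed triangle wave composed with itself (the $g_s$ construction) genuinely reproduces $\cos$ exactly rather than approximately — i.e.\ that $\cos(ax)$ really equals $\cos(\pi\, t(\cdot))$ for a piecewise-linear $t$ — relies on the exact self-similarity in Lemma~\ref{lem:gsproperties} and on $\cos$ being determined by its values on a fundamental domain under the dihedral symmetry $z\mapsto -z$, $z\mapsto 2-z$; getting the constants $\alpha,\beta$ and the number of folds right so that the period matches $a$ while keeping weights bounded by $1$ is the delicate part. Everything else (the $(\log\eps^{-1})^2$ smooth-cosine piece, the composition lemmas) is off-the-shelf from the results already established above.
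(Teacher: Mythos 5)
Your plan reproduces the paper's proof strategy: fold the fast oscillation onto a fundamental domain via the sawtooth $g_s$ using the exact identity in Lemma~\ref{lem:gsproperties}, approximate the smooth cosine on that fundamental domain via Lemma~\ref{Sfunctions}, and compose. Two bookkeeping slips, however, prevent your construction from delivering the stated bounds as written.

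First, you rescale the \emph{argument} of $\cos(\pi\cdot)$ to bring its derivatives under the $n!$ bound of $\Ss_{[-1,1]}$. This forces the smooth piece to be evaluated on a domain strictly larger than $[-1,1]$ (you need at least a half-period, i.e.\ $[0,\pi]$, after the rescaling), which means you cannot invoke Lemma~\ref{Sfunctions} (width $\le 9$) but must instead use Lemma~\ref{lem:Sfunctions_general}, whose width bound is $16$. Your claim that "the width stays bounded by $9$" therefore does not follow. The paper sidesteps this by rescaling the \emph{output}: $f(x):=(6/\pi^3)\cos(\pi x)$ satisfies $\|f^{(n)}\|_{L^\infty([-1,1])}\le n!$ for every $n\ge 0$ (check: $6\pi^{n-3}\le n!$), so Lemma~\ref{Sfunctions} applies directly on $[-1,1]$, and the constant $\pi^3/6$ is restored afterwards by a scalar-multiplication network from Lemma~\ref{lem:scalar_mult}. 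Second, your concern that "the affine pre-scaling by a constant of size $O(a)$ or $O(D)$ would violate $\mathcal{B}\le 1$" points at the wrong culprit. In the paper's decomposition $\Psi_{a,D,\eps}(x):=\Psi_{aD,\eps}(x/D)$ the outer rescaling factor is $1/D\le 1$, and the inner one is $\alpha=(\pi 2^s)^{-1}aD\in(1/2,1]$ with $s=\lceil\log(aD)-\log(\pi)\rceil$, so neither affine map needs any weight-reduction machinery. What does need it is the realization $\Phi^s_g$ of $g_s$ itself, whose raw weights are up to $4$; this is handled by Proposition~\ref{WDtradeoff} (at the cost of a constant factor in depth), not Lemma~\ref{lem:scalar_mult}. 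Both slips are local and fixable without altering the overall structure, but as stated your argument proves the theorem only with $\mathcal{W}\le 16$.
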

\begin{proof}
Note that $f(x):=(6/\pi^3)\cos(\pi x)$ is in $\Ss_{[-1,1]}$. Thus, by Lemma~\ref{Sfunctions}, there exists a constant $C>0$ such that for every $\eps\in(0,1/2)$, there is a network $\Phi_\eps\in\cN_{1,1}$ with $\cL(\Phi_\eps)\leq C(\log(\eps^{-1}))^2$, $\cW(\Phi_\eps) \le 9$, $\cB(\Phi_\eps) \le 1$, and satisfying
\begin{align}\label{cos_approx} 
\|\Phi_\eps-f\|_{L^\infty([-1,1])}\leq \tfrac{6}{\pi^3}\eps.    
\end{align}

We now extend this result to the approximation of $x\mapsto\cos(ax)$ on the interval $[-1,1]$ for arbitrary $a\in\R_+$. This will be accomplished by exploiting that $x\mapsto\cos(\pi x)$ is 2-periodic and even.
Let $g_s\colon [0,1]\to[0,1]$, $s\in\N$, be the s-th order sawtooth functions as defined in \eqref{g_s_def} and note that, due to the periodicity and the symmetry of the cosine function (see Figure~\ref{cos_visualized} for illustration), we have for all $s\in\N_0$, $x \in [-1,1]$,
\begin{align*}
    \cos(\pi 2^{s}x)=\cos(\pi g_s(|x|)).
\end{align*}
For $a>\pi$, we define $s=s(a):=\lceil \log(a) - \log(\pi) \rceil$ and
$\alpha=\alpha(a):=(\pi2^s)^{-1}a\in(1/2,1]$, and note that 
\begin{align*}
    \cos(a x) 
    = \cos(\pi 2^s\alpha x) 
    = \cos(\pi g_s(\alpha|x|)), \quad x \in [-1,1].
\end{align*}
As $g_s(\alpha |x|)\in[0,1]$, it follows from \eqref{cos_approx} that 
\begin{align}\label{fpsieps}
    \|\tfrac{\pi^3}{6}\Phi_\eps(g_s(\alpha |x|))-\cos(ax)\|_{L^{\infty}([-1,1])}
    =\tfrac{\pi^3}{6}\|\Phi_\eps(g_s(\alpha|x|))-f(g_s(\alpha|x|))\|_{L^{\infty}([-1,1])} \leq \epsilon.
\end{align}
In order to realize $\Phi_\eps(g_s(\alpha |x|))$ as a neural network, we
start from the networks $\Phi^s_g$ defined in \eqref{eq:Phisg} and apply Proposition~\ref{WDtradeoff} to convert them into networks $\Psi^s_g(x)=g_s(x)$, for $x\in[0,1]$, with $\cB(\Psi^s_g) \le 1$, $\cL(\Psi^s_g)=7(s+1)$, and $\cW(\Psi^s_g)=3$.
Furthermore, let $\Psi(x):=\alpha\rho(x)-\alpha\rho(-x)=\alpha|x|$ and take $\Phi^{\text{mult}}_{\pi^3\!/6}$ to be the scalar multiplication network from Lemma~\ref{lem:scalar_mult}.
Noting that $\Psi_{a,\eps}:=\Phi^{\text{mult}}_{\pi^3\!/6}\circ\Phi_\eps\circ\Psi^s_g\circ\Psi=\Phi_\eps(g_s(\alpha |x|))$ and concluding from Lemma~\ref{network_conc} that
$\cL(\Psi_{a,\eps})\leq C((\log(\eps^{-1}))^2+\log(\lceil a \rceil))$, $\cW(\Psi_{a,\eps})\le 9$, and $\cB(\Psi_{a,\eps}) \le 1$, together with (\ref{fpsieps}), establishes the desired result for $a>\pi$ and for approximation over the interval $[-1,1]$. For $a\in(0,\pi)$, we can simply take $\Psi_{a,\eps}:=\Phi^{\text{mult}}_{\pi^3\!/6}\circ\Phi_\eps$ as $x\mapsto (6/\pi^3)\cos(ax)$ is in $\Ss_{[-1,1]}$ in this case.

Finally, we consider the approximation of $x\mapsto\cos(ax)$ on intervals $[-D,D]$, for arbitrary $D \geq 1$. 
To this end, we define the networks $\Psi_{a,D,\eps}(x):=\Psi_{aD,\eps}(\tfrac{x}{D})$ and observe that
\begin{align}\begin{split}\label{psi-transformed}
    \sup_{x\in[-D,D]}|\Psi_{a,D,\eps}(x)-\cos(ax)|&=\sup_{y\in[-1,1]}|\Psi_{a,D,\eps}(Dy)-\cos(aDy)|\\
    &=\sup_{y\in[-1,1]}|\Psi_{aD,\eps}(y)-\cos(aDy)|\leq\eps.
\end{split}
\end{align}
This concludes the proof.
\end{proof}

\begin{center}
\newcommand{\CosFigScale}{0.88}
\begin{tabular}{ c c }
 
 \begin{tikzpicture}[scale=\CosFigScale]
	\begin{axis}[
		xlabel=$x$,ylabel=$g(x)$]
	\addplot[color=blue,mark=*] coordinates {
		(0,0)
		(0.15,0.3)
		(0.3,0.6)
		(0.5,1)
		(0.7,0.6)
		(0.85,0.3)
		(1,0)
	};
	\end{axis}%
\end{tikzpicture} & \begin{tikzpicture}[scale=\CosFigScale]
	\begin{axis}[
		xlabel=$x$,ylabel=$g(g(x))$]

	\addplot[color=blue,mark=*] coordinates {
		(0,0)
		(0.125,0.5)
		(0.25,1)
		(0.375,0.5)
		(0.5,0)
		(0.625,0.5)
		(0.75,1)
		(0.875,0.5)
		(1,0)
	};
	\end{axis}%
\end{tikzpicture}\\

\begin{tikzpicture}[scale=\CosFigScale]
	\begin{axis}[
		xlabel=$x$,
		ylabel=$\cos(2 \pi x)$
	]
	
		\addplot[color=blue,mark=*] coordinates {
(  0.00,  1.00)
(  0.05,  0.95)
(  0.10,  0.81)
(  0.15,  0.59)
(  0.20,  0.31)
(  0.25,  0.00)
(  0.30, -0.31)
(  0.35, -0.59)
(  0.40, -0.81)
(  0.45, -0.95)
(  0.50, -1.00)
(  0.55, -0.95)
(  0.60, -0.81)
(  0.65, -0.59)
(  0.70, -0.31)
(  0.75, -0.00)
(  0.80,  0.31)
(  0.85,  0.59)
(  0.90,  0.81)
(  0.95,  0.95)
(  1.00,  1.00)
};
	\end{axis}
    \draw[] (0, -1.5) node{0};
	\draw[-{Latex[length=3mm]}] (0.3,-1.5)--(6.5,-1.5); 
	\draw[] (6.8, -1.75) node{1/2};
	\draw[-{Latex[length=3mm]}] (6.5,-2)--(0.3,-2); 
	\draw[] (0, -2) node{1};	
	\draw[opacity=0,-{Latex[length=3mm]}] (6.5,-2.5)--(0.3,-2.5);
	\draw[opacity=0] (0, -2.5) node{1};
\end{tikzpicture} & \begin{tikzpicture}[scale=\CosFigScale]
	\begin{axis}[
		xlabel=$x$,
		ylabel=$\cos(2 \pi x)$
	]

		\addplot[color=blue,mark=*] coordinates {
(  0.00,  1.00)
(  0.05,  0.95)
(  0.10,  0.81)
(  0.15,  0.59)
(  0.20,  0.31)
(  0.25,  0.00)
(  0.30, -0.31)
(  0.35, -0.59)
(  0.40, -0.81)
(  0.45, -0.95)
(  0.50, -1.00)
(  0.55, -0.95)
(  0.60, -0.81)
(  0.65, -0.59)
(  0.70, -0.31)
(  0.75, -0.00)
(  0.80,  0.31)
(  0.85,  0.59)
(  0.90,  0.81)
(  0.95,  0.95)
(  1.00,  1.00)
};
	\end{axis}

    \draw[] (0, -1) node{0};
	\draw[-{Latex[length=3mm]}] (0.3,-1)--(6.5,-1);
	\draw[] (6.8,-1.25) node{1/4};
    \draw[] (0, -1.75) node{1/2};
	\draw[-{Latex[length=3mm]}] (6.5,-1.5)--(0.3,-1.5);
	\draw[] (6.8,-2.25) node{3/4};
	\draw[-{Latex[length=3mm]}] (0.3,-2)--(6.5,-2); 
	\draw[-{Latex[length=3mm]}] (6.5,-2.5)--(0.3,-2.5); 
	\draw[] (0, -2.5) node{1};
\end{tikzpicture} \\ 

\begin{tikzpicture}[scale=\CosFigScale]
	\begin{axis}[
		xlabel=$x$,
		ylabel=${\cos(2 \pi 2x) = \cos(2 \pi g(x))}$
	]
		\addplot[color=blue,mark=*] coordinates {
(  0.00,  1.00)
(  0.02,  0.97)
(  0.04,  0.88)
(  0.06,  0.73)
(  0.08,  0.54)
(  0.10,  0.31)
(  0.12,  0.06)
(  0.14, -0.19)
(  0.16, -0.43)
(  0.18, -0.64)
(  0.20, -0.81)
(  0.22, -0.93)
(  0.24, -0.99)
(  0.26, -0.99)
(  0.28, -0.93)
(  0.30, -0.81)
(  0.32, -0.64)
(  0.34, -0.43)
(  0.36, -0.19)
(  0.38,  0.06)
(  0.40,  0.31)
(  0.42,  0.54)
(  0.44,  0.73)
(  0.46,  0.88)
(  0.48,  0.97)
(  0.50,  1.00)
(  0.52,  0.97)
(  0.54,  0.88)
(  0.56,  0.73)
(  0.58,  0.54)
(  0.60,  0.31)
(  0.62,  0.06)
(  0.64, -0.19)
(  0.66, -0.43)
(  0.68, -0.64)
(  0.70, -0.81)
(  0.72, -0.93)
(  0.74, -0.99)
(  0.76, -0.99)
(  0.78, -0.93)
(  0.80, -0.81)
(  0.82, -0.64)
(  0.84, -0.43)
(  0.86, -0.19)
(  0.88,  0.06)
(  0.90,  0.31)
(  0.92,  0.54)
(  0.94,  0.73)
(  0.96,  0.88)
(  0.98,  0.97)
(  1.00,  1.00)
};
	\end{axis}
\end{tikzpicture} & \begin{tikzpicture}[scale=\CosFigScale]
	\begin{axis}[
		xlabel=$x$,
		ylabel=${\cos(2 \pi 4x) = \cos(2 \pi g(g(x)))}$
	]
		\addplot[color=blue,mark=*] coordinates {
(  0.00,  1.00)
(  0.02,  0.88)
(  0.04,  0.54)
(  0.06,  0.06)
(  0.08, -0.43)
(  0.10, -0.81)
(  0.12, -0.99)
(  0.14, -0.93)
(  0.16, -0.64)
(  0.18, -0.19)
(  0.20,  0.31)
(  0.22,  0.73)
(  0.24,  0.97)
(  0.26,  0.97)
(  0.28,  0.73)
(  0.30,  0.31)
(  0.32, -0.19)
(  0.34, -0.64)
(  0.36, -0.93)
(  0.38, -0.99)
(  0.40, -0.81)
(  0.42, -0.43)
(  0.44,  0.06)
(  0.46,  0.54)
(  0.48,  0.88)
(  0.50,  1.00)
(  0.52,  0.88)
(  0.54,  0.54)
(  0.56,  0.06)
(  0.58, -0.43)
(  0.60, -0.81)
(  0.62, -0.99)
(  0.64, -0.93)
(  0.66, -0.64)
(  0.68, -0.19)
(  0.70,  0.31)
(  0.72,  0.73)
(  0.74,  0.97)
(  0.76,  0.97)
(  0.78,  0.73)
(  0.80,  0.31)
(  0.82, -0.19)
(  0.84, -0.64)
(  0.86, -0.93)
(  0.88, -0.99)
(  0.90, -0.81)
(  0.92, -0.43)
(  0.94,  0.06)
(  0.96,  0.54)
(  0.98,  0.88)
(  1.00,  1.00)
};
	\end{axis}
\end{tikzpicture}
\end{tabular}

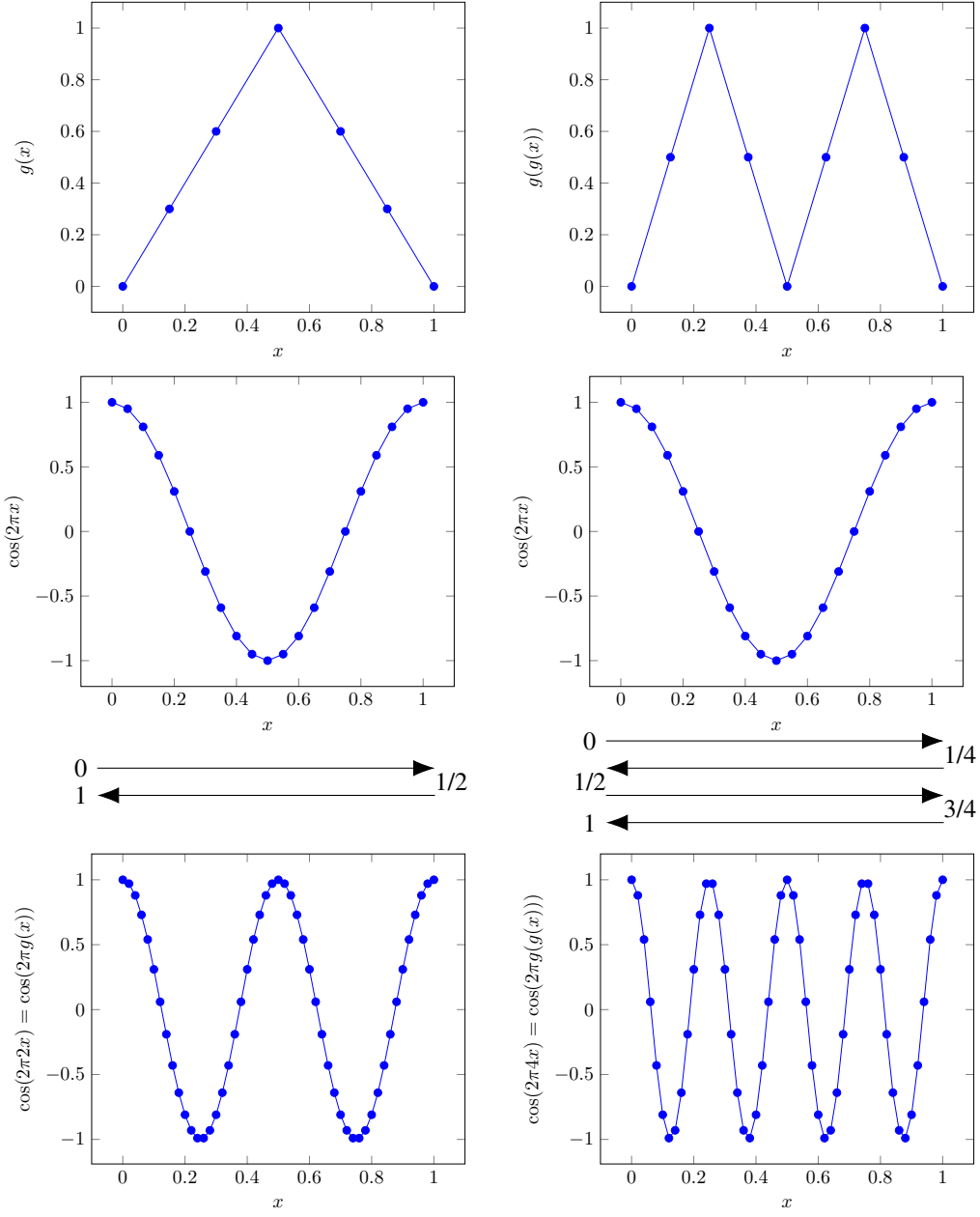
\captionof{figure}{Approximation of the function $\cos(2 \pi a x)$ according to Theorem \ref{sin} using ``sawtooth'' functions $g_s(x)$ as per (\ref{g_s_def}), left $a=2$, right $a=4$.}
\label{cos_visualized}
\end{center}

The result just obtained extends to the approximation of $x\mapsto\sin(ax)$, formalized next, simply by noting that $\sin(x)=\cos(x-\pi/2)$.

\begin{corollary}
\label{sin_shifted}
There exists a constant $C>0$ such that for every $a,D\in\R_+$, $b \in \mathbb{R}$, $\eps\in(0,1/2)$, there is a network $\Psi_{a,b,D,\epsilon} \in\cN_{1,1}$ with $\L(\Psi_{a,b,D,\epsilon}) \le  C( (\log(\eps^{-1}))^2 + \log(\lceil a D + |b|\rceil))$, $\mathcal{W}(\Psi_{a,b,D,\epsilon}) \le 9$, $\mathcal{B}(\Psi_{a,b,D,\epsilon})\leq 1$, and satisfying
\begin{equation*}
\|\Psi_{a,b,D,\epsilon}(x)-\cos(ax - b) \|_{L^\infty([-D,D])} \leq \epsilon.
\end{equation*}
\end{corollary}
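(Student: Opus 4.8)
The plan is to reduce the statement to Theorem~\ref{sin} by a change of variables absorbing the phase shift $b$ into the argument. Concretely, observe that $\cos(ax-b)=\cos(a(x-b/a))$, so approximating $x\mapsto\cos(ax-b)$ on $[-D,D]$ is equivalent to approximating $y\mapsto\cos(ay)$ on the shifted interval $[-D-|b|/a,\,D+|b|/a]\supseteq[-D+b/a,\,D+b/a]$, followed by pre-composition with the affine map $x\mapsto x-b/a$.

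First I would set $D':=D+|b|/a$ (or, to keep things clean when $a$ is small, $D':=D+|b|/a$ still works since $aD'=aD+|b|$), and invoke Theorem~\ref{sin} to obtain a network $\Psi_{a,D',\eps}\in\cN_{1,1}$ with $\L(\Psi_{a,D',\eps})\le C((\log(\eps^{-1}))^2+\log(\lceil aD'\rceil))=C((\log(\eps^{-1}))^2+\log(\lceil aD+|b|\rceil))$, width $\le 9$, weight magnitude $\le 1$, and $\|\Psi_{a,D',\eps}(y)-\cos(ay)\|_{L^\infty([-D',D'])}\le\eps$. Then I would define $\Psi_{a,b,D,\eps}(x):=\Psi_{a,D',\eps}(x-b/a)$, which is the composition of $\Psi_{a,D',\eps}$ with the depth-$1$ affine network $x\mapsto x-b/a$; by Lemma~\ref{network_conc} the composition has the same asymptotic depth bound and width $\le 9$. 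For $x\in[-D,D]$ we have $x-b/a\in[-D-b/a,\,D-b/a]\subseteq[-D',D']$, hence $|\Psi_{a,b,D,\eps}(x)-\cos(ax-b)|=|\Psi_{a,D',\eps}(x-b/a)-\cos(a(x-b/a))|\le\eps$, giving the claimed error bound.

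The one genuine wrinkle is the weight-magnitude constraint $\mathcal{B}\le 1$: the affine pre-composition $x\mapsto x-b/a$ has bias $-b/a$, whose magnitude is unbounded in general, so naively composing would violate $\mathcal{B}(\Psi_{a,b,D,\eps})\le 1$. This is the main obstacle, and I expect it to be handled exactly as the large-domain issues were handled earlier in the section—by trading weight magnitude for depth. One option is to fold the shift into the sawtooth-based construction used to prove Theorem~\ref{sin}: there the argument $\alpha|x|$ appears inside $g_s$, and the reduction to the interval $[-1,1]$ already rescales the domain, so one can instead arrange the cosine's periodicity to absorb $b$ modulo $2\pi$ and then handle the residual shift of size $O(1)$ with bounded weights. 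A cleaner alternative is to note that the proof of Theorem~\ref{sin} itself contains the template: one first reduces to the interval $[-1,1]$ via $\Psi_{a,D,\eps}(x)=\Psi_{aD,\eps}(x/D)$, and the division by $D$ (a weight potentially smaller than $1$, hence fine) can be paired with a bounded-weight implementation of the shift; more directly, since $\cos(ax-b)=\cos(a(x-b/a))$ and $\cos$ is $2\pi$-periodic and even, one may replace $b/a$ by its representative in a bounded range after peeling off an integer number of periods, reducing to a shift of magnitude $O(1/a)\le O(1)$ which can be realized with weights bounded by $1$ at the cost of additional depth absorbed into the constant $C$.

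Thus the overall proof is short: it is essentially "Theorem~\ref{sin} plus an affine change of variables," with the only care needed being to realize the shift while respecting $\mathcal{B}\le 1$, which follows the same depth-for-magnitude trade-off (cf.\ Lemma~\ref{lem:scalar_mult} and the remarks after Proposition~\ref{relu_mult}) used repeatedly in this section. I would present it by stating the change of variables, citing Theorem~\ref{sin} for the shifted interval $[-D-|b|/a,D+|b|/a]$, invoking Lemma~\ref{network_conc} for the composition bounds, and dispatching the weight-magnitude point with a sentence referencing the periodicity reduction of $b$ modulo the period of the cosine.
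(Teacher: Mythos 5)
Your proposal matches the paper's proof: set $D':=D+|b|/a$, invoke Theorem~\ref{sin} on $[-D',D']$, and compose with $x\mapsto x-b/a$; the ``cleaner alternative'' you sketch for the weight bound is precisely what the paper's definition $\Psi_{a,b,D,\eps}(x):=\Psi_{a,D',\eps}(x-b/a)$ uses, since the shift is absorbed into the first layer of $\Psi_{a,D',\eps}$, where it is divided by $D'$ to give a bias $-b/(aD+|b|)$ of magnitude at most $1$. Your other suggested fix, reducing $b$ modulo the period $2\pi/a$ of $\cos(a\,\cdot\,)$, would still leave a residual shift of size up to $\pi/a$, which is unbounded as $a\to 0$, so that route should be discarded (fortunately it is not needed once you use the scaling argument).
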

\begin{proof}
For given $a,D\in\R_+$, $b \in \mathbb{R}$, $\eps\in(0,1/2)$, consider the network $\Psi_{a,b,D,\eps}(x):=\Psi_{a,D+\frac{|b|}{a},\eps}\left(x-\tfrac{b}{a}\right)$ with $\Psi_{a,D,\eps}$ as defined in the proof of Theorem \ref{sin}, and observe that, owing to (\ref{psi-transformed}),
\begin{align*}
    \sup_{x\in[-D,D]}|\Psi_{a,b,D,\eps}(x)-\cos(ax-b)| \le \sup_{y \in \left[-(D+\frac{|b|}{a}),D+\frac{|b|}{a}\right]}|\Psi_{a,D+\frac{|b|}{a},\eps}(y)-\cos(ay)|\leq\eps.
\end{align*}
\end{proof}

\begin{remark}
The results in this section all have approximating networks of finite width and depth scaling polylogarithmically in $\eps^{-1}$. Owing to
$$
\M(\Phi) \leq \L(\Phi) \mathcal{W}(\Phi)(\mathcal{W}(\Phi)+1)
$$
this implies that the connectivity scales no faster than polylogarithmic in $\eps^{-1}$. It therefore follows that the approximation error $\eps$ decays (at least) exponentially fast in the connectivity or equivalently in the number of parameters
the approximant (i.e., the neural network) employs. We say that the network provides exponential approximation accuracy.
\end{remark}

\section{Approximation of Function Classes and Metric Entropy}
\label{sec:approximation_theory}

So far we considered the explicit construction of deep neural networks for the approximation of a wide range of functions, namely polynomials, smooth functions, and sinusoidal functions, in all cases with exponential accuracy, i.e., with an approximation error that decays exponentially in network connectivity. We now proceed to lay the foundation for the development of a framework that allows us to characterize the fundamental limits of deep neural network approximation of entire function classes. But first, we provide a review of relevant literature.

The best-known results on approximation by neural networks are the universal approximation theorems of Hornik \cite{Hornik1991251} and Cybenko \cite{Cybenko1989}, stating that continuous functions on bounded domains can be approximated arbitrarily well by a single-hidden-layer ($L=2$ in our terminology) neural network with sigmoidal activation function. 
The literature on approximation-theoretic properties of networks with a single hidden layer continuing this line of work is abundant. Without any claim to completeness, we mention work on approximation error bounds in terms of the number of neurons for functions with Fourier transforms of bounded first moments
\cite{barron:1993}, \cite{Barron1994}, the nonexistence of localized approximations
\cite{ChuXM1994networksforlocApprox}, a fundamental lower bound on approximation rates \cite{DeVore1997approxfeedforward, candes:1998Ridgelets}, and the approximation of smooth or analytic functions \cite{Mhaskar1996NNapprox,Mhaskar1995151}.

Approximation-theoretic results for networks with multiple hidden layers were obtained in \cite{Hornik1989universalApprox, Mhaskar1993} for general functions, in \cite{Funahashi1989183} for continuous functions, and for functions together with their derivatives in \cite{NguyenThien1999687}.
In \cite{ChuXM1994networksforlocApprox} it was shown that for certain approximation tasks deep networks can perform fundamentally better than single-hidden-layer networks.
We also highlight two recent papers, which investigate the benefit---from
an approximation-theoretic perspective---of multiple hidden layers.
Specifically, in \cite{Eldan2016PowerofDepth} it was shown that there exists a function which, although expressible through a small three-layer network,
can only be represented through a very large two-layer network; here size is measured in terms of the total number of neurons in the network. 

In the setting
of deep convolutional neural networks first results of a nature similar to those in \cite{Eldan2016PowerofDepth} were reported in \cite{Mhaskar2016DeepVSShallow}.
Linking the expressivity properties of neural networks to tensor decompositions, \cite{cohen2016expressive, cohen2016convolutional} established the existence of functions that can be realized by relatively small deep convolutional networks but require exponentially larger shallow convolutional networks.

We conclude by mentioning recent results bearing witness to the approximation power of deep ReLU networks in the
context of PDEs. Specifically, it was shown in \cite{schwab-zech-2018} that
deep ReLU networks can approximate very effectively certain solution families of parametric PDEs depending on a large (possibly infinite) number of parameters. The series of papers \cite{GrohsPDE2018,BernerPDE2018,BeckPDE2018,elbrachter2018dnn} constructs and analyzes a deep-learning-based numerical solver for Black-Scholes PDEs.

For survey articles on approximation-theoretic aspects of neural networks, we refer the interested reader to \cite{ellacott1994aspects} and \cite{pinkus1999approximation} as well as the very recent \cite{devore2020neural}.
Most closely related to the framework we develop here is the paper by Shaham, Cloninger, and Coifman \cite{ShaCC2015provableAppDNN},
which shows that for functions that are sparse in specific wavelet frames, the best $M$-weight approximation rate (see Definition \ref{def:optimalApproximationRateNN} below) of three-layer neural networks is at least as large as the best $M$-term approximation rate in piecewise linear wavelet frames.

We begin the development of our framework with a review of a widely used theoretical foundation for deterministic lossy data compression \cite{DL93,DeVore1998nonlinear}. Our presentation essentially follows \cite{DONOHO1993100,grohs2015optimally}.

\subsection{Kolmogorov-Donoho Rate Distortion Theory} \label{subsec:ratedistorsion}
Let $d\in \N$, $\Omega \subseteq \R^d$, and consider a set of functions $\cC \subseteq L^2(\Omega)$, which we will frequently refer to as \emph{function class}. Then, for each $\ell\in \N$, we denote by
\begin{align*}
    \mathfrak{E}^\ell:= \left\{E: \cC \to \{0,1\}^{\ell}\right\}
\end{align*}
the set of \emph{binary encoders of $\cC$ of length $\ell$}, and we let
\begin{align*}
    \mathfrak{D}^\ell:= \left\{D:\{0,1\}^{\ell} \to  L^2(\Omega)\right\}
\end{align*}
be the set of \emph{binary decoders of length $\ell$}. An encoder-decoder pair  $(E, D) \in \mathfrak{E}^\ell \times \mathfrak{D}^\ell$
is said to {\em achieve uniform error $\varepsilon$ over the function class $\cC$}, if
\begin{align*}
    \sup_{f\in \cC} \|D(E(f)) - f \|_{L^2(\Omega)} \leq \varepsilon.
\end{align*}
Note that here we quantified the approximation error in $L^2(\Omega)$-norm, whereas in the previous section we used the $L^\infty(\Omega)$-norm. 
While results in terms of $L^\infty(\Omega)$-norm are stronger, we shall employ the $L^2(\Omega)$-norm in order to parallel 
the Kolmogorov-Donoho framework for nonlinear approximation through dictionaries \cite{DONOHO1993100,Donoho1996}. We furthermore note that for sets $\Omega$ of finite Lebesgue measure $|\Omega|$, the two norms are related through $\|f\|_{L^2(\Omega)} \leq |\Omega|^{1/2}\|f\|_{L^{\infty}(\Omega)}$. Finally, whenever we talk about compactness and related topological notions, we shall always mean w.r.t.\@ the topology induced by the $L^2(\Omega)$-norm.

A quantity of central interest is the minimal length $\ell\in \N$ for which there exists an encoder-decoder pair $(E, D) \in \mathfrak{E}^\ell \times \mathfrak{D}^\ell$ that achieves uniform error $\varepsilon$ over the function class $\cC$, along with its asymptotic behavior as made precise in the following definition.

\begin{definition}\label{def:optexp}
Let $d\in \N$, $\Omega \subseteq \R^d$, and let $\cC\subseteq L^2(\Omega)$ be compact. Then, for $\varepsilon >0$, the \emph{minimax code length} $L(\varepsilon, \cC)$ is
\begin{align}\label{minimaxDef}
L(\varepsilon, \cC) := \min\left\{\ell\in \N: \exists (E,D) \in  \mathfrak{E}^\ell \times \mathfrak{D}^\ell:  \sup_{f\in \cC} \|D(E(f)) - f \|_{L^2(\Omega)} \leq \varepsilon\right\}.
\end{align}
Moreover, the \emph{optimal exponent} $\gamma^*(\cC)$ is defined as
\[
\gamma^*(\cC): = \sup \left \{\gamma \in \R: L(\varepsilon, \cC) \in \mathcal{O} \! \left(\varepsilon ^{-1/\gamma}\right), \, \varepsilon \rightarrow 0 \right\}.
\]
\end{definition}

The optimal exponent $\gamma^*(\cC)$ determines the minimum growth rate of $L(\varepsilon, \cC)$ as the error $\varepsilon$ tends to zero and
can hence be seen as quantifying the ``description complexity'' of the function class $\cC$. Larger $\gamma^*(\cC)$ results in smaller growth 
rate and hence smaller memory requirements for storing functions $f\in \cC$ such that reconstruction with uniformly bounded error is possible. 

\begin{remark}\label{rem:optexpalt}
The optimal exponent $\gamma^*(\cC)$ can equivalently be thought of as quantifying the asymptotic behavior of the minimal achievable error for the function class $\cC$ with a given code length. Specifically, we have
\begin{align}\label{eq:gamma_star_flipped}
    \gamma^*(\cC) = \sup \big\{\gamma \in \R: L(\varepsilon, \cC) \in \mathcal{O} \big(\varepsilon ^{-1/\gamma}\big), \, \varepsilon \rightarrow 0 \big\} = \sup \big\{\gamma \in \R: \eps(L) \in \mathcal{O} \big(L^{-\gamma}\big), \, L \rightarrow \infty \big\},
\end{align}
where
\begin{align*}
    \eps(L):=\inf_{(E,D)\in\mathfrak{E}^L \times \mathfrak{D}^L}\sup_{f\in\cC}\|D(E(f)) - f \|_{L^2(\Omega)}.
\end{align*}
\end{remark}

The quantity $\gamma^*(\cC)$ is closely related to the concept of Kolmogorov-Tikhomirov epsilon entropy a.k.a. metric entropy \cite{Ott2002MetricEntropy}. 
We next make this connection explicit.

\subsection{Metric entropy}

Most of the discussion in this subsection, which is almost exclusively of review nature, follows very closely \cite[Chapter~5]{Wainwright2019}.
Consider the metric space $({\cal X},\rho)$ with ${\cal X}$ a nonempty set and $\rho: \cal X \times \cal X \rightarrow \R$ a distance function.
A natural measure for the size of a compact subset ${\cal C}$ of ${\cal X}$ is given by the number of balls of a fixed radius $\eps$ 
required to cover ${\cal C}$, a quantity known as the covering number (for covering radius $\eps$).

\begin{definition}\cite{Wainwright2019}
Let $({\cal X},\rho)$ be a metric space. An $\eps$-covering of a compact set ${\cal C} \subseteq {\cal X}$ with respect to the metric $\rho$ is a set $\{x_1,\dots,x_N\} \subseteq \cal C$ such that for each $x \in \cal C$, there exists an $i \in \{1,\dots,N\}$ so that
$\rho(x,x_i) \le \eps$. The $\eps$-covering number $N(\eps; {\cal C},\rho)$ is the cardinality of the smallest $\eps$-covering.
\end{definition}

An $\eps$-covering is a collection of balls of radius $\eps$ that cover the set $\cal C$, i.e.,
$$
{\cal C} \subseteq \bigcup_{i=1}^{N(\eps; \, {\cal C},\rho)}B(x_i,\eps),
$$
where $B(x_i,\eps)$ is a ball---in the metric $\rho$---of radius $\eps$ centered at $x_{i}$. The covering number is nonincreasing in $\eps$, i.e., $N(\eps; {\cal C},\rho) \ge N(\eps'; {\cal C},\rho)$, for all $\eps \le \eps'$. When the set ${\cal C}$ is not finite, the covering number goes to infinity as $\eps$ goes to zero. We shall be interested in the corresponding rate of growth, more specifically in the quantity $\log N(\eps; {\cal C}, \rho)$ known as the metric entropy of ${\cal C}$ with respect to $\rho$. Recall that $\log$ is to the base $2$, hence the unit of metric entropy is ``bits". The operational significance of metric entropy follows from the question: What is the minimum number of bits needed to represent any element $x \in \cal C$ with error---quantified in terms of the distance measure $\rho$---of at most $\epsilon$?
By what was just developed, the answer to this question is $\lceil \log N(\eps; {\cal C}, \rho)\rceil$. Specifically, for a given $x\in{\cal X}$, the corresponding encoder $E(x)$ simply identifies the closest ball center $x_i$ and encodes the index $i$ using $\lceil\log N(\eps; {\cal C}, \rho)\rceil$ bits. The corresponding decoder $D$ delivers the ball center $x_i$, which guarantees that the resulting error satisfies $\|D(E(x))-x\| \le \eps$.

We proceed with a simple example (\cite[Example~5.2]{Wainwright2019}) computing an upper bound on the metric entropy of the interval ${\cal C}=[-1,1]$ in $\R$ with respect to the metric $\rho(x,x')=|x-x'|$. To this end, 
we divide ${\cal C}$ into intervals of length $2\eps$ by setting $x_i=-1+2(i-1)\eps$, for $i\in[1,L]$, where $L=\lfloor \frac{1}{\eps} \rfloor +1$. This guarantees that, 
for every point $x \in [-1,1]$, there is an $i \in [1,L]$ such that $|x-x_i| \le \eps$, which, in turn, establishes
$$
N(\eps; {\cal C}, \rho) \le \Big\lfloor \frac{1}{\eps} \Big \rfloor+1 \le \frac{1}{\eps}+1 
$$
and hence yields an upper bound on metric entropy according to\footnote{The notation $f(\eps)\asymp g(\eps)$, as $\eps \rightarrow 0$, means that there are constants $c,C,\eps_0>0$ such that $cf(\eps)\leq g(\eps)\leq Cf(\eps)$, for all $\eps\leq\eps_0$. For ease of exposition, we shall usually omit the qualifier $\eps \rightarrow 0$.} 
\begin{equation}
\log N(\eps; {\cal C}, \rho) \le \log \left(\frac{1}{\eps}+1 \right) \asymp \log (\eps^{-1}), \quad \text{as}\,\, \eps \rightarrow 0.   \label{upper-unit-cube}
\end{equation}
This result can be generalized to the $d$-dimensional unit cube to yield $\log(N(\eps; {\cal C}, \rho)) \le d\log(1/\eps +1) \asymp d\log(\eps^{-1})$. In order to show that the 
upper bound (\ref{upper-unit-cube}) correctly reflects metric entropy scaling for ${\cal C}=[-1,1]$ with respect to $\rho(x,x')=|x-x'|$, we would need a lower bound on $N(\eps; {\cal C}, \rho)$ that exhibits the same scaling (in $\eps$) behavior. A systematic approach to establishing lower bounds on metric entropy is through the concept of packing, which will be introduced next.

We start with the definition of the packing number of a compact set ${\cal C}$ in a metric space $({\cal X},\rho)$.

\begin{definition}\cite[Definition~5.4]{Wainwright2019}
Let $({\cal X},\rho)$ be a metric space.  An $\eps$-packing of a compact set ${\cal C} \subseteq {\cal X}$ with respect to the metric $\rho$ is a set $\{x_1,\dots,x_N\} \subseteq \cal C$ such that 
$\rho(x_i,x_j) > \eps$, for all distinct $i,j$. The $\eps$-packing number $M(\eps; {\cal X},\rho)$ is the cardinality of the largest $\eps$-packing.
\end{definition}

An $\eps$-packing is a collection of nonintersecting balls of radius $\eps/2$ and centered at elements in ${\cal X}$.
Although different, the covering number and the packing number provide essentially the same measure of size of a set as formalized next.

\begin{lemma}\label{lem:covering-packing}\cite[Lemma~5.5]{Wainwright2019}
Let $({\cal X},\rho)$ be a metric space and ${\cal C}$ a compact set in ${\cal X}$. For all $\eps > 0$, the packing and the covering number are related according to
$$
M(2\eps; {\cal C},\rho) \le N(\eps; {\cal 
C},\rho) \le M(\eps; {\cal C},\rho).
$$
\end{lemma}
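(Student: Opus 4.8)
The plan is to establish the two inequalities separately, in each case by comparing an extremal packing with an extremal covering and invoking the triangle inequality. Throughout, existence of the extremal sets is guaranteed by compactness of $\cC$, which makes both $N(\eps;\cC,\rho)$ and $M(\eps;\cC,\rho)$ finite.

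For the upper bound $N(\eps;\cC,\rho)\le M(\eps;\cC,\rho)$, I would start from a \emph{maximal} $\eps$-packing $\{x_1,\dots,x_N\}$ of $\cC$, so that $N=M(\eps;\cC,\rho)$. The key observation is that maximality forces this set to already be an $\eps$-covering: given any $x\in\cC$, either $x$ coincides with one of the $x_i$, in which case $\rho(x,x_i)=0\le\eps$, or else the augmented set $\{x_1,\dots,x_N,x\}$ violates the packing condition, which means there is some $i$ with $\rho(x,x_i)\le\eps$. In both cases $x\in B(x_i,\eps)$ for some $i$, so $\{x_1,\dots,x_N\}$ is an $\eps$-covering and hence $N(\eps;\cC,\rho)\le N=M(\eps;\cC,\rho)$.

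For the lower bound $M(2\eps;\cC,\rho)\le N(\eps;\cC,\rho)$, I would take a \emph{minimal} $\eps$-covering $\{y_1,\dots,y_N\}$ with $N=N(\eps;\cC,\rho)$ together with a maximal $2\eps$-packing $\{z_1,\dots,z_M\}$ with $M=M(2\eps;\cC,\rho)$, and build an injection from the packing points into the covering balls. Each $z_j$ lies in at least one ball $B(y_i,\eps)$; fix one such index $i(j)$. If $i(j)=i(k)$ for some $j\neq k$, then the triangle inequality gives $\rho(z_j,z_k)\le\rho(z_j,y_{i(j)})+\rho(y_{i(j)},z_k)\le 2\eps$, contradicting the packing property $\rho(z_j,z_k)>2\eps$. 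Hence $j\mapsto i(j)$ is injective and $M\le N$, which is the claim.

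The argument is entirely elementary, and I do not anticipate any real obstacle. The only points requiring a bit of care are the handling of strict versus non-strict inequalities in the definitions of packing and covering — this is precisely why a maximal packing at radius $\eps$ is automatically a covering at radius $\eps$, and why the radius must be doubled to $2\eps$ on the packing side in the lower bound — and the use of compactness to ensure that a maximal packing and a minimal covering actually exist.
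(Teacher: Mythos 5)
Your proof is correct and follows essentially the same two-step argument as the paper: a maximal $\eps$-packing is automatically an $\eps$-covering (yielding $N\le M$), and no two centers of a $2\eps$-packing can lie in a single $\eps$-ball of the covering (yielding $M(2\eps)\le N(\eps)$). You simply spell out the triangle-inequality step and the injectivity of $j\mapsto i(j)$ more explicitly than the paper does.
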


\begin{proof}\cite{Wainwright2019,Prosser1966}
First, choose a minimal $\eps$-covering and a maximal $2\eps$-packing of ${\cal C}$. Since no two centers of the $2\eps$-packing can lie in the same ball of the
$\eps$-covering, it follows that $M(2\eps; {\cal C},\rho) \le N(\eps; {\cal C},\rho)$. To establish $N(\eps; {\cal C},\rho) \le M(\eps; {\cal C},\rho)$, we note that, given a maximal
packing $M(\eps; {\cal C},\rho)$, for any $x\in {\cal C}$, we have the center of at least one of the balls in the packing within distance less than $\eps$. If this were not the case, we could
add another ball to the packing thereby violating its maximality. This maximal packing hence also provides an $\eps$-covering and since $N(\eps; {\cal C},\rho)$ is a minimal covering, we must have $N(\eps; {\cal C},\rho) \le M(\eps; {\cal C},\rho)$.
\end{proof}

We now return to the example in which we computed an upper bound on the metric entropy of ${\cal C}=[-1,1]$ with respect to $\rho(x,x')=|x-x'|$ and show how Lemma \ref{lem:covering-packing} can be employed to establish the scaling behavior of metric entropy. To this end, we simply note that the points $x_i = -1+2(i-1)\eps, \,i \in [1,L]$, are separated according to $|x_i-x_j| = 2\eps > \eps$, for all $i \neq j$, which implies that $M(\eps; {\cal C},| \cdot |) \ge L = \lfloor 1/\eps \rfloor +1 \ge \frac{1}{\eps}$. Combining this with the upper bound (\ref{upper-unit-cube}) and Lemma \ref{lem:covering-packing}, we obtain 
$\log N(\eps; {\cal C},| \cdot |) \asymp \log(\eps^{-1})$. Likewise, it can be established that $\log N(\eps; {\cal C},\| \cdot \|) \asymp d \log(\eps^{-1})$ for the $d$-dimensional unit cube. This illustrates how an explicit construction of a packing set can be used to determine the scaling behavior of metric entropy.

We next formalize the notion that metric entropy is determined by the volume of the corresponding covering balls. Specifically, the following result establishes a relationship between a certain volume ratio and metric entropy.

\begin{lemma}\label{lem:volumes}\cite[Lemma~5.7]{Wainwright2019}
Consider a pair of norms $\| \cdot \|$ and $\| \cdot \|^{\prime}$ on $\R^d$, and let $\cal B$ and ${\cal B}'$ be their corresponding unit balls, i.e., ${\cal B}=\{ x \in \R^d | \|x\| \le 1\}$ and
${\cal B'}=\{ x \in \R^d | \|x\|' \le 1\}$. Then, the $\eps$-covering number of ${\cal B}$ in the $\| \cdot \|'$-norm satisfies
\begin{equation}
\left(\frac{1}{\eps}\right)^{d} \frac{ {\normalfont vol}({\cal B})}{{\normalfont vol}({\cal B}')} \le N(\eps; {\cal B},\| \cdot \|') \le \frac{{\normalfont vol}(\frac{2}{\eps}{\cal B}+{\cal B}')}
{{\normalfont vol}({\cal B}')}.\label{vol-ineq}
\end{equation}
\end{lemma}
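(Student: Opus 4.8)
The plan is to prove the two inequalities separately: the left-hand (lower) bound by a direct volume-counting argument applied to an arbitrary $\eps$-covering, and the right-hand (upper) bound by first passing to a maximal $\eps$-packing (via Lemma~\ref{lem:covering-packing}) and again counting volumes. Throughout I use only that $\mathcal{B}$ and $\mathcal{B}'$, being unit balls of norms on $\R^d$, are compact convex bodies of positive finite Lebesgue measure, that volume is translation invariant, and that $\mathrm{vol}(\lambda S)=\lambda^d\,\mathrm{vol}(S)$ for $\lambda>0$ and measurable $S$.

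For the lower bound, set $N:=N(\eps;\mathcal{B},\|\cdot\|')$ and fix an $\eps$-covering $\{x_1,\dots,x_N\}\subseteq\mathcal{B}$ with respect to $\|\cdot\|'$. By definition every $x\in\mathcal{B}$ satisfies $\|x-x_i\|'\le\eps$ for some $i$, which is equivalent to $x\in x_i+\eps\mathcal{B}'$; hence $\mathcal{B}\subseteq\bigcup_{i=1}^N(x_i+\eps\mathcal{B}')$. Taking volumes, using subadditivity and translation invariance, gives $\mathrm{vol}(\mathcal{B})\le N\,\mathrm{vol}(\eps\mathcal{B}')=N\eps^d\,\mathrm{vol}(\mathcal{B}')$, which rearranges to $N\ge(1/\eps)^d\,\mathrm{vol}(\mathcal{B})/\mathrm{vol}(\mathcal{B}')$, the claimed left-hand inequality.

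For the upper bound, invoke Lemma~\ref{lem:covering-packing} to get $N(\eps;\mathcal{B},\|\cdot\|')\le M(\eps;\mathcal{B},\|\cdot\|')=:M$, so it suffices to bound $M$. Let $\{x_1,\dots,x_M\}\subseteq\mathcal{B}$ be a maximal $\eps$-packing, i.e., $\|x_i-x_j\|'>\eps$ for all $i\neq j$. Then the translates $x_i+\tfrac{\eps}{2}\mathcal{B}'$ are pairwise disjoint, since a common point $z$ would force $\|x_i-x_j\|'\le\|x_i-z\|'+\|z-x_j\|'\le\eps$, a contradiction; and each such translate lies in $\mathcal{B}+\tfrac{\eps}{2}\mathcal{B}'$ because $x_i\in\mathcal{B}$. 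Summing the volumes of these disjoint sets yields $M\,(\eps/2)^d\,\mathrm{vol}(\mathcal{B}')\le\mathrm{vol}\big(\mathcal{B}+\tfrac{\eps}{2}\mathcal{B}'\big)$. Finally, applying the scaling identity to the Minkowski sum as a whole, $\tfrac{2}{\eps}\big(\mathcal{B}+\tfrac{\eps}{2}\mathcal{B}'\big)=\tfrac{2}{\eps}\mathcal{B}+\mathcal{B}'$, so $\mathrm{vol}\big(\mathcal{B}+\tfrac{\eps}{2}\mathcal{B}'\big)=(\eps/2)^d\,\mathrm{vol}\big(\tfrac{2}{\eps}\mathcal{B}+\mathcal{B}'\big)$; dividing by $(\eps/2)^d\,\mathrm{vol}(\mathcal{B}')$ gives $M\le\mathrm{vol}\big(\tfrac{2}{\eps}\mathcal{B}+\mathcal{B}'\big)/\mathrm{vol}(\mathcal{B}')$, and hence the same bound for $N$.

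The argument is essentially bookkeeping, so there is no serious obstacle; the two places demanding care are (i) matching the Minkowski-sum expression in the statement, which requires applying the homogeneity of volume to the sum $\mathcal{B}+\tfrac{\eps}{2}\mathcal{B}'$ as a single set rather than termwise, and (ii) the disjointness of the packing balls, where the strict separation $\|x_i-x_j\|'>\eps$ built into the definition of the packing number is exactly what is needed. No measurability subtlety arises beyond the observation that convex bodies in $\R^d$ have finite positive Lebesgue measure and that Minkowski sums of convex bodies are again convex bodies.
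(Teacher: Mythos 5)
Your proof is correct and follows essentially the same route as the paper: for the lower bound, cover $\mathcal{B}$ by the translated balls $x_i+\eps\mathcal{B}'$ and compare volumes; for the upper bound, pass to a maximal $\eps$-packing via Lemma~\ref{lem:covering-packing}, observe that the disjoint $\tfrac{\eps}{2}$-balls sit inside $\mathcal{B}+\tfrac{\eps}{2}\mathcal{B}'$, and rescale the Minkowski sum. The only difference is that you spell out the triangle-inequality argument for disjointness, which the paper leaves implicit.
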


\begin{proof}\cite{Wainwright2019}
Let $\{x_1,\dots,x_{N(\eps; {\cal B},\| \cdot \|')}\}$ be an $\eps$-covering of $\cal B$ in $\|\cdot\|'$-norm. Then, we have
$$
{\cal B} \subseteq \bigcup_{j=1}^{N(\eps; {\cal B},\| \cdot \|')}\{x_j+\eps {\cal B}'\},
$$
which implies ${\normalfont vol}({\cal B}) \le N(\eps; {\cal B},\| \cdot \|')\, \eps^{d} \, {\normalfont vol}({\cal B}')$, thus establishing the lower bound in (\ref{vol-ineq}). The upper bound is obtained by starting
with a maximal $\eps$-packing $\{x_1,\dots,x_{M(\eps; {\cal B},\| \cdot \|')}\}$ of ${\cal B}$ in the $\| \cdot \|'$-norm. 
The balls $\{x_j+\frac{\eps}{2}{\cal B}', j=1,\dots,M(\eps; {\cal B},\| \cdot \|')\}$ are all disjoint and contained within ${\cal B}+\frac{\eps}{2}{\cal B}'$. We can therefore conclude that 
$$
\sum_{j=1}^{M(\eps; {\cal B},\| \cdot \|')}{\normalfont vol}\!\left(x_j+\frac{\eps}{2}{\cal B}'\right) \le {\normalfont vol}\!\left({\cal B}+\frac{\eps}{2}{\cal B}'\right),
$$
and hence
$$
M(\eps; {\cal B},\| \cdot \|')\, {\normalfont vol}\!\left(\frac{\eps}{2}{\cal B}'\right) \le {\normalfont vol}\!\left({\cal B}+\frac{\eps}{2}{\cal B}'\right).
$$
Finally, we have ${\normalfont vol}(\frac{\eps}{2}{\cal B}')=(\frac{\eps}{2})^d {\normalfont vol}({\cal B}')$ and ${\normalfont vol}({\cal B}+\frac{\eps}{2}{\cal B}')=
(\frac{\eps}{2})^d {\normalfont vol}(\frac{2}{\eps}{\cal B}+{\cal B}')$,
which, together with $M(\eps; {\cal B},{\| \cdot \|'})\allowbreak \ge N(\eps; {\cal B},\| \cdot \|')$ due to Lemma \ref{lem:covering-packing}, yields the upper bound in (\ref{vol-ineq}).
\end{proof}

This result now allows us to establish the scaling of the metric entropy of unit balls in terms of their own norm, thus yielding a measure of the massiveness of unit balls in $d$-dimensional spaces. Specifically, we set ${\cal B}'={\cal B}$ in Lemma \ref{lem:volumes} and get
$$
{\normalfont vol}\left(\frac{2}{\eps}{\cal B}+{\cal B}'\right)={\normalfont vol}\left(\left(\frac{2}{\eps}+1\right){\cal B}\right)=\left(\frac{2}{\eps}+1\right)^{d}{\normalfont vol}({\cal B}),
$$
which when used in (\ref{vol-ineq}) yields $N(\eps; {\cal B},\| \cdot \|) \asymp \eps^{-d}$ and hence results in metric entropy scaling according to $\log(N(\eps; {\cal B},\| \cdot \|)) 
\asymp d\log(\eps^{-1})$. Particularizing this result to the unit ball ${\cal B}_{\infty}^{d}=[-1,1]^{d}$ and the metric $\| \cdot \|_{\infty}$, we recover the result of our direct analysis in the example above.

So far we have been concerned with the metric entropy of subsets of $\R^d$. We now proceed to analyzing the metric entropy of function classes, which will eventually allow us to establish the desired connection between the optimal exponent $\gamma^*({\cal C})$ and metric entropy. We begin with the simple one-parameter function class considered in \cite[Example~5.9]{Wainwright2019} and follow closely the exposition in \cite{Wainwright2019}.
For a fixed $\theta$, define the real-valued function $f_{\theta}(x)=1-e^{-\theta x}$, and consider the class
$$
{\cal P}=\{f_{\theta}:[0,1] \rightarrow \R\, |\, \theta \in [0,1]\}.
$$
The set ${\cal P}$ constitutes a metric space under the sup-norm given by $\|f-g\|_{L^{\infty}([0,1])}=\sup_{x\in[0,1]}|f(x)-g(x)|$. We show that the covering number of ${\cal P}$ satisfies
$$
1+\left\lfloor \frac{1-1/e}{2\eps} \right\rfloor \le N(\eps; {\cal P},\| \cdot \|_{L^{\infty}([0,1])}) \le \frac{1}{2\eps}+2,
$$
which leads to the scaling behavior $N(\eps; {\cal P},\| \cdot \|_{L^{\infty}([0,1])}) \asymp \eps^{-1}$ and hence to metric entropy scaling according to $\log(N(\eps; {\cal P},\| \cdot \|_{L^{\infty}([0,1])})) \asymp \log(\eps^{-1})$. We start by establishing the upper bound. For given $\eps \in [0,1]$, set $T=\lfloor \frac{1}{2\eps} \rfloor$, and define the points $\theta_{i}=2\eps i$, for $i=0,1,\dots,T$. By also adding the point $\theta_{T+1}=1$, we obtain a collection of $T+2$ points $\{\theta_{0},\theta_{1},\dots,\theta_{T+1}\}$ in $[0,1]$. We show that the associated functions $\{f_{\theta_{0}}, f_{\theta_{1}},\dots,f_{\theta_{T+1}}\}$ form an $\eps$-covering for ${\cal P}$. Indeed, for any $f_{\theta} \in {\cal P}$, we can find some $\theta_{i}$ in the covering such that $|\theta - \theta_{i}| \le \eps$. We then have 
$$
\|f_{\theta}-f_{\theta_{i}}\|_{L^{\infty}([0,1])}=\max_{x\in [0,1]}|e^{-\theta x}-e^{-\theta_{i} x}| \le |\theta - \theta_{i}|,
$$
where we used, for $\theta < \theta_{i}$,
\begin{eqnarray*}
\max_{x\in [0,1]}|e^{-\theta x}-e^{-\theta_{i} x}| & = & \max_{x\in [0,1]}(e^{-\theta x}-e^{-\theta_{i} x})=\max_{x\in [0,1]}e^{-\theta x}(1-e^{-(\theta_{i}-\theta) x})\,\le\,\max_{x\in [0,1]}(1-e^{-(\theta_{i}-\theta) x})\\
& \le & \max_{x\in [0,1]} (\theta_{i}-\theta)x \le \theta_{i}-\theta=|\theta-\theta_{i}|,
\end{eqnarray*}
as a consequence of $1-e^{-x} \le x$, for $x \in [0,1]$, which is easily verified by noting that the function $g(x)=1-e^{-x}-x$ satisfies
$g(0)=0$ and $g'(x) \le 0$, for $x \in [0,1]$. The case $\theta > \theta_{i}$ follows similarly. In summary, we have shown that $N(\eps; {\cal P},\| \cdot \|_{L^{\infty}([0,1])}) \le T+2 \le \frac{1}{2\eps}+2$.

In order to derive the lower bound, we first bound the packing number from below and then use Lemma \ref{lem:covering-packing}. We start by constructing an explicit packing as follows. 
Set $\theta_0=0$ and define
$\theta_i=-\log(1-\eps i)$, for all $i$ such that $\theta_i \le 1$. The largest index $T$ such that this holds is given by
$T=\lfloor \frac{1-1/e}{\eps} \rfloor$. Moreover, note that for all $i,j$ with $i \neq j$,
we have 
$\|f_{\theta_{i}}-f_{\theta_{j}}\|_{L^{\infty}([0,1])} \ge |f_{\theta_{i}}(1) - 
f_{\theta_{j}}(1)|=|\eps(i-j)| \ge \eps$. We can therefore conclude that $M(\eps; {\cal P},\| \cdot \|_{L^{\infty}([0,1])}) \ge \lfloor \frac{1-1/e}{\eps} \rfloor +1$, and hence, due to the lower bound in Lemma \ref{lem:covering-packing},
$$
N(\eps; {\cal P},\| \cdot \|_{L^{\infty}([0,1])}) \ge M(2\eps; {\cal P},\| \cdot \|_{L^{\infty}([0,1])}) \ge \left \lfloor \frac{1-1/e}{2\eps} \right \rfloor +1,
$$
as claimed. We have thus established that the function class ${\cal P}$ has metric entropy scaling according
to $$\log(N(\eps; {\cal P},\| \cdot \|_{L^{\infty}([0,1])})) \asymp \log(1/\eps),\, \mathrm{as}\  
\eps \rightarrow 0.$$
This rate is typical for one-parameter function classes.

We now turn our attention to richer function classes and start by considering 
Lipschitz functions on the $d$-dimensional unit cube, meaning real-valued functions on $[0,1]^d$ such that
$$
|f(x)-f(y)| \le L \|x-y\|_{\infty}, \qquad \mbox{for all}\quad x,y \in [0,1]^d.
$$
This class, denoted as ${\cal F}_{L}([0,1]^d)$, has metric entropy scaling \cite{Kolmogorov1959,Wainwright2019}
\begin{equation}
\log N(\eps; {\cal F}_{L},\| \cdot \|_{L^{\infty}([0,1]^{d})}) \asymp (L/\eps)^d. \label{d-dim-lip}
\end{equation}
Contrasting the exponential dependence of metric entropy in (\ref{d-dim-lip}) on the ambient dimension $d$ to the 
linear dependence we identified earlier for simpler sets such as unit balls in $\R^d$, where we had 
$$
\log N(\eps; {\cal B},\| \cdot \|_{\infty}) \asymp d \log(\eps^{-1}),
$$
shows that ${\cal F}_{L}([0,1]^d)$ is significantly more massive.

We are now ready to relate the optimal exponent $\gamma^{\ast}({\cal C})$ in Definition~\ref{def:optexp} to metric entropy scaling. All the examples
of metric entropy scaling we have seen exhibit a behavior that fits the law $\log(N(\eps; {\cal C},\| \cdot \|)) \asymp \eps^{-1/\gamma}$ or
$\log(N(\eps; {\cal C},\| \cdot \|)) \asymp \eps^{-1/\gamma} \log(\eps^{-1})^{\beta}$. The optimal exponent is hence a crude measure of growth insensitive to $\log$-factors or similar factors that are dominated by the growth of $\eps^{-1/\gamma}$.

While we restrict ourselves to the approximation of functions on Euclidean domains, the framework described in this section can be extended to functions on manifolds (see e.g.~\cite{EHLER201841}). As such, an interesting direction for future research would be the extension of the deep neural network approximation theory developed in this paper to functions on manifolds. First results on the neural network approximation of functions on manifolds have been reported in \cite{ShaCC2015provableAppDNN, boelcskei:2017DNN, schmidthieber2019deep}. For further reading on the general subject of function approximation on manifolds, we recommend \cite{MHASKAR2020253} and references therein.

\section{Approximation with Dictionaries}\label{sec:approx-rep-systems}

We now show how Kolmogorov-Donoho rate-distortion theory can be put to work in the context of optimal approximation with dictionaries. Again, this subsection is of review nature. We start with a brief discussion of basics on optimal approximation in Hilbert spaces. Specifically, we shall consider two types of approximation, namely linear
and nonlinear. 

Let ${\cal H}$ be a Hilbert space equipped with inner product $\langle \cdot, \cdot \rangle$ and induced norm $\|\cdot\|_{\cal H}$ and let $e_{k},\,k=1,2,\dots$, be an
orthonormal basis for ${\cal H}$. For linear approximation, we use the linear space ${\cal H}_{M}:= \mbox{span}\{e_{k}: 1\le k \le M\}$ to approximate a given element
$f \in {\cal H}$. We measure the approximation error by 
$$
E_{M}(f):= \inf_{g \in {\cal H}_{M}} \|f-g\|_{\cal H}.
$$
In nonlinear approximation, we consider best $M$-term approximation, which replaces ${\cal H}_{M}$ by the set $\Sigma_{M}$ consisting of all elements $g \in {\cal H}$ that can be 
expressed as
$$
g=\sum_{k \in \Lambda}c_{k}e_{k},
$$
where $\Lambda \subseteq \N$ is a set of indices with $|\Lambda| \le M$. Note that, in contrast to ${\cal H}_{M}$, the set $\Sigma_{M}$ is not a linear space as a linear combination of two elements in $\Sigma_{M}$ will, in general, need $2M$ terms in its representation by the $e_{k}$. Analogous to $E_{M}$, we define the error of best $M$-term approximation
$$
\Gamma_{M}(f):= \inf_{g \in \Sigma_{M}}\|f-g\|_{\cal H}.
$$
The key difference between linear and nonlinear approximation resides in the fact that in nonlinear approximation, we can choose the $M$ elements $e_k$ participating in the approximation of $f$ freely from the entire orthonormal basis whereas in linear approximation we are constrained to the first $M$ elements. A classical example for linear approximation is the approximation of periodic functions by the Fourier series elements corresponding to the $M$ lowest frequencies (assuming natural ordering of the dictionary). This approach clearly leads to poor approximation if the function under consideration consists of high-frequency components. In contrast, in nonlinear approximation we would seek the $M$ frequencies that yield the smallest approximation error. In summary, it is clear that (nonlinear) best $M$-term approximation can achieve smaller approximation error than linear $M$-term approximation.

We shall consider nonlinear approximation in arbitrary, possibly redundant, dictionaries, i.e., in frames \cite{Morgenshtern-Boelcskei-2012}, and
will exclusively be interested in the case $\mathcal{H}=L^2(\Omega)$, in particular the approximation error will be measured in 
terms of $L^2(\Omega)$-norm. 
Specifically, let $\cC$ be a set of functions in $L^2(\Omega)$ and consider a countable family of functions $\mathcal{D}:=(\varphi_i)_{i \in \N} \subseteq L^2(\Omega)$, termed \emph{dictionary}. 

We consider the {\em best $M$-term approximation error} 
of $f \in \cC$ in $\mathcal{D}$ defined as follows.

\begin{definition}\cite{DL93} \label{def:optimalApproximationRate}
Given $d\in \N$, $\Omega \subseteq \R^d$, a function class $\cC \subseteq L^2(\Omega)$, and a dictionary $\mathcal{D} = (\varphi_i)_{i \in \N} \subseteq L^2(\Omega)$, we define, for $f \in \cC$ and $M\in \N$,
\begin{align} \label{eq:GammaMDictDef}
\Gamma_M^\mathcal{D}(f) := \inf_{\substack{I_{f,M} \, \subseteq \, \N,\\ |I_{f,M}| = M, (c_i)_{i \in I_{f,M}}}} \left\|f - \sum_{i \in I_{f,M}} c_i \varphi_i\right\|_{L^2(\Omega)}.
\end{align}
We call $\Gamma_M^\mathcal{D}(f)$ the {\em best $M$-term approximation error of $f$ in $\mathcal{D}$}.
Every $f_M = \sum_{i \in I_{f,M}} c_i \varphi_i$ attaining the infimum in \eqref{eq:GammaMDictDef} is referred to as a {\em best $M$-term approximation} of $f$ in $\mathcal{D}$.
The supremal $\gamma > 0$ such that 
\[
\sup_{f \in \cC}\Gamma_M^\mathcal{D}(f) \in \mathcal{O}(M^{-\gamma}), \,\, M \rightarrow \infty,
\]
will be denoted by $\gamma^\ast(\mathcal{C},\mathcal{D})$. We say that the  {\em best $M$-term approximation rate of $\cC$ in the dictionary $\mathcal{D}$} is $\gamma^\ast(\mathcal{C},\mathcal{D})$.
\end{definition}

Function classes $\mathcal{C}$ widely studied in the approximation theory literature include unit balls in Lebesgue, Sobolev, or Besov spaces \cite{DeVore1998nonlinear}, as well as $\alpha$-cartoon-like functions \cite{GroKKS2016alphaMolecules}. A wealth of structured dictionaries $\mathcal{D}$ is provided by the area of applied harmonic analysis, starting with wavelets \cite{Dau92}, followed by ridgelets \cite{candes:1998Ridgelets}, curvelets \cite{CD02}, shearlets \cite{GKL06}, parabolic molecules \cite{GK14},
and most generally $\alpha$-molecules \cite{GroKKS2016alphaMolecules}, which include all previously named dictionaries as special cases. Further examples are Gabor frames \cite{grochenig2013foundations}, Wilson bases \cite{grochenig:2000GaborApprox}, and wave atoms \cite{demanet2007wave}. 

The best $M$-term approximation rate $\gamma^\ast(\mathcal{C},\mathcal{D})$ according to Definition~\ref{def:optimalApproximationRate} quantifies how difficult it is to approximate a given function class $\mathcal{C}$ in a fixed dictionary $\mathcal{D}$. It is sensible to ask
whether for given $\cC$, there is a fundamental limit on $\gamma^\ast(\mathcal{C},\mathcal{D})$ when one is allowed to vary over $\mathcal{D}$.
To answer this question, we first note that for every dense (and countable) $\mathcal{D}$, for any given $f \in \mathcal{C}$, by density of $\mathcal{D}$, there exists a 
single dictionary element that approximates $f$ to within arbitrary accuracy thereby effectively realizing a $1$-term approximation for arbitrary approximation error $\eps$. Formally, this can be expressed through 
$\gamma^\ast(\mathcal{C},\mathcal{D}) = \infty$.
Identifying this single dictionary element or, more generally, the $M$ 
elements participating in the best $M$-term approximation is in general, however, practically infeasible as it entails searching through the infinite set $\mathcal{D}$ and requires an infinite number of bits to describe the indices of the participating elements.
This insight leads to the concept of ``best $M$-term approximation subject to polynomial-depth search'' as introduced by Donoho in \cite{Donoho1996}. Here, the basic idea
is to restrict the search for the elements in $\mathcal{D}$ participating in the best $M$-term approximation to the first $\pi(M)$ elements of $\mathcal{D}$, with $\pi$ a polynomial. 
We formalize this under the name of effective best $M$-term approximation as follows.

\begin{definition}\label{def:polydepth}
Let $d\in \N$, $\Omega \subseteq \R^d$, $\cC \subseteq L^2(\Omega)$ be compact, and $\mathcal{D} = (\varphi_i)_{i \in \N} \subseteq L^2(\Omega)$.
We define for $M\in\N$ and $\pi$ a polynomial
\begin{align}\label{eq:gamma-eff-def}
    \eps^{\pi}_{\cC,\mathcal{D}}(M):=\sup_{f \in \mathcal{C}} \inf_{\substack{I_{f,M}\subseteq \{ 1,2,\dots, \pi(M) \}, \\  |I_{f,M}| = M, \, |c_i|\leq\pi(M)}} \left\|f - \sum_{i \in I_{f,M}} c_i \varphi_i\right\|_{L^2(\Omega)}
\end{align}
and
\begin{align} \label{eq:GammaMDictDefeff}
\gamma^{\ast,\text{eff}}(\mathcal{C},\mathcal{D}):=\sup\{\gamma\geq 0\colon \exists\  \mathrm{polynomial}\ \pi\ \mathrm{s.t.}\ \eps^{\pi}_{\cC,\mathcal{D}}(M) \in \mathcal{O}(M^{-\gamma}), \,\, M \rightarrow \infty\}.
\end{align}
We refer to $\gamma^{\ast,\text{eff}}(\mathcal{C},\mathcal{D})$ as the
{\em effective best $M$-term approximation rate of $\cC$ in the dictionary $\mathcal{D}$}.
\end{definition}

Note that we required the coefficients $c_i$ in the approximant in Definition \ref{def:polydepth} to be polynomially 
bounded in $M$. This condition, not present in \cite{DONOHO1993100,grohs2015optimally} and easily met for generic $\mathcal{C}$ and $\mathcal{D}$, 
is imposed for technical reasons underlying the transference results in Section \ref{sec:bestapprox}.
Strictly speaking---relative to \cite{DONOHO1993100,grohs2015optimally}---we hence get a subtly different notion 
of approximation rate. Exploring the implications of this difference is certainly worthwhile, but deemed beyond the scope of this paper.

We next present a central result in best $M$-term approximation theory stating that
for compact $\cC \subseteq L^2(\Omega)$, the effective best $M$-term approximation rate in any dictionary $\mathcal{D}$ 
is upper-bounded by $\gamma^*(\cC)$ and hence limited by the ``description complexity" of $\cC$. This endows $\gamma^*(\cC)$ with operational meaning.  
\begin{theorem}\cite{DONOHO1993100,grohs2015optimally}\label{thm:optDictApproxLwrBd}
    Let $d\in \N$, $\Omega\subseteq \mathbb{R}^d$, and let $\cC\subseteq L^2(\Omega)$ be compact. The effective best $M$-term approximation rate of the function class $\cC \subseteq L^2(\Omega)$ in the dictionary $\mathcal{D} = (\varphi_i)_{i \in \N} \subseteq L^2(\Omega)$
    satisfies
    $$
	    \gamma^{\ast,\text{eff}}(\cC,\mathcal{D}) \leq {\gamma^\ast(\cC)}.
    $$
\end{theorem}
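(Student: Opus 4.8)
The plan is to show that any effective best $M$-term approximation scheme in the dictionary $\mathcal{D}$ can be converted into a binary encoder-decoder pair for $\mathcal{C}$ whose code length is polylogarithmic in $M$ per encoded parameter, so that achieving error $\mathcal{O}(M^{-\gamma})$ with such a scheme implies $L(\varepsilon,\mathcal{C}) \in \mathcal{O}(\varepsilon^{-1/\gamma'})$ for every $\gamma' < \gamma$, hence $\gamma^*(\mathcal{C}) \geq \gamma$, and then take the supremum over $\gamma < \gamma^{*,\text{eff}}(\mathcal{C},\mathcal{D})$. The only subtlety is that a best $M$-term approximant uses \emph{real} coefficients $c_i$, which cannot be encoded with finitely many bits, so one must first quantize them, and then argue the quantization error is controlled.

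First I would fix $\gamma < \gamma^{*,\text{eff}}(\mathcal{C},\mathcal{D})$ and a polynomial $\pi$ such that $\varepsilon^{\pi}_{\mathcal{C},\mathcal{D}}(M) \in \mathcal{O}(M^{-\gamma})$. For each $f \in \mathcal{C}$ and each $M$, pick an index set $I_{f,M} \subseteq \{1,\dots,\pi(M)\}$ with $|I_{f,M}| = M$ and coefficients $|c_i| \leq \pi(M)$ realizing (up to a factor $2$, say) the infimum in \eqref{eq:gamma-eff-def}. The encoder records: (i) the set $I_{f,M}$, which costs at most $M \lceil \log \pi(M) \rceil$ bits (each index lies in $\{1,\dots,\pi(M)\}$); and (ii) for each $i \in I_{f,M}$, a quantized version $\tilde c_i$ of $c_i$ on a grid of spacing $\delta_M$ within $[-\pi(M),\pi(M)]$, which costs $M \lceil \log(2\pi(M)/\delta_M + 1)\rceil$ bits. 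The decoder outputs $\sum_{i \in I_{f,M}} \tilde c_i \varphi_i$. The resulting error is at most $\varepsilon^{\pi}_{\mathcal{C},\mathcal{D}}(M) \cdot 2 + \sum_{i \in I_{f,M}} |c_i - \tilde c_i|\,\|\varphi_i\|_{L^2(\Omega)} \leq C M^{-\gamma} + M \delta_M \max_{i \leq \pi(M)} \|\varphi_i\|_{L^2(\Omega)}$.

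The main obstacle — and the reason the polynomial coefficient bound was imposed in Definition~\ref{def:polydepth} — is controlling $\max_{i \leq \pi(M)} \|\varphi_i\|_{L^2(\Omega)}$: \emph{a priori} the dictionary norms could grow arbitrarily fast. Here I would note that if $\sup_i \|\varphi_i\| = \infty$ one can either assume (as is standard, and harmless) that $\mathcal{D}$ is normalized, or, more carefully, observe that any $\varphi_i$ with norm exceeding some threshold can be discarded from consideration without affecting the approximation rate, since a coefficient $|c_i| \leq \pi(M)$ multiplying such a $\varphi_i$ would either be negligible or could be replaced; alternatively one replaces $\varphi_i$ by $\varphi_i/\|\varphi_i\|$ and absorbs the norm into $c_i$, at the cost of enlarging the polynomial bound on the coefficients, which is still admissible. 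In any case $\max_{i \leq \pi(M)}\|\varphi_i\|$ is bounded by some polynomial $q(M)$ in $M$. Choosing $\delta_M := M^{-\gamma-2}/q(M)$ then makes the quantization contribution $\mathcal{O}(M^{-\gamma-1})$, so the total error is $\mathcal{O}(M^{-\gamma})$, while the total code length is $\ell(M) = M\lceil \log\pi(M)\rceil + M\lceil \log(2\pi(M)q(M)M^{\gamma+2}+1)\rceil = \mathcal{O}(M \log M)$.

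Finally I would translate this into the language of Definition~\ref{def:optexp}. We have produced, for each $M$, an encoder-decoder pair of length $\ell(M) = \mathcal{O}(M\log M)$ achieving uniform error $\varepsilon_M := C M^{-\gamma}$ over $\mathcal{C}$. Given $\varepsilon > 0$, set $M(\varepsilon) := \lceil (C/\varepsilon)^{1/\gamma}\rceil$ so that $\varepsilon_{M(\varepsilon)} \leq \varepsilon$; then $L(\varepsilon,\mathcal{C}) \leq \ell(M(\varepsilon)) = \mathcal{O}(M(\varepsilon)\log M(\varepsilon)) = \mathcal{O}(\varepsilon^{-1/\gamma}\log(\varepsilon^{-1}))$. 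Since $\log$-factors are dominated by any power, this gives $L(\varepsilon,\mathcal{C}) \in \mathcal{O}(\varepsilon^{-1/\gamma'})$ for every $\gamma' < \gamma$, hence $\gamma^*(\mathcal{C}) \geq \gamma'$ for all such $\gamma'$, and thus $\gamma^*(\mathcal{C}) \geq \gamma$. Taking the supremum over all $\gamma < \gamma^{*,\text{eff}}(\mathcal{C},\mathcal{D})$ yields $\gamma^*(\mathcal{C}) \geq \gamma^{*,\text{eff}}(\mathcal{C},\mathcal{D})$, which is the claim.
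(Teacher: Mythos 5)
Your overall architecture---converting an effective best $M$-term approximant into a bitstring of length $\mathcal{O}(M\log M)$ and comparing against the minimax code length $L(\varepsilon,\mathcal{C})$---matches the paper's, and your concluding translation from code length to $\gamma^*(\mathcal{C})$ via $M(\varepsilon) := \lceil(C/\varepsilon)^{1/\gamma}\rceil$ is correct. The genuine gap is exactly where you flag it: controlling $\sum_{i \in I_{f,M}} |c_i - \tilde{c}_i|\,\|\varphi_i\|_{L^2(\Omega)}$. Your claim that $\max_{i \leq \pi(M)} \|\varphi_i\|_{L^2(\Omega)}$ is polynomially bounded in $M$ does not follow from the hypotheses, and none of your three workarounds actually establishes it. Definition~\ref{def:polydepth} places no restriction whatsoever on the dictionary norms. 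Normalizing $\mathcal{D}$ is not ``harmless'' here: the new coefficients would be $c_i\|\varphi_i\|$, which are then no longer guaranteed to satisfy the polynomial bound $|c_i|\leq\pi(M)$, so you would change the quantity $\gamma^{\ast,\text{eff}}(\mathcal{C},\mathcal{D})$ being bounded. Discarding large-norm elements is unjustified, since an $M$-term approximant can rely on cancellation between terms $c_i\varphi_i$ that are individually enormous. And absorbing $\|\varphi_i\|$ into $c_i$ is circular---it presupposes the polynomial bound on $\|\varphi_i\|$ that you are trying to obtain.

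The paper resolves this with a device your proposal is missing: Gram--Schmidt orthogonalization. After choosing $I_{f,M}$ and $(c_i)$, it orthonormalizes $\{\varphi_i\}_{i\in I_{f,M}}$ to get $\{\tilde\varphi_i\}$ and defines $\tilde{c}_i$ by $\sum_i \tilde c_i\tilde\varphi_i = \sum_i c_i\varphi_i$. Orthonormality combined with compactness of $\mathcal{C}$ then gives the \emph{uniform} bound $\sum_i|\tilde c_i|^2 \leq \sup_{f\in\mathcal{C}}\|f\|^2_{L^2(\Omega)} + C^2 M^{-2\gamma}$, which is entirely independent of the dictionary norms $\|\varphi_i\|$. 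Rounding the $\tilde c_i$ to integer multiples of $M^{-(\gamma+1/2)}$ then costs $\mathcal{O}(\log M)$ bits per coefficient and, again by orthonormality, contributes an error of at most $(\sum_i|\tilde c_i-\hat c_i|^2)^{1/2} = \mathcal{O}(M^{-\gamma})$; the decoder reconstructs $\{\tilde\varphi_i\}$ by re-running Gram--Schmidt from the encoded index set. Without this orthogonalization step---or without adding an extra hypothesis such as $\|\varphi_i\|_{L^2(\Omega)}\leq\mathrm{poly}(i)$---the quantization error in your direct encoding scheme cannot be controlled.
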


In light of this result the following definition is natural (see also \cite{grohs2015optimally}).

\begin{definition}(Kolmogorov-Donoho optimality)\label{def:repopti}
Let $d\in \N$, $\Omega\subseteq \mathbb{R}^d$, and let $\cC\subseteq L^2(\Omega)$ be compact. If the effective best $M$-term approximation rate of the function class $\cC \subseteq L^2(\Omega)$ in the dictionary $\mathcal{D} = (\varphi_i)_{i \in \N} \subseteq L^2(\Omega)$ 
satisfies
$$
\gamma^{\ast,\text{eff}}(\cC,\mathcal{D}) = {\gamma^\ast(\cC)},
$$
we say that the function class $\cC$ is \emph{optimally representable}\/ by $\mathcal{D}$.
\end{definition}

As the ideas underlying the proof of Theorem~\ref{thm:optDictApproxLwrBd} are essential ingredients in the development of a kindred theory of
best $M$-weight approximation rates for neural networks, we present a detailed proof, which is similar to that in
\cite{grohs2015optimally}. We perform, however, some minor technical modifications with an eye towards rendering the proof a suitable genesis
for the new
theory of best $M$-weight approximation with neural networks, developed in the next section. The spirit of the proof is to construct, for every given $M\in\N$ an encoder
that, for each $f\in\cC$, maps the indices of the dictionary elements participating in the effective best $M$-term approximation\footnote{Note that as we have an infimum in \eqref{eq:gamma-eff-def} an effective best $M$-term approximation need not exist, but we can pick an $M$-term approximation that yields an error arbitrarily close to the infimum.} of $f$, along with the corresponding coefficients $c_i$, to a bitstring. This bitstring needs to be of sufficient length for the decoder to be able to reconstruct an approximation to $f$ with an error which is of the same order as that of the best $M$-term approximation we started from. As elucidated in the proof, this can be accomplished while ensuring that the length of the bitstring is proportional to $M\log(M)$, which upon noting that $\eps=M^{-\gamma}$ implies $M=\eps^{-1/\gamma}$, establishes optimality.

\begin{proof}[Proof of \Cref{thm:optDictApproxLwrBd}]
The proof will be based on showing that for every $\gamma\in\R_+$ the following Implication (I) holds: Assume that there exist a constant $C>0$ and a polynomial $\pi$ such that for every $M\in\N$, the following holds:
For every $f\in\cC$, there are an index set $I_{f,M}\subseteq\{1,2,\dots,\pi(M)\}$ and coefficients $(c_i)_{i \in I_{f,M}}\subseteq\R$ with $|c_i|\le \pi(M)$ so that
\begin{align}\label{eq:good_Mterm_approx}
    \big\| f - \sum_{i \in I_{f,M}} c_i \varphi_i \big \|_{L^2(\Omega)} \le CM^{-\gamma}.
\end{align}
This implies the existence of a constant $C'>0$ such that for every $M\in\N$, there is an encoder-decoder pair $(E_M,D_M)\in\mathfrak{E}^{\ell(M)} \times \mathfrak{D}^{\ell(M)}$ with $\ell(M)\leq C' M\log(M)$ and 
\begin{align}\label{eq:ED_claim}
    \| f - D_M(E_M(f))\|_{L^2(\Omega)} \le C'M^{-\gamma}.
\end{align}
The implication will be proven by explicit construction. For a given $f\in\cC$, we pick an $M$-term approximation according to \eqref{eq:good_Mterm_approx} and 
encode the associated index set $I_{f,M}$ and weights $c_i$ as follows. First, note that owing to $|I_{f,M}| \le \pi(M)$, 
each index in $I_{f,M}$ can be represented by at most $C_{\pi}\log(M)$ bits; this results in a total of $C_{\pi}M\log(M)$ bits needed to encode the indices of all dictionary elements participating in the $M$-term approximation. The encoder and the decoder are assumed to know $C_{\pi}$, which allows stacking of  
the binary representations of the indices such that the decoder can read them off uniquely from the sequence of their binary representations.

We proceed to the encoding of the coefficients $c_i$. First, note that even though the $c_i$ are bounded (namely, polynomially in $M$) by assumption, we did not impose bounds on the norms of the dictionary elements $\{\varphi_{i}\}_{i \in I_{f,M}}$ participating in the $M$-term approximation under consideration. Hence, we can not, in general, expect
to be able to control the approximation error incurred by reconstructing $f$ from quantized $c_i$. We can get around this by performing a Gram-Schmidt orthogonalization on the dictionary elements $\{\varphi_{i}\}_{i \in I_{f,M}}$ and, as will be seen later, using the fact that the function class $\cC$ was assumed to be compact. Specifically, this Gram-Schmidt orthogonalization yields a set of functions $\{\tilde{\varphi}_{i}\}_{i \in \tilde{I}_{f,\widetilde{M}}}$, with $\widetilde{M} \le M$, that has
the same span as $\{\varphi_{i}\}_{i \in I_{f,M}}$. 
Next, we define (implicitly) the coefficients $\tilde{c}_{i}$ according to
\begin{equation}
\sum_{i \in \tilde{I}_{f,\widetilde{M}}}\tilde{c}_{i} \tilde{\varphi}_{i} = \sum_{i \in I_{f,M}} c_{i}\varphi_{i}. \label{gram-schmidt}
\end{equation}
Now, note that
\begin{equation*}
\left\|\sum_{i \in \tilde{I}_{f,\widetilde{M}}}\tilde{c}_{i} \tilde{\varphi}_{i}\right\|_{L^2(\Omega)}^{2}=\left\|f-(f-\sum_{i \in \tilde{I}_{f,\widetilde{M}}}\tilde{c}_{i} \tilde{\varphi}_{i})\right\|_{L^2(\Omega)}^{2} \le \|f\|_{L^2(\Omega)}^{2}+\left\|f-\sum_{i \in I_{f,M}}c_{i} \varphi_{i}\right\|_{L^2(\Omega)}^{2}.
\end{equation*}
Making use of the orthonormality of the $\tilde{\varphi}_{i}$, we can conclude that
\begin{align*}
\sum_{i \in \tilde{I}_{f,\widetilde{M}}} |\tilde{c}_{i}|^{2} \le \sup_{f \in {\cal C}}\|f\|_{L^2(\Omega)}^{2}+C^{2}M^{-2\gamma}.
\end{align*}
As ${\cal C}$ is compact by assumption, we have $\sup_{f \in {\cal C}}\|f\|_{L^2(\Omega)}^{2}\, < \, \infty$, which establishes that
the coefficients $\tilde{c}_{i}$ are uniformly bounded. This, in turn, allows us to quantize them, specifically, we shall
round the $\tilde{c}_{i}$ to integer multiples of $M^{-(\gamma+1/2)}$, and denote the resulting rounded coefficients by $\hat{c}_{i}$. As the $\tilde{c}_{i}$ are uniformly bounded, this results in a number of quantization levels that is proportional to $M^{(\gamma+1/2)}$.
The number of bits needed to store the binary representations of the quantized coefficients is therefore proportional to $M\log(M)$. Again, the proportionality constant is assumed known to encoder and decoder, which allows us to stack the binary representations of the quantized coefficients
in a uniquely
decodable manner. The resulting bitstring is then appended to the bitstring encoding the indices of the participating dictionary elements. We finally note that the specific choice of the exponent $\gamma+1/2$ is informed by the upper bound on the reconstruction error
we are allowed, this will be made explicit below in the description of the decoder.

In summary, we have mapped the function $f$ to a bitstring of length $\mathcal{O}(M\log(M))$. The decoder is presented with this bitstring and reconstructs an
approximation to $f$ as follows. It first reads out the indices of the set $I_{f,M}$ and the quantized coefficients $\hat{c}_{i}$. Recall that this is uniquely possible. Next, the decoder performs
a Gram-Schmidt orthonormalization on the set of dictionary elements indexed by $I_{f,M}$. The error resulting from reconstructing the function $f$ from the
quantized coefficients $\hat{c}_{i}$ rather than the exact coefficients $\tilde{c}_{i}$ can be bounded according to
\begin{align}\begin{split}\label{eq:DE_rec_error}
\left\| f - \sum_{i \in \tilde{I}_{f,\widetilde{M}}} \hat{c}_i \tilde{\varphi}_i \right \|_{L^2(\Omega)} & = \left\| f - \sum_{i \in \tilde{I}_{f,\widetilde{M}}} \tilde{c}_i \tilde{\varphi}_i + \sum_{i \in \tilde{I}_{f,\widetilde{M}}} \tilde{c}_i \tilde{\varphi}_i - \sum_{i \in \tilde{I}_{f,\widetilde{M}}} \hat{c}_i \tilde{\varphi}_i \right \|_{L^2(\Omega)}\\
& \le \left\| f - \sum_{i \in \tilde{I}_{f,\widetilde{M}}} \tilde{c}_i \tilde{\varphi}_i \right \|_{L^2(\Omega)} + \left\| \sum_{i \in \tilde{I}_{f,\widetilde{M}}} (\tilde{c}_i-\hat{c}_{i}) \tilde{\varphi}_i \right \|_{L^2(\Omega)}\\
& = \left\| f - \sum_{i \in \tilde{I}_{f,\widetilde{M}}} \tilde{c}_i \tilde{\varphi}_i \right \|_{L^2(\Omega)} + \left(\sum_{i \in \tilde{I}_{f,\widetilde{M}}} |\tilde{c}_i-\hat{c}_{i}|^{2}\right)^{1/2},
\end{split}\end{align}
where in the last step we again exploited the orthonormality of the $\tilde{\varphi}_i$. Next, note that due to the choice of the quantizer resolution, we have
$|\tilde{c}_i-\hat{c}_{i}|^{2} \le C'' M^{-2\gamma-1}$ for some constant $C''$. With $\widetilde{M} \le M$ this yields 
$$
\sum_{i \in \tilde{I}_{f,\widetilde{M}}} |\tilde{c}_i-\hat{c}_{i}|^{2} \le C'' M^{-2\gamma}.
$$
Combining \eqref{eq:good_Mterm_approx}, \eqref{gram-schmidt}, and \eqref{eq:DE_rec_error}, we obtain
\begin{align*}
    \left\| f - \sum_{i \in \tilde{I}_{f,\widetilde{M}}} \hat{c}_i \tilde{\varphi}_i \right \|_{L^2(\Omega)} \le C'M^{-\gamma},
\end{align*}
for some constant $C'$. As the length of the bitstring used in this construction is proportional to $M\log(M)$, the claim \eqref{eq:ED_claim} is established.

Now, we note that the antecedent of Implication (I) holds for all $\gamma < \gamma^{\ast,\text{eff}}(\cC,\mathcal{D})$. 
Assume next, towards a contradiction, that the antecedent holds for a $\gamma>\gamma^*(\cC)$.
This would imply that for any $\gamma'<\gamma$,
\begin{align}\label{eq:encoder-decoder-contradiction}
 \inf_{(E,D)\in\mathfrak{E}^L \times \mathfrak{D}^L}\sup_{f\in\cC}\|D(E(f)) - f \|_{L^2(\Omega)} \in \mathcal{O} \big(L^{-\gamma'}\big), \, L \rightarrow \infty.
\end{align}
In particular, (\ref{eq:encoder-decoder-contradiction}) would hold for some $\gamma'>\gamma^*(\cC)$ which, owing to \eqref{eq:gamma_star_flipped} stands in contradiction to the definition of $\gamma^*(\cC)$. This completes the proof.
\end{proof}

\begin{center}
\begin{tabular}{ |ll|l|l|ll| }
\hline
Space & & $\mathcal{C}$ & Optimal dictionary & $\gamma^*(\mathcal{C})$ &  \\
\hline
$L^2$-Sobolev & $W_2^m([0,1])$ & $\mathcal{U}(W_2^m([0,1]))$ & Fourier/Wavelet basis &  $m$ & \cite[Sec.~14.2]{Donoho1998}\\
H\"older & $C^\alpha([0,1])$ & $\mathcal{U}(C^\alpha([0,1]))$ & Wavelet basis & $\alpha$ & \cite[Sec.~14.2]{Donoho1998}\\
Bump Algebra & $B_{1,1}^1([0,1])$ & $\mathcal{U}(B_{1,1}^1([0,1]))$ & Wavelet basis & $1$ & \cite[Sec.~14.2]{Donoho1998}\\
Bounded Variation & $BV([0,1])$ & $\mathcal{U}(BV([0,1]))$ & Haar basis & $1$ & \cite[Sec.~14.2]{Donoho1998}\\
$L^p$-Sobolev\footnote{$p\in[1,\infty]$, $m > d (1/p - 1/2)_+$} & $W_p^m(\Omega)$ & $\mathcal{U}(W_p^m(\Omega))$ & Wavelet frame & $\tfrac{m}{d}$ & \cite[Thm.~1.3]{grohs2020phase} \\
Besov\footnote{$p,q\in(0,\infty]$, $m > d (1/p - 1/2)_+$} & $B_{p,q}^m(\Omega)$ & $\mathcal{U}(B_{p,q}^m(\Omega))$ & Wavelet frame & $\tfrac{m}{d}$ &\cite[Thm.~1.3]{grohs2020phase} \\
Modulation\footnote{$1 < p <2$, $s \in \R_+$} & $M^s_{p,p}(\R^d)$ & 
$\mathcal{U}(M^s_{p,p}(\R^d))$ & Wilson basis & $\tiny{(\frac{1}{p}\!-\!\frac{1}{2}\!+\!\frac{2s}{d})^{-1}}$ & \cite[Thm.~4.4]{Compactness_of_Mod_Spaces}\\
Cartoon functions\footnote{This is actually a set of functions and not a (unit) ball in a Banach space.}
& & $\mathcal{E}^{\beta}([-\tfrac{1}{2},\tfrac{1}{2}]^d)$ &$\alpha$-Curvelet frame\footnote{For $d=2$, see \cite{Grohs2016}.} & $\frac{\beta(d-1)}{2}$ & \cite{PetersenVoigtlaender}\\

\hline
\end{tabular}
\captionof{table}{Optimal exponents and corresponding optimal dictionaries. $\mathcal{U}(X)=\{f \in X:\|f\|_{X} \le 1\}$ denotes the unit ball in the space $X$ and
$\Omega\subseteq\R^d$ is a Lipschitz domain. Recall that compactness of these unit balls is w.r.t.\@ $L^2$-norm.}
\label{table_opt_exp}
\end{center}
\pagebreak

The optimal exponent $\gamma^*(\cC)$ is known for various function classes such as unit balls in Besov spaces $B_{p,q}^m(\R^d)$ with $p,q\in(0,\infty]$ and $m > d (1/p - 1/2)_+$, where $\gamma^*(\cC) ={m}/{d}$ (see \cite{grohs2020phase}), and unit balls in (polynomially) weighted modulation spaces $M^s_{p,p}(\R^d)$ with $p\in(1,2)$ and $s\in\R_+$, where $\gamma^*(\cC)=(\frac{1}{p}-\frac{1}{2}+\frac{2s}{d})^{-1}$ (see \cite{Compactness_of_Mod_Spaces}). A further example is the set of $\beta$-cartoon-like functions, which are $\beta$-smooth on some bounded $d$-dimensional domain with sufficiently smooth boundary and zero otherwise. Here, we have $\gamma^*(\cC)={\beta(d-1)}/{2}$ (see \cite{Don2001Sparse,Grohs2016,PetersenVoigtlaender}). These examples along with additional ones are summarized in Table~\ref{table_opt_exp}. For an extensive summary of metric entropy results and techniques for their derivation, we also refer to \cite{Kolmogorov1959}.

We conclude this section with general remarks on certain formal aspects of the Kolmogorov-Donoho rate-distortion framework. First, we note that for the set $\mathcal{C}\subseteq L^2(\Omega)$ to have a well-defined optimal exponent it must be relatively compact\footnote{For the sake of simplicity, we assume, however, compactness throughout even though relative compactness (i.e. having a compact closure) would be sufficient.}. This follows from the fact that the set over which the minimum in the definition \eqref{minimaxDef} of $L(\eps,\mathcal{C})$ is taken must be nonempty for every $\eps\in(0,\infty)$.  
To see this, note that every length-$L(\eps,\cal{C})$ encoder-decoder pair induces an $\eps$-covering of $\mathcal{C}$ with at most $2^{L(\eps,\cal{C})}$ balls (and ball centers $\{D(E(f))\}_{f\in\mathcal{C}})$. 
It hence follows that $\mathcal{C}$ must be totally bounded and thus relatively compact as a consequence of $L^2(\Omega)$ being a complete metric space~\cite[Thm.~45.1]{munkres2000topology}. 

As shown in the proof of Theorem~\ref{thm:optDictApproxLwrBd}, effective best $M$-term approximations construct encoder-decoder pairs and thereby induce $\epsilon$-coverings. By the arguments just made, this implies that also $\gamma^{*,\text{eff}}(\mathcal{C},\mathcal{D})$ is well-defined only for compact function classes $\mathcal{C}$. 

A consequence of the compactness requirement on $\mathcal{C}$ is that the spaces in Table~\ref{table_opt_exp} either consist of functions on bounded domains or, in the case of modulation spaces, are equipped with a weighted norm. In order to provide intuition on why this must be so, let us consider a function space $(X,\|\cdot\|_X)$ with $X\subseteq L^2(\R^d)$ and $\|\cdot\|_X$ translation invariant. Take $\eps>0$ and $f\in X$ with $\|f\|_X=1$ and choose $C>0$ such that $\|f\|_{L^2([-C,C]^d)} > \tfrac{4}{5}\|f\|_{L^2(\R^d)}$. Now, consider the family of translates of $f$ given by $f_i(x):=f(x-2Ci)$, $i\in\Z^d$, and note that $\|f_i\|_X=1$ for all $i\in\Z^d$ by translation invariance of $\|\cdot\|_X$. Furthermore, we have 
\begin{align*}
    \|f_i\|_{L^2([-C,C]^d)} = \left(\|f_i\|^2_{L^2(\R^d)}-\|f_i\|^2_{L^2(\R^{d}\setminus[-C,C]^d)}\right)^{\frac{1}{2}}\leq \left(\|f\|^2_{L^2(\R^d)}-\|f\|^2_{L^2([-C,C]^d)}\right)^{\frac{1}{2}}
    < \tfrac{3}{5}\|f\|_{L^2(\R^d)}
\end{align*}
for all $i\in\Z^d\backslash\{0\}$ by construction. This, in turn, implies 
\begin{align}
    \|f_i-f_j\|_{L^2(\R^d)} = \|f_{i-j}-f\|_{L^2(\R^d)}\geq \|f_{i-j}-f\|_{L^2([-C,C]^d)} >  \tfrac{1}{5}\|f\|_{L^2(\R^d)} \label{fi-spacing}
\end{align}
for all $i,j\in\Z^d$, with $i\neq j$, by the reverse triangle inequality. As such no $\eps$-ball (w.r.t.\@ $L^2(\R^d)$-norm) with $\eps\leq \tfrac{1}{10}\|f\|_{L^2(\R^d)}$ can contain more than one of the infinitely many $(f_i)_{i\in\Z^d}$ which are, however, all contained in the unit ball $\mathcal{U}(X)$ of the space $(X,\|\cdot\|_X)$. This implies that $\mathcal{U}(X)$ cannot be totally bounded and thereby not relatively compact (w.r.t.\@  $L^2(\R^d)$-norm). Somewhat nonchalantly speaking, for spaces equipped with translation-invariant norms this issue can be avoided by considering functions that live on a bounded domain, which ensures that (\ref{fi-spacing}) pertains only to a finite number of translates. Alternatively, for spaces of functions living on unbounded domains once can consider weighted norms that are not translation invariant. Here, the weighting effectively constrains the functions to a bounded domain.

The less restrictive concept of best $M$-term approximation rate $\gamma^{*}(\mathcal{C},\mathcal{D})$ (see Definition~\ref{def:optimalApproximationRate}) is, in apparent contrast, often studied for noncompact function classes $\mathcal{C}$. 

In \cite[Sec.~15.2]{Donoho1998} a condition for $\gamma^{*,\text{eff}}(\mathcal{C},\mathcal{D})$ and $\gamma^{*}(\mathcal{C},\mathcal{D})$ to coincide is presented. Specifically, this condition, referred to as tail compactness, is expressed as follows. Let $\mathcal{C}\subseteq L^2(\Omega)$ be bounded and let
$\mathcal{D}=\{\varphi_i\}_{i\in\N}$ be an ordered orthonormal basis for $\mathcal{C}$. We say that tail compactness holds if there exist
$C,\beta > 0$ such that for all $N\in\N$, 
\begin{align}\label{eq:tailcompactness}
\sup_{f\in\mathcal{C}}\left\|f-\sum_{i=1}^N\langle f,\varphi_i \rangle \varphi_i \right\|_{L^2(\Omega)}\leq CN^{-\beta}.  
\end{align}
In order to see that \eqref{eq:tailcompactness} implies $\gamma^{*,\text{eff}}(\mathcal{C},\mathcal{D})=\gamma^{*}(\mathcal{C},\mathcal{D})$, we consider, for fixed $f\in\mathcal{C}$, the (unconstrained) best $M$-term approximation $f_M=\sum_{i\in I} \langle f,\varphi_i \rangle \varphi_i$ with $I\subseteq\N$, $|I|=M$. 
We now modify this $M$-term approximation by letting $\alpha:=\lceil \gamma^*(\mathcal{C},\mathcal{D})/\beta \rceil \in\N$ and 
removing, in the expansion $f_M=\sum_{i\in I} \langle f,\varphi_i \rangle \varphi_i$, all terms corresponding to indices that are larger than $M^\alpha$. Recalling that in Definition \ref{def:polydepth} the same polynomial $\pi$ bounds the search depth and the size of the coefficients, it follows that the modified approximation we just constructed obeys a polynomial depth search constraint with constraining polynomial $\pi_\alpha(x)=x^\alpha+S$, where $S:=\sup_{f\in\mathcal{C}}\|f\|_{L^2(\Omega)}$. Here, owing to orthonormality of $\mathcal{D}$, $S$ accounts for the size of the expansion coefficients $\langle f,\varphi_i \rangle$.
In order to complete the argument, we need to show that the additional approximation error incurred by removing terms in $f_M=\sum_{i\in I} \langle f,\varphi_i \rangle \varphi_i$ is in $\mathcal{O}(M^{-\gamma^*(\mathcal{C},\mathcal{D})})$, i.e., it is
of the same order as the error corresponding to the original (unconstrained) best $M$-term approximation.
Due to orthonormality of $\mathcal{D}$ this additional error is given by the norm of $\sum_{i\in I, i > \pi_\alpha(M)} \langle f,\varphi_i \rangle\varphi_i$ and can, by virtue of \eqref{eq:tailcompactness}, be bounded as
\begin{align*}
    \left\|\sum_{\substack{i\in I, i > \pi_\alpha(M)}}  \langle f,\varphi_i \rangle\varphi_i \right\|_{L^2(\Omega)} &\leq
        \left\|\sum_{i = \pi_\alpha(M) +1}^\infty  \langle f,\varphi_i \rangle\varphi_i \right\|_{L^2(\Omega)} = 
        \left\|f-\sum_{i=1}^{\pi_{\alpha}(M)} \langle f,\varphi_i \rangle \varphi_i \right\|_{L^2(\Omega)}\\
    &\leq C(\pi_\alpha(M))^{-\beta} \in \mathcal{O}(M^{-\gamma^*(\mathcal{C},\mathcal{D})}),
\end{align*}
which establishes the claim. We have hence shown that under tail compactness of arbitrary rate $\beta > 0$, 
$\gamma^*(\mathcal{C},\mathcal{D})=\gamma^{*,\text{eff}}(\mathcal{C},\mathcal{D})$, and hence there is no cost incurred
by imposing a polynomial depth search constraint combined with a polynomial bound on the size of the expansion coefficients. We hasten to add that the assumptions stated at the beginning of this paragraph together with what was just established imply that $\gamma^{*,\text{eff}}(\mathcal{C},\mathcal{D})$ is, indeed, well-defined.
For the more general case of $\mathcal{D}$ a frame, we refer to \cite[Sec.~5.4.3]{grohs2015optimally} for analogous arguments.
Finally, we remark that the tail compactness inequality \eqref{eq:tailcompactness} can be interpreted as quantifying the rate of linear approximation for $\mathcal{C}$ in $\mathcal{D}$. Two examples of pairs $(\mathcal{C},\mathcal{D})$ satisfying tail compactness, namely Besov spaces with wavelet bases and modulation spaces with Wilson bases, are provided in Appendices~\ref{Besov_tail} and \ref{Modulation_tail}, respectively.

As already mentioned, a larger optimal exponent $\gamma^{\ast}(\mathcal{C})$ leads to faster error decay (specifically according to $L^{-\gamma^{\ast}(\mathcal{C})}$) and hence corresponds to a function class of smaller complexity.
As such, techniques for deriving lower bounds on the optimal exponent are often based on variations of the approach employed in the proof of Theorem~\ref{thm:optDictApproxLwrBd}, namely on the explicit construction of encoder-decoder pairs (in the case of the proof of Theorem~\ref{thm:optDictApproxLwrBd} by encoding the dictionary elements participating in the $M$-term approximation).
A powerful method for deriving upper bounds on the optimal exponent is the hypercube embedding approach proposed by Donoho in \cite{Don2001Sparse}; the basic idea here is to show that the function class $\mathcal{C}$ under consideration contains a sufficiently complex embedded set of orthogonal hypercubes and to then find the exponent corresponding to this set. 
An interesting alternative technique for deriving optimal exponents was proposed in the context of modulation spaces in \cite{Compactness_of_Mod_Spaces}. The essence of this approach is to exploit the isomorphism between weighted modulation spaces and weighted mixed-norm sequence spaces~\cite{grochenig2013foundations} and to then utilize results about entropy numbers of operators between sequence spaces.

\section{Approximation with Deep Neural Networks}\label{subsec:NNapproxintro}

Inspired by the theory of best $M$-term approximation with dictionaries, we now develop the new concept of best $M$-weight approximation through neural networks. At the heart of this theory lies the interpretation of the network weights as the counterpart of the coefficients $c_i$ in best $M$-term approximation.
In other words, parsimony in terms of the number of participating elements in a dictionary is replaced by parsimony in terms of network connectivity. 
Our development will parallel that for best $M$-term approximation in the previous section.

Before proceeding to the specifics, we would like to issue a general remark. While the neural network approximation results in Section~\ref{func-mult} were
formulated in terms of $L^\infty$-norm, we shall be concerned with $L^2$-norm approximation here, on the one hand paralleling the use of $L^2$-norm in the context of best $M$-term approximation, and on the other hand allowing for the approximation of discontinuous functions by ReLU neural networks, which, owing to the continuity of the ReLU nonlinearity, necessarily realize continuous functions.

We start by introducing the concept of best $M$-weight approximation rate.

\begin{definition}\label{def:optimalApproximationRateNN}
Given $d\in \N$, $\Omega \subseteq \R^d$, and a function class $\cC \subseteq L^2(\Omega)$, we define, for $f \in \cC$ and $M\in \N$,
\begin{align} \label{eq:GammaMDef}
\Gamma_M^{\cN}(f) := \inf_{\substack{\Phi \in \cN_{d,1}\\\cM(\Phi)\leq M}} \|f - \Phi \|_{L^2(\Omega)}.
\end{align}
We call $\Gamma_{M}^{\cN}(f)$ the \emph{best $M$-weight approximation error of $f$}.
The supremal $\gamma > 0$ such that 
\[
\sup_{f \in \cC} \Gamma_{M}^{\cN}(f) \in \mathcal{O}(M^{-\gamma}), \,\, M \rightarrow \infty,
\]
 will be denoted by $\gamma_{\cN}^\ast(\cC)$. We say that the {\em best $M$-weight approximation rate of $\cC$ by neural networks} is 
 $\gamma_{\cN}^\ast(\cC)$.
\end{definition}

We emphasize that the infimum in \eqref{eq:GammaMDef} is taken over all networks with fixed input dimension $d$, no more than $M$ nonzero (edge and node) weights, and arbitrary depth $L$. 
In particular, this means that the infimum is with respect to all possible network topologies and weight choices.
The best $M$-weight approximation rate is fundamental as it benchmarks all 
algorithms that map a function $f$ and an $\varepsilon>0$ to a neural network approximating $f$ with error no more than $\varepsilon$.

The two restrictions underlying the concept of effective best $M$-term approximation through dictionaries, namely polynomial depth search and polynomially bounded coefficients, are next addressed in the context of approximation through deep neural networks. We start by noting that the need for the former is obviated by the tree-like-structure of neural networks. To see this, first note that $\mathcal{W}(\Phi) \le \mathcal{M}(\Phi)$ and $\mathcal{L}(\Phi) \le \mathcal{M}(\Phi)$.
As the total number of nonzero weights in the network can not exceed $\mathcal{L}(\Phi)\mathcal{W}(\Phi)(\mathcal{W}(\Phi)+1)$, this yields at most $\mathcal{O}(\mathcal{M}(\Phi)^3)$ possibilities for the ``locations'' (in terms of entries in the $A_\ell$ and the $b_\ell$) of the $\mathcal{M}(\Phi)$ nonzero weights. 
Encoding the locations of the $\mathcal{M}(\Phi)$ nonzero weights hence requires $\log({C\mathcal{M}(\Phi)^3 \choose \mathcal{M}(\Phi)})=\mathcal{O}(\mathcal{M}(\Phi)\log(\mathcal{M}(\Phi)))$ bits. This assumes, however, that the architecture of the network, i.e., the number of layers $\mathcal{L}(\Phi)$ and the $N_{k}$ are known. Proposition \ref{prop:optimalitynoquant} below shows that the 
architecture can, indeed, also be encoded with $\mathcal{O}(\mathcal{M}(\Phi)\log(\mathcal{M}(\Phi)))$ bits.
In summary, we can therefore conclude that the tree-like-structure of neural networks automatically guarantees what we had to enforce through the polynomial depth search constraint in the case of best $M$-term approximation. 

Inspection of the approximation results in Section \ref{func-mult} reveals that a sublinear growth restriction on $\mathcal{L}(\Phi)$ as a function of $\mathcal{M}(\Phi)$ is natural.
Specifically, the approximation results in Section \ref{func-mult}
all have $\mathcal{L}(\Phi)$ proportional to a polynomial in $\log(\varepsilon^{-1})$.  
As we are interested in approximation error decay according to $\mathcal{M}(\Phi)^{-\gamma}$, 
see Definition \ref{def:optimalApproximationRateNN}, this suggests to restrict $\mathcal{L}(\Phi)$ to growth that is polynomial in $\log(\mathcal{M}(\Phi))$.

The second restriction imposed in the definition of effective best $M$-term approximation, namely polynomially bounded coefficients, will be imposed in monomorphic manner on the magnitude of the weights.
This growth condition will turn out natural in the context of the approximation results we are interested in and will, together with polylogarithmic depth growth, be seen below to allow rate-distortion-optimal quantization of the network weights. We remark, however, that networks with weights growing polynomially in $\mathcal{M}(\Phi)$ can be converted into networks with uniformly bounded weights at the expense of increased---albeit still of
polylogarithmic scaling in $\mathcal{M}(\Phi)$---depth (see Proposition \ref{WDtradeoff}).
In summary, we will develop the concept of ``best $M$-weight approximation subject to polylogarithmic depth and polynomial weight growth''. 

We start by introducing the following notation for neural networks with depth and weight magnitude bounded polylogarithmically respectively polynomially w.r.t.\@ their connectivity.

\begin{definition}\label{def:NNpisets}
For $M,d,d'\in\N$, and $\pi$ a polynomial, we define
\begin{align*}
    \cN^\pi_{M,d,d'}:=\left\{\Phi\in\cN_{d,d'} \colon \cM(\Phi)\leq M, \cL(\Phi)\leq\pi(\log(M)), \mathcal{B}(\Phi)\leq \pi(M)\right\}.
\end{align*}
\end{definition}

Next, we formalize the notion of effective best $M$-weight approximation rate subject to polylogarithmic depth and polynomial weight growth.

\pagebreak

\begin{definition}\label{def:repoptiNN}
Let $d\in \N$, $\Omega\subseteq \mathbb{R}^d$, and let $\cC\subseteq L^2(\Omega)$ be compact.
We define for $M\in\N$ and $\pi$ a polynomial
\begin{align*}
    \eps^{\pi}_{\cN}(M):= \sup_{f \in \mathcal{C}}\, \inf_{\Phi \in \cN^\pi_{M,d,1}}  \|f    - \Phi\|_{L^2(\Omega)}
\end{align*}
and 
\begin{align*}
\gamma^{\ast,\text{eff}}_{\cN}(\cC):=\sup\{\gamma\geq 0\colon \exists\ \mathrm{polynomial}\ \pi\ \mathrm{s.t.}\  \eps^{\pi}_{\cN}(M) \in  
\mathcal{O}(M^{-\gamma}), \, M \rightarrow \infty\}.
\end{align*}
We refer to $\gamma^{\ast,\text{eff}}_{\cN}(\cC)$ as the {\em effective best $M$-weight approximation rate of $\cC$}.
\end{definition}

We now state the equivalent of Theorem~\ref{thm:optDictApproxLwrBd} for approximation by deep neural networks.
Specifically, we establish that the optimal exponent $\gamma^*(\cC)$ constitutes a fundamental bound on the effective best $M$-weight
approximation rate of $\cC$ as well.

\begin{theorem}\label{thm:EffRepNN}
Let $d\in \N$, $\Omega\subseteq \mathbb{R}^d$, and let $\cC\subseteq L^2(\Omega)$ be compact. Then, we have 
	$$
	\gamma_{\cN}^{\ast, \text{eff}}(\cC) \leq {\gamma^\ast(\cC)}.
	$$
\end{theorem}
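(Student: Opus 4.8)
The plan is to mirror the proof of Theorem~\ref{thm:optDictApproxLwrBd}: from an effective best $M$-weight approximation scheme we construct a binary encoder--decoder pair whose length is nearly linear in $M$ and whose reconstruction error is of the same order as the approximation error, which then rules out any exponent exceeding $\gamma^\ast(\cC)$. Concretely, I would prove, for every $\gamma\in\R_+$, the following Implication (I): if there exist a constant $C>0$ and a polynomial $\pi$ such that for every $M\in\N$ and every $f\in\cC$ there is a network $\Phi_{f,M}\in\cN^{\pi}_{M,d,1}$ with $\|f-\Phi_{f,M}\|_{L^2(\Omega)}\le CM^{-\gamma}$, then there exist a constant $C'>0$ and, for every $M\in\N$, an encoder--decoder pair $(E_M,D_M)\in\mathfrak{E}^{\ell(M)}\times\mathfrak{D}^{\ell(M)}$ with $\ell(M)\le C'M(\log M)^{k}$ for some $k\in\N$ and $\|f-D_M(E_M(f))\|_{L^2(\Omega)}\le C'M^{-\gamma}$ for all $f\in\cC$. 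The antecedent of Implication (I) holds for every $\gamma<\gamma^{\ast,\text{eff}}_{\cN}(\cC)$ once the infimum in the definition of $\eps^{\pi}_{\cN}(M)$ is replaced by a near-minimizer (exactly as in the footnote of the proof of Theorem~\ref{thm:optDictApproxLwrBd}), and since $\ell(M)$ exceeds $M$ only by polylogarithmic factors, the consequent yields $\eps(L)\in\mathcal{O}(L^{-\gamma'})$ for every $\gamma'<\gamma$; by \eqref{eq:gamma_star_flipped} this gives $\gamma'\le\gamma^\ast(\cC)$, and letting $\gamma'\uparrow\gamma$ and then $\gamma\uparrow\gamma^{\ast,\text{eff}}_{\cN}(\cC)$ proves $\gamma^{\ast,\text{eff}}_{\cN}(\cC)\le\gamma^\ast(\cC)$. (Alternatively, the contradiction argument used for Theorem~\ref{thm:optDictApproxLwrBd} can be reproduced verbatim.)

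I would prove Implication (I) by explicit construction, with the network weights now playing the role the coefficients $c_i$ played in the dictionary setting. Given $f\in\cC$, fix $\Phi:=\Phi_{f,M}$. The encoder first stores the architecture of $\Phi$; since $\cL(\Phi)\le\pi(\log M)$ and every layer width is bounded by $\cW(\Phi)\le\cM(\Phi)\le M$, this costs only $\mathcal{O}(\mathrm{polylog}(M))$ bits (cf.\ the discussion preceding the theorem and Proposition~\ref{prop:optimalitynoquant}). Given the architecture, there are at most $\cL(\Phi)\cW(\Phi)(\cW(\Phi)+1)\le CM^3$ admissible positions for the $\cM(\Phi)\le M$ nonzero edge and node weights, so their locations can be specified with $\log\binom{CM^3}{M}=\mathcal{O}(M\log M)$ bits. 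Finally, each nonzero weight lies in $[-\pi(M),\pi(M)]$ and is rounded to the nearest integer multiple of a resolution $\mu=\mu(M)>0$ to be fixed below; the quantized network is denoted $\widehat\Phi$. As in Theorem~\ref{thm:optDictApproxLwrBd}, the relevant proportionality constants are known to encoder and decoder, so the three bitstrings concatenate into a uniquely decodable string, and the decoder reads off the architecture, the nonzero-weight locations, and the quantized weights, reassembles the piecewise affine map $\widehat\Phi$ (which lies in $L^2(\Omega)$ since $\Omega$ is bounded), and outputs it.

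The one genuinely new ingredient, and the step I expect to be the main obstacle, is a quantitative robustness estimate for ReLU networks under weight perturbations, which replaces the Gram--Schmidt orthogonalization argument of Theorem~\ref{thm:optDictApproxLwrBd}. By induction over the layers, using only that $\rho$ is $1$-Lipschitz and positively homogeneous, one shows that perturbing every entry of every $A_\ell,b_\ell$ by at most $\mu$ changes the output at any $x$ with $\|x\|_\infty\le D$ (where $D$ bounds $\Omega$) by at most $\mu\,S(M)$, with $S(M)$ a bound of the form $\cL(\Phi)\big(\cB(\Phi)(\cW(\Phi)+1)+1\big)^{2\cL(\Phi)}\max(D,1)$ valid for all $\Phi\in\cN^{\pi}_{M,d,1}$; since the depth is polylogarithmic and $\cB(\Phi)\le\pi(M)$, $\cW(\Phi)\le M$, this is $S(M)=2^{\mathcal{O}(\mathrm{polylog}(M))}$, i.e.\ quasipolynomial in $M$. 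Choosing $\mu:=M^{-\gamma}\big(|\Omega|^{1/2}S(M)\big)^{-1}$ then gives $\|\Phi-\widehat\Phi\|_{L^2(\Omega)}\le|\Omega|^{1/2}\mu S(M)=M^{-\gamma}$, hence $\|f-\widehat\Phi\|_{L^2(\Omega)}\le(C+1)M^{-\gamma}$; moreover the number of quantization levels is $2\pi(M)\mu^{-1}+1=2^{\mathcal{O}(\mathrm{polylog}(M))}$, so each weight costs $\mathcal{O}(\mathrm{polylog}(M))$ bits and the total code length is $\ell(M)=\mathcal{O}(M\,\mathrm{polylog}(M))$, as claimed. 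The quasipolynomial blow-up in $S(M)$ is precisely why the bound on $\ell(M)$ carries polylogarithmic rather than just logarithmic factors; this is harmless, since only the growth of $\ell(M)$ like $M$ up to subpolynomial factors matters for the optimal exponent. The remaining work is routine: bounding the pre-activation magnitudes layer by layer to obtain $S(M)$, treating the infimum in $\eps^{\pi}_{\cN}(M)$, and checking that the architecture of $\Phi$ is encodable in $o(M\log M)$ bits.
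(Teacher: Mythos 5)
Your proposal is correct and follows essentially the same approach as the paper: the paper factors the argument into Proposition~\ref{prop:optimalitynoquant} (encoding an arbitrary quantized network into a bitstring of length $\mathcal{O}(M\log M\cdot\mathrm{polylog}(\eps^{-1}))$) and Lemma~\ref{lem:PolynomiallyboundedImpliesFiniteBitLength} (the layer-by-layer quantization-robustness estimate you sketch, with a quasipolynomial amplification factor controlled by the polylogarithmic depth and polynomially bounded weights), then concludes by contradiction, whereas you fold these two pieces into a single implication mirroring Theorem~\ref{thm:optDictApproxLwrBd}. The mathematical content---architecture/location encoding in $\mathcal{O}(M\log M)$ bits, resolution $\mu$ chosen via the perturbation bound, near-minimizer handling of the infimum---is identical.
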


The key ingredients of the proof of Theorem~\ref{thm:EffRepNN} are developed throughout this section and the formal proof appears at the end of the section. Before 
getting started, we note that, in analogy to Definition~\ref{def:repopti}, what we just found suggests the following.
\begin{definition}\label{def:NNopti}
	Let $d\in \N$, $\Omega\subseteq \mathbb{R}^d$, and let $\cC\subseteq L^2(\Omega)$ be compact. We say that the function class $\mathcal{C}\subseteq L^2(\Omega)$ is \emph{optimally representable by neural networks} if 
	$$\gamma_{\cN}^{\ast, \text{eff}}(\cC) = {\gamma^*(\cC)}.$$
\end{definition}

It is interesting to observe that the fundamental limits of effective best $M$-term approximation (through dictionaries) and effective best $M$-weight approximation in neural networks are determined by the same quantity, although the approximants in the two cases
are vastly different. We have linear combinations of elements of a dictionary under polynomial weight growth of the coefficients and with the participating functions identified subject to a polynomial-depth search constraint in the former,
and concatenations of affine functions followed by nonlinearities under polynomial growth constraints on the coefficients of the affine functions 
and with a polylogarithmic growth constraint on the number of concatenations in the latter case.

We now commence the program developing the proof of Theorem~\ref{thm:EffRepNN}. As in the arguments in the proof sketch of Theorem \ref{thm:optDictApproxLwrBd},
the main idea is to compare the length of the bitstring needed to encode the approximating network to the minimax code length of the function class $\mathcal{C}$ to be approximated. To this end, we will need to represent the approximating network's nonzero weights, its architecture, i.e., $L$ and the $N_k$, and the nonzero weights' locations as a bitstring.
As the weights are real numbers and hence require, in principle, an infinite number of bits for their binary representations, we will have to suitably quantize them. In particular, the resolution of the corresponding quantizer will have to increase appropriately with decreasing $\varepsilon$. To formalize this idea, we start by defining the quantization employed.

\begin{definition}\label{def:quantizedweights}
 Let $m\in\N$ and $\eps\in(0,1/2)$. The network $\Phi$ is said to have $(m,\eps)$-quantized weights if all its weights are elements of $2^{-m\lceil\log(\eps^{-1})\rceil}\Z\cap[-
\eps^{-m},\eps^{-m}]$.
\end{definition}

A key ingredient of the proof of Theorem~\ref{thm:EffRepNN} is the following result, which establishes a fundamental lower bound on
the connectivity of networks with quantized weights achieving uniform error $\varepsilon$ over a given function class $\mathcal{C}$.
\begin{proposition}\label{prop:optimalitynoquant}
Let $d, d'\in \N$, $\Omega\subseteq \mathbb{R}^d$,  $\cC \subseteq L^2(\Omega)$, and let $\pi$ be a polynomial. 
Further, let
\[
\Psi: \left(0,\tfrac{1}{2}\right) \times \cC \to \cN_{d, d'}
\]
be a map such that for every $\eps\in(0,1/2)$, $f\in\mathcal{C}$, the network $\Psi(\varepsilon, f)$ has $(\lceil\pi(\log(\eps^{-1}))\rceil,\eps)$-quantized weights and satisfies
\begin{equation*}
    \sup_{f \in \cC}\|f - \Psi(\varepsilon, f)\|_{L^2(\Omega)} \leq \varepsilon.
\end{equation*}
Then, 
\begin{equation*}
    \sup_{f\in \cC}\mathcal{M}(\Psi(\varepsilon, f)) \notin \mathcal{O}\! \left(\varepsilon^{-1/\gamma}\right), \, \varepsilon \rightarrow 0, \quad \mbox{for all }\gamma> {\gamma^\ast(\cC)}.
    \end{equation*}
\end{proposition}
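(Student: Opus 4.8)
The plan is to argue by contradiction, turning the quantized networks $\Psi(\eps,f)$ into, for each $\eps$, an encoder--decoder pair for $\cC$ whose codeword length undercuts what $\gamma^\ast(\cC)$ permits. So suppose there is a $\gamma>\gamma^\ast(\cC)$ with $M(\eps):=\sup_{f\in\cC}\cM(\Psi(\eps,f))\in\CalO(\eps^{-1/\gamma})$ as $\eps\to 0$; in particular $M(\eps)<\infty$ for all small $\eps$. Fix such an $\eps$, set $m:=\lceil\pi(\log(\eps^{-1}))\rceil$, and for $f\in\cC$ abbreviate $\Phi:=\Psi(\eps,f)$, $M:=\cM(\Phi)$. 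The encoder $E_\eps$ outputs a self-delimiting bitstring that describes $\Phi$ \emph{exactly}, in three parts, using the non-degeneracy bounds $\cW(\Phi),\cL(\Phi)\le\cM(\Phi)=M$ noted earlier in this section: (i) the architecture, i.e.\ $L=\cL(\Phi)$ together with the hidden-layer widths $N_1,\dots,N_{L-1}$ (each $\le M$), at a cost of $\CalO(M\log M)$ bits; (ii) the subset of the at most $\cL(\Phi)\cW(\Phi)(\cW(\Phi)+1)=\CalO(M^3)$ entries of the $A_\ell$ and $b_\ell$ that carry the $M$ nonzero weights, at a cost of $\lceil\log\binom{CM^3}{M}\rceil=\CalO(M\log M)$ bits; and (iii) the $M$ quantized weight values.

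The decisive estimate is the cost of part (iii). By Definition~\ref{def:quantizedweights}, each weight lies in $2^{-m\lceil\log(\eps^{-1})\rceil}\Z\cap[-\eps^{-m},\eps^{-m}]$, a set of cardinality at most $2\eps^{-m}\,2^{m\lceil\log(\eps^{-1})\rceil}+1\le 2^{m+1}\eps^{-2m}+1$, where I used $2^{\lceil\log(\eps^{-1})\rceil}\le 2\eps^{-1}$; hence each value costs $\CalO(m\log(\eps^{-1}))$ bits, which---since $m=\lceil\pi(\log(\eps^{-1}))\rceil$---is polylogarithmic in $\eps^{-1}$, for a total of $\CalO(M\,\mathrm{poly}(\log(\eps^{-1})))$ bits for part (iii). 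Knowing $\eps$ and $\pi$, hence $m$, the decoder $D_\eps$ reads off the architecture, the weight positions and the quantized values, reconstructs $\Phi$, and outputs $\Phi|_\Omega$; this restriction lies in $L^2(\Omega)$ because $\Phi=f-(f-\Phi)$ with $f\in L^2(\Omega)$ and $\|f-\Phi\|_{L^2(\Omega)}\le\eps<\infty$. Padding every codeword to the common length $\ell(\eps):=\sup_{f\in\cC}(\text{length of the above encoding of }\Psi(\eps,f))$, and invoking $M\le C\eps^{-1/\gamma}$ (which makes $\log M=\CalO(\log(\eps^{-1}))$ and hence absorbs the $\CalO(M\log M)$ contributions of (i)--(ii)), we get $\ell(\eps)=\CalO(\eps^{-1/\gamma}\,\mathrm{poly}(\log(\eps^{-1})))$; since $(E_\eps,D_\eps)$ achieves uniform error $\eps$ over $\cC$ by the hypothesis on $\Psi$, this yields $L(\eps,\cC)\le\ell(\eps)$.

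To conclude, pick $\gamma'$ with $\gamma^\ast(\cC)<\gamma'<\gamma$. Since every polynomial in $\log(\eps^{-1})$ is $\CalO(\eps^{-(1/\gamma'-1/\gamma)})$ as $\eps\to 0$, we obtain $L(\eps,\cC)\in\CalO(\eps^{-1/\gamma'})$, so $\gamma'$ lies in the set whose supremum is $\gamma^\ast(\cC)$---contradicting $\gamma'>\gamma^\ast(\cC)$ and thus proving the proposition. I expect the one genuinely delicate step to be the bit-counting in the second paragraph: verifying that a quantization step of $2^{-m\lceil\log(\eps^{-1})\rceil}$ combined with dynamic range $\eps^{-m}$, for $m$ polylogarithmic in $\eps^{-1}$, costs only polylogarithmically many bits per weight, and packaging parts (i)--(iii) into a single fixed-length, uniquely decodable codeword. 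The remaining bookkeeping and the contradiction with the definition of the optimal exponent are routine and mirror the proof of Theorem~\ref{thm:optDictApproxLwrBd}.
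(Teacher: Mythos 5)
Your proof is correct and takes essentially the same approach as the paper's: argue by contradiction, turn each quantized network into a fixed-length codeword encoding the architecture, the locations of the $M$ nonzero weights (you count subsets via $\log\binom{CM^3}{M}$, the paper enumerates children per node---both give $\mathcal{O}(M\log M)$ bits), and the quantized weight values at polylogarithmically many bits each, then derive $L(\varepsilon,\cC)\in\mathcal{O}(\varepsilon^{-1/\gamma'})$ for some $\gamma'>\gamma^\ast(\cC)$, contradicting the definition of $\gamma^\ast(\cC)$. One small point the paper makes explicit (via unary encoding of $M$ in Step 1 of its proof) is that $M$ itself must appear at the head of the codeword in self-delimiting form, since the decoder needs $M$ before it can determine how many bits each subsequent field occupies; you gesture at this with ``self-delimiting'' but do not spell it out.
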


\begin{proof}
The proof is by contradiction. Let $\gamma > \gamma^*(\mathcal{C})$ and assume that $\sup_{f\in \cC}\mathcal{M}(\Psi(\varepsilon, f)) \in \mathcal{O}(\varepsilon^{-1/\gamma}), \varepsilon \rightarrow 0$. The contradiction will be effected by constructing encoder-decoder pairs $(E_{\varepsilon},D_{\varepsilon}) \in \mathfrak{E}^{\ell(\varepsilon)} \times \mathfrak{D}^{\ell(\varepsilon)}$ achieving
uniform error $\varepsilon$ over $\cC$ with
\begin{align}\label{eq:CodeLength}
\ell(\varepsilon) & \leq  C_0 \cdot \sup_{f\in \cC}\left(\mathcal{M}(\Psi(\varepsilon, f))\log(\mathcal{M}(\Psi(\varepsilon, f)))+1\right)(\log(\varepsilon^{-1}))^q\\
& \leq  C_0\left( \varepsilon^{-1/\gamma}\log(\varepsilon^{-1/\gamma})+1\right)(\log(\varepsilon^{-1}))^q\nonumber \\
& \leq  C_1 \left( \varepsilon^{-1/\gamma}(\log(\varepsilon^{-1}))^{q+1}
+(\log(\varepsilon^{-1}))^q \right) \in \mathcal{O}\left(\varepsilon^{-1/\nu}\right), \quad \text{for}\,\, \varepsilon \rightarrow 0,\nonumber
\end{align}
where $C_0,C_1,q>0$ are constants not depending on $f, \eps$ and $\gamma > \nu> \gamma^*(\cC)$. The specific form of the upper bound (\ref{eq:CodeLength}) will become apparent in the construction of the bitstring representing $\Psi$ detailed below.

We proceed to the construction of the encoder-decoder pairs $(E_{\varepsilon},D_{\varepsilon}) \in \mathfrak{E}^{\ell(\varepsilon)} \times \mathfrak{D}^{\ell(\varepsilon)}$, 
which will be accomplished
by encoding the network architecture, its topology, and the quantized weights in bitstrings of length $\ell(\varepsilon)$ satisfying \eqref{eq:CodeLength} while
guaranteeing unique reconstruction (of the network).
For the sake of notational simplicity, we fix $\eps\in(0,1/2)$ and $f\in \cC$ and set
$\Psi:=\Psi(\varepsilon, f)$, $M := \mathcal{M}(\Psi)$, and $L:=\L(\Psi)$. 
Recall that the number of nodes in layers $0,\dots,L$ is denoted by $N_0,\dots,N_L$ and that $N_0=d, N_L=d'$ (see Definition~\ref{def:NN}). Moreover, note that due to our nondegeneracy assumption (see Remark~\ref{remark:non-degeneracy}) we have $\sum_{\ell=0}^L N_\ell\leq 2M$ and $L\leq M$.
The bitstring representing $\Psi$ is constructed according to the following steps.

{\it Step 1:}  If $M=0$, we encode the network by a single $0$. Using the convention $0\log(0)=0$, we then note that (\ref{eq:CodeLength}) holds trivially and we terminate the encoding procedure. Else, we encode the network connectivity, $M$, by starting the overall bitstring with $M$ $1$'s followed by a single $0$. The length of this bitstring is therefore given by $M+1$.

{\it Step 2:} We continue by encoding the number of layers which, due to $L\le M$, requires no more than $\lceil\log(M)\rceil$ bits. We thus reserve the next $\lceil\log(M)\rceil$ bits for the binary representation of $L$.

{\it Step 3:} Next, we store the layer dimensions $N_0,\dots,N_L$. As $L\leq M$ and $N_\ell\leq M$, for all $\ell\in\{0,\dots,L\}$, owing to nondegeneracy, we can encode the layer dimensions using $(M+1)\lceil\log(M)\rceil$ bits.
In combination with Steps 1 and 2 this yields an overall bitstring of length at most
\begin{equation}
\label{sizecode}
M\lceil\log(M)\rceil+M+2\lceil\log(M)\rceil+1.
\end{equation}

{\it Step 4:} We encode the topology of the graph associated with the network $\Psi$.
To this end, we enumerate all nodes 
by assigning a unique index $i$ to each one of them, starting from the $0$-th layer and increasing from left to right within a given layer. 
The indices range from $1$ to $N:=\sum_{\ell=0}^{L} N_\ell\leq 2M$. Each of these indices can be encoded by a bitstring of length $\lceil\log(N)\rceil$. We denote the bitstring corresponding to index $i$ by $b(i)\in \{0,1\}^{\lceil\log(N)\rceil}$ and let for all nodes, except for those in the last layer, $n(i)$ be the number of children of the node with index $i$, i.e., the number of nodes in the next layer connected to the node with index $i$ via an edge. 
For each of these nodes $i$,
we form a bitstring of length $n(i)\lceil\log(N)\rceil$ by concatenating the bitstrings indexing its children. We follow this string with an all-zeros bitstring of length $\lceil\log(N)\rceil$ to signal that all children of the current node have been encoded.
Overall, this yields a bitstring of length
\begin{equation}
\label{topcode}
    \sum_{i=1}^{N-d'} (n(i)+1) \lceil\log(N)\rceil \leq 3M \lceil\log(2M)\rceil,
\end{equation}
where we used $\sum_{i=1}^{N-d'} n(i) \le M$. 

{\it Step 5:} We encode the weights of $\Psi$.
By assumption, $\Psi$ has $(\lceil\pi(\log(\varepsilon^{-1}))\rceil,\eps)$-quantized weights, which means that each weight of $\Psi$ can be represented by 
no more than $B_\eps:=2(\lceil\pi(\log(\eps^{-1}))\rceil\lceil\log(\eps^{-1})\rceil+1)$ bits.
For each node $i=1,\dots,N$, we reserve the first $B_\eps$ bits to encode its associated node weight and, for each of its children a bitstring of length $B_\varepsilon$ to encode
the weight corresponding to the edge between the current node and that child. Concatenating the results in ascending order of child node indices, we get a bitstring of length $(n(i)+1)B_\varepsilon$ for node $i$, and an overall bitstring of length
\begin{equation*}
    \sum_{i = 1}^{N-d'} (n(i)+1)B_\varepsilon + d'B_\varepsilon \leq 3MB_\varepsilon
\end{equation*}
representing the weights.
Combining this with (\ref{sizecode}) and (\ref{topcode}), we find that the overall number of bits needed to encode the network architecture, topology, and weights is no more than
\begin{align}\label{eq:HelmutCalledThisAlpha}
    3MB_\varepsilon+3M\lceil\log(2M)\rceil+(M+2)\lceil\log(M)\rceil+M+1.
\end{align}
The network can be recovered by sequentially reading out $M,L$, the $N_{\ell}$, the topology, and the quantized weights from the overall bitstring.
It is not difficult to verify that the individual steps in the encoding procedure were crafted such that this yields unique 
recovery. 
As \eqref{eq:HelmutCalledThisAlpha} can be upper-bounded by
\begin{equation*} 
    C_0 (M \log(M) +1) (\log \! \left(\varepsilon^{-1}\right))^q
\end{equation*}
for constants $C_0,q>0$ depending on $\pi$ only, we have constructed an encoder-decoder pair $(E_{\varepsilon},D_{\varepsilon})\in \mathfrak{E}^{\ell(\varepsilon)} \times \mathfrak{D}^{\ell(\varepsilon)}$ 
with  $\ell(\varepsilon)$ satisfying (\ref{eq:CodeLength}).
This concludes the proof.
\end{proof}

Proposition \ref{prop:optimalitynoquant} states that the connectivity growth rate of networks with quantized weights achieving uniform approximation error $\varepsilon$ over
a function class $\mathcal{C}$ must exceed
$\mathcal{O}\! \left(\varepsilon^{-1/\gamma^\ast(\cC)}\right), \, \varepsilon \rightarrow 0$.
As Proposition \ref{prop:optimalitynoquant} applies to networks that have each weight represented by a finite number of bits scaling polynomially in $\log(\varepsilon^{-1})$, while guaranteeing that the underlying encoder-decoder pair achieves uniform error $\varepsilon$ over $\cC$, it remains to establish
that such a compatibility is, indeed, possible. Specifically, this requires a careful interplay between the network's depth and connectivity scaling, and its
weight growth, all as a function of $\varepsilon$. Establishing that this delicate balancing is implied by our technical assumptions is the subject of the remainder of this section. We start with a perturbation result quantifying how the error induced by weight quantization in the network translates to the output function realized by the network.

\begin{lemma}\label{lem:PolynomiallyboundedImpliesFiniteBitLength}
 Let $d,d',k\in\N$, $D\in\R_+$, $\Omega\subseteq[-D,D]^d$, $\varepsilon\in(0,1/2)$, let $\Phi\in\cN_{d,d'}$ with $\cM(\Phi)\leq\varepsilon^{-k}$, $\mathcal{B}(\Phi)\leq\varepsilon^{-k}$, and let $m\in\N$ satisfy
 \begin{align}\label{mlowerboundquant}
 m \geq 3k\cL(\Phi)+\log(\lceil D \rceil).
 \end{align}
Then, there exists a network $\tilde{\Phi}\in\cN_{d,d'}$ with $(m,\eps)$-quantized weights satisfying
 $$\sup_{x\in\Omega}\|\Phi(x)-\tilde{\Phi}(x)\|_{\infty} \leq \varepsilon. $$
\end{lemma}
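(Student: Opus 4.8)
<br>

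The plan is to quantize the weights of $\Phi$ by rounding each weight to the nearest element of the lattice $2^{-m\lceil\log(\eps^{-1})\rceil}\Z$, and then to track how this per-weight perturbation propagates through the $\cL(\Phi)$ layers. The rounding incurs a per-weight error of at most $\delta:=2^{-m\lceil\log(\eps^{-1})\rceil}\le 2^{-m}\le \eps^{m}$; I also need to check the weights stay in the interval $[-\eps^{-m},\eps^{-m}]$, which holds since $\mathcal{B}(\Phi)\le\eps^{-k}\le\eps^{-m}$ (as $m\ge 3k\ge k$). So the lattice-membership part of $(m,\eps)$-quantization is immediate; the content is the $L^\infty$ bound on the output.

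First I would set up notation: write $\Phi=W_L\circ\rho\circ\cdots\circ\rho\circ W_1$ with $W_\ell(x)=A_\ell x+b_\ell$, and let $\tilde W_\ell(x)=\tilde A_\ell x+\tilde b_\ell$ be the perturbed affine maps, so $\|\tilde A_\ell-A_\ell\|_\infty\le\delta$ and $\|\tilde b_\ell-b_\ell\|_\infty\le\delta$. Define the partial network outputs $x_\ell$ (resp.\ $\tilde x_\ell$) after the $\ell$-th affine map applied to input $x\in\Omega$, and bound two quantities by induction on $\ell$: (a) a magnitude bound $\|x_\ell\|_\infty\le R_\ell$, and (b) an error bound $\|x_\ell-\tilde x_\ell\|_\infty\le e_\ell$. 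For (a), since $\rho$ is $1$-Lipschitz and nonexpansive in $\|\cdot\|_\infty$ in the sense $\|\rho(y)\|_\infty\le\|y\|_\infty$, and each row of $A_\ell$ has at most $\mathcal{W}(\Phi)\le\cM(\Phi)\le\eps^{-k}$ nonzero entries each bounded by $\eps^{-k}$, one gets $\|x_\ell\|_\infty\le \eps^{-2k}\|x_{\ell-1}\|_\infty+\eps^{-k}$, starting from $\|x\|_\infty\le D$; this yields a crude bound of the form $R_\ell\le \lceil D\rceil\,\eps^{-2k\ell}\cdot(\text{const})^\ell$, i.e.\ $\log R_\ell\le 2k\ell\log(\eps^{-1})+\ell\log(\text{const})+\log\lceil D\rceil$, up to absorbing constants. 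For (b), using $\rho$ $1$-Lipschitz and splitting $\tilde A_\ell\tilde x_{\ell-1}+\tilde b_\ell - (A_\ell x_{\ell-1}+b_\ell)$ into a term involving $\|\tilde x_{\ell-1}-x_{\ell-1}\|_\infty$ (amplified by $\le\eps^{-2k}$) and a term involving the weight perturbation (bounded by $\delta(\,\|x_{\ell-1}\|_\infty+1)\cdot\mathcal{W}(\Phi)$-type factors), I get a recursion $e_\ell\le \eps^{-2k}e_{\ell-1}+C\,\delta\,\eps^{-k}(R_{\ell-1}+1)$. Unrolling this linear recursion over $L=\cL(\Phi)$ steps gives $e_L\le C'\,\delta\,\eps^{-(\text{const}\cdot kL)}\lceil D\rceil$, so that $\log e_L \le \log\delta + (\text{const}\cdot kL)\log(\eps^{-1})+\log\lceil D\rceil + \log C'$.

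The final step is to verify that the hypothesis $m\ge 3k\cL(\Phi)+\log\lceil D\rceil$ forces $e_L\le\eps$. Since $\log\delta=-m\lceil\log(\eps^{-1})\rceil\le -m\log(\eps^{-1})$, it suffices that $m\log(\eps^{-1})\ge (\text{const}\cdot kL)\log(\eps^{-1}) + \log\lceil D\rceil + O(1)$, and I need to make sure the absolute constants coming out of the recursion are small enough that $3$ is the right multiple of $k\cL(\Phi)$ (this is presumably where the precise bookkeeping—using $\eps<1/2$ so $\log(\eps^{-1})>1$, and being careful that $\mathcal{W}(\Phi),\cM(\Phi)\le\eps^{-k}$ rather than $\eps^{-m}$—has to be done honestly). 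The main obstacle is exactly this constant-chasing: getting the amplification factor through each layer down to something like $\eps^{-3k}$ (rather than a larger power) so that $L$ layers cost only $\eps^{-3kL}$ and the stated bound $m\ge 3k\cL(\Phi)+\log\lceil D\rceil$ suffices. I expect one can be somewhat wasteful and still land on the constant $3$ by noting that the number of summands in each matrix–vector product contributes only an $\eps^{-k}$ factor which combines with the $\eps^{-k}$ weight bound to give $\eps^{-2k}$ per layer for magnitude growth, plus one more $\eps^{-k}$ for the perturbation term, leaving comfortable room; the $\log\lceil D\rceil$ term in \eqref{mlowerboundquant} simply absorbs the initial input magnitude. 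Everything else—the induction, the triangle inequalities, the geometric-series unrolling—is routine.
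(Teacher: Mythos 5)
Your proposal is correct and follows essentially the same approach as the paper's proof: round each weight to the nearest point of $2^{-m\lceil\log(\eps^{-1})\rceil}\Z\cap[-\eps^{-m},\eps^{-m}]$, then inductively track both the magnitude of the partial outputs and the layer-by-layer propagated error, using $\cW(\Phi)\le\cM(\Phi)\le\eps^{-k}$ and $\cB(\Phi)\le\eps^{-k}$ to cap the per-layer amplification by a power of $\eps^{-k}$, and finally verify that the hypothesis on $m$ absorbs the accumulated exponent. The paper's precise bookkeeping (the factor $\tfrac12(2^\ell-1)$, the product $\prod(N_i+1)\le M^L\le\eps^{-kL}$, and the conversion $\lceil D\rceil\le\eps^{-\log\lceil D\rceil}$) confirms that the constant $3$ in the bound $m\ge 3k\cL(\Phi)+\log\lceil D\rceil$ does indeed leave the comfortable room you anticipated.
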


More specifically, the network $\tilde{\Phi}$ can be obtained simply by replacing every weight in $\Phi$ by a closest element in $2^{-m\lceil\log(\eps^{-1})\rceil}\Z\cap[-
\eps^{-m},\eps^{-m}]$.

\begin{proof}[Proof of Theorem~\ref{lem:PolynomiallyboundedImpliesFiniteBitLength}]
We first consider the case $\cL(\Phi)=1$. Here, it follows from Definition \ref{def:NN} that the network simply realizes an affine
transformation and hence
$$\sup_{x\in\Omega}\|\Phi(x)-\tilde{\Phi}(x)\|_{\infty} \leq \cM(\Phi)\lceil D\rceil 2^{-m\lceil\log(\eps^{-1})\rceil-1}\leq\eps.$$
In the remainder of the proof, we can therefore assume that $\cL(\Phi)\geq 2$. For simplicity of notation, we set $L:=\cL(\Phi), M:=\cM(\Phi)$, and, as usual, write
\begin{align*}
    \Phi = W_L\circ\rho \circ W_{L-1} \circ \rho \circ \dots \circ \rho \circ W_{1}
\end{align*}
with $W_\ell(x)=A_\ell x+b_\ell$, $A_\ell\in\R^{N_\ell\times N_{\ell-1}}$, and $b_\ell\in\R^{N_\ell}$.
We now consider the partial networks $\Phi^\ell\colon\Omega\to\R^{N_\ell}$, $\ell \in \{1,2,\dots,L-1\}$, given by
\begin{align*}
\Phi^{\ell} := \begin{cases}
\begin{array}{lc} \rho \circ W_1, & \ell=1\\
\rho \circ W_2 \circ \rho\circ W_1, & \ell=2\\
\rho \circ W_{\ell}\circ\rho\circ W_{\ell-1} \circ\dots\circ \rho \circ W_{1}, & \ell = 3,\dots,L-1,
\end{array} \end{cases}    
\end{align*}
and set $\Phi^L:=\Phi$. We hasten to add that we decided---for ease of exposition---to deviate from the convention used in Definition \ref{def:NN} and to have the partial networks include the application of $\rho$ at the end. Now, for $\ell\in\{1,2,\dots,L\}$, let $\tilde{\Phi}^\ell$ be the (partial) network obtained by replacing all the entries of the $A_\ell$ and $b_\ell$ by a closest
element in ${2^{-m\lceil\log(\eps^{-1})\rceil}\,\Z\,\cap\,[-\eps^{-m},\eps^{-m}]}$. We denote these replacements by $\tilde{A}_{\ell}$ and $\tilde{b}_{\ell}$, respectively, and note that
  \begin{align}\begin{split}\label{PBFBL5}
    \max_{i,j}|A_{\ell,i,j}-\tilde{A}_{\ell,i,j}|&\leq\tfrac{1}{2}\,2^{-m\lceil\log(\eps^{-1})\rceil}\leq \tfrac{1}{2}\,\eps^{m},\\
    \max_{i,j}|b_{\ell,i,j}-\tilde{b}_{\ell,i,j}|&\leq\tfrac{1}{2}\,2^{-m\lceil\log(\eps^{-1})\rceil}\leq \tfrac{1}{2}\,\eps^{m}.
  \end{split}\end{align}
  The proof will be effected by upper-bounding the error building up across layers as a result of this quantization.
  To this end, we define, for $\ell\in\{1,2,\dots,L\}$, the error in the $\ell$-th layer as
  \begin{align*}
    e_\ell:=\sup_{x\in\Omega}\|\Phi^\ell(x)-\tilde{\Phi}^\ell(x)\|_{\infty}.
  \end{align*}
  We further set $C_0:=\lceil D \rceil$ and $C_\ell:=\max\{1,\sup_{x\in\Omega}\|\Phi^\ell(x)\|_{\infty}\}$. 
  As each entry of the vector $\Phi^\ell(x)\in\R^{N_\ell}$ is obtained by applying\footnote{Note that going from $\Phi_{L-1}$ to $\Phi_L$ the activation function is not applied anymore, which nevertheless leads to the same estimate as the identity mapping is $1$-Lipschitz.} the $1$-Lipschitz function $\rho$ to
  the sum of a weighted sum of at most $N_{\ell-1}$ components of the vector $\Phi^{\ell-1}(x)\in\R^{N_{\ell-1}}$ 
  and a bias component $b_{\ell,i}$, and $\mathcal{B}(\Phi)\,\le\,\varepsilon^{-k}$ by assumption, we have
  for all $\ell\in\{1,2,\dots,L\}$,
  \begin{align*}
    C_\ell\leq N_{\ell-1}\eps^{-k}C_{\ell-1}+\eps^{-k}\leq (N_{\ell-1}+1)\,\eps^{-k}C_{\ell-1},
  \end{align*}
  which implies, for all $\ell\in\{1,2,\dots,L\}$, that
  \begin{align}\label{PBFBL4}
    C_\ell\leq C_0\,\eps^{-k\ell}\prod_{i=0}^{\ell-1}(N_i+1).
  \end{align}
  Next, note that the components $(\tilde{\Phi}^1(x))_i, i \in \{1,2,\dots,N_1\}$, of the vector $\tilde{\Phi}^1(x)\in\R^{N_1}$ can be written as
  \begin{align*}
      (\tilde{\Phi}^1(x))_i= \rho \left ( \left(\sum_{j=1}^{N_0}\tilde{A}_{1,i,j}x_j\right)+\tilde{b}_{1,i} \right ),
  \end{align*}
  which, combined with \eqref{PBFBL5} and the fact that $\rho$ is $1$-Lipschitz implies 
  \begin{align}\label{PBFBL1}
    e_1\leq C_0 N_0\tfrac{\eps^m}{2}+\tfrac{\eps^m}{2}\leq C_0 (N_0+1)\tfrac{\eps^m}{2}.
  \end{align}
  Due to $\rho$ and the identity mapping being $1$-Lipschitz, we have, for $\ell=1,\dots,L$,
  \begin{align}\begin{split}\label{PBFBL6}
      e_{\ell}&=\sup_{x\in\Omega}\|\Phi^{\ell}(x)-\tilde{\Phi}^{\ell}(x)\|_{\infty}=\sup_{x\in\Omega,i\in\{1,\dots,N_\ell\}}|(\Phi^{\ell}(x))_i-(\tilde{\Phi}^{\ell}(x))_i|\\
      & \le \sup_{x\in\Omega,i\in\{1,\dots,N_{\ell}\}}\left|\left[\left(\sum_{j=1}^{N_{\ell-1}}A_{\ell,i,j} (\Phi^{\ell-1}(x))_j\right)+b_{\ell,i}\right]-\left[\left(\sum_{j=1}^{N_{\ell-1}}\tilde{A}_{\ell,i,j}
      (\tilde{\Phi}^{\ell-1}(x))_j\right)+\tilde{b}_{\ell,i}\right]\right|\\
      &\leq\sup_{x\in\Omega,i\in\{1,\dots,N_{\ell}\}}\left[\left(\sum_{j=1}^{N_{\ell-1}}\left|A_{\ell,i,j}(\Phi^{\ell-1}(x))_j-\tilde{A}_{\ell,i,j}(\tilde{\Phi}^{\ell-1}(x))_j\right|\right)+\left|b_{\ell,i}-
      \tilde{b}_{\ell,i}\right|\right].
  \end{split}\end{align}
  As $|(\Phi^{\ell-1}(x))_j-(\tilde{\Phi}^{\ell-1}(x))_j|\leq e_{\ell-1}$ and $|(\Phi^{\ell-1}(x))_j|\leq C_{\ell-1}$ for all $x \in \Omega$, $j\in\{1,\dots,N_{\ell-1}\}$ by definition, and $|A_{\ell,i,j}|\leq\eps^{-k}$ by assumption, upon invoking \eqref{PBFBL5}, we get
    \begin{align*}
      |A_{\ell,i,j}(\Phi^{\ell-1}(x))_j-\tilde{A}_{\ell,i,j}(\tilde{\Phi}^{\ell-1}(x))_j|\leq e_{\ell-1}\eps^{-k}+C_{\ell-1}\tfrac{\eps^m}{2}+e_{\ell-1}\tfrac{\eps^m}{2}.
  \end{align*}
  Since $\eps\in(0,1/2)$, it therefore follows from \eqref{PBFBL6}, that for all $\ell\in\{2,\dots,L\}$,
  \begin{align}\label{PBFBL3}
    e_{\ell}\leq N_{\ell-1}(e_{\ell-1}\eps^{-k}+C_{\ell-1}\tfrac{\eps^m}{2}+e_{\ell-1}\tfrac{\eps^m}{2})+\tfrac{\eps^m}{2}\leq (N_{\ell-1}+1)(2e_{\ell-1}\eps^{-k}+C_{\ell-1}\tfrac{\eps^m}{2}).
  \end{align}
  We now claim that, for all $\ell\in\{2,\dots,L\}$,
  \begin{align}\label{PBFBL2}
    e_{\ell}\leq \tfrac{1}{2}(2^{\ell}-1) C_0\eps^{m-(\ell-1)k} \prod_{i=0}^{\ell-1}(N_i+1),
  \end{align}
  which we prove by induction. The base case $\ell=1$ was already established in \eqref{PBFBL1}. For the induction step we assume that \eqref{PBFBL2} holds 
  for a given $\ell$
  which, in combination with \eqref{PBFBL4} and \eqref{PBFBL3}, implies 
  \begin{align*}
    e_{\ell+1}&\leq \left(N_{\ell}+1)(2e_{\ell}\eps^{-k}+C_{\ell}\tfrac{\eps^m}{2}\right)\\
    &\leq (N_{\ell}+1)\left((2^{\ell}-1)C_0\eps^{m-(\ell-1)k}\eps^{-k}\prod_{i=0}^{\ell-1}(N_i+1)+C_0\eps^{-k\ell}\tfrac{\eps^m}{2}\prod_{i=0}^{\ell-1}(N_i+1)\right)\\
    &= \frac{1}{2}(2^{\ell +1}-1)C_0\eps^{m-\ell k}\prod_{i=0}^{\ell} (N_i+1).
  \end{align*}
  This completes the induction argument and establishes \eqref{PBFBL2}. Using $2^{L-1}\leq\eps^{-(L-1)}$, $\prod_{i=0}^{L-1}(N_i+1)\leq M^{L}\leq\eps^{-kL}$, 
  and $m \geq 3kL+\log(\lceil D\rceil)$ by assumption, we get
  \begin{align*}
   \sup_{x\in\Omega}\|\Phi(x)-\tilde{\Phi}(x)\|_{\infty}&=e_L\leq\tfrac{1}{2}(2^L-1)C_0\eps^{m-(L-1)k}\prod_{i=0}^{L-1}(N_i+1)\\
   &\leq \eps^{m-(L-1+kL-k+\log(\lceil D\rceil)+kL)}\\
   &\leq \eps^{m-(3kL+\log(\lceil D\rceil)-1)}\leq\eps.
  \end{align*}
  This completes the proof.
\end{proof}

We are now ready to finalize the proof of Theorem~\ref{thm:EffRepNN}.
\begin{proof}[Proof of Theorem~\ref{thm:EffRepNN}]
Suppose towards a contradiction that $\gamma_{\cN}^{\ast, \text{eff}}(\cC) > {\gamma^\ast(\cC)}$ and let $\gamma \in \big(\gamma^\ast(\cC),\gamma_{\cN}^{\ast, \text{eff}}(\cC)\big)$.
Then, by Definition~\ref{def:repoptiNN}, there exist a polynomial $\pi$ and a constant $C>0$ such that
\begin{align*} 
    \sup_{f \in \mathcal{C}}\, \inf_{\Phi \in \cN^{\pi}_{M,d,1}}  \|f - \Phi\|_{L^2(\Omega)} \leq
C M^{-\gamma}, \,\, \text{for all}\,\, M \in \N.
\end{align*}
Setting $M_\varepsilon := \big\lceil (\varepsilon/(4C))^{-1/\gamma}\big\rceil$, it follows that, for every $f \in \mathcal{C}$ and every $\varepsilon \in (0,1/2)$, there exists a neural network $\Phi_{\varepsilon,f} \in \cN^{\pi}_{ M_\eps, d, 1}$ such that 
\begin{align}\label{eq:supinfeps}
     \|f- \Phi_{\varepsilon,f} \|_{L^2(\Omega)} \leq 2 \sup_{f \in \cC}  \ \inf_{\Phi \in \cN^\pi_{M_\varepsilon, d, 1}} \|f - \Phi\|_{L^2(\Omega)} \leq 2 C M_\varepsilon^{-\gamma} \leq \frac{ \varepsilon}{2}.
\end{align}
By Lemma \ref{lem:PolynomiallyboundedImpliesFiniteBitLength} there exists a polynomial $\pi^*$ such that for every $f \in \mathcal{C}$, $\varepsilon \in (0,1/2)$, there is a network $\widetilde{\Phi}_{\varepsilon,f}$ 
with $(\lceil\pi^*(\log(\eps^{-1}))\rceil,\eps)$-quantized weights satisfying
\begin{align}\label{eq:quantizedweightspsi}
    \left\|\Phi_{\varepsilon,f} - \widetilde{\Phi}_{\varepsilon,f} \right\|_{L^2(\Omega)} \leq \frac{\varepsilon}{2}.
\end{align}
The conditions of Lemma \ref{lem:PolynomiallyboundedImpliesFiniteBitLength} are satisfied as $M_\varepsilon$ can be upper-bounded by $\eps^{-k}$ with a suitably chosen $k$, the weights in $\Phi_{\varepsilon,f}$ are polynomially bounded in $M_\varepsilon$, and (\ref{mlowerboundquant}) follows from the depth of networks in $\Phi \in \cN^\pi_{M_\varepsilon, d, 1}$ being polylogarithmically bounded in $M_\varepsilon$ due to Definition \ref{def:NNpisets}.
Now, defining
\begin{align*}
    \Psi\colon \left(0,\tfrac{1}{2}\right) \times \cC  \to \cN_{d, 1}, \quad (\varepsilon, f) \mapsto \widetilde{\Phi}_{\varepsilon, f},
\end{align*}
it follows from (\ref{eq:supinfeps}) and (\ref{eq:quantizedweightspsi}), by application of the triangle inequality, that
$$
    \sup_{f \in \cC}\|f - \Psi(\varepsilon, f)\|_{L^2(\Omega)} \leq \varepsilon \quad \text{ with } \quad \sup_{f \in \cC} \mathcal{M}(\Psi(\varepsilon, f)) \leq M_\varepsilon  \in \mathcal{O}\big(\varepsilon^{-1/\gamma}\big), \, \, \varepsilon \to 0.
$$
The proof is concluded by noting that $\Psi(\varepsilon,f)$ violates Proposition \ref{prop:optimalitynoquant}.
\end{proof}

We conclude this section with a discussion of the conceptual implications of the results established above. Proposition \ref{prop:optimalitynoquant} combined with Lemma \ref{lem:PolynomiallyboundedImpliesFiniteBitLength} establishes that
neural networks achieving uniform approximation error $\varepsilon$ while having weights that are polynomially bounded in $\varepsilon^{-1}$ and depth 
growing polylogarithmically in $\varepsilon^{-1}$ cannot exhibit connectivity growth rate smaller than $\mathcal{O}(\varepsilon^{-1/\gamma^{*}(\mathcal{C})}), \varepsilon \rightarrow 0$; in other words, a decay of the uniform approximation error, as a function of $M$, faster than $\mathcal{O}(M^{-\gamma^{\ast}(\mathcal{C})}), M \rightarrow \infty$, is not possible.

\section{The Transference Principle}\label{sec:bestapprox}

We have seen that a wide array of function classes can be approximated in Kolmogorov-Donoho optimal fashion through dictionaries, provided that
the dictionary $\mathcal{D}$ is chosen to consort with the function class $\mathcal{C}$ according to $\gamma^{\ast,\text{eff}}(\cC,\mathcal{D}) = {\gamma^\ast(\cC)}$.
Examples of such pairs are unit balls in Besov spaces with wavelet bases and unit balls in weighted modulation spaces with Wilson bases.
A more extensive list of optimal pairs is provided in Table~\ref{table_opt_exp}. On the other hand, as shown in \cite{DONOHO1993100}, Fourier bases are strictly suboptimal---in terms of approximation rate---for balls $\cC$ of finite radius in the spaces $BV(\R)$ and $W_p^{m}(\R)$.

In light of what was just said, it is hence natural to let neural networks play the role of the dictionary $\mathcal{D}$ and to ask which function classes $\cC$ are approximated in Kolmogorov-Donoho-optimal fashion by neural networks. Towards answering this question, we next develop a general framework for transferring results on function approximation through dictionaries to results on approximation by neural networks. 
This will eventually lead us to a characterization of function classes $\mathcal{C}$ that are
optimally representable by neural networks in the sense of Definition~\ref{def:NNopti}.

We start by introducing the notion of effective representability of dictionaries through neural networks.

\begin{definition}\label{def:wellrep} Let $d \in \N$, $\Omega\subseteq \mathbb{R}^d$, and $\mathcal{D} = (\varphi_i)_{i\in \N}\subseteq L^2(\Omega)$ be a dictionary. We call
$\mathcal{D}$ \emph{effectively representable by neural networks}, if there exists a bivariate polynomial $\pi$ such that for all $i \in \N$, 
$\eps\in(0,1/2)$, there is a neural network $\Phi_{i,\eps}\in \cN_{d,1}$ satisfying $\mathcal{M}(\Phi_{i,\eps}) \leq \pi(\log(\eps^{-1}), \log(i))$, $\mathcal{B}(\Phi_{i,\eps}) \leq \pi(\eps^{-1}, i)$, and
    \begin{align*}
        \|\varphi_i - \Phi_{i,\eps}\|_{L^2(\Omega)}\le \eps.
    \end{align*}
\end{definition}

The next result will allow us to conclude that optimality---in the sense of Definition~\ref{def:repopti}---of a dictionary $\mathcal{D}$ for a function class $\mathcal{C}$ combined with effective representability of $\mathcal{D}$ by neural networks implies optimal representability of $\mathcal{C}$ by neural networks. The proof is, in essence, effected by noting that every element of the effectively representable $\mathcal{D}$ participating in a best $M$-term-rate achieving approximation $f_M$ of $f\in\cC$ 
can itself be approximated by neural networks well enough for an overall network to approximate $f_M$ with connectivity $M\pi(\log(M))$.
As this connectivity is only polylogarithmically larger than the number of terms $M$ participating in the best $M$-term approximation $f_M$, we will be able to conclude that the optimal approximation rate, indeed, transfers from approximation in $\mathcal{D}$ to approximation in neural networks. The conditions on 
$\mathcal{M}(\Phi_{i,\eps})$ and $\mathcal{B}(\Phi_{i,\eps})$ in Definition \ref{def:wellrep} guarantee precisely that the connectivity increase is at most by a polylogarithmic factor. To see this, we first recall that effective best $M$-term approximation has a polynomial depth search constraint, which implies that the indices $i$ under consideration are upper-bounded by a polynomial in $M$. In addition, the approximation error behavior we are interested in is $\eps=M^{-\gamma}$. Combining these two insights, it follows that $\mathcal{M}(\Phi_{i,\eps}) \leq \pi(\log(\eps^{-1}), \log(i))$ implies polylogarithmic (in $M$) connectivity for each network $\Phi_{i,\eps}$ and hence connectivity $M\pi(\log(M))$ for the overall network realizing $f_M$, as desired. By the same token, 
$\mathcal{B}(\Phi_{i,\eps}) \leq \pi(\eps^{-1}, i)$ guarantees that the weights of $\Phi_{i,\eps}$ are polynomial in $M$.

There is another aspect to effective representability by neural networks that we would like to illustrate by way of example, namely that of
ordering the dictionary elements. Specifically, we consider, for $d=1$ and $\Omega=[-\pi,\pi)$, the class $\cC$ of real-valued even functions in $\cC=L^2(\Omega)$, and take the dictionary as $\mathcal{D}=\{\cos(ix),i\,\in\,\N_0\}$. As the index $i$ enumerating the dictionary elements corresponds to frequencies, the basis functions in $\mathcal{D}$ are hence ordered 
according to increasing frequencies.  Next, note that the parameter $a$ in Theorem~\ref{sin} corresponds to the frequency index $i$ in our example. As
the network $\Psi_{a,D,\epsilon}$ in Theorem \ref{sin} is of finite width, it hence follows, upon replacing $a$ in the expression for $\mathcal{L}(\Psi_{a,D,\epsilon})$ by $i$, that $\mathcal{M}(\Psi_{i,D,\epsilon}) \leq \pi(\log(\eps^{-1}), \log(i))$. The condition on the weights for effective representability is satisfied trivially, simply as $\mathcal{B}(\Psi_{i,D,\eps}) \le 1 \leq \pi(\eps^{-1}, i)$.

We are now ready to state the rate optimality transfer result.

\begin{theorem}\label{theo:EncodeOfNeuralNetworks}Let $d\in \N$, $\Omega\subseteq \mathbb{R}^d$ be bounded, and consider the compact function class $\mathcal{C}\subseteq L^2(\Omega)$.
 Suppose that
  the dictionary $\mathcal{D}=(\varphi_i)_{i\in \mathbb{N}}\subseteq L^2(\Omega)$ is effectively representable by neural networks.
    Then, for every $\gamma \in (0, \gamma^{\ast, \text{eff}}(\mathcal{C},\mathcal{D}))$, there exist a polynomial $\pi$ and a map
   \begin{align*}
        \Psi: \left(0,\tfrac{1}{2}\right) \times \mathcal{C} \to \cN_{d, 1},
    \end{align*}
    such that for all $f\in \cC$, $\eps\in(0,1/2)$, the network $\Psi (\varepsilon,f)$ has $(\lceil\pi(\log(\eps^{-1}))\rceil,\eps)$-quantized weights while satisfying
    $\|f - \Psi(\varepsilon,f)\|_{L^2(\Omega)}\le \varepsilon$, $\L(\Psi(\varepsilon,f))\leq\pi(\log(\eps^{-1}))$, $\mathcal{B}(\Psi(\varepsilon,f))\leq\pi(\eps^{-1})$, and we have
    \begin{equation}
    \label{Mscaling}
    \mathcal{M}(\Psi(\varepsilon,f)) \in \mathcal {O}(\varepsilon^{-1/\gamma}),\ \varepsilon \rightarrow 0,
    \end{equation}
    with the implicit constant in (\ref{Mscaling}) being independent of $f$. In particular, it holds that
    \begin{align*}
        \gamma_{\cN}^{\ast, \text{eff}}(\mathcal{C})\geq\gamma^{\ast, \text{eff}}(\mathcal{C},\mathcal{D}).
    \end{align*}
\end{theorem}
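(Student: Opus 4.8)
The plan is to \emph{splice together} neural-network approximants of the dictionary atoms participating in an effective best $M$-term approximation of $f$, to combine them with a linear-combination network, and to quantize the result via Lemma~\ref{lem:PolynomiallyboundedImpliesFiniteBitLength}; the inequality $\gamma_{\cN}^{\ast,\text{eff}}(\mathcal C)\geq\gamma^{\ast,\text{eff}}(\mathcal C,\mathcal D)$ then drops out by reading off the resulting connectivity-versus-error trade-off. Concretely, fix a target $\gamma<\gamma^{\ast,\text{eff}}(\mathcal C,\mathcal D)$ and an intermediate exponent $\gamma<\gamma''<\gamma^{\ast,\text{eff}}(\mathcal C,\mathcal D)$. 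By Definition~\ref{def:polydepth} there are a polynomial $\pi_1$ and a constant $C>0$ such that for every $M\in\N$ and $f\in\mathcal C$ one may pick $I_{f,M}\subseteq\{1,\dots,\pi_1(M)\}$ with $|I_{f,M}|=M$ and $(c_i)_{i\in I_{f,M}}$ with $|c_i|\leq\pi_1(M)$ so that $\|f-\sum_{i\in I_{f,M}}c_i\varphi_i\|_{L^2(\Omega)}\leq 2CM^{-\gamma''}$ (the factor $2$ covering the infimum in \eqref{eq:gamma-eff-def}). Given $\eps\in(0,1/2)$, set $M_\eps:=\lceil(8C/\eps)^{1/\gamma''}\rceil$, so that this best $M_\eps$-term error is at most $\eps/4$, and write $f_{M_\eps}:=\sum_{i\in I_{f,M_\eps}}c_i\varphi_i$.

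Next, invoke effective representability of $\mathcal D$: with the bivariate polynomial of Definition~\ref{def:wellrep} and $\delta:=\eps/(4M_\eps\pi_1(M_\eps))$, pick for each $i\in I_{f,M_\eps}$ a network $\Phi_{i,\delta}\in\cN_{d,1}$ with $\|\varphi_i-\Phi_{i,\delta}\|_{L^2(\Omega)}\leq\delta$. Since $i\leq\pi_1(M_\eps)$ and both $M_\eps$ and $\delta^{-1}$ are polynomial in $\eps^{-1}$, we have $\log i,\log\delta^{-1}\in\mathcal O(\log\eps^{-1})$; hence, using $\cL(\Phi)\leq\cM(\Phi)$, each $\Phi_{i,\delta}$ has connectivity, depth, and width bounded by a fixed polynomial in $\log\eps^{-1}$ and weight magnitude bounded by a fixed polynomial in $\eps^{-1}$. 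Bringing all $\Phi_{i,\delta}$ to a common (still polylogarithmic) depth with Lemma~\ref{network_extension}, pre-composing with an affine layer that replicates the input $M_\eps$ times, applying Lemma~\ref{network_linearcombination} with the $c_i$, and composing (Lemma~\ref{network_conc}) yields a single $\Phi_{\eps,f}\in\cN_{d,1}$ with $\Phi_{\eps,f}(x)=\sum_i c_i\Phi_{i,\delta}(x)$, hence $\|f-\Phi_{\eps,f}\|_{L^2(\Omega)}\leq\eps/4+\sum_i|c_i|\delta\leq\eps/2$. Its depth is polylogarithmic in $\eps^{-1}$, its weight magnitude polynomial in $\eps^{-1}$, and its connectivity $\mathcal O\!\big(M_\eps\,\mathrm{polylog}(\eps^{-1})\big)$; because $\gamma''>\gamma$ this last quantity is $\mathcal O(\eps^{-1/\gamma})$ — this is precisely the role of the intermediate exponent (cf.\ the argument in the proof of Proposition~\ref{prop:optimalitynoquant}).

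It remains to quantize. As $\cM(\Phi_{\eps,f}),\cB(\Phi_{\eps,f})\leq\eps^{-k}$ for a suitable $k$ and $\Omega\subseteq[-D,D]^d$ is bounded, apply Lemma~\ref{lem:PolynomiallyboundedImpliesFiniteBitLength} with error parameter a fixed constant multiple of $\eps/(1+|\Omega|^{1/2})$ and with $m$ a polynomial $\pi^*(\log\eps^{-1})$ dominating $3k\,\cL(\Phi_{\eps,f})+\log\lceil D\rceil$ (legitimate since $\cL(\Phi_{\eps,f})$ is polylogarithmic, so \eqref{mlowerboundquant} holds). This gives a network $\widetilde\Phi_{\eps,f}$ with the \emph{same} architecture as $\Phi_{\eps,f}$ — same depth, no larger connectivity, same polynomial weight bound — whose weights are $(\lceil\pi(\log\eps^{-1})\rceil,\eps)$-quantized for a suitably enlarged polynomial $\pi$ (the constant-factor error mismatch absorbed into $\pi$) and with $\|\Phi_{\eps,f}-\widetilde\Phi_{\eps,f}\|_{L^2(\Omega)}\leq\eps/2$. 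Setting $\Psi(\eps,f):=\widetilde\Phi_{\eps,f}$ and summing the three error contributions via the triangle inequality gives $\|f-\Psi(\eps,f)\|_{L^2(\Omega)}\leq\eps$ with all stated bounds and $\cM(\Psi(\eps,f))\in\mathcal O(\eps^{-1/\gamma})$, uniformly in $f$. Finally, since $\sup_f\cM(\Psi(\eps,f))\leq C''\eps^{-1/\gamma}$ for some constant $C''$, and $\Psi(\eps,f)$ has polylogarithmic depth and weights polynomial in $\eps^{-1}\asymp(C''\eps^{-1/\gamma})^{\gamma}$, we get $\Psi(\eps,f)\in\cN^{\pi}_{M,d,1}$ with $M:=\lceil C''\eps^{-1/\gamma}\rceil$ for a suitable $\pi$, so $\eps^{\pi}_{\cN}(M)\leq\eps$; since $\eps\asymp M^{-\gamma}$ along the relevant values of $M$, this yields $\eps^{\pi}_{\cN}(M)\in\mathcal O(M^{-\gamma})$, whence $\gamma_{\cN}^{\ast,\text{eff}}(\mathcal C)\geq\gamma$, and letting $\gamma\uparrow\gamma^{\ast,\text{eff}}(\mathcal C,\mathcal D)$ completes the proof.

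I expect the main obstacle to be the \emph{bookkeeping} coupling the three scalings: one must verify that the polynomial-depth-search bound $i\leq\pi_1(M_\eps)$ together with the choice of $\delta$ keep $\log i$ and $\log\delta^{-1}$ logarithmic in $\eps^{-1}$ (so that effective representability genuinely supplies polylogarithmic-depth, polynomially-bounded sub-networks), that the extension and linear-combination lemmas inflate connectivity only by the polylogarithmic per-element cost (keeping the total $\mathcal O(M_\eps\,\mathrm{polylog}(\eps^{-1}))$), and that the hypotheses of Lemma~\ref{lem:PolynomiallyboundedImpliesFiniteBitLength} — notably the lower bound \eqref{mlowerboundquant} on the quantization resolution $m$ — hold with $m$ polylogarithmic; the device $\gamma<\gamma''<\gamma^{\ast,\text{eff}}$ is then exactly what converts the unavoidable polylogarithmic overhead in connectivity into the clean rate $\mathcal O(\eps^{-1/\gamma})$.
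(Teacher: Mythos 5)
Your proposal is correct and follows essentially the same route as the paper's proof: pick an intermediate exponent (the paper calls it $\gamma'$, you call it $\gamma''$), invoke Definition~\ref{def:polydepth} for an $M$-term approximant with polynomial-depth search and polynomially bounded coefficients, use effective representability to approximate each participating dictionary element to accuracy $\mathcal O(\eps/(M_\eps\pi_1(M_\eps)))$, assemble the pieces into a polylogarithmic-depth linear-combination network, quantize via Lemma~\ref{lem:PolynomiallyboundedImpliesFiniteBitLength} with a polylogarithmic resolution parameter, and read off $\cM\in\mathcal O(M_\eps\,\mathrm{polylog}(\eps^{-1}))\subseteq\mathcal O(\eps^{-1/\gamma})$. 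The only cosmetic differences are that you fix $M_\eps$ at the outset rather than constructing $\Psi_{f,M}$ for all $M$ and specializing at the end, and you build the shared-input linear combination by hand (input replication, Lemma~\ref{network_extension}, Lemma~\ref{network_linearcombination}, Lemma~\ref{network_conc}) where the paper invokes Lemma~\ref{lem:shared_input_lc} directly.
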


\begin{remark}
Theorem~\ref{theo:EncodeOfNeuralNetworks} allows us to draw the following conclusion. If $\mathcal{D}$ optimally represents the function class $\cC$ in the sense of Definition~\ref{def:repopti}, i.e., 
$\gamma^{\ast, \text{eff}}(\cC,\mathcal{D}) = {\gamma^\ast(\cC)}$, and if it is, in addition, effectively representable by neural networks in the sense of Definition~\ref{def:wellrep}, then, due to Theorem \ref{thm:EffRepNN}, which states that $\gamma_{\cN}^{\ast, \text{eff}}(\mathcal{C}) \le {\gamma^\ast(\cC)}$, we have $\gamma_{\cN}^{\ast, \text{eff}}(\mathcal{C})={\gamma^\ast(\cC)}$ and hence $\cC$ is optimally representable by neural networks in the sense of Definition~\ref{def:NNopti}.
\end{remark}

\begin{proof}[Proof of Theorem~\ref{theo:EncodeOfNeuralNetworks}]
    Let $\gamma'\in(\gamma,\gamma^{\ast, \text{eff}}(\cC,\mathcal{D}))$. 
    According to Definition~\ref{def:polydepth}, there exist a constant $C\geq 1$ and a polynomial $\pi_1$, such that for every $f\in \cC$, $M\in\N$, there is an index set $I_{f,M}\subseteq \{1,\dots , \pi_1(M)\}$ of cardinality $M$ and coefficients $(c_{i})_{i\in I_{f,M}}$ with $|c_{i}|\le \pi_1(M)$, such that 
    \begin{equation}\label{ENN4}
         \left\|f - \sum_{i \in I_{f,M}} c_{i} \varphi_i\right\|_{L^2(\Omega)} \le \frac{C M^{-\gamma'}}{2}.
    \end{equation}
Let $A:=\max\{1,|\Omega|^{1/2}\}$.
Effective representability of $\mathcal{D}$ according to Definition~\ref{def:wellrep} ensures the existence of a bivariate polynomial $\pi_2$ such that for all $M\in\N$, $i \in I_{f,M}$, there is a neural network $\Phi_{i,M} \in \cN_{d, 1}$ satisfying
        \begin{align}\label{ENN2} 
        \left \|\varphi_i - \Phi_{i,M}\right\|_{L^2(\Omega)} \le \tfrac{C}{4A\pi_1(M)}M^{-(\gamma'+1)}
        \end{align}
with
\begin{align}\begin{split}
\mathcal{M}(\Phi_{i,M}) &\leq \pi_2  \left(\log \left(\left(\tfrac{C}{4A\pi_1(M)}M^{-(\gamma'+1)}\right)^{-1}\right),\log(i)\right)\\
&=\pi_2\left((\gamma'+1)\log(M)+\log\left(\tfrac{4A\pi_1(M)}{C}\right),\log(i)\right), \label{ENN1a}\\
\mathcal{B}(\Phi_{i,M}) &\leq  \pi_2 \left(\left(\tfrac{C}{4A\pi_1(M)}M^{-(\gamma'+1)}\right)^{-1},i \right)=\pi_2 \left(\tfrac{4A\pi_1(M)}{C}M^{\gamma'+1},i \right). 
\end{split}
\end{align}
Consider now for $f\in\mathcal{C}$, $M\in\N$  the networks given by 
\begin{align*}
    \Psi_{f,M}(x):=\sum_{i\in I_{f,M}}c_{i}\Phi_{i,M}(x).
\end{align*}
Due to $\max(I_{f,M})\leq\pi_1(M)$, \eqref{ENN1a} and Lemma~\ref{lem:shared_input_lc} imply the existence of a polynomial $\pi_3$ such that $\L(\Psi_{f,M})\leq \pi_3(\log(M))$, $\M(\Psi_{f,M})\leq M\pi_3(\log(M))$, and $\cB(\Psi_{f,M})\leq\pi_3(M)$, for all $f\in\cC$, $M\in\N$, and, owing to \eqref{ENN2}, we get
\begin{align}\label{ENN5}
   \left\|\Psi_{f,M}-\sum_{i \in I_{f,M}} c_{i} \varphi_i  \right\|_{L^2(\Omega)} 
   \leq\sum_{i\in I_{f,M}}|c_{i}|\tfrac{C}{4A\pi_1(M)}M^{-(\gamma'+1)}
   \leq\tfrac{CM^{-\gamma'}}{4A}\sum_{i=1}^{|I_{f,M}|}\tfrac{\max_{i\in I_{f,M}} \! |c_{i}|}{M\pi_1(M)}\leq\tfrac{CM^{-\gamma'}}{4A}.
\end{align}
Lemma \ref{lem:PolynomiallyboundedImpliesFiniteBitLength} therefore ensures the existence of a polynomial $\pi_4$ such that for all 
$f\in\cC$, $M\in\N$, there is a network $\widetilde{\Psi}_{f,M}\in\cN_{d,1}$ with $(\lceil\pi_4(\log(\frac{4A}{C}M^{\gamma'}))\rceil,\frac{CM^{-\gamma'}}{4A})$-quantized weights satisfying $\L(\widetilde{\Psi}_{f,M})=\L(\Psi_{f,M})$, $\M(\widetilde{\Psi}_{f,M})=\M(\Psi_{f,M})$, 
$\mathcal{B}(\widetilde{\Psi}_{f,M})\leq\mathcal{B}(\Psi_{f,M})+\tfrac{CM^{-\gamma'}}{4A}$, and
\begin{align}\label{ENN3}
            \left\|\Psi_{f,M} - \widetilde{\Psi}_{f,M}\right\|_{L^\infty(\Omega)} \leq \tfrac{CM^{-\gamma'}}{4A}.
\end{align} 
As $\Omega$ is bounded by assumption, we have
\begin{align}\label{psiomega}
    \left\|\Psi_{f,M} - \widetilde{\Psi}_{f,M}\right\|_{L^2(\Omega)}
    \leq|\Omega|^{\frac{1}{2}}\left\|\Psi_{f,M} - \widetilde{\Psi}_{f,M}\right\|_{L^\infty(\Omega)} \leq \tfrac{CM^{-\gamma'}}{4},
\end{align}
for all $f\in\cC$, $M\in\N$.
Combining (\ref{psiomega}) with \eqref{ENN4} and \eqref{ENN5}, we get, for all $f\in\mathcal{C}$, $M\in\N$,
    \begin{align}\begin{split} \label{eq:ErrorBoundedByEpsM} 
            \left\|f - \widetilde{\Psi}_{f,M}\right\|_{L^2(\Omega)} &\leq \left\|f - \sum_{i \in I_{f,M}} c_{i} \varphi_i \right\|_{L^2(\Omega)} \!\!\!+ \left\|\sum_{i \in I_{f,M}} c_{i} \varphi_i - \Psi_{f,M} \right\|_{L^2(\Omega)}
            \!\!\! +  \left\|\Psi_{f,M} - \widetilde{\Psi}_{f,M}\right\|_{L^2(\Omega)}\\
             &\leq CM^{-\gamma'}.
    \end{split}\end{align}
    For $\varepsilon \in (0, 1/2)$ and $f\in\cC$, we now set $ M_\varepsilon := \left\lceil(C/\eps)^{1/\gamma'} \right \rceil$ and
    $$
        \Psi (\varepsilon,f):= \widetilde{\Psi}_{f,M_{\eps}}.
    $$
    Thus, \eqref{eq:ErrorBoundedByEpsM} yields
    \begin{align*}
        \left\|f - \Psi (\varepsilon,f)\right\|_{L^2(\Omega)} \leq CM_\eps^{-\gamma'}\leq \varepsilon.
    \end{align*}
    Next, we note that, for all polynomials $\pi$ and $0\leq m < n$,
        \begin{align*}
        \mathcal{O}(\eps^{-m}\pi(\log(\eps^{-1})))\subseteq \mathcal{O}(\eps^{-n}), \, \eps\to 0.    
    \end{align*}
    As $1/\gamma'<1/\gamma$, this establishes
    \begin{align}\label{ENN6}
     \M(\Psi (\varepsilon,f))\in \mathcal{O}(M_\varepsilon \pi_3(\log(M_\eps)))\subseteq \mathcal{O}(\eps^{-1/\gamma}), \, \eps\to 0.
    \end{align}
    Since $M_\eps$ and $\pi_3$ are independent of $f$, the implicit constant in (\ref{ENN6}) does not depend on $f$. 

Next, note that, in general, an $(n,\eta)$-quantized network is also $(m,\delta)$-quantized for $n\geq m$ and $\eta\leq\delta$, simply as
\begin{align*}
   2^{-m\lceil\log(\delta^{-1})\rceil}\Z\cap[-\delta^{-m},\delta^{-m}]
   \subseteq
   2^{-n\lceil\log(\eta^{-1})\rceil}\Z\cap[-\eta^{-n},\eta^{-n}].
\end{align*}
Since $\frac{CM_\eps^{-\gamma'}}{4A}\leq\eps$ this ensures the existence of a polynomial $\pi$ such that, for every $f\in\mathcal{C}$, $\eps\in(0,1/2)$, the network $\Psi(\eps,f)$ is $(\lceil \pi(\log(\eps^{-1}))\rceil,\eps)$-quantized, $\cL(\Psi(\eps,f))\leq \pi(\log(\eps^{-1}))$, and $\cB(\Psi(\eps,f))\leq\pi(\eps^{-1})$.
With \eqref{ENN6} this establishes the first claim of the theorem. In order to verify the second claim, note that $\Psi(\varepsilon,f)\in\cN^{\pi}_{\M(\Psi(\eps,f)),d,1}$, for all $f\in\mathcal{C}$, $\eps\in(0,1/2)$, which implies 
\begin{align*} 
\sup_{f \in \mathcal{C}}\, \inf_{\Phi \in \cN^{\pi}_{M, d, 1}}  \|f    - \Phi\|_{L^2(\Omega)} \in  
\mathcal{O}(M^{-\gamma}), \, M \rightarrow \infty.
\end{align*}
Therefore, owing to Definition \ref{def:repoptiNN}, we get
\begin{align*}
    \gamma_{\cN}^{\ast, \text{eff}}(\mathcal{C}) \geq \gamma^{\ast, \text{eff}}(\mathcal{C},\mathcal{D}),
\end{align*}
 which concludes the proof.
\end{proof}

\begin{remark}
We note that Theorem \ref{theo:EncodeOfNeuralNetworks} continues to hold for $\Omega=\R^n$ if the elements of $\mathcal{D}=(\varphi_i)_{i \in \N}$ are compactly supported with the size of their support sets growing no more than polynomially in $i$. The technical elements required to show this can be found in the context of the approximation of Gabor dictionaries in the proof of
\Cref{GaborSystemsRep}, but are omitted here for ease of exposition.
\end{remark}

The last piece needed to complete our program is to establish that the conditions in Definition~\ref{def:wellrep} guaranteeing effective representability in neural networks are, indeed, satisfied by a wide variety of dictionaries. 

Inspecting Table \ref{table_opt_exp}, we can see that all
example function classes provided therein are optimally represented either by affine dictionaries, i.e., wavelets, the Haar basis, and curvelets or Weyl-Heisenberg dictionaries, namely Fourier bases and Wilson bases. 
The next two sections will be devoted to proving effective representability of affine dictionaries and Weyl-Heisenberg dictionaries by neural networks, thus allowing us to draw the conclusion that neural networks are universally Kolmogorov-Donoho optimal approximators for all function classes listed in Table~\ref{table_opt_exp}.

\section{Affine Dictionaries are Effectively Representable by Neural Networks}\label{sec:optimalapprox}

The purpose of this section is to establish that \emph{affine dictionaries}, including wavelets \cite{Dau92}, ridgelets \cite{candes:1998Ridgelets}, curvelets \cite{CD02}, shearlets \cite{GKL06}, 
$\alpha$-shearlets and more generally $\alpha$-molecules \cite{GroKKS2016alphaMolecules}, which contain all aforementioned dictionaries as special cases, are effectively representable by neural networks. Due to
Theorem~\ref{theo:EncodeOfNeuralNetworks} and Theorem~\ref{thm:EffRepNN}, this will then allow us to conclude that any function class that is optimally representable---in the sense of Definition \ref{def:repopti}---by an affine dictionary with a suitable generator function is optimally representable by neural networks in the sense of Definition~\ref{def:NNopti}.
By ``suitable'' we mean that the generator function can be approximated well by ReLU networks in a sense to be made precise below.

In order to elucidate the main ideas underlying the general definition of affine dictionaries that are effectively representable by neural networks, we start with a basic example, namely the Haar wavelet dictionary on the unit interval, i.e., the set of functions 
\begin{align*}
    \psi_{n,k}\colon&[0,1]\mapsto\R,\ x\mapsto 2^{\frac{n}{2}}\psi(2^nx-k), \,\,n \in \N_0,\, k=0,\dots,2^n-1,
\end{align*}
with
\begin{align*}
    \psi\colon\R\to\R,\ x\mapsto\begin{cases}
    1, &  x\in [0,1/2)\\
    -1, & x\in [1/2,1)\\
    0, & \mathrm{else}.
    \end{cases}
\end{align*}
We approximate the piecewise constant mother wavelet $\psi$ through a continuous piecewise linear function realized by a neural network as follows
\begin{align*}
    \Psi_\delta(x):=
    \tfrac{1}{2\delta}\rho(x+\delta)
    -\tfrac{1}{2\delta}\rho(x-\delta)
    -\tfrac{1}{\delta}\rho(x-(\tfrac{1}{2}-\delta))
    +\tfrac{1}{\delta}\rho(x-(\tfrac{1}{2}+\delta))
    +\tfrac{1}{2\delta}\rho(x-(1-\delta))
    -\tfrac{1}{2\delta}\rho(x-(1+\delta))
\end{align*}
and, setting $\delta(\eps):=\eps^2$ for $\eps \in (0,1/2)$, let
\begin{align*}
    \Phi_{n,k,\eps}(x):=2^{\frac{n}{2}}\Psi_{\delta(\eps)}(2^nx-k), \,\,n \in \N_0,\, k=0,\dots,2^n-1.
\end{align*}
The basic idea in the approximation of $\psi$ through $\Psi_{\delta}$ is to let the transition regions around $0,1/2,$ and $1$ shrink, as a function of $\eps$, sufficiently fast for the construction to realize an approximation error of no more than $\eps$.
Now, a direct calculation yields that, indeed, for $\eps \in (0,1/2)$,
\begin{align*}
    \|\psi_{n,k}-\Phi_{n,k,\eps}\|_{L^2([0,1])}\leq\eps.
\end{align*}
Moreover, we have $\cM(\Phi_{n,k,\eps}) = 18$ and $\cB(\Phi_{n,k,\eps})\leq \max\{2^{\frac{n}{2}}\eps^{-2},2^n\}$. In order to establish effective representability by neural networks, we need to order the Haar wavelet dictionary suitably. Specifically, we proceed from coarse to fine scales, i.e., we let $(\varphi_i)_{i\in\N}=\mathcal{D}=\{\mathcal{D}_0,\mathcal{D}_1,\dots\}$, with $\mathcal{D}_n:=\{\psi_{n,k}\mapsto\R\colon k=0,\dots,2^n-1\}$, where the ordering within the $\mathcal{D}_n$ may be chosen arbitrarily. Next, note that for every pair $n\in\N_0$, $k\in\{0,\dots,2^n-1\}$, there exists a unique index $i\in\N$ such that $\varphi_i=\psi_{n,k}=\psi_{n(i),k(i)}$ and, owing to $|\mathcal{D}_n|=2^n$, we have $2^{n(i)}\leq i$. Finally, taking $\Phi_{i,\eps}:=\Phi_{n(i),k(i),\eps}$ and $\pi(a,b):=a^2 b + b + 18$, the conditions in Definition~\ref{def:wellrep} for effective representability by neural networks are readily verified.
A more elaborate example, namely spline wavelets, is considered at the end of this section.

We are now ready to proceed to the general definition of affine dictionaries with canonical ordering.
\pagebreak
\subsection{Affine Dictionaries with Canonical Ordering}

\begin{definition}\label{def:affsys}
    Let $d,S\in \mathbb{N}$, $\delta > 0$, $\Omega\subseteq \mathbb{R}^d$ be bounded, and let $g_s\in L^{\infty}(\R^d)$, $s\in\{1,\dots,S\}$, be compactly supported.
    Furthermore, for $s\in\{1,\dots,S\}$, let $J_s\subseteq\N$ and $A_{s,j} \in \R^{d \times d}$, $j \in J_s$, be full-rank and with eigenvalues bounded below by $1$ in absolute value.
    We define the \emph{affine dictionary} $\mathcal{D}\subseteq L^2(\Omega)$ with generator functions $(g_s)_{s = 1}^S$ as
    \begin{align*}
        \mathcal{D}:=&\left\{g_s^{j,e} := \left( | \! \det(A_{s,j})|^{\frac{1}{2}}g_s(A_{s,j}\cdot - \, \delta e)\right)\big|_{\Omega} \colon \ s\in\{1,\dots,S\},\ e\in \mathbb{Z}^d, \ j\in J_s,\ \mathrm{and}\ g_s^{j,e} \neq 0\right\}.
    \end{align*}
    Moreover, we define the sub-dictionaries
    \begin{align*}
        \mathcal{D}_{s,j}&:=\{g_s^{j,e} \in \mathcal{D}: e\in \mathbb{Z}^d\ \mathrm{and}\ g_s^{j,e} \neq 0\},\quad \mathrm{for}\ j\in J_s, \ s\in\{1,\dots,S\}\\
        \mathcal{D}_j&:=\bigcup_{s\in\{1,\dots,S\}\colon j\in J_s}  \mathcal{D}_{s,j},\quad \mathrm{for}\ j\in\N.
    \end{align*}
    We call an affine dictionary canonically ordered if it is arranged according to 
    \begin{equation}
    \label{eq:canonicalordering}
         (\varphi_i)_{i\in\mathbb{N}}
         =\mathcal{D} 
         =\left(\mathcal{D}_1,\mathcal{D}_2,\dots \right),
    \end{equation}
    where the elements within each $\mathcal{D}_j$ may be ordered arbitrarily, and there exist constants $a,c>0$ such that
    \begin{equation}\label{eq:detgrowth}
        \sum_{k=1}^{j-1}|\det(A_{s,k})| \ge c \|A_{s,j}\|_\infty^a, \,\, \text{ for all } j\in J_s\!\setminus\!\{1\},\,  s\in\{1,\dots,S\}. 
    \end{equation}
    We call an affine dictionary nondegenerate if for every $ j\in J_s$, $s\in\{1,\dots,S\}$, the sub-dictionary $\mathcal{D}_{s,j}$ contains at least one element.
    \end{definition}
Note that for sake of greater generality, we associate possibly different sets $J_s\subseteq\N$ with the generator functions $g_s$ and, in particular, also allow these sets to be finite. The Haar wavelet dictionary example above is recovered as a nondegenerate affine dictionary by taking $d=1$, $\Omega=[0,1]$, $S=1$, $J_s=\N$, $g_1=\psi$, $\delta=1$, $A_{1,j}=2^{j-1}$, $a=1$, $c=1/2$, and noting that nondegeneracy is verified as for scale $j$, the sub-dictionary $\mathcal{D}_{s,j}$ contains $2^{j-1}$ elements. 
Moreover, the weights of the networks approximating the individual Haar wavelet dictionary elements grow linearly in the index of the dictionary elements.
This is a consequence of the weights being determined by the dilation factor $2^n$ and $2^{n(i)} \le i$ due to the ordering we chose.
As will be shown below, morally this continues to hold for general nondegenerate affine dictionaries, thereby revealing what informed our definition of canonical ordering. Besides, our notion of canonical ordering is also inspired by the ordering employed in the tail compactness considerations for Besov spaces and orthonormal wavelet dictionaries as detailed in Appendix~\ref{Besov_tail}. 
We remark that \eqref{eq:detgrowth} constitutes a very weak restriction on how fast the size of dilations may grow; in fact, we are not aware of any affine dictionaries in the literature that would violate this condition. 
Finally, we note that the dilations $A_{s,j}$ are not required to be ordered in ascending size, as was the case in the Haar wavelet dictionary example. Canonical ordering does, however, ensure a modicum of ordering.

\subsection{Invariance to Affine Transformations}

Affine dictionaries consist of dilations and translations of a given generator function. It is therefore important to
understand the impact of these operations on the approximability---by neural networks---of a given function. 
As neural networks realize concatenations of affine functions and nonlinearities, it is clear that translations and dilations
can be absorbed into the first layer of the network and the transformed function should inherit the approximability properties of
the generator function. However, what we will have to understand is how the weights, the connectivity, and the
domain of approximation of the resulting network are impacted. The following result makes this quantitative.

\begin{proposition}\label{prop:affscalinv}
    Let $d\in \mathbb{N}$, $p\in[1,\infty]$, and $f\in L^p(\R^d)$. Assume
    that there exists a bivariate polynomial $\pi$ such that for all $D\in \R_+$, $\varepsilon\in(0,1/2)$, there is a network $\Phi_{D,\varepsilon}\in \cN_{d,1}$ satisfying 
    \begin{equation}\label{eq:nodilapp}
        \|f - \Phi_{D,\varepsilon}\|_{L^p([-D,D]^d)} \leq \varepsilon,
    \end{equation}
    with $\M(\Phi_{D,\eps})\leq\pi(\log(\varepsilon^{-1}),\log(\lceil D \rceil))$. Then, for all full-rank matrices $A\in \mathbb{R}^{d \times d}$, and all $e\in \mathbb{R}^d$, $E\in \R_+$, and $\eta\in(0,1/2)$, there is a network $\Psi_{A,e,E,\eta}\in \cN_{d,1}$ satisfying
       \begin{equation*}
        \left\||\!\det(A)|^{\frac{1}{p}}f(A\cdot-\,e) - \Psi_{A,e,E,\eta}\right\|_{L^p([-E,E]^d)} \leq \eta,
    \end{equation*} 
    with $\M(\Psi_{A,e,E,\eta})\leq\pi'(\log(\eta^{-1}),\log(\lceil F \rceil))$ and $\cB(\Psi_{A,e,E,\eta})\leq\max\{\cB(\Phi_{F,\eta}),|\!\det(A)|^{\frac{1}{p}},\|A\|_\infty,\|e\|_\infty\}$, where $F= d E \|A\|_\infty +\|e\|_\infty$ and $\pi'$ is of the same degree as $\pi$.
\end{proposition}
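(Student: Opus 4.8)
The plan is to absorb the affine transformation $x \mapsto Ax - e$ into the input layer of the approximating network $\Phi_{F,\eta}$ and to track how this affects the domain of approximation, the connectivity, and the weight magnitudes. Concretely, I would first determine the right choice of the auxiliary radius $F$. If $\|x\|_\infty \le E$, then $\|Ax - e\|_\infty \le \|A\|_\infty \sum_{j=1}^d |x_j| + \|e\|_\infty \le d E \|A\|_\infty + \|e\|_\infty =: F$; hence the affine image of $[-E,E]^d$ is contained in $[-F,F]^d$. This is exactly the $F$ appearing in the statement, so the choice is forced by this elementary estimate.

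Next I would define $\Psi_{A,e,E,\eta} := \Phi_{F,\eta} \circ W$, where $W\colon \R^d \to \R^d$ is the affine map $W(x) = Ax - e$, viewed as a depth-$1$ network, and then prescale: since the ReLU is positive homogeneous but $|\!\det(A)|^{1/p}$ need not be nonnegative-homogeneous-friendly in a single layer, I would instead fold the scalar $|\!\det(A)|^{1/p}$ into the final affine transformation of $\Phi_{F,\eta}$ (its output layer is affine, so multiplying its output matrix and bias by $|\!\det(A)|^{1/p}$ costs nothing in connectivity). Composition of $W$ with $\Phi_{F,\eta}$ is handled by Lemma~\ref{network_conc}; note, however, that since $W$ is merely an affine map, the composition can be realized simply by replacing the first affine transformation $W_1$ of $\Phi_{F,\eta}$ by $W_1 \circ W$, which is again affine, so no depth is added and the connectivity increases by at most a constant factor depending on $d$ (indeed $\M(\Psi) \le \M(\Phi_{F,\eta}) + d^2 + d$ or so, crudely bounded). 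Then the error bound follows directly: for $x \in [-E,E]^d$,
\begin{align*}
\big| |\!\det(A)|^{\frac1p} f(Ax - e) - \Psi_{A,e,E,\eta}(x)\big| = |\!\det(A)|^{\frac1p}\big| f(Ax-e) - \Phi_{F,\eta}(Ax-e)\big|,
\end{align*}
and after raising to the $p$-th power, integrating over $[-E,E]^d$, and performing the change of variables $y = Ax - e$ (whose Jacobian is exactly $|\!\det(A)|^{-1}$), one gets $\|\,\cdot\,\|_{L^p([-E,E]^d)}^p \le \|f - \Phi_{F,\eta}\|_{L^p([-F,F]^d)}^p \le \eta^p$; the $p=\infty$ case is even simpler. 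Here I would choose $\eta$ on the right-hand copy of $\Phi$ equal to the target $\eta$ directly, since the change of variables contributes no extra constant.

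For the size bounds: $\M(\Psi_{A,e,E,\eta}) \le \M(\Phi_{F,\eta}) + C_d \le \pi(\log(\eta^{-1}), \log(\lceil F\rceil)) + C_d$, which can be absorbed into a new bivariate polynomial $\pi'$ of the same degree as $\pi$ (adding a constant does not change the degree). The weight magnitude bound is read off from the construction: the new first-layer weights are entries of $A_1^{\text{old}} A$ and $A_1^{\text{old}}(-e) + b_1^{\text{old}}$, and the new output weights are $|\!\det(A)|^{1/p}$ times the old ones; all other weights are unchanged. Bounding each of these by a max of the obvious quantities gives $\cB(\Psi) \le \max\{\cB(\Phi_{F,\eta}), |\!\det(A)|^{1/p}, \|A\|_\infty, \|e\|_\infty\}$ up to a possible factor coming from the matrix product $A_1^{\text{old}} A$ — this is the one slightly delicate point. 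If the bound as stated is to hold exactly, one uses that a matrix product introduces a dimension factor $d$, but since the statement allows a polynomial $\pi'$ of the same degree and only fixes $\cB$ up to a max of natural quantities, I would either argue the factor is harmless (e.g. by pushing it into $\pi'$ or into an implicit constant) or, more cleanly, realize the prescaling and the affine change of variables in a separate depth-$1$ front layer (so $W$ stays a standalone network) and invoke Lemma~\ref{network_conc}, which gives $\cB(\Psi) = \max\{\cB(\Phi_{F,\eta}), \cB(W)\}$ with $\cB(W) = \max\{\|A\|_\infty, \|e\|_\infty\}$ and then fold $|\!\det(A)|^{1/p}$ in separately. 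The main obstacle is precisely this bookkeeping around whether the composition is done by matrix-merging (cheaper in depth, but introduces a $d$-factor in weights) or by Lemma~\ref{network_conc} (clean weight bound, costs one extra layer and the $x = \rho(x) - \rho(-x)$ gadget); I would go with the latter to match the stated $\cB$ bound verbatim, noting that one added layer keeps $\M$ within a polynomial of the same degree.
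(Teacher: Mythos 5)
Your proposal is correct and, after you resolve your own deliberation at the end, it lands on essentially the same construction as the paper: the paper also sets $F = dE\|A\|_\infty + \|e\|_\infty$, views $W_{A,e}(x) = Ax - e$ and $W'_A(x) = |\!\det(A)|^{1/p}x$ as separate depth-$1$ networks, forms $\Psi_{A,e,E,\eta} := W'_A \circ \Phi_{F,\eta} \circ W_{A,e}$ via Lemma~\ref{network_conc}, and reads off the error bound from a change of variables exactly as you do. Your detour through the matrix-merging alternative correctly identifies why it does not yield the stated $\cB$ bound (the entries of $A_1^{\text{old}}A$ can exceed $\max\{\cB(\Phi_{F,\eta}),\|A\|_\infty\}$ by a factor of $d$), and your decision to fall back to Lemma~\ref{network_conc} is precisely what makes the bookkeeping clean.
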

\begin{proof}
    By a change of variables, we have for every $\Phi \in \cN_{d,1}$,
    \begin{align}\label{eq:weApplyThisInTheFirstEstimate}
        \big\| |\!\det(A)|^{\frac1p}f(A\cdot- \,  e) - |\!\det(A)|^{\frac1p}\Phi(A\cdot - \, e)\big\|_{L^p([-E,E]^d)}
        =\|f-\Phi\|_{L^p(A\cdot [-E,E]^d\, - \, e)}.
    \end{align}
    Furthermore, observe that
   \begin{align}\label{eq:weApplyThisInTheSecondEstimate}
        A\cdot [-E,E]^d-\,e\subseteq \left[-(d E \|A\|_\infty +\|e\|_\infty), (d E \|A\|_\infty +\|e\|_\infty)\right]^d=[-F,F]^d.
    \end{align}
    Next, we consider the affine transformations $W_{A,e}(x):=Ax-e$, $W'_A(x):=|\!\det(A)|^{\frac{1}{p}}x$ as depth-$1$ networks and take $\Psi_{A,e,E,\eta}:= W'_A\circ\Phi_{F,\eta}\circ W_{A,e}$ according to Lemma~\ref{network_conc}.
    Combining \eqref{eq:weApplyThisInTheFirstEstimate} and \eqref{eq:weApplyThisInTheSecondEstimate} yields
    \begin{align*}
             &\big\| |\!\det(A)|^{\frac1p}f(A\cdot-\,e) - \Psi_{A,e,E,\eta}\big\|_{L^p([-E,E]^d)}
              =  \  \left\|f - \Phi_{F,\eta} \right\|_{L^p(A\cdot[-E,E]^d -\,e)} \leq \ \left\|f - \Phi_{F,\eta} \right\|_{L^p(\left[-F, F \right]^d)} \leq\eta.
    \end{align*}
    The desired bounds on $\M(\Psi_{A,e,E,\eta})$ and $\cB(\Psi_{A,e,E,\eta})$ follow directly by construction.
\end{proof}

\subsection{Canonically Ordered Affine Dictionaries are Effectively Representable}

The next result establishes that canonically ordered affine dictionaries with generator functions that can be approximated well by neural networks are effectively representable by neural networks.

\pagebreak
\begin{theorem}\label{thm:affdicopt} Let $d,S\in \N$, $\Omega\subseteq \mathbb{R}^d$ be bounded with nonempty interior, $(g_s)_{s=1}^S\in L^{\infty}(\R^d)$ compactly supported, and $\mathcal{D}=(\varphi_i)_{i\in \mathbb{N}}\subseteq L^2(\Omega)$ a nondegenerate canonically ordered affine dictionary with generator functions $(g_s)_{s=1}^S$. Assume that there exists a polynomial $\pi$ such that, for all $s\in\{1,\dots,S\}$, $\varepsilon\in(0,1/2)$, there is a network $\Phi_{s,\eps}\in\cN_{d,1}$ satisfying 
    \begin{equation}\label{eq:bootstrapbound}
        \|g_s - \Phi_{s,\eps}\|_{L^2(\R^d)} \leq \varepsilon,
    \end{equation}
    with $\M(\Phi_{s,\eps})\leq\pi(\log(\eps^{-1}))$ and $\mathcal{B}(\Phi_{s,\eps})\leq \pi(\eps^{-1})$.
    Then, $\mathcal{D}$ is effectively representable by neural networks.
\end{theorem}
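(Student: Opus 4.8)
The plan is to verify the three conditions of Definition~\ref{def:wellrep} by explicitly constructing, for each dictionary element $\varphi_i$, a neural network approximant obtained from the generator-function networks $\Phi_{s,\eps}$ via the affine-invariance machinery of Proposition~\ref{prop:affscalinv}. First I would unwind the indexing: each $\varphi_i \in \mathcal{D}$ equals $g_s^{j,e} = |\!\det(A_{s,j})|^{1/2}g_s(A_{s,j}\cdot - \delta e)|_\Omega$ for some uniquely determined $s = s(i)$, $j = j(i)$, $e = e(i)$. Since $\Omega$ is bounded, fix $D_\Omega$ with $\Omega \subseteq [-D_\Omega, D_\Omega]^d$.

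The key step is to control the three quantities $(s, j, \|A_{s,j}\|_\infty, \|\delta e\|_\infty)$ in terms of $i$. For the dilation: canonical ordering \eqref{eq:canonicalordering} groups elements by the index $j$, and the growth condition \eqref{eq:detgrowth} together with non-degeneracy forces $\|A_{s,j}\|_\infty$ to be at most polynomial in $i$. Concretely, the number of elements appearing in $\mathcal{D}_1,\dots,\mathcal{D}_{j}$ is at least $\sum_{k=1}^{j}|\mathcal{D}_{s,k}|$, and because the $g_s$ are compactly supported on a fixed ball while the $A_{s,k}$ have eigenvalues bounded below by $1$, the number of translates $e$ with $g_s^{k,e}\neq 0$ on $\Omega$ is comparable to $|\!\det(A_{s,k})|$ up to constants depending only on $\Omega$ and the support of $g_s$. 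Hence $i \gtrsim \sum_{k=1}^{j-1}|\!\det(A_{s,k})| \ge c\|A_{s,j}\|_\infty^a$, giving $\|A_{s(i),j(i)}\|_\infty \le C i^{1/a}$. The number of layers index $j(i) \le i$ trivially, and $s(i) \le S$ is bounded. For the translation, since $g_s^{j,e}$ must be nonzero on the bounded set $\Omega$ and $g_s$ has fixed compact support, the admissible $\delta e$ satisfy $\|\delta e\|_\infty \le C(1 + D_\Omega\|A_{s,j}\|_\infty) \le C' i^{1/a}$.

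Next I would apply Proposition~\ref{prop:affscalinv} with $p = 2$, $f = g_s$, the network family $\Phi_{D,\eps}$ taken to be $\Phi_{s,\eps}$ from \eqref{eq:bootstrapbound} (which works on all of $\R^d$, so the $D$-dependence is vacuous and the polynomial is just $\pi(\log(\eps^{-1}))$), matrix $A = A_{s(i),j(i)}$, shift $e = \delta e(i)$, domain parameter $E = D_\Omega$, and target accuracy $\eta = \eps$. This yields a network $\Phi_{i,\eps} := \Psi_{A,e,E,\eta}$ with $\|\varphi_i - \Phi_{i,\eps}\|_{L^2(\Omega)} \le \|\,|\!\det(A)|^{1/2}g_s(A\cdot-e) - \Phi_{i,\eps}\|_{L^2([-D_\Omega,D_\Omega]^d)} \le \eps$, and with the bounds $\mathcal{M}(\Phi_{i,\eps}) \le \pi'(\log(\eps^{-1}), \log(\lceil F\rceil))$ where $F = dD_\Omega\|A\|_\infty + \|e\|_\infty$, and $\mathcal{B}(\Phi_{i,\eps}) \le \max\{\mathcal{B}(\Phi_{s,\eps}), |\!\det(A)|^{1/2}, \|A\|_\infty, \|e\|_\infty\}$. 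Using $\|A\|_\infty \le Ci^{1/a}$, $\|e\|_\infty \le C'i^{1/a}$, and $|\!\det(A)| \le (d\|A\|_\infty)^d \le Ci^{d/a}$, we get $\log(\lceil F\rceil) \le C''\log(\lceil i\rceil) + C'''$, so $\mathcal{M}(\Phi_{i,\eps})$ is bounded by a bivariate polynomial in $\log(\eps^{-1})$ and $\log(i)$; and $\mathcal{B}(\Phi_{i,\eps}) \le \max\{\pi(\eps^{-1}), Ci^{d/a}\}$ is bounded by a bivariate polynomial in $\eps^{-1}$ and $i$. Choosing a single bivariate polynomial dominating all these estimates establishes Definition~\ref{def:wellrep}.

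The main obstacle I anticipate is the counting argument that converts the abstract growth hypothesis \eqref{eq:detgrowth} into the concrete polynomial bound $\|A_{s(i),j(i)}\|_\infty \le Ci^{1/a}$: one must carefully argue that the number of nonzero translates $g_s^{k,e}|_\Omega$ at scale $k$ is at least a constant multiple of $|\!\det(A_{s,k})|$ (so that the position $i$ of a scale-$j$ element in the canonical ordering exceeds $\sum_{k=1}^{j-1}|\!\det(A_{s,k})|$ up to constants), using that the $A_{s,k}$ have eigenvalues bounded below by $1$ in modulus, that $\Omega$ has non-empty interior, and that $g_s$ is compactly supported with $g_s \neq 0$. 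The other bookkeeping — tracking how $\log$-polynomial and polynomial bounds compose under Proposition~\ref{prop:affscalinv} and consolidating into a single polynomial $\pi$ — is routine and I would not dwell on it.
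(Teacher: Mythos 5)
Your proposal is correct and follows essentially the same path as the paper's proof: reduce each dictionary element to the generator via the affine-invariance result (Proposition~\ref{prop:affscalinv}), then show that $\|A_{s(i),j(i)}\|_\infty$ and $\|\delta e(i)\|_\infty$ grow at most polynomially in $i$ by combining the canonical-ordering growth condition~\eqref{eq:detgrowth} with the lattice-counting bound $|\mathcal{D}_{s,j}|\gtrsim|\det(A_{s,j})|$ (which you correctly identify as the crux, and which uses non-degeneracy, the non-empty interior of $\Omega$, and compact support of $g_s$). The only cosmetic difference is that you carry a nominal $F$-dependence in the connectivity bound, whereas the paper observes that since the hypothesis gives $\mathcal{M}(\Phi_{s,\eps})\le\pi(\log(\eps^{-1}))$ with no $D$-dependence, the resulting connectivity bound is actually independent of $i$; this does not affect the conclusion.
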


\begin{proof}
By Definition \ref{def:wellrep} we need to establish the existence of a bivariate polynomial $\pi$ such that for each $i\in \mathbb{N}$, $\eta\in(0,1/2)$, there is a network $\Phi_{i,\eta}\in\cN_{d,1}$ satisfying
\begin{align}\label{eq:approxOfDictElements}
           \|\varphi_i- \Phi_{i,\eta}\|_{L^2(\Omega)}\le \eta,
\end{align}
with $\mathcal{M}(\Phi_{i,\eta}) \leq \pi(\log(\eta^{-1}),\log(i))$ and $\mathcal{B}(\Phi_{i,\eta})\leq \pi(\eta^{-1},i)$.
Note that we have
\begin{equation*} 
    \varphi_i = g^{j_i,e_i}_{s_i}=  \left(|\!\det(A_{s_i,j_i})|^{\frac{1}{2}}g_{s_i}(A_{s_i,j_i}\cdot\,-\,\delta e_i)\right)\big|_{\Omega},
\end{equation*}
for $s_i\in \{1,\dots , S\}$, $j_i\in J_{s_i}$, and $e_i\in \mathbb{Z}^d$.
In order to devise networks satisfying~\eqref{eq:approxOfDictElements}, we employ Proposition~\ref{prop:affscalinv}, upon noting that, by virtue of \eqref{eq:bootstrapbound}, the networks $\Phi_{s,\eps}$ satisfy \eqref{eq:nodilapp} with $p=2$, $f=g_s$, for every $D\in\R_+$. 
Consequently Proposition~\ref{prop:affscalinv} yields a connectivity bound that is even slightly stronger than needed, as it is independent of $i$.
It remains to ensure that the desired bound on $\cB(\Phi_{i,\eta})$ holds. This is the case for $\|A_{s_i,j_i}\|_\infty$ and $\|e_i\|_\infty$ both bounded polynomially in $i$.
In order to verify this, we first bound $\|e_i\|_\infty$ relative to $\|A_{s_i,j_i}\|_\infty$.
As the generators $(g_s)_{s=1}^S$ are compactly supported by assumption, there exists $E\in\R_+$ such that, for every $s\in\{1,\dots,S\}$, the support of $g_s$ is contained in $[-E,E]^d$. We thus get, for all $s\in \{1, \dots, S \}$, $j\in J_s$, and $e \in \Z^d$, that
\begin{align*}
    \|\delta e\|_\infty \geq \sup_{x\in\Omega}\|A_{s,j}x\|_\infty+E \implies g_s^{j,e}(x)=0, \, \forall x\in\Omega \implies g_s^{j,e}\notin\mathcal{D}_j.
\end{align*}
Since $\Omega$ is bounded by assumption, there hence exists a constant $c=c(\Omega,(g_s)_{s = 1}^S, \delta, d)$ such that, for all $s\in \{1, \dots, S \}$, $j\in J_s$, and $e \in \Z^d$, we have
\begin{align*}
g_s^{j,e} \in \mathcal{D}_j &\implies \|e\|_\infty \leq c \|A_{s,j}\|_\infty.
\end{align*}
It remains to show that $\|A_{s_i,j_i}\|_\infty$ is polynomially bounded in $i$.
We start by claiming that, for every $s\in\{1,\dots,S\}$, there is a constant $c_s := c_s(\Omega, \delta, d) > 0$ such that
\begin{align}\label{eq:WeNeedThisAssumption}
    |\det(A_{s,j})|\leq c_s|\mathcal{D}_{s,j}|, \text{ for all } 
    j\in J_s.
\end{align}
To verify this claim, first note that $|\mathcal{D}_{s,j}|\geq 1$, for all $s\in\{1,\dots,S\}, j\in J_s$, owing to the nondegeneracy condition. Thus, for
every $s\in\{1,\dots,S\}$, $j\in J_s$, there exist $x_0\in\Omega$ and $e_0\in\Z^d$ such that $g^{j,e_0}_s(x_0)\neq 0$, which implies
\begin{align*}
    g^{j,e}_{s}(x_0+A_{s,j}^{-1}\delta(e-e_0))=|\det(A_{s,j})|^{\frac{1}{2}}g_s(A_{s,j}x_0-\delta e_0)= g^{j,e_0}_s(x_0)\neq 0.
\end{align*}
We can therefore conclude that $x_0+A_{s,j}^{-1}\delta(e-e_0)\in\Omega$ implies $g^{j,e}_s\in\mathcal{D}_{s,j}$. Consequently, we have
\begin{align*}
    |\mathcal{D}_{s,j}|\geq|\{e\in\Z^d \colon x_0+A_{s,j}^{-1}\delta(e-e_0)\in\Omega\}|
    =|\{e\in\Z^d\colon A_{s,j}^{-1}\delta e\in\Omega-x_0\}|
    = |\Z^d\cap \tfrac{1}{\delta}A_{s,j}(\Omega-x_0)|.
\end{align*}
As $\Omega$ was assumed to have nonempty interior, there exists a constant $C=C(\Omega)$ such that 
\begin{align*}
    |\Z^d\cap \tfrac{1}{\delta}A_{s,j}(\Omega-x_0)|\geq C\, {\normalfont vol}\left( \tfrac{1}{\delta}A_{s,j}(\Omega-x_0)\right) = C\,\delta^{-d}|\det(A_{s,j})|\,{\normalfont vol}(\Omega).
\end{align*}
We have hence established the claim \eqref{eq:WeNeedThisAssumption}.
Combining \eqref{eq:detgrowth} and \eqref{eq:WeNeedThisAssumption}, we obtain, for all 
$s_i\in\{1,\dots,S\}$, $j\in J_s\!\setminus\!\{1\}$,
\begin{align*}
    c\|A_{s_i,j_i}\|_\infty^a\leq\sum_{k=1}^{j_i-1}|\det(A_{s_i,k})|\leq c_{s_i}\sum_{k=1}^{j_i-1}|\mathcal{D}_{k,s_i}|\leq c_s i,
\end{align*}
where the last inequality follows from the fact that $\varphi_i \in \mathcal{D}_{j_i,s_i}$ and hence its index $i$ must be larger than the number of elements contained in preceding sub-dictionaries.
This ensures that 
\begin{align*}
    \|A_{s_i,j_i}\|_\infty \le
    \left(\frac{1}{c} \max_{s=1,\dots,S}c_s \right)^{\frac{1}{a}} i^{\frac{1}{a}}+ \max_{s=1,\dots,S}\|A_{s,1}\|_\infty, \quad \text{for all}\,\,i\in\N,
\end{align*}
thereby completing the proof.
\end{proof}

\begin{remark}
Theorem \ref{thm:affdicopt} is restricted, for ease of exposition, to bounded $\Omega$ and compactly supported generator functions $g_s$.
The result can be extended to $\Omega=\R^d$ and to generator functions $g_s$ of unbounded support but sufficiently fast decay. This extension requires additional technical steps and an alternative definition of canonical ordering.
For conciseness we do not provide the details here, but instead refer to the proofs of Theorems \ref{GaborSystemsRep} and \ref{NNGaussian}, which 
deal with the corresponding technical aspects in the context of approximation
of Gabor dictionaries by neural networks. 
\end{remark}

We can now put the results together to conclude a remarkable universality and optimality property of neural networks:
Consider an affine dictionary generated by functions $g_s$ that can be approximated well by neural networks. If this dictionary provides Kolmogorov-Donoho-optimal approximation for a given function class, then so do neural networks.

\begin{theorem}
    \label{thm:optitransgeneral}  Let $d,S\in \N$, $\Omega\subseteq \mathbb{R}^d$ be bounded with nonempty interior, $(g_s)_{s=1}^S\in L^{\infty}(\R^d)$ compactly supported, and $\mathcal{D}=(\varphi_i)_{i\in \mathbb{N}}\subseteq L^2(\Omega)$ a nondegenerate canonically ordered affine dictionary with generator functions $(g_s)_{s=1}^S$. Assume that there exists a polynomial $\pi$ such that, for all $s\in\{1,\dots,S\}$, $\varepsilon\in(0,1/2)$, there is a network $\Phi_{s,\eps}\in\cN_{d,1}$ satisfying $\|g_s - \Phi_{s,\eps}\|_{L^2(\R^d)} \leq \varepsilon$
    with $\M(\Phi_{s,\eps})\leq\pi(\log(\eps^{-1}))$ and $\mathcal{B}(\Phi_{s,\eps})\leq \pi(\eps^{-1})$. Then, we have
      \begin{align*}
        \gamma_{\cN}^{\ast, \text{eff}}(\mathcal{C})\geq\gamma^{\ast, \text{eff}}(\mathcal{C},\mathcal{D})
    \end{align*}
   for all compact function classes $\cC\subseteq L^2(\Omega)$.
    In particular, if $\mathcal{C}$ is optimally representable by $\mathcal{D}$ (in the sense of Definition \ref{def:repopti}), then $\mathcal{C}$ is optimally representable by neural networks (in the sense of Definition \ref{def:NNopti}).
\end{theorem}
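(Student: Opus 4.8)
The plan is to derive Theorem~\ref{thm:optitransgeneral} as an immediate consequence of three results already established in this section: Theorem~\ref{thm:affdicopt}, Theorem~\ref{theo:EncodeOfNeuralNetworks}, and Theorem~\ref{thm:EffRepNN}. The first step is to verify that the hypotheses here are verbatim those of Theorem~\ref{thm:affdicopt}: $\Omega$ is bounded with non-empty interior, the $g_s$ are compactly supported elements of $L^{\infty}(\R^d)$, $\mathcal{D}=(\varphi_i)_{i\in\N}$ is a non-degenerate canonically ordered affine dictionary with generators $(g_s)_{s=1}^S$, and each $g_s$ is approximable by a network $\Phi_{s,\eps}\in\cN_{d,1}$ with $\|g_s-\Phi_{s,\eps}\|_{L^2(\R^d)}\le\eps$, $\M(\Phi_{s,\eps})\le\pi(\log(\eps^{-1}))$, and $\mathcal{B}(\Phi_{s,\eps})\le\pi(\eps^{-1})$. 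Hence Theorem~\ref{thm:affdicopt} applies and yields that $\mathcal{D}$ is effectively representable by neural networks in the sense of Definition~\ref{def:wellrep}.

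Next I would invoke the transference principle. Since $\Omega$ is bounded and $\mathcal{D}$ is now known to be effectively representable by neural networks, Theorem~\ref{theo:EncodeOfNeuralNetworks} applies directly and gives
\[
\gamma_{\cN}^{\ast, \text{eff}}(\mathcal{C}) \ge \gamma^{\ast, \text{eff}}(\mathcal{C},\mathcal{D})
\]
for every function class $\mathcal{C}\subseteq L^2(\Omega)$, which is the first assertion. For the ``in particular'' statement, assume $\mathcal{C}$ is optimally representable by $\mathcal{D}$, i.e., $\gamma^{\ast, \text{eff}}(\mathcal{C},\mathcal{D})=\gamma^\ast(\mathcal{C})$ by Definition~\ref{def:repopti}. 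Combining this with the inequality just obtained and with the upper bound $\gamma_{\cN}^{\ast, \text{eff}}(\mathcal{C})\le\gamma^\ast(\mathcal{C})$ of Theorem~\ref{thm:EffRepNN} yields
\[
\gamma^\ast(\mathcal{C}) = \gamma^{\ast, \text{eff}}(\mathcal{C},\mathcal{D}) \le \gamma_{\cN}^{\ast, \text{eff}}(\mathcal{C}) \le \gamma^\ast(\mathcal{C}),
\]
so all four quantities coincide; in particular $\gamma_{\cN}^{\ast, \text{eff}}(\mathcal{C})=\gamma^\ast(\mathcal{C})$, which is precisely optimal representability of $\mathcal{C}$ by neural networks in the sense of Definition~\ref{def:NNopti}.

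There is no genuine obstacle here beyond bookkeeping: the theorem is essentially a packaging of results whose proofs make up the bulk of this section. The one point where I would be careful is checking that the approximation of the generators is required in $L^2(\R^d)$ rather than merely on $\Omega$ — this is exactly what Theorem~\ref{thm:affdicopt} (through Proposition~\ref{prop:affscalinv}) needs, because the dilated and translated copies $g_s(A_{s,j}\cdot - \delta e)\big|_\Omega$ probe the generator over arbitrary regions of $\R^d$, so controlling $g_s$ only on $\Omega$ would not suffice. The hypothesis of Theorem~\ref{thm:optitransgeneral} supplies precisely this $L^2(\R^d)$ bound, so the chain of implications goes through without further work.
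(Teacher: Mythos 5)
Your proof is correct and follows exactly the paper's own route: Theorem~\ref{thm:affdicopt} to establish effective representability of $\mathcal{D}$, Theorem~\ref{theo:EncodeOfNeuralNetworks} for the inequality, and Theorem~\ref{thm:EffRepNN} for the optimality transfer. The additional remark about why the $L^2(\R^d)$-approximability of the generators (rather than $L^2(\Omega)$) is needed is accurate and a good sanity check, though not strictly required once the three cited theorems are taken as given.
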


\begin{proof}
    The first statement follows from Theorem~\ref{theo:EncodeOfNeuralNetworks} and Theorem~\ref{thm:affdicopt}, the second from Theorem~\ref{thm:EffRepNN}.
\end{proof}

\subsection{Spline wavelets}

We next particularize the results developed above to show that
neural networks Kolmogorov-Donoho optimally represent all function classes $\cC$ that are optimally representable by spline wavelet dictionaries.
As spline wavelet dictionaries have B-splines as generator functions, we start by showing how B-splines can be realized through neural networks. 
For simplicity of exposition, we restrict ourselves to the univariate case throughout.
\begin{definition}
Let $N_1 := \chi_{[0,1]}$ and for $m\in\N$, define
\begin{align*}
  \quad N_{m+1} := N_1 * N_{m},
  \end{align*}
where $ * $ stands for convolution.
We refer to $N_m$ as the \emph{univariate cardinal B-spline of order $m$}.
\end{definition}

Recognizing that B-splines are piecewise polynomial, we can build on Proposition \ref{relu_poly} to get the following statement on the approximation of B-splines by deep neural networks.

\begin{lemma}\label{thm:EffApproxWithSplines}
Let $m \in \N$. There exists a constant $C>0$ such that for all $\eps\in(0,1/2)$, there is a neural network $\Phi_{\varepsilon} \in \cN_{1,1}$ satisfying 
\begin{align*}
\|\Phi_{\varepsilon}-N_{m}\|_{L^{\infty}(\R)} \leq \varepsilon,
\end{align*}
with $\M(\Phi_{\eps})\leq C\log(\eps^{-1})$ and $\mathcal{B}(\Phi_{\eps})\leq 1$.
\end{lemma}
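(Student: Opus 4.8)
The statement is Lemma~\ref{thm:EffApproxWithSplines}, asserting that the cardinal B-spline $N_m$ can be approximated in $L^\infty(\R)$ to within $\eps$ by a ReLU network of connectivity $O(\log(\eps^{-1}))$ and weight magnitude at most $1$. The plan is to exploit two structural facts about $N_m$: first, it is \emph{compactly supported} with $\supp N_m = [0,m]$; second, on each of the unit intervals $[k,k+1]$, $k\in\{0,1,\dots,m-1\}$, it agrees with a \emph{fixed polynomial} $p_{m,k}$ of degree $m-1$ (this is standard for cardinal B-splines, obtainable from the recursion $N_{m+1}=N_1*N_m$, i.e.\ $N_{m+1}(x)=\int_0^1 N_m(x-t)\,dt$, which is exactly integration and hence maps piecewise polynomials of degree $m-1$ to piecewise polynomials of degree $m$ with breakpoints at the integers). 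Moreover $N_m$ is continuous (indeed $C^{m-2}$) and $\|N_m\|_{L^\infty(\R)}\le 1$, and the polynomial pieces have coefficients bounded by some constant $D_m$ depending only on $m$.

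\textbf{Key steps.} First I would invoke Proposition~\ref{relu_poly} on each polynomial piece: since $N_m$ restricted to $[k,k+1]$ is a degree-$(m-1)$ polynomial with uniformly bounded coefficients, and $m$ is fixed, Proposition~\ref{relu_poly} (applied with fixed $m-1$, fixed coefficient bound, and $D$ a fixed constant covering $[0,m]$, shifting the interval to the relevant position) yields a network of depth $O(\log(\eps^{-1}))$, width $\le 9$, and weight magnitude $\le 1$ approximating that piece to within, say, $\eps/(2m)$ on $[k,k+1]$. Second, I would need to glue the $m$ pieces together into a single continuous function supported on $[0,m]$. Here the cleanest route is: rather than literally summing localized copies (which risks introducing discontinuities at the breakpoints once the polynomial approximations no longer match exactly), approximate $N_m$ directly by first writing $N_m(x) = \sum_{k=0}^{m-1} q_{m,k}(x)\,\mathbf{1}_{[k,k+1)}(x)$ and using the telescoping/continuity structure: define $\widetilde N_m$ on $[0,m]$ by $\widetilde N_m(x)=p_{m,0}(x)$ on $[0,1]$ and, inductively, $\widetilde N_m(x) = \widetilde N_m|_{[k,k+1]}$ continued by the correct polynomial; equivalently, write $N_m = \sum_{k=0}^{m-1} r_{m,k}\cdot \rho(\,\cdot\,-k)^{?}$... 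Rather than that, the slicker approach is to note that $N_m$ extended by zero equals a single \emph{polynomial spline}, and use the representation $N_m(x)=\frac{1}{(m-1)!}\sum_{k=0}^{m}(-1)^k\binom{m}{k}\rho(x-k)^{m-1}$ (the well-known truncated-power-function formula for cardinal B-splines). This reduces the problem to approximating the $m+1$ functions $x\mapsto \rho(x-k)^{m-1}$ on the bounded domain $[0,m]$, each of which is (a shift of) the monomial-type function $t\mapsto \rho(t)^{m-1}$, approximable via $\rho(t)^{m-1} = (\rho(t))^{m-1}$ composed with the power network from Proposition~\ref{relu_poly} applied to the nonnegative argument $\rho(t)$.

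\textbf{Assembling.} So concretely: (1) use $\rho$ (one layer, weights in $\{-1,0,1\}$) to form $\rho(x-k)$ for $k=0,\dots,m$ in parallel via Lemma~\ref{network_parallelization}; (2) apply to each the fixed-degree power-approximation network from Proposition~\ref{relu_poly} (with $D=m$, degree $m-1$, error $\eps'$ chosen $\asymp \eps$ up to the fixed constants $\frac{1}{(m-1)!}\binom{m}{k}$), giving networks of depth $O(\log(\eps^{-1}))$, width $\le 9$, weights $\le 1$; (3) take the linear combination with coefficients $\frac{(-1)^k}{(m-1)!}\binom{m}{k}$ via Lemma~\ref{network_linearcombination} — but since these coefficients exceed $1$ in magnitude, I would instead, as in the proof of Proposition~\ref{relu_mult}, absorb them by rescaling the approximation-error budget and then restoring scale through an extra scalar-multiplication network (Lemma~\ref{lem:scalar_mult}) whose weights are bounded by $1$ at the cost of a further $O(\log(\eps^{-1}))$ additive depth — so the final weight-magnitude bound $\mathcal{B}(\Phi_\eps)\le 1$ is preserved. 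Error control: on $\R\setminus[0,m]$ the truncated-power formula gives $N_m\equiv 0$ and the network also evaluates the same polynomial-in-$\rho$ combination, which is identically zero there by the defining identity of the B-spline — BUT the \emph{approximations} to $\rho(x-k)^{m-1}$ are only valid on $[-m,m]$ (or wherever Proposition~\ref{relu_poly} was applied), so I must handle $x$ outside $[0,m]$ separately. The standard fix is to precompose with a ReLU ``clipping'' network that maps $x$ to $\min(\max(x,0),m)$ (realizable with weights $\le 1$, constant depth) so that for $x\notin[0,m]$ the composite outputs exactly the value at the endpoint $0$ or $m$, which is $N_m(0)=N_m(m)=0$; since $N_m$ is continuous this keeps the $L^\infty(\R)$ error controlled.

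\textbf{Main obstacle.} The genuine difficulty is the \textbf{$L^\infty(\R)$ (not just $L^\infty$ on a bounded set) requirement combined with the uniform weight bound $1$}: one must ensure exact vanishing outside $[0,m]$, which forces the clipping construction above, and one must ensure the large binomial coefficients $\binom{m}{k}/(m-1)!$ do not blow up the weight magnitude, which forces the depth-for-magnitude trade-off of Lemma~\ref{lem:scalar_mult}. Both are routine given the machinery already developed, but getting the bookkeeping right — in particular verifying that the clipping composition does not destroy the polynomial-approximation error estimate and that all the scalar rescalings keep $\mathcal{B}\le 1$ while only adding $O(\log(\eps^{-1}))$ to depth and $O(1)$ to width (so that $\mathcal{M}(\Phi_\eps)\le\mathcal{L}(\Phi_\eps)\mathcal{W}(\Phi_\eps)(\mathcal{W}(\Phi_\eps)+1)\le C\log(\eps^{-1})$ via Remark~\ref{remark:non-degeneracy}) — is the crux. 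An alternative that avoids the clipping subtlety: approximate piecewise directly using that $N_m$ is continuous, writing it as a telescoping sum of ``hat-like'' increments each supported on $[k-1,k+1]$, but this essentially reproduces the same issues, so I would go with the truncated-power representation.
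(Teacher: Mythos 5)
Your argument shares its core with the paper's: both write $N_m$ as a signed sum of truncated powers and approximate the constituent monomials by Proposition~\ref{relu_poly}, absorbing the $O(1)$-sized coefficients $\binom{m}{k}/(m-1)!$ (which are bounded by $2$) with a constant-depth rescaling. Where you genuinely diverge is the step that upgrades the estimate from a bounded interval to $L^\infty(\R)$. The paper postcomposes the polynomial approximant $\widetilde\Phi_\eps$ with a multiplication network against a trapezoidal cutoff $\Gamma$ (piecewise linear, $\equiv 1$ on $[0,m]$, $\equiv 0$ off $[-1,m+1]$), using the crucial property from Proposition~\ref{relu_mult} that $\Phi^{\mathrm{mult}}_{D,\eps}(y,0)=0$ for all $y$, so the resulting network vanishes identically outside $[-1,m+1]$. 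You instead precompose with the ReLU clipping $\kappa(x)=\min(\max(x,0),m)=\rho(x)-\rho(x-m)$, so that for $x\notin[0,m]$ the network evaluates $\widetilde\Phi_\eps$ at the endpoint $0$ or $m$, where $N_m$ vanishes and hence $\widetilde\Phi_\eps$ is $\eps$-small. Both mechanisms are sound and rest on the same structural facts, namely that $N_m$ is continuous and $N_m(0)=N_m(m)=0$ (so both arguments implicitly need $m\ge 2$; for $m=1$, $N_1=\chi_{[0,1]}$ is discontinuous and cannot be sup-norm $\eps$-approximated by any continuous function for $\eps<1/2$, a glitch already present in the paper). Your clipping is marginally leaner in that it costs only $O(1)$ extra depth rather than the $O(\log(\eps^{-1}))$ depth of the auxiliary multiplication network, but unlike the paper's route it does not yield exact vanishing off a compact set, only an $\eps$-bound; this suffices for the statement. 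One small inaccuracy in your write-up: you say the clipping ``forces exact vanishing outside $[0,m]$'' — it does not, it forces $|\Phi_\eps|\le\eps$ there, which is what is actually needed. You correctly flag that both the shift biases $-k$ and the clipping bias $-m$ exceed $1$ in magnitude and must be removed by Lemma~\ref{lem:scalar_mult} / Proposition~\ref{WDtradeoff}; the paper handles the analogous issue via Corollary~\ref{cor:matrix_mult}.
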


\begin{proof}
The proof is based on the following representation \cite[Eq. 19]{unser:1997Bspline}
\begin{align}\label{spline-rep-unser}
N_{m}(x) = \frac{1}{m!} \sum_{k=0}^{m+1} (-1)^k {m+1\choose k} \rho((x-k)^m).
\end{align}
While $N_m$ is supported on $[0,m]$, the networks $\Phi_{\eps}$ can have support outside $[0,m]$ as well. We only need to ensure that
$\Phi_{\eps}$ is ``close'' to $N_m$ on $[0,m]$ and at the same time ``small'' outside the interval $[0,m]$. To accomplish this, we first approximate $N_m$ on the slightly larger domain $[-1,m+1]$ by a linear combination of networks realizing shifted monomials according to (\ref{spline-rep-unser}), and then multiply the resulting network by another one that takes on the value $1$ on $[0,m]$ and $0$ outside of $[-1,m+1]$.
Specifically, we proceed as follows.
Proposition \ref{relu_poly} ensures the existence of a constant $C_1$ such that for all $\eps\in(0,1/2)$, 
there is a network $\Psi_{m+2,\eps}\in\cN_{1,1}$ satisfying 
\begin{align*}
        \|\Psi_{m+2,\eps}(x) - x^m \|_{ L^{\infty}([-(m+2),m+2])} \leq \tfrac{\eps}{4(m+2)},
\end{align*}
with $\M(\Psi_{m+2,\eps})\leq C_1 \log(\eps^{-1})$ and $\mathcal{B}(\Psi_{m+2,\eps})\leq 1$. Note that we did not make the dependence of $\mathcal{M}(\Psi_{m+2,\eps})$ on $m$ explicit as we consider $m$ to be fixed.
Next, let $T_k(x):=x-k$ and observe that $\rho((x-k)^m)$ can be realized as a neural network according to $\rho\circ\Psi_{m+2,\eps}\circ T_k$, where $T_k$ is taken pursuant to Corollary~\ref{cor:matrix_mult}.
Next, we define, for $\eps\in(0,1/2)$, the network 
\begin{align*}
    \widetilde{\Phi}_{\eps}:= \frac{1}{m!} \sum_{k=0}^{m+1} (-1)^k {m+1\choose k}\, \rho\circ\Psi_{m+2,\eps}\circ T_k
\end{align*}
and note that 
\begin{equation*}
   \frac{1}{m!} {m+1\choose k} = \frac{m+1}{k! (m-k+1)!}\leq 2,
\end{equation*}
for $k=0,\dots,m+1$. As $\rho$ is $1$-Lipschitz, we have, for all $\eps\in(0,1/2)$,
\begin{align}\begin{split}\label{Spline_proof_est}
    \|\widetilde{\Phi}_{\varepsilon} - N_{m}\|_{L^{\infty}([-1,m+1])}&\leq \sum_{k=0}^{m+1}\frac{1}{m!} {m+1\choose k} \|\rho\circ\Psi_{m+2,\eps}\circ T_k - \rho\circ T_k^m \|_{ L^{\infty}([-1,m+1])} \\
    &\leq 2\sum_{k=0}^{m+1}\|\Psi_{m+2,\eps}(x) - x^m \|_{ L^{\infty}([-(m+2),m+2])}\leq\tfrac{\eps}{2}.
\end{split}\end{align}
Let now $\Gamma(x):=\rho(x+1)-\rho(x)-\rho(x-m)+\rho(x-(m+1))$, note that $0 \le \Gamma(x) \le 1$, and take $\Phi^{\mathrm{mult}}_{1+\eps/2,\eps/2}$ to be the multiplication network from Lemma~\ref{relu_mult}.
We define $\Phi_\eps:= \Phi^{\mathrm{mult}}_{1+\eps/2,\eps/2} \circ (\widetilde{\Phi}_\eps,\Gamma)$ according to Lemma~\ref{network_conc} and Lemma~\ref{lem:shared_input_para} and note that
\begin{align}\label{Spline_proof_est2}
    \|\Phi_{\eps}-N_m\|_{L^\infty(\R)}\leq
    \|\Phi^{\mathrm{mult}}_{1+\eps/2,\eps/2} \circ (\widetilde{\Phi}_\eps,\Gamma)-\widetilde{\Phi}_\eps\cdot\Gamma\|_{L^\infty([-1,m+1])}
    +\|\widetilde{\Phi}_\eps\cdot\Gamma-N_m\|_{L^\infty([-1,m+1])}
\end{align}
as both $N_m$ and $\Gamma$ vanish outside $[-1,m+1]$ and $\Phi^{\mathrm{mult}}_{1+\eps/2,\eps/2}$ delivers zero whenever at least one of its inputs is zero. Note that the first term on the right-hand-side of (\ref{Spline_proof_est2}) is upper-bounded by $\tfrac{\eps}{2}$ as a consequence of $N_{m}(x) \le 1$ and hence $\widetilde{\Phi}_\eps(x)\leq 1+\tfrac{\eps}{2}$, for $x\in[-1,m+1]$, owing to \eqref{Spline_proof_est}.
For the second term, we split up the interval $[-1,m+1]$ and first note that, for $x\in[0,m]$, $\Gamma(x)=1$, which implies
$\|\widetilde{\Phi}_\eps\cdot\Gamma-N_m\|_{L^\infty([0,m])} = \|\widetilde{\Phi}_\eps-N_m\|_{L^\infty([0,m])} \le \eps/2$, again owing to (\ref{Spline_proof_est}). For $x\in[-1,m+1]\setminus[0,m]$, we have $N_m(x)=0$ and $\Gamma(x)\leq 1$, which yields
\begin{align*}
    |\widetilde{\Phi}_\eps(x)\cdot\Gamma(x)-N_m(x)|\leq|\widetilde{\Phi}_\eps(x)|\leq |\widetilde{\Phi}_\eps(x)-N_m(x)|+|N_m(x)|=|\widetilde{\Phi}_\eps(x)-N_m(x)| \le \eps/2,
\end{align*}
again by (\ref{Spline_proof_est}). In summary, 
\eqref{Spline_proof_est} hence ensures that the second term in \eqref{Spline_proof_est2} is also upper-bounded by $\tfrac{\eps}{2}$ and therefore
$\|\Phi_{\varepsilon}-N_{m}\|_{L^{\infty}(\R)} \leq \varepsilon$.
Combining Lemma~\ref{network_conc}, Proposition~\ref{relu_mult}, Corollary~\ref{cor:matrix_mult}, Lemma~\ref{lem:finite_width_linearcombination}, and Lemma~\ref{lem:shared_input_para} establishes the desired bounds on $\M(\Phi_{D,\eps})$ and $\cB(\Phi_{D,\eps})$. 
\end{proof}

\begin{remark}\label{rem:L2forsplines}
As both $N_m$ and the approximating networks $\Phi_{\varepsilon}$ we constructed in the proof of Lemma~\ref{thm:EffApproxWithSplines} are supported in $[-1,m+1]$, we have $\|\Phi_{\varepsilon}-N_{m}\|_{L^2(\R)}\leq (m+2)^{1/2}\|\Phi_{\varepsilon}-N_{m}\|_{L^\infty(\R)}$, which shows that 
Lemma~\ref{thm:EffApproxWithSplines} continues to hold when the approximation error is measured in $L^2(\R)$-norm, albeit with a different constant $C$.
\end{remark}

We are now ready to introduce spline wavelet dictionaries. For $n,j \in \mathbb{Z}$,
set
\begin{align*}
V_n := \text{clos}_{L^2}\Big(\spann \{ N_m(2^n x - k) : k \in \mathbb{Z}\}\Big),
\end{align*}
where $\text{clos}_{L^2}$ denotes closure with respect to $L^2$-norm. Spline spaces $V_n$, $n \in \mathbb{Z}$, constitute 
a multiresolution analysis \cite{Mallat1989} of $L^2(\mathbb{R})$ according to
\[ \{0\} \subseteq \dots V_{-1} \subseteq V_0 \subseteq V_1 \subseteq \dots \subseteq L^2(\mathbb{R}).\]
Moreover, with the orthogonal complements $(\dots, W_{-1},W_0,W_1, \dots)$ such that $V_{n+1} = V_n \oplus W_n$, where $\oplus$ denotes the orthogonal sum, 
we have
\[ L^2(\mathbb{R}) = V_0 \oplus \bigoplus_{k = 0}^{\infty} W_k.\] 

\begin{theorem}[{\cite[Theorem 1]{chui-1992Wavelet}}] \label{spline-wavelet}
Let $m\in\N$. The $m$-th order spline 
\begin{equation} \label{def_wavelet}
\psi_m(x) = \frac{1}{2^{m-1}} \sum_{j=0}^{2m-2} (-1)^j N_{2m}(j+1) \frac{d^m}{dx^m} N_{2m}(2x-j), \end{equation}
with support $[0,2m-1]$, is a basic wavelet that generates $W_0$ and thereby all the spaces $W_n$, $n \in \mathbb{Z}$. 
Consequently, the set 
\begin{equation}
    \label{spline_system}
    \Wcal_m:=\{ \psi_{k,n}(x) =  2^{n/2} \psi_m(2^n x - k): n \in \mathbb{N}_0, k \in \mathbb{Z}\} \cup \{ \phi_{k}(x) =  N_m(x - k): k \in \mathbb{Z}\}
\end{equation}
is a countable complete orthonormal wavelet basis in $L^2(\mathbb{R})$.
\end{theorem}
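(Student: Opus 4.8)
The plan is to run the classical multiresolution-analysis (MRA) construction of Mallat \cite{Mallat1989}, specialised to cardinal B-splines, and then to identify \eqref{def_wavelet} explicitly as a generator of the detail space $W_0$. First I would verify that $\{V_n\}_{n\in\Z}$ is an MRA of $L^2(\R)$: nestedness $V_n\subset V_{n+1}$ follows from the two-scale refinement relation $N_m(x)=2^{1-m}\sum_{k=0}^m\binom{m}{k}N_m(2x-k)$; dilation covariance is built into the definition of the $V_n$; the intersection-triviality $\bigcap_n V_n=\{0\}$ and the density $\overline{\bigcup_n V_n}=L^2(\R)$ are the standard facts for B-splines (using the partition of unity $\sum_k N_m(\cdot-k)\equiv 1$ and the decay of $\widehat{N_m}$). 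Schoenberg's two-sided bound on the Euler--Frobenius symbol $\Phi_m(\xi):=\sum_{k\in\Z}|\widehat{N_m}(\xi+2\pi k)|^2$, namely $0<A\le\Phi_m(\xi)\le B<\infty$, shows that $\{N_m(\cdot-k)\}_{k\in\Z}$ is a Riesz basis of $V_0$, and rescaling gives one of every $V_n$. Defining $W_n$ as the orthogonal complement of $V_n$ in $V_{n+1}$ then yields $V_{n+1}=V_n\oplus W_n$, pairwise orthogonality of the $W_n$ and of each $W_n$ ($n\ge 0$) with $V_0$, and, by telescoping together with density, the decomposition $L^2(\R)=V_0\oplus\bigoplus_{k\ge 0}W_k$. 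It therefore suffices to show that $\{\psi_m(\cdot-k)\}_{k\in\Z}$ with $\psi_m$ as in \eqref{def_wavelet} is a (Riesz) basis of $W_0$; rescaling handles all $W_n$, and assembling the basis of $V_0$ with the bases of $W_0,W_1,\dots$ produces $\Wcal_m$.

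The heart of the matter---and the step I expect to be the main obstacle---is showing that the explicit formula \eqref{def_wavelet} really does generate $W_0$. I would pass to the Fourier side. With refinement mask $P_m(\xi)=((1+e^{-i\xi})/2)^m$, so that $\widehat{N_m}(\xi)=P_m(\xi/2)\widehat{N_m}(\xi/2)$, a function $\psi=\sum_k q_k N_m(2\cdot-k)\in V_1$ lies in $W_0$ and generates it iff its mask $Q_m$ (defined by $\widehat\psi(\xi)=Q_m(\xi/2)\widehat{N_m}(\xi/2)$) satisfies the vanishing relation $P_m(\xi)\overline{Q_m(\xi)}\Phi_m(\xi)+P_m(\xi+\pi)\overline{Q_m(\xi+\pi)}\Phi_m(\xi+\pi)=0$ together with nondegeneracy of the associated $2\times 2$ modulation matrix; the canonical solution is $Q_m(\xi)=-e^{-i\xi}\,\overline{P_m(\xi+\pi)}\,\Phi_m(\xi+\pi)$. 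I would then match \eqref{def_wavelet} to this $Q_m$ using two classical identities: (i) $\tfrac{d^m}{dx^m}N_{2m}(x)=\sum_{i=0}^m(-1)^i\binom{m}{i}N_m(x-i)$, i.e.\ the $m$-th derivative of the order-$2m$ B-spline equals the $m$-th finite difference of $N_m$ (immediate from $(i\xi)^m\widehat{N_m}(\xi)^2=(1-e^{-i\xi})^m\widehat{N_m}(\xi)$), which rewrites \eqref{def_wavelet} as a finite combination $\psi_m=\sum_k q_k N_m(2\cdot-k)$; and (ii) the Euler--Frobenius identity $\langle N_m,N_m(\cdot-l)\rangle=N_{2m}(l+m)$, equivalently $\Phi_m(\xi)=e^{i(m-1)\xi}\sum_{j=0}^{2m-2}N_{2m}(j+1)e^{-ij\xi}$, which shows that the coefficients $N_{2m}(j+1)$ in \eqref{def_wavelet}, together with the sign $(-1)^j$ (which realises the shift $\xi\mapsto\xi+\pi$), produce exactly the factor $\Phi_m(\xi+\pi)$, while $\tfrac{d^m}{dx^m}N_{2m}$ produces the factor $\overline{P_m(\xi+\pi)}$. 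A short computation then identifies the mask of \eqref{def_wavelet} with the canonical $Q_m$ up to a unimodular factor and an integer shift (which only reindex the translates), and the nondegeneracy of the modulation matrix is equivalent to $\Phi_m$ never vanishing, already established above. Hence $\{\psi_m(\cdot-k)\}_{k\in\Z}$ is a Riesz basis of $W_0$. The support claim is then immediate: $N_{2m}$ is supported in $[0,2m]$, so $\tfrac{d^m}{dx^m}N_{2m}(2x-j)$ is supported in $[\tfrac{j}{2},\tfrac{j+2m}{2}]$, and the union over $j=0,\dots,2m-2$ is $[0,2m-1]$.

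Finally I would assemble the pieces. Rescaling the Riesz basis of $W_0$ gives that $\{2^{n/2}\psi_m(2^n\cdot-k)\}_{k\in\Z}$ is a Riesz basis of $W_n$; combining this with the Riesz basis $\{N_m(\cdot-k)\}_{k\in\Z}$ of $V_0$ and with $L^2(\R)=V_0\oplus\bigoplus_{k\ge0}W_k$, where the $W_k$ are mutually orthogonal and orthogonal to $V_0$, shows that $\Wcal_m$ in \eqref{spline_system} is an unconditional basis of $L^2(\R)$ with the asserted cross-scale orthogonality (semi-orthogonality). The one delicate point in reading the statement is the word ``orthonormal'': within a fixed scale the translates of $\psi_m$ are mutually orthogonal only for $m=1$ (the Haar case), while for $m\ge 2$ they form a Riesz---not an orthonormal---basis of $W_n$; the content of \cite[Theorem~1]{chui-1992Wavelet} is precisely this Riesz/unconditional-basis statement together with the cross-scale orthogonality, and that is what the argument above delivers.
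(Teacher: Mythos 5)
The paper does not prove this theorem---it is imported by citation to \cite[Theorem 1]{chui-1992Wavelet} and used as a black box, so there is no in-paper argument to compare your proposal against. Your sketch is nevertheless a correct outline of the standard argument behind the Chui--Wang construction: establish that the B-spline spaces $V_n$ form an MRA with $\{N_m(\cdot-k)\}_k$ a Riesz basis of $V_0$ via two-sided bounds on the Euler--Frobenius symbol $\Phi_m$, pass to the Fourier side, and identify the mask of \eqref{def_wavelet} with the canonical semi-orthogonal generator $Q_m(\xi)=-e^{-i\xi}\,\overline{P_m(\xi+\pi)}\,\Phi_m(\xi+\pi)$ of $W_0$ using the differentiation identity $\frac{d^m}{dx^m}N_{2m}(x)=\sum_{i=0}^m(-1)^i\binom{m}{i}N_m(x-i)$ (which the paper itself invokes a few lines later in the proof of Theorem \ref{thm:optitransgeneral_spline}) together with the autocorrelation identity $\langle N_m,N_m(\cdot-l)\rangle=N_{2m}(m+l)$. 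Your closing caveat is also well-founded and worth flagging: the Chui--Wang spline wavelets are semi-orthogonal---mutually orthogonal across scales, but within a scale $\{\psi_m(\cdot-k)\}_k$ is a Riesz basis of $W_0$, not an orthonormal one, for $m\ge2$---so the phrase ``orthonormal wavelet basis'' in the restated theorem is imprecise; what is actually delivered is a Riesz basis with cross-scale orthogonality, which is all that the downstream use in Theorem \ref{thm:optitransgeneral_spline} requires.
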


Taking $\Omega\subseteq\R$, $S=2$, $J_1=\N$, $J_2=\{1\}$, $A_{1,j}=2^{j-1}$ for $j\in\N$, and $A_{2,1}=1$, we get that    
\begin{align}\begin{split}
    \label{splinesystem_restricted}
    \mathcal{D}\,:=\,& \Big \{ g_s^{j,e}(x):= \Big(|A_j|^{\frac{1}{2}} g_s(A_j\,\cdot\, - \, \delta e)\Big) \Big |_{\Omega} : s\in\{1,2\}, \ e \in \mathbb{Z}, \ j \in J_s, \text{ and }  g_s^{j,e} \neq 0 \Big \}
    =\, \Wcal_m
\end{split}\end{align}
is a nondegenerate canonically ordered affine dictionary with generators $g_1=\psi_m$ and $g_2=N_m$. The canonical ordering condition \eqref{eq:detgrowth} is satisfied with $a=1$ and $c=1/2$. Nondegeneracy follows upon noting that $\supp(\psi_{k,n})=[2^{-n}k,2^{-n}(2m-1+k)]$ and $\supp(N_m(\,\cdot\,-k))=[k,m+k]$, which implies that all sub-dictionaries contain at least one element as required.

We have therefore established the following.

\begin{theorem}
    \label{thm:optitransgeneral_spline} Let $\Omega\subseteq \mathbb{R}$ be bounded and of nonempty interior and $\mathcal{D}=(\varphi_i)_{i\in \mathbb{N}}\subseteq L^2(\Omega)$ a spline wavelet dictionary according to \eqref{splinesystem_restricted} ordered per \eqref{eq:canonicalordering}. 
    Then, all compact function classes $\cC\subseteq L^2(\Omega)$ that are
    optimally representable by $\mathcal{D}$ (in the sense of Definition \ref{def:repopti}) are optimally representable by neural networks (in the sense of Definition \ref{def:NNopti}).
\end{theorem}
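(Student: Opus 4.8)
The plan is to obtain Theorem~\ref{thm:optitransgeneral_spline} as an immediate corollary of Theorem~\ref{thm:optitransgeneral}, applied in dimension $d=1$ with $S=2$ and generator functions $g_1=\psi_m$, $g_2=N_m$. The discussion around \eqref{splinesystem_restricted} already establishes that the spline wavelet system, ordered as in \eqref{eq:canonicalordering}, is a non-degenerate canonically ordered affine dictionary with these two generators, both of which are compactly supported ($N_m$ on $[0,m]$, $\psi_m$ on $[0,2m-1]$) and lie in $L^\infty(\R)$. Hence the only remaining task is to verify the approximation hypothesis of Theorem~\ref{thm:optitransgeneral}: there must be a polynomial $\pi$ such that for each $s\in\{1,2\}$ and each $\eps\in(0,1/2)$ there is a network $\Phi_{s,\eps}\in\cN_{1,1}$ with $\|g_s-\Phi_{s,\eps}\|_{L^2(\R)}\le\eps$, $\cM(\Phi_{s,\eps})\le\pi(\log(\eps^{-1}))$, and $\cB(\Phi_{s,\eps})\le\pi(\eps^{-1})$. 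Once this is in hand, Theorem~\ref{thm:optitransgeneral} gives $\gamma_{\cN}^{\ast,\text{eff}}(\cC)\ge\gamma^{\ast,\text{eff}}(\cC,\mathcal{D})$ for every $\cC\subseteq L^2(\Omega)$, and in particular the claimed optimality transfer for any $\cC$ that is optimally representable by $\mathcal{D}$.

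For $g_2=N_m$ I would invoke Lemma~\ref{thm:EffApproxWithSplines}, which supplies, for each target accuracy, a network with $L^\infty(\R)$-error at most that accuracy, connectivity $O(\log(\eps^{-1}))$, and weight magnitude at most $1$. The extra point to note is that this network is supported in $[-1,m+1]$: by construction it equals $\Phi^{\mathrm{mult}}_{1+\eps/2,\eps/2}$ composed with $(\widetilde{\Phi}_\eps,\Gamma)$, and $\Gamma$ vanishes off $[-1,m+1]$ while the multiplication network returns $0$ whenever one of its arguments is $0$. Consequently its $L^2(\R)$-error is at most $\sqrt{m+2}$ times its $L^\infty$-error, and, since $m$ is fixed, replacing the target accuracy by $\eps/\sqrt{m+2}$ yields the desired $\Phi_{2,\eps}$ with $L^2(\R)$-error at most $\eps$, connectivity $O(\log(\eps^{-1}))$, and weights at most $1$.

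For $g_1=\psi_m$ the idea is to reduce to the B-spline case by expressing $\psi_m$ as a \emph{fixed, finite} linear combination of dilated, shifted copies of $N_m$. Combining the backward-difference identity $N_{2m}^{(m)}=\sum_{k=0}^{m}(-1)^k\binom{m}{k}N_m(\cdot-k)$ (immediate from $\widehat{N_m}(\xi)=((1-e^{-i\xi})/(i\xi))^m$) with the chain rule in \eqref{def_wavelet} gives $\psi_m(x)=\sum_{\ell=0}^{3m-2}\beta_{m,\ell}\,N_m(2x-\ell)$ with coefficients $\beta_{m,\ell}$ depending only on $m$. Each summand $N_m(2\,\cdot\,-\ell)$ is then realized by composing the network from Lemma~\ref{thm:EffApproxWithSplines} (run at accuracy a fixed multiple of $\eps$) with the depth-$1$ affine map $x\mapsto 2x-\ell$ from Corollary~\ref{cor:matrix_mult}; a change of variables shows the composition has $L^2(\R)$-error $2^{-1/2}$ times that of the underlying B-spline approximant, and it stays compactly supported. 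Assembling the $(3m-1)$-term linear combination with weights $\beta_{m,\ell}$ on a shared input (Lemma~\ref{lem:shared_input_lc}, the subnetworks having equal depth by construction) produces $\Phi_{1,\eps}$ with $\|\psi_m-\Phi_{1,\eps}\|_{L^2(\R)}\le\eps$, connectivity $O(\log(\eps^{-1}))$ (the number of terms being a constant), and weight magnitude bounded by a constant $C_m$. Taking $\pi$ to be a linear polynomial whose coefficients dominate all implicit constants above then meets both requirements of Theorem~\ref{thm:optitransgeneral}, completing the argument.

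The main obstacle is the mismatch between the $L^2(\R)$-approximation demanded by Theorem~\ref{thm:optitransgeneral} and the local $L^\infty$-bounds that the spline constructions naturally deliver: one has to ensure that the approximating networks inherit (essentially) the compact support of the generators, so that an $L^\infty$-bound on a bounded interval upgrades to an $L^2(\R)$-bound. This is precisely why the compact-support structure built into Lemma~\ref{thm:EffApproxWithSplines} is exploited, and why reducing $\psi_m$ to a finite B-spline combination --- which preserves compact support --- is the convenient route, rather than a direct piecewise-polynomial construction for $\psi_m$ that would have to re-do the localization argument at its own breakpoints and at its own polynomial degree.
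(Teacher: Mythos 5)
Your proposal is correct and follows essentially the same route as the paper's proof: reduce to Theorem~\ref{thm:optitransgeneral} via the already-established canonical ordering and non-degeneracy, invoke Lemma~\ref{thm:EffApproxWithSplines} for $N_m$, and express $\psi_m$ as a fixed finite linear combination of dilated, shifted copies of $N_m$ via the backward-difference identity for $N_{2m}^{(m)}$ (the paper's \eqref{altdef_wavelet}). The minor variations --- your explicit $L^\infty$-to-$L^2$ upgrade via the compact support of the network from Lemma~\ref{thm:EffApproxWithSplines}, and your use of Corollary~\ref{cor:matrix_mult} plus Lemma~\ref{lem:shared_input_lc} where the paper cites Proposition~\ref{prop:affscalinv} and Lemma~\ref{network_linearcombination} --- are cosmetic; in fact your citation of the shared-input Lemma~\ref{lem:shared_input_lc} is the more precise choice, since all summands take the same input $x$.
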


\begin{proof}
As the canonical ordering and the nondegeneracy conditions were already verified,
it remains to establish that the generators $\psi_m$ and $N_m$ satisfy the antecedent of Theorem~\ref{thm:affdicopt}. To this end, we first devise an alternative representation of \eqref{def_wavelet}. Specifically,
using the identity \cite[Eq.~2.2]{chui-1992Wavelet} 
\[ \frac{d^m}{dx^m} N_{2m}(x) = \sum_{j=0}^m (-1)^j {m\choose j} N_m(x-j), \]
we get 
\begin{equation} \label{altdef_wavelet} \psi_m(x) = \sum_{n=1}^{3m-1} q_n N_m(2x-n+1),\end{equation}
with
\[ q_n = \frac{(-1)^{n+1}}{2^{m-1}} \sum_{j=0}^m {m\choose j} N_{2m}(n-j).\]
{}
As \eqref{altdef_wavelet} shows that $\psi_m$ is a linear combination of shifts and dilations of $N_m$, combining Lemma~\ref{thm:EffApproxWithSplines} and Remark~\ref{rem:L2forsplines} with Lemma~\ref{network_linearcombination} and Proposition~\ref{prop:affscalinv} ensures that \eqref{eq:bootstrapbound} is satisfied. Application of Theorem~\ref{thm:optitransgeneral} then establishes the claim.
\end{proof}

\section{Weyl-Heisenberg dictionaries}\label{sec:gabor}

In this section, we consider 
Weyl-Heisenberg a.k.a. Gabor dictionaries \cite{grochenig2013foundations}, which consist of time-frequency translates of a given generator function. 
Gabor dictionaries play a fundamental role in time-frequency analysis \cite{grochenig2013foundations} and in the study of partial differential equations \cite{Fefferman83}. We start with the formal definition of Gabor dictionaries.

\begin{definition}[Gabor dictionaries]
  Let $d\in\N$, $f\in L^2(\R^d)$, and $x,\xi\in\R^d$. We define the translation operator $T_x\colon L^2(\R^d)\to L^2(\R^d)$ as
  \begin{align*}
    T_xf(t):=f(t-x)
  \end{align*}
  and the modulation operator $M_{\xi}\colon L^2(\R^d)\to L^2(\R^d,\C)$ as
  \begin{align*}
    M_{\xi}f(t):=e^{2\pi i \langle \xi,t \rangle}f(t).
  \end{align*}
  Let $\Omega\subseteq\R^d$, $\alpha,\beta>0$, and $g\in L^2(\R^d)$. The Gabor dictionary $\G(g,\alpha,\beta,\Omega)\subseteq L^2(\Omega)$ is defined as
  \begin{align*}
    \G(g,\alpha,\beta,\Omega):=\left\{M_{\xi}T_xg\big|_{\Omega}\colon (x,\xi)\in\alpha\Z^d\times\beta\Z^d \right\}.
  \end{align*}
\end{definition}

  In order to describe representability in neural networks in the sense of Definition \ref{def:wellrep}, we need to order the elements in $\G(g,\alpha,\beta,\Omega)$. 
  To this end, let $\G_0(g,\alpha,\beta,\Omega):=\{g\big|_{\Omega}\}$ and define $\G_n(g,\alpha,\beta,\Omega)$, $n\in\N$, recursively according to
  \begin{align*}
    \G_n(g,\alpha,\beta,\Omega):=\{M_{\xi}T_xg\big|_{\Omega} \colon (x,\xi)\in\alpha\Z^d\times\beta\Z^d,  \|x\|_{\infty}\leq n\alpha, \|\xi\|_{\infty}\leq n\beta\}\backslash\bigcup_{k=0}^{n-1}\G_k(g,\alpha,\beta,\Omega).
  \end{align*}
  We then organize $\G(g,\alpha,\beta,\Omega)$ as
  \begin{align}\label{GaborOrdering}
    \G(g,\alpha,\beta,\Omega)=(\G_0(g,\alpha,\beta,\Omega),\, \G_1(g,\alpha,\beta,\Omega),\,\dots),
  \end{align}
  where the ordering within the sets $\G_n(g,\alpha,\beta,\Omega)$ is arbitrary. We hasten to add that the specifics of the overall ordering in (\ref{GaborOrdering}) are irrelevant as long as $\G(g,\alpha,\beta,\Omega)=(\varphi_i)_{i\in\N}$ with $\varphi_i=\M_{\xi(i)}T_{x(i)}g\big|_{\Omega}$ is such that $\|x(i)\|_\infty$ and $\|\xi(i)\|_\infty$ do not grow faster than polynomially in $i$; this will become apparent in the proof of Theorem \ref{GaborSystemsRep}. We note that this ordering is also inspired by that employed in the tail compactness considerations for modulation spaces and Wilson bases as detailed in Appendix~\ref{Modulation_tail}.

As Gabor dictionaries are built from time-shifted and modulated versions of the generator function $g$, and invariance to time-shifts was already
established in Proposition~\ref{prop:affscalinv}, we proceed to showing that the approximation-theoretic properties of the generator function are inherited
by its modulated versions. This result can be interpreted as an invariance property to frequency shifts akin to that established in Proposition~\ref{prop:affscalinv} for affine transformations in the context of affine dictionaries. In summary, neural networks exhibit a 
remarkable invariance property both to the affine group operations of scaling and translation and to the Weyl-Heisenberg group operations of modulation and translation.

\begin{lemma}\label{NNmodulation}
  Let $d\in\N$, $f\in L^2(\R^d)\cap L^\infty(\R^d)$, and for every $D \in \R_+$, $\eps\in(0,1/2)$, let $\Phi_{D,\eps}\in\cN_{d,1}$ satisfy
  \begin{align*}
    \|f-\Phi_{D,\eps}\|_{L^{\infty}([-D,D]^d)}\leq\eps.
  \end{align*}
  Then, there exists a constant $C>0$ (which does not depend on $f$) such that for all $D\in \R_+$, $\eps\in(0,1/2)$, $\xi\in\R^d$, there are networks $\Phi^{\RE}_{D,\xi,\eps},\Phi^{\IM}_{D,\xi,\eps}\in\cN_{d,1}$ satisfying
    \begin{align*}
    \|\RE(M_{\xi}f)-\Phi^{\RE}_{D,\xi,\eps}\|_{L^{\infty}([-D,D]^d)}+\|\IM(M_{\xi}f)-\Phi^{\IM}_{D,\xi,\eps}\|_{L^{\infty}([-D,D]^d)}\leq 3\eps
  \end{align*}
  with
  \begin{align*}
    \L(\Phi^{\RE}_{D,\xi,\eps}),\L(\Phi^{\IM}_{D,\xi,\eps})&\leq C((\log(\eps^{-1}))^2+\log(\lceil d D\|\xi\|_{\infty}\rceil)+(\log(\lceil S_f \rceil))^2)+\L(\Phi_{D,\eps}),\\
    \M(\Phi^{\RE}_{D,\xi,\eps}),\M(\Phi^{\IM}_{D,\xi,\eps})&\leq C((\log(\eps^{-1}))^2+\log(\lceil d D\|\xi\|_{\infty}\rceil)+(\log(\lceil S_f \rceil))^2+d)+4\M(\Phi_{D,\eps})+ 4\L(\Phi_{D,\eps}),
  \end{align*}
  and $\cB(\Phi^{\RE}_{D,\xi,\eps})\leq 1$, where $S_f:=\max\{1,\|f\|_{L^{\infty}(\R^d)}\}$.
\end{lemma}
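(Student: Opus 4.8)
The plan is to exploit the pointwise identities $\RE(M_\xi f)(t)=\cos(2\pi\langle\xi,t\rangle)\,f(t)$ and $\IM(M_\xi f)(t)=\sin(2\pi\langle\xi,t\rangle)\,f(t)$ and to realize each product by composing a multiplication network with two parallel branches: one approximating the pure sinusoid $t\mapsto\cos(2\pi\langle\xi,t\rangle)$ (resp.\ $t\mapsto\sin(2\pi\langle\xi,t\rangle)$), the other being the given $\Phi_{D,\eps}$. The degenerate case $\xi=0$ is immediate, since then $\RE(M_0f)=f$ and $\IM(M_0f)=0$, so one takes $\Phi^{\RE}_{D,0,\eps}:=\Phi_{D,\eps}$ and $\Phi^{\IM}_{D,0,\eps}:=0$. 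Assume henceforth $\xi\neq 0$.

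For the sinusoidal branches, write $2\pi\langle\xi,t\rangle=a\,\langle\xi/\|\xi\|_\infty,\,t\rangle$ with $a:=2\pi\|\xi\|_\infty$, so that the inner affine map $t\mapsto\langle\xi/\|\xi\|_\infty,t\rangle$ has all weights in $[-1,1]$ and maps $[-D,D]^d$ into $[-dD,dD]$. Applying Theorem~\ref{sin} (cosine) and Corollary~\ref{sin_shifted} with $b=\pi/2$ (sine), precomposed with this linear map via Lemma~\ref{network_conc}, yields networks $C_{\xi,\eta},S_{\xi,\eta}\in\cN_{d,1}$ with $\cB\le1$, depth at most $C\big((\log(\eta^{-1}))^2+\log(\lceil 2\pi dD\|\xi\|_\infty\rceil)\big)$, and $L^\infty([-D,D]^d)$-error at most $\eta$ against $\cos(2\pi\langle\xi,\cdot\rangle)$, resp.\ $\sin(2\pi\langle\xi,\cdot\rangle)$. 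On $[-D,D]^d$ both $C_{\xi,\eta}$ and $\Phi_{D,\eps}$ take values in $[-D',D']$ with $D':=\lceil S_f\rceil+1$ (using $\eta,\eps<1/2$ and $\|f\|_{L^\infty}\le S_f$). With $\Phi^{\mathrm{mult}}_{D',\eps'}$ the multiplication network of Proposition~\ref{relu_mult} and Lemma~\ref{lem:shared_input_para} used to feed the common input $t$ to both branches, set
\[
\Phi^{\RE}_{D,\xi,\eps}:=\Phi^{\mathrm{mult}}_{D',\eps'}\circ(C_{\xi,\eta},\Phi_{D,\eps}),\qquad
\Phi^{\IM}_{D,\xi,\eps}:=\Phi^{\mathrm{mult}}_{D',\eps'}\circ(S_{\xi,\eta},\Phi_{D,\eps}).
\]

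The error estimate follows from the triangle inequality: for $t\in[-D,D]^d$,
\[
\big|\cos(2\pi\langle\xi,t\rangle)f(t)-\Phi^{\mathrm{mult}}_{D',\eps'}(C_{\xi,\eta}(t),\Phi_{D,\eps}(t))\big|
\le |f(t)-\Phi_{D,\eps}(t)|+\big|\cos(2\pi\langle\xi,t\rangle)-C_{\xi,\eta}(t)\big|\,|\Phi_{D,\eps}(t)|+\eps',
\]
which is at most $\eps+\eta(S_f+1)+\eps'$; choosing $\eta:=\eps/(4(S_f+1))$ and $\eps':=\eps/4$ bounds each of $\|\RE(M_\xi f)-\Phi^{\RE}_{D,\xi,\eps}\|_{L^\infty([-D,D]^d)}$ and $\|\IM(M_\xi f)-\Phi^{\IM}_{D,\xi,\eps}\|_{L^\infty([-D,D]^d)}$ by $\tfrac32\eps$, hence their sum by $3\eps$. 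The depth and connectivity bounds follow from Lemma~\ref{network_conc} (depths add, connectivities at most double) together with the $\mathcal{O}(d)$ overhead of Lemma~\ref{lem:shared_input_para}, upon substituting $\log(\eta^{-1})\le\log(\eps^{-1})+\log(4(S_f+1))\lesssim\log(\eps^{-1})+\log(\lceil S_f\rceil)$, so that $(\log(\eta^{-1}))^2\lesssim(\log(\eps^{-1}))^2+(\log(\lceil S_f\rceil))^2$, and $\log\lceil D'\rceil\lesssim\log\lceil S_f\rceil$; the contributions of $\Phi^{\mathrm{mult}}_{D',\eps'}$ and $C_{\xi,\eta}$ are absorbed accordingly, while the $\L(\Phi_{D,\eps})$ (resp.\ $\M(\Phi_{D,\eps})$) terms are carried through from the branch, matching the stated estimates. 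The weight bound $\cB\le1$ holds because $C_{\xi,\eta}$, $S_{\xi,\eta}$, and $\Phi^{\mathrm{mult}}_{D',\eps'}$ all have weights bounded by $1$ and composition/parallelization preserve the maximal weight magnitude (provided, as in the intended applications, $\cB(\Phi_{D,\eps})\le1$).

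The main obstacle is the bookkeeping in the perturbation step: since the admissible input range $D'$ of the multiplication network must grow with $S_f=\|f\|_{L^\infty}$ and $|\Phi_{D,\eps}|$ is only controlled by $S_f+1$, the sinusoidal branch must be approximated to accuracy $\sim\eps/S_f$, which is exactly what produces the $(\log(\lceil S_f\rceil))^2$ term in the depth and connectivity bounds. One also has to verify that every constant can be chosen independently of $f$ (only $S_f$ enters, always through the terms already displayed) and to keep careful track of how the two compositions and the shared-input parallelization propagate the $\L(\Phi_{D,\eps})$ and $\M(\Phi_{D,\eps})$ contributions; none of this is deep, but it requires attention.
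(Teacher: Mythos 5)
Your proof is correct and follows essentially the same route as the paper: factor $\RE(M_\xi f)=\cos(2\pi\langle\xi,\cdot\rangle)\,f$ (and likewise for $\IM$), approximate the sinusoid via Theorem~\ref{sin}/Corollary~\ref{sin_shifted} precomposed with the inner linear map, run this in parallel with $\Phi_{D,\eps}$, feed both into a multiplication network, and bound the error by a triangle inequality. The only cosmetic differences are that you normalize $\xi/\|\xi\|_\infty$ to keep the inner weights in $[-1,1]$ (the paper instead invokes Corollary~\ref{cor:matrix_mult}), you cite the shared-input parallelization Lemma~\ref{lem:shared_input_para} explicitly, and you flag the (also implicit in the paper) caveat that $\cB\le1$ for the output requires $\cB(\Phi_{D,\eps})\le1$.
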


\begin{proof}
  All statements in the proof involving $\varepsilon$ pertain to $\eps\in(0,1/2)$ without explicitly stating this every time.
  We start by observing that 
    \begin{align*}
    &\RE(M_{\xi}f)(t)=\cos(2\pi\langle \xi,t \rangle)f(t)\\
    &\IM(M_{\xi}f)(t)=\sin(2\pi\langle \xi,t \rangle)f(t)
  \end{align*}
  due to $f \in \R$.
    Note that for given $\xi\in\R^d$, the map $t\mapsto\langle\xi,t\rangle=\xi^Tt=t_1\xi_1+\dots+t_d\xi_d$ is simply a linear transformation. Hence, combining Lemma~\ref{network_conc}, Theorem~\ref{sin}, and Corollary~\ref{cor:matrix_mult} establishes the existence of a constant $C_1$ such that for all $D\in \R_+$, $\xi\in\R^d$, $\eps\in(0,1/2)$, there is a network $\Psi_{D,\xi,\eps}\in\cN_{d,1}$ satisfying      \begin{align}\label{NNmod7}
    \sup_{t \in [-D,D]^d} |\cos(2\pi \langle \xi,t\rangle)-\Psi_{D,\xi,\eps}(t)|\leq\tfrac{\eps}{6S_f}
  \end{align}
  with
   \begin{align}\begin{split}\label{NNmod5}
    \L(\Psi_{D,\xi,\eps})&\leq C_1((\log(\eps^{-1}))^2+(\log(S_f))^2+\log(\lceil 
    dD\|\xi\|_{\infty}\rceil)),\\
    \M(\Psi_{D,\xi,\eps})&\leq C_1((\log(\eps^{-1}))^2+(\log(S_f))^2+\log(\lceil 
    dD\|\xi\|_{\infty}\rceil)+d),
 \end{split}\end{align}
  and $\mathcal{B}(\Psi_{D,\xi,\eps})\leq 1$.
 Moreover, Proposition \ref{relu_mult} guarantees the existence of a constant $C_2>0$ such that for all $\eps\in(0,1/2)$, there is a network $\mu_{\eps}\in\cN_{2,1}$  satisfying
    \begin{align}\label{NNmod3}
     \sup_{x,y\in[-S_f-1/2,S_f+1/2]}|\mu_{\eps}(x,y) - xy|\leq\tfrac{\eps}{6}
  \end{align}
  with
  \begin{align}\label{NNmod6}
    \L(\mu_{\eps}), \M(\mu_{\eps})\leq C_2(\log(\eps^{-1})+\log(\lceil S_f \rceil))
  \end{align}
  and $\mathcal{B}(\mu_\eps)\leq 1$. 
  Using Lemmas~\ref{network_extension} and~\ref{network_parallelization}, we get that the
  network $\Gamma_{D,\xi,\eps}:=(\Psi_{D,\xi,\eps},\Phi_{D,\eps}) \in \cN_{d,2}$ satisfies 
  \begin{align*}
    \L(\Gamma_{D,\xi,\eps})&\le \max \{ \L(\Psi_{D,\xi,\eps}), \L(\Phi_{D,\eps})\},\\
    \M(\Gamma_{D,\xi,\eps})&\leq 2\,\M(\Psi_{D,\xi,\eps}) + 2\,\M(\Phi_{D,\eps}) + 2\,\L(\Psi_{D,\xi,\eps}) + 2\,\L(\Phi_{D,\eps}),
  \end{align*} 
  and $\mathcal{B}(\Gamma_{D,\xi,\eps})\leq 1$.
  Finally, applying Lemma \ref{network_conc} to concatenate the networks $\Gamma_{D,\xi,\eps}$ and $\mu_{\eps}$, we obtain the network
  \begin{align*}
    \Phi^{\RE}_{D,\xi,\eps}:= \mu_{\eps}\circ\Gamma_{D,\xi,\eps} = \mu_{\eps}\circ(\Psi_{D,\xi,\eps}
    ,\Phi_{D,\eps}) 
    \in\mathcal{N}_{d,1}
  \end{align*}
  satisfying
  \begin{align}
  \label{NN_bound2}
      \L(\Phi^{\RE}_{D,\xi,\eps}) &\leq \max \{ \L(\Psi_{D,\xi,\eps}), \L(\Phi_{D,\eps})\} + \L(\mu_{\eps}), \\
  \label{NN_bound1}
      \M(\Phi^{\RE}_{D,\xi,\eps}) &\leq 4\M(\Psi_{D,\xi,\eps}) + 4\M(\Phi_{D,\eps}) + 4\cL(\Psi_{D,\xi,\eps})+4\cL(\Phi_{D,\eps})+2\cM(\mu_\eps),
  \end{align} 
  and $\cB(\Phi^{\RE}_{D,\xi,\eps})\leq 1$.
  Next, observe that \eqref{NNmod7} and \eqref{NNmod3} imply that
  \begin{align*}
    \|\Phi^{\RE}_{D,\xi,\eps}-\RE(M_{\xi}f)\|_{L^\infty([-D,D]^d)}&=\|\mu_{\eps}(\Psi_{D,\xi,\eps}(
    \,\cdot\,),\Phi_{D,\eps}(\,\cdot\,))-\cos(2\pi\langle \xi,\,\cdot\, \rangle)f(\,\cdot\,)\|_{L^\infty([-D,D]^d)}\\
    &\leq \|\mu_{\eps}(\Psi_{D,\xi,\eps}(
    \,\cdot\,),\Phi_{D,\eps}(\,\cdot\,))-\Psi_{D,\xi,\eps}
    (\,\cdot\,)\Phi_{D,\eps}(\,\cdot\,)\|_{L^\infty([-D,D]^d)}\\
    &\quad+\|\Psi_{D,\xi,\eps}(
    \,\cdot\,)\Phi_{D,\eps}(\,\cdot\,)-\cos(2\pi\langle \xi,\,\cdot\, \rangle)f(\,\cdot\,)\|_{L^\infty([-D,D]^d)}\\
    &\leq \|\mu_{\eps}(\Psi_{D,\xi,\eps}(
    \,\cdot\,),\Phi_{D,\eps}(\,\cdot\,))-\Psi_{D,\xi,\eps}
    (\,\cdot\,)\Phi_{D,\eps}(\,\cdot\,)\|_{L^\infty([-D,D]^d)}\\
    &\quad+\|\Psi_{D,\xi,\eps}(
    \,\cdot\,)(\Phi_{D,\eps}(\,\cdot\,)- f(\,\cdot\,))\|_{L^\infty([-D,D]^d)}\\
    &\quad+\|\Psi_{D,\xi,\eps}(
    \,\cdot\,)f(\,\cdot\,)-\cos(2\pi\langle \xi,\,\cdot\, \rangle)f(\,\cdot\,)\|_{L^\infty([-D,D]^d)}\\
        &\leq \tfrac{\eps}{6}+(1+\tfrac{\eps}{6S_f})\eps+\tfrac{\eps}{6} \leq \tfrac{3}{2}\eps.
  \end{align*}
  Combining \eqref{NNmod5}, \eqref{NNmod6}, \eqref{NN_bound1}, and \eqref{NN_bound2} we can further see that there exists a constant $C>0$ such that
  \begin{align*}
    \L(\Phi^{\RE}_{D,\xi,\eps})&\leq C((\log(\eps^{-1}))^2+\log(\lceil 
    d D\|\xi\|_{\infty}\rceil)+(\log(\lceil S_f  
    \rceil))^2)+\L(\Phi_{D,\eps}),\\
    \M(\Phi^{\RE}_{D,\xi,\eps})&\leq C((\log(\eps^{-1}))^2+\log(\lceil  
    d D\|\xi\|_{\infty}\rceil)+(\log(\lceil S_f  
    \rceil))^2+d)+4\M(\Phi_{D,\eps})+ 4\L(\Phi_{D,\eps}),
  \end{align*}
  and $\cB(\Phi^{\RE}_{D,\xi,\eps}))\leq 1$. The results for $\Phi^{\IM}_{D,\xi,\eps}$ follow analogously, simply by using $\sin(x)=\cos(x-\pi/2)$.
\end{proof}

Note that Gabor dictionaries necessarily contain complex-valued functions. The theory developed so far was, however, phrased for neural networks with real-valued outputs.
As is evident from the proof of Lemma \ref{NNmodulation}, this is not problematic when the generator function $g$ is real-valued. For complex-valued generator functions we would need a version of Proposition \ref{relu_mult} that applies to the multiplication of complex numbers. Due to $(a+ib)(a'+ib')=(aa'-bb')+i(ab'+a'b)$ such a network 
can be constructed by realizing the real and imaginary parts of the product as a sum of real-valued multiplication networks and then proceeding as in the proof above.
We omit the details as they are straightforward and would not lead to new conceptual insights. Furthermore, an extension---to the complex-valued case---of the concept of effective representability by neural networks according to 
Definition \ref{def:wellrep} would be needed. This can be effected by considering the set of neural networks with $1$-dimensional complex-valued output as neural networks 
with $2$-dimensional real-valued output, i.e., by setting
\begin{align*}
  \cN^{\C}_{d,1}:=\cN_{d,2},
\end{align*}
with the convention that the first component represents the real part and the second the imaginary part.

We proceed to establish conditions for effective representability of Gabor dictionaries by neural networks. 
\begin{theorem}\label{GaborSystemsRep}
  Let $d\in\N$, $\Omega\subseteq\R^d$, $\alpha,\beta>0$, $g\in L^2(\R^d)\cap L^\infty(\R^d)$, and let $\mathcal{G}(g,\alpha,\beta,\Omega)$ be the corresponding Gabor dictionary with ordering as defined in \eqref{GaborOrdering}. Assume that $\Omega$ is bounded or that $\Omega=\R^d$ and $g$ is compactly supported. 
  Further, suppose that there exists a polynomial $\pi$ such that for every $x\in\R^d$, $\eps\in(0,1/2)$, there is a network $\Phi_{x,\eps}\in\cN_{d,1}$ satisfying 
  \begin{align}\label{GSRass2}
    \|g-\Phi_{x,\eps}\|_{L^{\infty}(x+\Omega)}\leq\eps,
\end{align}
with $\M(\Phi_{x,\eps})\leq\pi(\log(\eps^{-1}),\log(\|x\|_\infty))$, $\mathcal{B}(\Phi_{x,\eps})\leq\pi(\eps^{-1},\|x\|_\infty)$. Then, $\mathcal{G}(g,\alpha,\beta,\Omega)$ is effectively representable by neural networks. 
\end{theorem}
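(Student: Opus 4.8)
The plan is to verify Definition~\ref{def:wellrep} directly. For $i\in\N$ write $\varphi_i=M_{\xi(i)}T_{x(i)}g\big|_{\Omega}$; from the ordering~\eqref{GaborOrdering}, if $\varphi_i\in\G_n(g,\alpha,\beta,\Omega)$ then $i\le(2n+1)^{2d}$, so $\|x(i)\|_\infty\le n\alpha$ and $\|\xi(i)\|_\infty\le n\beta$ are both $\mathcal{O}(i^{1/(2d)})$, in particular polynomially bounded in $i$. Since the $\varphi_i$ are complex-valued, I will construct real and imaginary parts separately and regard the pair as an element of $\cN^{\C}_{d,1}=\cN_{d,2}$ (if $g$ is real, only the cosine/real part is needed, and the complex case follows componentwise via the complex-multiplication remark preceding the theorem). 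Throughout, the target accuracy at the level of the neural-network building blocks will be a fixed constant multiple $\eta$ of $\eps$ absorbing the $L^\infty\!\to\!L^2$ conversion factors, so polylogarithmic dependence on $\eta^{-1}$ is the same as on $\eps^{-1}$.

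\emph{Translation and modulation.} First I turn the hypothesis~\eqref{GSRass2} into networks for the time-shift $T_{x(i)}g$: when $t$ ranges over $\Omega$ the argument $t-x(i)$ ranges over $-x(i)+\Omega$, so applying \eqref{GSRass2} with shift $-x(i)$ and composing the resulting network with the depth-$1$ affine network $t\mapsto t-x(i)$ via Lemma~\ref{network_conc} (equivalently, invoking Proposition~\ref{prop:affscalinv} with $A=\mathbb{I}_d$, $p=\infty$) yields $\tilde\Phi_{i,\eta}$ with $\|T_{x(i)}g-\tilde\Phi_{i,\eta}\|_{L^\infty(\Omega)}\le\eta$, $\M(\tilde\Phi_{i,\eta})\le 2\M(\Phi_{-x(i),\eta})+\mathcal{O}(d)$, and $\cB(\tilde\Phi_{i,\eta})\le\max\{1,\cB(\Phi_{-x(i),\eta}),\|x(i)\|_\infty\}$; by \eqref{GSRass2} and the index bound these are $\mathcal{O}(\mathrm{poly}(\log\eps^{-1},\log i))$ and $\mathcal{O}(\mathrm{poly}(\eps^{-1},i))$. (When $\Omega=\R^d$ one instead takes the $x=0$ network and works on a cube of half-width $D=\mathcal{O}(\|x(i)\|_\infty)$ containing a neighbourhood of $x(i)+\supp(g)$.) Then I apply Lemma~\ref{NNmodulation} with $f=T_{x(i)}g$ and $\xi=\xi(i)$ — its argument only needs the approximation of $f$ and of $\cos(2\pi\langle\xi,\cdot\rangle)$ on the region of interest, so it may be run on $\Omega$ in the bounded case, or on the above cube when $\Omega=\R^d$. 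Since $\|T_{x(i)}g\|_\infty=\|g\|_\infty$ is a constant the $(\log\lceil S_f\rceil)^2$ term contributes nothing, while $\log\lceil dD\|\xi(i)\|_\infty\rceil=\mathcal{O}(\log i)$, so the networks $\Phi^{\RE}_{i,\eta},\Phi^{\IM}_{i,\eta}$ approximate $\RE(\varphi_i),\IM(\varphi_i)$ to within $3\eta$ on that region, with connectivity $\mathcal{O}(\mathrm{poly}(\log\eps^{-1},\log i))$ and, tracking the composition in the proof of Lemma~\ref{NNmodulation}, weight magnitude $\le\max\{1,\cB(\tilde\Phi_{i,\eta})\}=\mathcal{O}(\mathrm{poly}(\eps^{-1},i))$.

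\emph{Localization and the $L^2$ estimate.} If $\Omega$ is bounded we are essentially done: $\|\cdot\|_{L^2(\Omega)}\le|\Omega|^{1/2}\|\cdot\|_{L^\infty(\Omega)}$, so taking $\eta=\eps/(3|\Omega|^{1/2})$ delivers the required $L^2(\Omega)$ estimate, and adding the connectivity/weight bounds of the two steps (parallelization costs are additive, Lemmas~\ref{network_parallelization} and~\ref{network_linearcombination}) produces a single bivariate polynomial $\pi$ meeting Definition~\ref{def:wellrep}. If $\Omega=\R^d$ and $\supp(g)\subseteq[-R,R]^d$, then $\varphi_i$ is supported in $x(i)+[-R,R]^d$, and I multiply $\Phi^{\RE}_{i,\eta}$ (resp. $\Phi^{\IM}_{i,\eta}$) by a bump network $\Gamma_i$ — a $d$-fold product, formed with multiplication networks from Proposition~\ref{relu_mult}, of one-dimensional trapezoids $\rho(t-a+1)-\rho(t-a)-\rho(t-b)+\rho(t-b-1)$ with $[a,b]$ the $j$-th coordinate interval of $x(i)+[-R,R]^d$ — that equals $1$ on $x(i)+[-R,R]^d$, equals $0$ outside $x(i)+[-R-1,R+1]^d$, and is valued in $[0,1]$; its weights are $\mathcal{O}(\|x(i)\|_\infty)$ and its connectivity is $\mathcal{O}(\mathrm{poly}(\log\eps^{-1})+d)$. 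Using that a multiplication network returns exactly $0$ when one input vanishes, and that $\Phi^{\RE}_{i,\eta}$ approximates $\RE(\varphi_i)\equiv0$ to within $\mathcal{O}(\eta)$ off the support, a short case analysis (inside, on the collar of, and outside $\supp(\varphi_i)$) shows the product agrees with $\RE(\varphi_i)$ up to $\mathcal{O}(\eta)$ in $L^\infty(\R^d)$ and is supported in a cube of fixed volume $(2R+2)^d$; hence the $L^2(\R^d)$ error is $\mathcal{O}(\eta)$, and choosing $\eta$ a suitable multiple of $\eps$ again yields a bivariate polynomial $\pi$ as required.

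The main obstacle is the case $\Omega=\R^d$: one must produce networks that are \emph{genuinely} zero, not merely small, outside a translate of $\supp(g)$ in order to pass from $L^\infty$ to $L^2(\R^d)$, while simultaneously keeping the connectivity polylogarithmic in $\eps^{-1}$ and the weights polynomial in $i$; this is precisely what forces the explicit bump-multiplication construction and the collar case analysis above. A secondary point, routine but requiring care, is that Lemma~\ref{NNmodulation} is phrased for $L^\infty$-approximation on origin-centred cubes, so for translated generators it must be invoked on cubes of radius $\sim\|x(i)\|_\infty$, and one must check — as indicated above — that this inflates the network size only by an additive $\mathcal{O}(\log i)$.
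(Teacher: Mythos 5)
Your argument is correct and follows essentially the same route as the paper's proof: translate, modulate via Lemma~\ref{NNmodulation}, and in the case $\Omega=\R^d$ multiply against a bump with exact compact support, using the exact-zero property of the multiplication network, then pass from $L^\infty$ to $L^2$ over a support region of $i$-independent volume. Two minor remarks. First, the paper builds its $d$-dimensional bump as $\chi_x(t)=\rho\bigl(\sum_{j=1}^d\alpha_{\|x\|_\infty+E}(t_j)-(d-1)\bigr)$, i.e.\ a single ReLU applied to a \emph{sum} of the one-dimensional trapezoids; this is exact (equals $1$ on the inner cube, $0$ outside the collar, lies in $[0,1]$ in between) and involves no multiplication networks at all, whereas your $d$-fold product formed with Proposition~\ref{relu_mult} costs $d-1$ approximate multiplications, with the attendant extra connectivity and an $\mathcal{O}(d\eta)$ error on the inner cube that you would need to track. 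Second, your inference ``if $\varphi_i\in\G_n$ then $i\le(2n+1)^{2d}$, so $n=\mathcal{O}(i^{1/(2d)})$'' has the inequality running the wrong way: $i\le(2n+1)^{2d}$ only yields a \emph{lower} bound on $n$ in terms of $i$. The bound you want follows instead from $i>(2n-1)^{2d}$ (because $\varphi_i\notin\G_0\cup\cdots\cup\G_{n-1}$), or, as the paper does, from the cruder $n(i)\le i$, which already suffices for polynomial growth in $i$.
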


\begin{proof}
 We start by noting that owing to \eqref{GaborOrdering}, we have $\G(g,\alpha,\beta,\Omega)=(\varphi_i)_{i\in\N}$ with $\varphi_i=\M_{\xi(i)}T_{x(i)}g\in\G_{n(i)}(g,\alpha,\beta,\Omega)$, where
  \begin{align}\label{GSR1}
    \|\xi(i)\|_{\infty}\leq n(i)\beta\leq i\beta\quad\mathrm{and}\quad\|x(i)\|_{\infty}\leq n(i)\alpha\leq i\alpha.
  \end{align}
Next, we take the affine transformation $W_x(y):=y-x$ to be a depth-$1$ network and observe that, 
due to \eqref{GSRass2} and Lemma~\ref{network_conc}, we have, for all $x\in\R^d$, $\eps\in(0,1/2)$,
\begin{align}\label{Gabor_shift}
    \|T_x g-\Phi_{-x,\eps}\circ W_x\|_{L^{\infty}(\Omega)}=\|g-\Phi_{-x,\eps}\|_{L^{\infty}(-x+\Omega)}\leq\eps,
\end{align}
with
\begin{align*}
\cM(\Phi_{-x,\eps}\circ W_x) & \leq  2(\pi(\log(\eps^{-1}),\log(\|x\|_\infty))+2d)\\
\cB(\cM(\Phi_{-x,\eps}\circ W_x)) & \leq  \max\{\cB(\Phi_{-x,\eps}),\|x\|_\infty\}\leq\pi(\eps^{-1},\|x\|_\infty)+\|x\|_\infty.
\end{align*}
We first consider the case where $\Omega$ is bounded and let $E\in\R_+$ be such that $\Omega\subseteq[-E,E]^d$.
Combining \eqref{Gabor_shift} with Proposition~\ref{prop:affscalinv} and Lemma~\ref{NNmodulation}, we can infer the existence of a multivariate polynomial $\pi_1$ such that for all $i\in\N$, $\eps\in(0,1/2)$, there is a network $\Phi_{i,\eps}=(\Phi_{i,\eps}^{\text{Re}},\Phi_{i,\eps}^{\text{Im}})\in\cN^{\C}_{d,1}$ satisfying 
  \begin{align}\label{GSR2}
    \|\RE(\M_{\xi(i)}T_{x(i)}g)-\Phi_{i,\eps}^{\RE}\|_{L^\infty(\Omega)}+\|\IM(\M_{\xi(i)}T_{x(i)}g)-\Phi_{i,\eps}^{\IM}\|_{L^\infty(\Omega)}\leq(2E)^{-\frac{d}{2}}\eps,
  \end{align}
  with
  \begin{align}\begin{split}\label{Phi_REIM_est}
    \M(\Phi^{\RE}_{i,\eps}),\M(\Phi^{\IM}_{i,\eps})&\leq \pi_1(\log(\eps^{-1}),\log(\|\xi(i)\|_{\infty}),\log(\|x(i)\|_{\infty})),\\
    \mathcal{B}(\Phi^{\RE}_{i,\eps}),\mathcal{B}(\Phi^{\IM}_{i,\eps})&\leq\pi_1(\eps^{-1},\|\xi(i)\|_{\infty},\|x(i)\|_{\infty}).
  \end{split}
  \end{align}
  Note that here we did not make the dependence of the connectivity and the weight upper bounds on $d$ and $E$ explicit as these quantities are irrelevant for the purposes of what we want to show, as long as they are finite, of course, which is the case by assumption. Likewise, we did not explicitly indicate the dependence of $\pi_1$ on $g$.
As $|z|\leq |\RE(z)|+|\IM(z)|$, it follows from \eqref{GSR2} that for all $i\in\N$, $\eps\in(0,1/2)$,
  \begin{align*}
    \|\varphi_i-\Phi_{i,\eps}\|_{L^2(\Omega,\C)}&\leq(2E)^{\frac{d}{2}} \|\varphi_i-\Phi_{i,\eps}\|_{L^\infty(\Omega,\C)}\\
    &\leq(2E)^{\frac{d}{2}}\left(\|\RE(\varphi_i)-\Phi_{i,\eps}^\RE\|_{L^\infty(\Omega)}+\|\IM(\varphi_i)-\Phi_{i,\eps}^\IM\|_{L^\infty(\Omega)}\right)\leq\eps.
  \end{align*}
  Moreover, \eqref{GSR1} and \eqref{Phi_REIM_est} imply the existence of a polynomial $\pi_2$ such that
  \begin{align*}
      \M(\Phi^{\RE}_{i,\eps}),\M(\Phi^{\IM}_{i,\eps})\leq \pi_2(\log(\eps^{-1}),\log(i)), \quad \mathcal{B}(\Phi^{\RE}_{i,\eps}),\mathcal{B}(\Phi^{\IM}_{i,\eps})\leq \pi_2(\eps^{-1},i),
      \end{align*}
  for all $i\in\N$, $\eps\in(0,1/2)$. We can therefore conclude that $\G(g,\alpha,\beta,\Omega)$ is effectively representable by neural networks.
  
  We proceed to proving the statement for the case $\Omega=\R^d$ and $g$ compactly supported, i.e., there exists $E\in\R_+$ such that $\supp(g)\subseteq[-E,E]^d$. This implies
  \begin{align*}
  \supp(M_\xi T_xg)=\supp(T_x g)\subseteq x+[-E,E]^d\subseteq[-(\|x\|_\infty+E),\|x\|_\infty+E]^d.
  \end{align*}
  Again, combining \eqref{Gabor_shift} with Proposition~\ref{prop:affscalinv} and Lemma~\ref{NNmodulation} establishes the existence of a polynomial $\pi_3$ such that for all $x,\xi\in\R^d$, $\eps\in(0,1/2)$, there are networks $\Psi^{\RE}_{x,\xi,\eps},\Psi^{\IM}_{x,\xi,\eps}\in\cN_{d,1}$ satisfying
    \begin{align}\label{WERx2}
    \|\RE(M_{\xi}T_x g)-\Psi^{\RE}_{x,\xi,\eps}\|_{L^{\infty}(S_x)}+\|\IM(M_{\xi}T_x g)-\Psi^{\IM}_{x,\xi,\eps}\|_{L^{\infty}(S_x)}\leq \tfrac{\eps}{2s_x},
  \end{align} 
  with
  \begin{align*}
  \M(\Psi^{\RE}_{x,\xi,\eps}),\M(\Psi^{\IM}_{x,\xi,\eps})&\leq\pi_3(\log(\eps^{-1}),\log(\|x\|_\infty),\log(\|\xi\|_\infty)),\\ \mathcal{B}(\Psi^{\RE}_{x,\xi,\eps}),\mathcal{B}(\Psi^{\IM}_{x,\xi,\eps})&\leq\pi_3(\eps^{-1},\|x\|_\infty,\|\xi\|_\infty),
  \end{align*}
  where we set $S_x:=[-(\|x\|_\infty+E+1),\|x\|_\infty+E+1]^d$ and $s_x:=|S_x|^{1/2}$ to simplify notation.
  As we want to establish effective representability for $\Omega=\R^d$, the estimate in (\ref{WERx2}) is insufficient. In particular, we have no control over the behavior of the networks
  $\Psi^{\RE}_{x,\xi,\eps},\Psi^{\IM}_{x,\xi,\eps}$
  outside the set $S_x$. We can, however, construct networks which exhibit the same scaling behavior in terms of $\M$ and $\mathcal{B}$, are supported in $S_x$, and realize the same output for all inputs in $S_x$.
  To this end let, for $y\in\R_+$, the network $\alpha_y\in\cN_{1,1}$ be given by
  \begin{align*}
    \alpha_y(t):=\rho(t-(-y-1))-\rho(t-(-y))-\rho(t-y)+\rho(t-(y+1)),\quad t\in\R.
  \end{align*}
  Note that $\alpha_y(t)=1$ for $t\in[-y,y]$, $\alpha_y(t)=0$ for $t\notin[-y-1,y+1]$, and $\alpha_y(t)\in(0,1)$ else. Next, consider, 
  for $x\in\R^d$, the network given by
  \begin{align*}
    \chi_x(t):=\rho\left(\left[\sum_{i=1}^d\alpha_{\|x\|_{\infty}+E}(t_i)\right] -(d-1)\right),\quad t=(t_1,t_2,\dots,t_d)\in\R^d,
  \end{align*}
 and note that
  \begin{align*}
      \chi_x(t)&=1, \quad \forall t\in [-(\|x\|_\infty+E),\|x\|_\infty+E]^d\\ 
      \chi_x(t)&=0, \quad \forall t\notin[-(\|x\|_\infty+E+1),\|x\|_\infty+E+1]^d\\
      0\leq\chi_x(t)&\leq 1, \quad \forall t\in\R^d.
  \end{align*}
  As $d$ and $E$ are considered fixed here, there exists a constant $C_1$ such that, for all $x\in\R^d$, we have $\M(\chi_x)\leq C_1$ and $\mathcal{B}(\chi_x)\leq C_1\max\{1,\| x\|_\infty\}$. Now, let $B:=\max\{1,\|g\|_{L^\infty(\R)}\}$. Next, by Proposition \ref{relu_mult} there exists a constant $C_2$ such that, for all $x\in\R^d$, $\eps\in(0,1/2)$, there is a network $\mu_{x,\eps}\in\cN_{1,1}$ satisfying
  \begin{align}\label{WERx4}
  \sup_{y,z\in[-2B,2B]}|\mu_{x,\eps}(y,z)-yz|\leq\tfrac{\eps}{4s_x},
  \end{align}
  and, for all $y\in\R$, 
  \begin{align}\label{WERx8}
  \mu_{x,\eps}(0,y)=\mu_{x,\eps}(y,0)=0,
  \end{align}
  with $\M(\mu_{x,\eps})\leq C_2(\log(\eps^{-1})+\log(s_x))$ and $\mathcal{B}(\mu_{x,\eps})\leq 1$. Note that in the upper bound on $\M(\mu_{x,\eps})$, we did not make the dependence on $B$ explicit as we consider $g$ fixed for the purposes of the proof. Next, as $E$ is fixed, there exists a constant $C_3$ such that $\M(\mu_{x,\eps})\leq C_3(\log(\eps^{-1})+\log(\|x\|_\infty+1))$, for all $x\in\R^d$, $\eps\in(0,1/2)$.
  
  We now take 
  \begin{align*}
      \Gamma^{\RE}_{x,\xi,\eps}:=\mu_{x,\eps}\circ(\Psi^{\RE}_{x,\xi,\eps},\chi_x)\quad\mathrm{and}\quad
      \Gamma^{\IM}_{x,\xi,\eps}:=\mu_{x,\eps}\circ(\Psi^{\IM}_{x,\xi,\eps},\chi_x)
  \end{align*}
  according to Lemmas~\ref{network_parallelization} and~\ref{network_conc}, which ensures the existence of a polynomial $\pi_4$ such that, for all $x,\xi\in\R^d$, $\eps\in(0,1/2)$, 
  \begin{align}\begin{split}\label{MB-bounds}
      \M(\Gamma^{\RE}_{x,\xi,\eps}),\M(\Gamma^{\IM}_{x,\xi,\eps})&\leq\pi_4(\log(\eps^{-1}),\log(\|x\|_\infty),\log(\|\xi\|_\infty)),\\
      \mathcal{B}(\Gamma^{\RE}_{x,\xi,\eps}),\mathcal{B}(\Gamma^{\IM}_{x,\xi,\eps})&\leq\pi_4(\eps^{-1},\|x\|_\infty,\|\xi\|_\infty). 
      \end{split}
  \end{align}
  Furthermore,
  \begin{align}
  \begin{split}\label{WER_Gamma_Est}
      \|\Gamma^{\RE}_{x,\xi,\eps} - \RE(M_{\xi}T_x g)\|_{L^{\infty}(S_x)}
      &\leq\|\mu_{x,\eps}\circ(\Psi^{\RE}_{x,\xi,\eps},\chi_x)- \Psi^{\RE}_{x,\xi,\eps}\cdot\chi_x\|_{L^{\infty}(S_x)}\\
      &\quad+ \|\Psi^{\RE}_{x,\xi,\eps}\cdot\chi_x - \RE(M_{\xi}T_x g)\|_{L^{\infty}(S_x)},
      \end{split}
  \end{align}
  where the first term is upper-bounded by $\tfrac{\eps}{4s_x}$ due to \eqref{WERx4}. The second term on the right-hand side of (\ref{WER_Gamma_Est}) is upper-bounded as follows. First, note that for $t\in S_x\setminus [-(\|x\|_\infty+E),\|x\|_\infty+E]^d$, we have $\RE(M_{\xi}T_x g)(t)=0$ and $|\chi_x(t)|\leq 1$, which implies 
  \begin{align*}
      |\Psi^{\RE}_{x,\xi,\eps}(t)\cdot\chi_x(t) - \RE(M_{\xi}T_x g)(t)|&\leq |\Psi^{\RE}_{x,\xi,\eps}(t)| \leq  |\Psi^{\RE}_{x,\xi,\eps}(t) - \RE(M_{\xi}T_x g)(t)| +  |\RE(M_{\xi}T_x g)(t)|\\
      &=  |\Psi^{\RE}_{x,\xi,\eps}(t) - \RE(M_{\xi}T_x g)(t)|.
  \end{align*}
  As $|\chi_x(t)|=1$ for $t\in[-(\|x\|_\infty+E),\|x\|_\infty+E]^d$, together with \eqref{WER_Gamma_Est}, this yields
  \begin{align*}
      \|\Gamma^{\RE}_{x,\xi,\eps} - \RE(M_{\xi}T_x g)\|_{L^{\infty}(S_x)} \leq \tfrac{\eps}{4s_x} + \|\Psi^{\RE}_{x,\xi,\eps} - \RE(M_{\xi}T_x g)\|_{L^{\infty}(S_x)}.
  \end{align*}
  The analogous estimate for $\|\Gamma^{\IM}_{x,\xi,\eps}-\IM(M_{\xi}T_x g)\|_{L^{\infty}(S_x)}$ is obtained in exactly the same manner. Together with \eqref{WERx2}, we can finally infer that, for all 
  $x,\xi\in\R^d$, $\eps\in(0,1/2)$,
    \begin{align*}
    \|\RE(M_{\xi}T_x g)-\Gamma^{\RE}_{x,\xi,\eps}\|_{L^{\infty}(S_x)}+\|\IM(M_{\xi}T_x g)-\Gamma^{\IM}_{x,\xi,\eps}\|_{L^{\infty}(S_x)}\leq \tfrac{\eps}{s_x}.
  \end{align*} 
  As $M_{\xi}T_x g$, $\Gamma^{\RE}_{x,\xi,\eps}$, and $\Gamma^{\IM}_{x,\xi,\eps}$ are supported in $S_x$ for all $x,\xi\in\R^d$, $\eps\in(0,1/2)$, using
  \eqref{WERx8}, we get
  \begin{align}\begin{split}\label{WERx10}
          \quad&\|\RE(M_{\xi}T_x g)-\Gamma^{\RE}_{x,\xi,\eps}\|_{L^2(\R^d)}+\|\IM(M_{\xi}T_x g)-\Gamma^{\IM}_{x,\xi,\eps}\|_{L^2(\R^d)}\\
          &= \|\RE(M_{\xi}T_x g)-\Gamma^{\RE}_{x,\xi,\eps}\|_{L^2(S_x)}+\|\IM(M_{\xi}T_x g)-\Gamma^{\IM}_{x,\xi,\eps}\|_{L^2(S_x)}\\
          &\leq s_x\|\RE(M_{\xi}T_x g)-\Gamma^{\RE}_{x,\xi,\eps}\|_{L^\infty(S_x)}+s_x\|\IM(M_{\xi}T_x g)-\Gamma^{\IM}_{x,\xi,\eps}\|_{L^\infty(S_x)}\leq \eps.
  \end{split}\end{align}
    Consider now, for $i\in\N$, $\eps\in(0,1/2)$, the complex-valued network $\Gamma_{i,\eps}\in\cN^{\C}_{d,1}$ given by
    \begin{align*}
        \Gamma_{i,\eps}:=(\Gamma^{\RE}_{x(i),\xi(i),\eps},\Gamma^{\IM}_{x(i),\xi(i),\eps})
    \end{align*}
  and note that, for $f\in L^2(\Omega,\C)$,
  \begin{align*}
      \|f\|_{L^2(\Omega,\C)}&=\left(\int_\Omega |f(t)|^2 \text{d}t\right)^{\frac{1}{2}}=
      \left(\int_\Omega |\RE (f(t))|^2 + |\IM (f(t))|^2 \text{d}t\right)^{\frac{1}{2}}=
      \left(\|\RE (f)\|^2_{L^2(\Omega)}+\|\IM
      (f)\|^2_{L^2(\Omega)}\right)^{\frac{1}{2}}\\
      &\leq \|\RE (f)\|_{L^2(\Omega)}+\|\IM
      (f)\|_{L^2(\Omega)}.
  \end{align*}
  Hence, \eqref{WERx10} implies that, for all $i\in\N$, $\eps\in(0,1/2)$,
  \begin{align*}
      \|\varphi_i-\Gamma_{i,\eps}\|_{L^2(\R^d,\C)}=\|M_{\xi(i)}T_{x(i)}g-(\Gamma^{\RE}_{x(i),\xi(i),\eps},\Gamma^{\IM}_{x(i),\xi(i),\eps})\|_{L^2(\R^d,\C)}\leq\eps.
  \end{align*}
  Finally, using \eqref{GSR1} in \eqref{MB-bounds}, it follows that there exists a polynomial $\pi_5$ such that for all $i\in\N$, $\eps\in(0,1/2)$, we have
    $\M(\Gamma^{\RE}_{x(i),\xi(i),\eps}),\M(\Gamma^{\IM}_{x(i),\xi(i),\eps})\leq\pi_5(\log(\eps^{-1}),\log(i))$ and $\mathcal{B}(\Gamma^{\RE}_{x(i),\xi(i),\eps}),\mathcal{B}(\Gamma^{\IM}_{x(i),\xi(i),\eps})\leq\pi_5(\eps^{-1},i)$, which 
    finalizes the proof.
\end{proof}

Next, we establish the central result of this section. To this end, we first recall that according to Theorem \ref{thm:optitransgeneral} neural networks provide optimal approximations for all function classes that are optimally approximated by affine dictionaries (generated by functions $f$ that can be approximated well by neural networks). While this universality property is significant as it applies to all affine dictionaries, it is perhaps not completely surprising as affine dictionaries are generated by affine transformations and neural networks consist of concatenations of affine transformations and nonlinearities. Gabor dictionaries, on the other hand, exhibit a fundamentally different mathematical structure. The next result shows that neural networks also provide optimal approximations for all function classes that are optimally approximated by Gabor dictionaries (again, with generator functions that can be approximated well by neural networks). 

\pagebreak

\begin{theorem}\label{GaborSystemTrans}
  Let $d\in\N$, $\Omega\subseteq\R^d$, $\alpha,\beta>0$, $g\in L^2(\R^d)\cap L^\infty(\R^d)$, and let $\mathcal{G}(g,\alpha,\beta,\Omega)$  be the corresponding Gabor dictionary with ordering as defined in \eqref{GaborOrdering}. Assume that $\Omega$ is bounded or that $\Omega=\R^d$ and $g$ is compactly supported. 
  Further, suppose that there exists a polynomial $\pi$ such that for every $x\in\R^d$, $\eps\in(0,1/2)$, there is a network $\Phi_{x,\eps}\in\cN_{d,1}$ satisfying 
  \begin{align*}
    \|g-\Phi_{x,\eps}\|_{L^{\infty}(x+\Omega)}\leq\eps,
\end{align*}
with $\M(\Phi_{x,\eps})\leq\pi(\log(\eps^{-1}),\log(\|x\|_\infty))$, $\mathcal{B}(\Phi_{x,\eps})\leq\pi(\eps^{-1},\|x\|_\infty)$. Then, for all compact function classes $\cC\subseteq L^2(\Omega)$, we have
    \begin{align*}
        \gamma_{\cN}^{\ast, \text{eff}}(\mathcal{C})\geq\gamma^{\ast, \text{eff}}(\mathcal{C},\mathcal{G}(g,\alpha,\beta,\Omega)).
    \end{align*}
    In particular, if $\mathcal{C}$ is optimally representable by $\mathcal{G}(g,\alpha,\beta,\Omega)$ (in the sense of Definition \ref{def:repopti}), then $\mathcal{C}$ is optimally representable by neural networks (in the sense of Definition \ref{def:NNopti}).

\end{theorem}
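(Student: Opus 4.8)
The plan is to follow the blueprint of the proof of Theorem~\ref{thm:optitransgeneral}: the inequality $\gamma_{\cN}^{\ast, \text{eff}}(\mathcal{C})\geq\gamma^{\ast, \text{eff}}(\mathcal{C},\mathcal{G}(g,\alpha,\beta,\Omega))$ is obtained by combining the transference principle (Theorem~\ref{theo:EncodeOfNeuralNetworks}) with effective representability of the Gabor system, and the ``in particular'' clause then follows from Theorem~\ref{thm:EffRepNN}. The crucial observation is that the hypotheses placed on $g$ here are \emph{exactly} the antecedent of Theorem~\ref{GaborSystemsRep}; hence that theorem directly furnishes effective representability of $\mathcal{G}(g,\alpha,\beta,\Omega)$ by neural networks, where the ordering \eqref{GaborOrdering} guarantees that $\|x(i)\|_\infty$ and $\|\xi(i)\|_\infty$ grow at most linearly in $i$.

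For bounded $\Omega$ the argument is now immediate: $\mathcal{G}(g,\alpha,\beta,\Omega)$ is an effectively representable (complex-valued) dictionary in $L^2(\Omega)$, so Theorem~\ref{theo:EncodeOfNeuralNetworks} yields the claimed inequality for every $\cC\subseteq L^2(\Omega)$. The one point requiring care is that Gabor elements are complex-valued, whereas the transference principle was phrased for real-valued networks; this is handled as indicated in the discussion preceding Theorem~\ref{GaborSystemsRep}, namely by adopting the convention $\cN^{\C}_{d,1}:=\cN_{d,2}$ and running the encoder/decoder construction of Theorem~\ref{theo:EncodeOfNeuralNetworks} separately on the real and imaginary channels, which at most doubles the connectivity and the bit budget. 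For $\Omega=\R^d$ with $g$ compactly supported, Theorem~\ref{GaborSystemsRep} again supplies effective representability, but Theorem~\ref{theo:EncodeOfNeuralNetworks} must be used in the extended form described in the remark following it: the dictionary elements $\varphi_i=M_{\xi(i)}T_{x(i)}g$ satisfy $\supp(\varphi_i)\subseteq x(i)+[-E,E]^d$, a cube whose location grows only linearly in $i$, and the approximating networks $\Gamma_{i,\eps}$ constructed in the proof of Theorem~\ref{GaborSystemsRep} are themselves supported in these cubes (owing to the cutoff functions $\chi_x$) and vanish once their input leaves the support; the encoder then additionally records the polynomially bounded support location using $\mathcal{O}(\log i)=\mathcal{O}(\log M)$ extra bits per element, which does not disturb the $\mathcal{O}(M\log M)$ overall code length driving the argument.

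Finally, combining the inequality just established with Theorem~\ref{thm:EffRepNN}, which gives $\gamma_{\cN}^{\ast,\text{eff}}(\cC)\le\gamma^{\ast}(\cC)$, shows that if $\mathcal{C}$ is optimally representable by $\mathcal{G}(g,\alpha,\beta,\Omega)$ in the sense of Definition~\ref{def:repopti}, i.e.\ $\gamma^{\ast,\text{eff}}(\mathcal{C},\mathcal{G}(g,\alpha,\beta,\Omega))=\gamma^{\ast}(\cC)$, then $\gamma_{\cN}^{\ast,\text{eff}}(\cC)=\gamma^{\ast}(\cC)$, which is precisely optimal representability by neural networks (Definition~\ref{def:NNopti}); for $\Omega=\R^d$ one invokes here the analogous extension of Theorem~\ref{thm:EffRepNN} to compactly supported function classes. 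The main obstacle I expect is not conceptual --- the conceptual content resides entirely in Theorems~\ref{GaborSystemsRep}, \ref{theo:EncodeOfNeuralNetworks}, and~\ref{thm:EffRepNN} --- but rather the careful threading of the complex-valued and $\Omega=\R^d$ adaptations through the bit-counting of Theorem~\ref{theo:EncodeOfNeuralNetworks}, ensuring that encoding the polynomially many support locations together with the two real channels still leaves the code length at $\mathcal{O}(\varepsilon^{-1/\nu})$ for some $\nu<\gamma$.
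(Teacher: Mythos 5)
Your proposal is correct and follows exactly the same route as the paper: Theorem~\ref{GaborSystemsRep} gives effective representability of the Gabor dictionary, Theorem~\ref{theo:EncodeOfNeuralNetworks} then yields $\gamma_{\cN}^{\ast,\text{eff}}(\cC)\ge\gamma^{\ast,\text{eff}}(\cC,\mathcal{G}(g,\alpha,\beta,\Omega))$, and Theorem~\ref{thm:EffRepNN} supplies the ``in particular'' clause. The paper's own proof is a one-liner citing these three results; your additional discussion of the complex-valued channel splitting and the $\Omega=\R^d$ extension (recording polynomially bounded support locations at $\mathcal{O}(\log i)$ bits each) correctly fills in what the paper relegates to the remarks surrounding Theorems~\ref{GaborSystemsRep} and~\ref{theo:EncodeOfNeuralNetworks}.
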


\begin{proof}
    The first statement follows from Theorem~\ref{theo:EncodeOfNeuralNetworks} and Theorem~\ref{GaborSystemsRep}, the second is by Theorem~\ref{thm:EffRepNN}.
\end{proof}

We complete the program in this section by showing that the Gaussian function satisfies the conditions on the generator $g$ in Theorem \ref{GaborSystemsRep} for bounded $\Omega$. Gaussian functions are widely used generator functions for Gabor dictionaries owing to their excellent time-frequency localization and their frame-theoretic optimality properties \cite{grochenig2013foundations}. We hasten to add that the result below can be extended to any generator function $g$ of
sufficiently fast decay and sufficient smoothness.

\begin{lemma}\label{NNGaussian}
For $d\in\N$, let $g_d\in L^2(\R^d)$ be given by
  \begin{align*}
    g_d(x):=e^{-\|x\|^2_2}.
  \end{align*}
There exists a constant $C>0$ such that, for all $d\in\N$ and $\eps\in(0,1/2)$, there is a network $\Phi_{d,\eps}\in\cN_{d,1}$ satisfying 
 \begin{align*}
   \|\Phi_{d,\eps}-g\|_{L^{\infty}(\R^d)}\leq\eps,
 \end{align*}
with $\M(\Phi_{d,\eps})\leq C d (\log(\eps^{-1}))^2((\log(\eps^{-1}))^2+\log(d))$, $\mathcal{B}(\Phi_{d,\eps})\leq 1$.
\end{lemma}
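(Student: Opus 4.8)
The plan is to write $g_d(x)=e^{-\|x\|_2^2}=e^{-t}$ with $t=\sum_{i=1}^d x_i^2$, and to realize $\Phi_{d,\eps}$ as a composition $\Xi_\eps\circ\Psi_{d,\eps}$ (via Lemma~\ref{network_conc}), where $\Psi_{d,\eps}$ approximates $x\mapsto t$ and $\Xi_\eps$ approximates $t\mapsto e^{-t}$ on $[0,\infty)$. First I would fix $D:=\sqrt{\log(C/\eps)}$ with $C$ to be chosen, so that $e^{-s^2}<\eps/C$ for $|s|>D$, and build $\Psi_{d,\eps}$ by placing $d$ copies of the squaring network $x_i\mapsto\Phi_{D,\eps/(2d)}(x_i,x_i)$ from Proposition~\ref{relu_mult} in parallel and summing them with unit coefficients via Lemma~\ref{network_linearcombination} (padding to a common depth by Lemma~\ref{network_extension}). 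This yields $\cB(\Psi_{d,\eps})\le1$, $\M(\Psi_{d,\eps})=O\!\big(d(\log d+\log(\eps^{-1}))\big)$, and — from the interpolation structure underlying Propositions~\ref{relu_square} and~\ref{relu_mult}, in which the approximant never underestimates $x_i^2$ and already exceeds $D^2$ once $|x_i|\ge D$ — one has $\Psi_{d,\eps}(x)\ge\|x\|_2^2$ for all $x$, with equality up to an additive $\eps/2$ whenever $\|x\|_\infty\le D$.

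For $\Xi_\eps$ I would set $n:=\lceil\log(C'/\eps)\rceil$ and use the decomposition $e^{-t}=(e^{-t/n})^n$. Concretely, $\Xi_\eps$ applies the affine map $t\mapsto u:=t/n$ (weight $1/n\le1$), then the clip $u\mapsto\rho(u)-\rho(u-1)\in[0,1]$ (biases $\le1$), then approximates $u\mapsto e^{-u}$ on $[0,1]$ by Lemma~\ref{Sfunctions}/Lemma~\ref{lem:Sfunctions_general} after observing that $e^{-(\cdot+1)}\in\Ss_{[-1,1]}$ and absorbing the ensuing constant factor $e$ through the scalar-multiplication network of Lemma~\ref{lem:scalar_mult} (to keep weights $\le1$), then clips the output again to $[0,1]$, and finally raises it to the $n$-th power using the monomial network of Proposition~\ref{relu_poly} with domain parameter $D=1$. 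The crucial point is that $n$ is independent of $d$, so $\cB(\Xi_\eps)\le1$ and $\M(\Xi_\eps)=O\!\big((\log(\eps^{-1}))^2\big)$; when $u\le1$ the output is $\approx e^{-un}=e^{-t}$, and when the clip is active (forcing $t>n\ge\log(C'/\eps)$) the output is $\approx e^{-n}<\eps$, matching that $e^{-t}<\eps$ in that regime.

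Putting $\Phi_{d,\eps}:=\Xi_\eps\circ\Psi_{d,\eps}$ gives $\cB(\Phi_{d,\eps})\le1$ and $\M(\Phi_{d,\eps})=O\!\big(d(\log d+\log(\eps^{-1}))+(\log(\eps^{-1}))^2\big)$, which is comfortably below the asserted bound because $(\log(\eps^{-1}))^2\ge1$ for $\eps<1/2$. For the error estimate one splits $\R^d$ into the region where all $|x_i|\le D$ and its complement: on the former, chain the scaled interpolation error of $\Psi_{d,\eps}$ ($\le\eps/2$), the $1$-Lipschitz continuity of $t\mapsto u$ and of $e^{-\cdot}$, the $[0,1]$-approximation error of $e^{-u}$, and the fact that the final $n$-th power acts on arguments in $[0,1]$ (so the error amplification is at most $n$, countered by taking intermediate tolerances proportional to $\eps/n$); on the complement both $g_d$ and $\Phi_{d,\eps}$ are $O(\eps)$ by the second paragraph. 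Choosing $C,C'$ and the sub-tolerances appropriately makes the total error $\le\eps$.

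The main obstacle I anticipate is the tension between the constraint $\cB\le1$ and the need to suppress the far field $\{\|x\|_\infty>D\}$: a direct clip of $t$ to $[0,\log(C/\eps)]$ would require a bias of order $\log(\eps^{-1})$, which is forbidden. I resolve this precisely as above — clipping $u=t/n$ rather than $t$, and exploiting that the squaring network never underestimates $x_i^2$, so a single large coordinate already drives $u$ past $1$. Should the reliance on the internal structure of Proposition~\ref{relu_mult} be judged too delicate, the fallback is to multiply the approximant by a plateau network supported near $[-D,D]^d$ (as in the proofs of Lemma~\ref{thm:EffApproxWithSplines} and Theorem~\ref{GaborSystemsRep}) and then rescale any oversized weights via the depth/weight tradeoff of Proposition~\ref{WDtradeoff}; everything else is routine bookkeeping with Lemmas~\ref{network_conc}, \ref{network_extension}, and~\ref{network_linearcombination}.
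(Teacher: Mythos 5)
Your construction is correct, and it takes a genuinely different and quantitatively sharper route than the paper. The paper also writes $g_d=h\circ f_d$ with $f_d(x)=\|x\|_2^2$, $h(y)=e^{-y}$, but then approximates $h$ directly on the \emph{full} range $[0,dD_\eps^2]$ of $f_d$ via Lemma~\ref{lem:Sfunctions_general}, whose connectivity cost scales linearly in the length of the interval; since $dD_\eps^2=d(\log(\eps^{-1}))^2$, this is where the factor $d(\log(\eps^{-1}))^2$ in front of $((\log(\eps^{-1}))^2+\log d)$ comes from. Your decomposition $e^{-t}=(e^{-t/n})^n$ with $n\asymp\log(\eps^{-1})$ collapses the domain of the exponential to $[0,1]$ \emph{independently of} $d$, so the exponential piece and the $n$-th power (via Proposition~\ref{relu_poly} on $[-1,1]$) together cost only $O((\log(\eps^{-1}))^2)$, while the $\|x\|_2^2$ block costs $O(d(\log d+\log(\eps^{-1})))$ — this is well inside the stated bound and in fact strictly better. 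The paper handles the far field by multiplying with a plateau network à la Theorem~\ref{GaborSystemsRep}; your clip $u\mapsto\rho(u)-\rho(u-1)$ after the rescaling $t\mapsto t/n$ is a cleaner way to the same end, precisely because it needs only unit biases.

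Two small points worth flagging. First, the assertion ``$\Psi_{d,\eps}(x)\ge\|x\|_2^2$ for all $x$'' is a slight overstatement: tracing the sawtooth construction, each squaring block satisfies $\Phi_{D,\eps'}(x_i,x_i)=D^2\widetilde\Phi_m(|x_i|/D)$, which equals $\tfrac{3}{2}D|x_i|-\tfrac12 D^2$ for $|x_i|\ge D$ and therefore falls \emph{below} $x_i^2$ for $|x_i|\gg D$. What is true — and is all you actually use, as your next sentence correctly indicates — is that each block is nonnegative and $\ge D^2$ once $|x_i|\ge D$, so a single large coordinate already pushes $u=t/n$ past $1$ and activates the clip. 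Second, the remark about ``absorbing the ensuing constant factor $e$'' is unnecessary: once you observe $y\mapsto e^{-(y+1)}\in\Ss_{[-1,1]}$ and feed it $y=\bar u-1$ (a unit bias), the network output is already $\approx e^{-\bar u}$ with no leftover factor. Neither point affects the correctness of your argument; the bookkeeping with sub-tolerances $\propto\eps/n$ to absorb the Lipschitz constant $n$ of $z\mapsto z^n$ on $[0,1]$ is exactly right.
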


\begin{proof}
 Observe that $g_d$ can be written as the composition $h \circ f_d$ of the functions $f_d\colon\R^d\to\R_+$ and $h\colon \R_+\to\R$ given by 
 \begin{align*}
    f_d(x):=\|x\|^2_2=\sum_{i=1}^d x_i^2\quad\text{and}\quad h(y):=e^{-y}.
 \end{align*}
 By Proposition~\ref{relu_mult} and Lemma~\ref{network_linearcombination}, there exists a constant $C_1>0$ such that, for every $d\in\N$, $D\in[1,\infty)$, $\eps\in(0,1/2)$, there is a network $\Psi_{d,D,\eps}\in\cN_{d,1}$ satisfying
  \begin{align}\label{GP1}
   &\sup_{x\in[-D,D]^d}|\Psi_{d,D,\eps}(x)-\|x\|_2^2| \leq \tfrac{\eps}{2},\\
   \label{boundsonnet1}
    &\cM(\Psi_{d,D,\eps})\leq C_1 d(\log(\eps^{-1})+ \log(\lceil D \rceil) ), \quad \cB(\Psi_{d,D,\eps})\leq 1.
 \end{align}
 Moreover, as $|\tfrac{d^n}{dy^n}e^{-y}|=|e^{-y}|\leq 1$ for all $n\in\N$, $y\geq 0$, Lemma~\ref{lem:Sfunctions_general} implies the existence of a constant $C_2>0$ such that for every $d \in\N$, $D \in [1,\infty)$, $\eps\in(0,1/2)$, there is a network $\Gamma_{d,D,\eps}\in\cN_{1,1}$ satisfying
 \begin{align}\label{GP2}
   &\sup_{y\in[0,d D^2]}|\Gamma_{d,D,\eps}(y)-e^{-y}|\leq\tfrac{\eps}{2},\\
   \label{boundsonnet2}
    &\cM(\Gamma_{d,D,\eps})\leq C_2 dD^2 ((\log(\eps^{-1}))^2+\log(d)+\log(\lceil D \rceil)), \quad \cB(\Gamma_{D,\eps})\leq 1.
\end{align}
Now, let $D_\eps:=\log(\eps^{-1})$ and take $\widetilde{\Phi}_{d,\eps}:=\Gamma_{d,D_\eps,\eps}\circ\Psi_{d,D_\eps,\eps}$ according to Lemma~\ref{network_conc}. Consequently, it follows from (\ref{boundsonnet1}) and (\ref{boundsonnet2}) that there exists a constant $C_2>0$ such that for all $d\in\N$, $\eps\in(0,1/2)$, we have $\cM(\widetilde{\Phi}_{d,\eps})\leq C_2 d (\log(\eps^{-1}))^2((\log(\eps^{-1}))^2+\log(d))$ and $\cB(\widetilde{\Phi}_{d,\eps})\leq 1$. Moreover, as  $|e^{-y}|\leq 1$ for all $y\geq 0$, combining \eqref{GP1} and \eqref{GP2} yields for all $\eps\in(0,1/2)$, $x\in[-D_\eps,D_\eps]^d$,
\begin{align*}
    |g(x)-\widetilde{\Phi}_{d,\eps}(x)|&=|e^{-\|x\|^2_2}-\Gamma_{d,D_\eps,\eps}(\Psi_{d,D_\eps,\eps}(x))|\\
    &\leq|e^{-\|x\|^2_2}-e^{-\Psi_{d,D_\eps,\eps}(x)}|+|e^{-\Psi_{d,D_\eps,\eps}(x)}-\Gamma_{d,D_\eps,\eps}(\Psi_{d,D_\eps,\eps}(x))|\\
    &\leq\tfrac{\eps}{2}+\tfrac{\eps}{2}=\eps.
\end{align*}
We can now use the same approach as in the proof of \Cref{GaborSystemsRep} to construct networks $\Phi_{d,\eps}$ supported on the interval $[-D_\eps,D_\eps]^d$ over which they approximate $g$ to within error $\eps$, and obey $\M(\Phi_\eps) \leq C d (\log(\eps^{-1}))^2((\log(\eps^{-1}))^2+\log(d))$, $\mathcal{B}(\Phi_{d,\eps}) \leq 1$ for some absolute constant $C$.   
Together with $|g(x)|\leq\eps$, for all $x\in\R^d\backslash[-D_\eps,D_\eps]^d$, this completes the proof.
\end{proof}
 
 \begin{remark}
 Note that Lemma \ref{NNGaussian} establishes an approximation result that is even stronger than what is required by Theorem \ref{GaborSystemsRep}. Specifically, we achieve $\eps$-approximation 
 over all of $\R^d$ with a network that does not depend on the shift parameter $x$, while exhibiting the desired growth rates on  
  $\mathcal{M}$ and $\mathcal{B}$, which consequently do not depend on the shift parameter as well. The idea underlying this construction can be used to strengthen Theorem~\ref{GaborSystemsRep} to apply to $\Omega=\R^d$ and generator functions of unbounded support, but sufficiently rapid decay.
 \end{remark}

We conclude this section with a remark on the neural network approximation of the
real-valued counterpart of Gabor dictionaries known as Wilson dictionaries \cite{grochenig:2000GaborApprox,grochenig2013foundations} and consisting of cosine-modulated and time-shifted versions of a given generator function, see also Appendix~\ref{Modulation_tail}. The techniques developed in this section, mutatis mutandis, 
show that neural networks provide Kolmogorov-Donoho optimal approximation for all function classes that are optimally approximated by Wilson dictionaries (generated by functions that can be approximated well by neural networks). 
Specifically, we point out that the proofs of Lemma \ref{NNmodulation} and Theorem~\ref{GaborSystemsRep} explicitly construct neural network approximations of time-shifted and cosine- and sine-modulated versions of the generator $g$. 
As identified in Table~\ref{table_opt_exp}, Wilson bases provide optimal nonlinear approximation of (unit) balls in modulation spaces \cite{feichtinger:1981ModulationSpaces,grochenig:2000GaborApprox}.
Finally, we note that similarly the techniques developed in the proofs of Lemma \ref{NNmodulation} and Theorem~\ref{GaborSystemsRep} can be used to establish optimal representability of Fourier bases.

\section{Improving Polynomial Approximation Rates to Exponential Rates}\label{sec:weierstrass}

Having established that for all function classes listed in Table~\ref{table_opt_exp},
Kolmogorov-Donoho-optimal approximation through neural networks is possible, this section proceeds to show that neural networks, in addition to their striking Kolmogorov-Donoho universality property, can also do something that has no classical equivalent.

Specifically, as mentioned in the introduction, for the class of oscillatory textures as considered below and for the Weierstrass function, there are no known methods that achieve exponential accuracy, i.e., an approximation error that decays exponentially in the number of parameters employed
in the approximant. We establish below that deep networks fill this gap.

Let us start by defining one-dimensional ``oscillatory textures'' according to \cite{demanet2007wave}. To this end, we recall the following definition from Lemma~\ref{lem:Sfunctions_general},
 \begin{align*}
   \Ss_{[a,b]}=\left\{f\in C^{\infty}([a,b],\R)\colon \|f^{(n)}(x)\|_{ L^{\infty}([a,b])} \leq n!,\, \text{\emph{ for all }} n\in\N_0\right\}.
 \end{align*} 
\begin{definition}
	\label{FunctionClassFa}
	Let the sets $\F_{D,a}$,\,$D,a\in\R_+$, be given by
	\begin{align*}
	\F_{D,a}=\left\{\cos(ag)h \colon g,h\in\Ss_{[-D,D]}\right\}.
	\end{align*}
\end{definition}
The efficient approximation of functions in $\mathcal{F}_{D,a}$ with $a$ large represents a notoriously difficult problem due to the combination of the rapidly oscillating cosine term and the warping function $g$. 
The best approximation results available in the literature \cite{demanet2007wave} are based on wave-atom dictionaries\footnote{To be precise, the results of \cite{demanet2007wave} are concerned with the two-dimensional case, whereas here we focus on the one-dimensional case. Note, however, that all our results are readily extended to the multi-dimensional case.} and yield low-order polynomial approximation rates. In what follows we show that finite-width deep networks drastically improve these results to exponential approximation rates.

We start with our statement on the neural network approximation of oscillatory textures.

\begin{proposition}\label{Fafunctions}
  There exists a constant $C>0$ such that for all $D,a\in \mathbb{R}_+, f\in\F_{D,a}$, and $\eps\in(0,1/2)$, 
  there is a network $\Gamma_{f,\eps}\in\cN_{1,1}$ satisfying 
  \begin{align*}
    \|f-\Gamma_{f,\eps}\|_{ L^{\infty}([-D,D])}\leq\eps,
  \end{align*}
  with $\L(\Gamma_{f,\eps})\le C\lceil D\rceil((\log(\eps^{-1}) + \log(\lceil a\rceil))^2+\log(\lceil D \rceil)+\log(\lceil D^{-1}\rceil))$, $\mathcal{W}(\Gamma_{f,\eps})\leq 32$,  $\mathcal{B}(\Gamma_{f,\eps})\leq 1$.
  \end{proposition}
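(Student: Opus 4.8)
The plan is to decompose $f = \cos(ag)\cdot h$ with $g,h\in\Ss_{[-D,D]}$ and to build the approximating network as a product of two subnetworks: one approximating $x\mapsto\cos(a\,g(x))$ and one approximating $x\mapsto h(x)$, combined via the multiplication network of Proposition~\ref{relu_mult}. First I would handle $h$: since $h\in\Ss_{[-D,D]}$, the general-interval version of Lemma~\ref{Sfunctions} (namely Lemma~\ref{lem:Sfunctions_general}) yields a network $\Phi^h_{\eps'}\in\cN_{1,1}$ with $\|\Phi^h_{\eps'}-h\|_{L^\infty([-D,D])}\le\eps'$, width at most $9$, weights bounded by $1$, and depth polylogarithmic in $\eps'^{-1}$ with the stated $\lceil D\rceil$, $\log\lceil D\rceil$, $\log\lceil D^{-1}\rceil$ dependencies. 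The function $h$ is moreover uniformly bounded on $[-D,D]$ (by the $n=0$ bound in the definition of $\Ss$, $\|h\|_{L^\infty}\le 1$), which will matter when invoking the multiplication network on a fixed-size domain.

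The more substantial step is approximating $x\mapsto\cos(a\,g(x))$. Here I would first approximate $g$ itself by a network $\Phi^g_{\eps''}$ via Lemma~\ref{lem:Sfunctions_general}, noting $\|g\|_{L^\infty([-D,D])}\le D$ (or rather $\le\max\{1,D\}$ since $\|g'\|_{L^\infty}\le 1$ and $g(0)$ may be nonzero — actually $|g(x)|\le|g(0)|+D\le 1+D$), so its range lies in a controlled interval $[-D',D']$ with $D'\le 1+D$. Then compose with the cosine-approximation network $\Psi_{a,D',\eps'''}$ from Theorem~\ref{sin}, which approximates $y\mapsto\cos(ay)$ on $[-D',D']$ with depth $C((\log(\eps'''^{-1}))^2+\log\lceil aD'\rceil)$, width $9$, and weights bounded by $1$. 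The key Lipschitz estimate is $|\cos(a\,g(x))-\cos(a\,\Phi^g_{\eps''}(x))|\le a\,|g(x)-\Phi^g_{\eps''}(x)|\le a\eps''$, so to reach overall error $\eps$ I must take $\eps''\sim\eps/a$, which is where the $\log\lceil a\rceil$ term enters the depth bound additively inside the square — hence the $(\log(\eps^{-1})+\log\lceil a\rceil)^2$ shape in the statement. Combining via Lemma~\ref{network_conc}, I get a network $\Phi^{\cos g}_{\eps'''}$ approximating $\cos(a g)$ on $[-D,D]$ with the claimed depth scaling and weights $\le 1$.

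Finally I would combine the two subnetworks. Both $\cos(a g)$ and $h$ are bounded by $1$ in absolute value on $[-D,D]$, and their approximants are bounded by, say, $2$ there; so I invoke $\Phi^{\mathrm{mult}}_{2,\eps/4}$ from Proposition~\ref{relu_mult} (domain $[-2,2]^2$, hence $D$-independent, weights $\le 1$) and set $\Gamma_{f,\eps}:=\Phi^{\mathrm{mult}}_{2,\eps/4}\circ(\Phi^{\cos g}_{\eps/4},\Phi^h_{\eps/4})$, using Lemma~\ref{network_conc} together with a parallelization lemma (Lemma~\ref{network_parallelization} / a shared-input variant) to form the pair. A triangle-inequality estimate
\begin{align*}
\|f-\Gamma_{f,\eps}\|_{L^\infty}\le\|\Phi^{\mathrm{mult}}(\cdot,\cdot)-\Phi^{\cos g}\Phi^h\|_{L^\infty}+\|\Phi^{\cos g}\Phi^h-\cos(ag)h\|_{L^\infty}
\end{align*}
with the second term split as $\|\Phi^{\cos g}(\Phi^h-h)\|+\|(\Phi^{\cos g}-\cos(ag))h\|$ gives total error $\le\eps$. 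The depth, width ($\le 32$, coming from $9+9$ in the parallelized pair plus the bounded width of the multiplication network, all absorbed into the constant $32$), and weight bound ($\le 1$) follow from the cited lemmas. The main obstacle I anticipate is bookkeeping the domain sizes correctly — in particular tracking that the range of $\Phi^g_{\eps''}$ stays within $[-(1+D),1+D]$ uniformly (requiring the approximation error for $g$ to be controlled so the composed cosine network sees inputs in its guaranteed domain), and ensuring the $\lceil D\rceil$, $\log\lceil D\rceil$, $\log\lceil D^{-1}\rceil$, $\log\lceil a\rceil$ dependencies aggregate exactly as stated rather than, say, multiplicatively; but these are routine once the structure above is in place.
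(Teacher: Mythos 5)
Your proposal follows essentially the same route as the paper's proof: decompose $f=\cos(ag)h$, approximate $g$ and $h$ via Lemma~\ref{lem:Sfunctions_general}, compose the $g$-network with the cosine network of Theorem~\ref{sin}, and multiply with the $h$-network via Proposition~\ref{relu_mult}, with the same triangle-inequality error split. One small simplification you missed: the $n=0$ case of the bound in the definition of $\Ss_{[-D,D]}$ gives $\|g\|_{L^\infty([-D,D])}\le 0!=1$ directly (no $D$-dependence), so the cosine network can be taken on the fixed domain $[-3/2,3/2]$ rather than $[-(1+D),1+D]$; your looser bound still lands within the claimed depth estimate since a $\log\lceil D\rceil$ term is already present, so this does not affect correctness.
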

\begin{proof}
  For $D,a\,\in\,\mathbb{R}_{+}$, $f\in\F_{D,a}$, let $g_f,h_f\in\Ss_{[-D,D]}$ be functions such that $f=\cos(ag_f)h_f$.
  Note that Lemma \ref{lem:Sfunctions_general} guarantees the existence of a constant $C_1>0$ such that for all $D,a\,\in\,\mathbb{R}_{+}$, $\eps\in(0,1/2)$,
  there are networks $\Psi_{g_f,\eps},\Psi_{h_f,\eps}\in\cN_{1,1}$
  satisfying 
    \begin{align}\label{FaghEst}
    \|\Psi_{g_f,\eps}-g_f\|_{ L^{\infty}([-D,D])} \leq\tfrac{\eps}{12\lceil a \rceil}, \quad \|\Psi_{h_f,\eps}-h_f\|_{ L^{\infty}([-D,D])}\leq\tfrac{\eps}{12\lceil a \rceil}
  \end{align}
  with 
  \begin{align*}
    \L(\Psi_{g_f,\eps}), \L(\Psi_{h_f,\eps})\leq C_1\lceil D\rceil(\log((\tfrac{\eps}{12\lceil a \rceil})^{-1})^2+\log(\lceil D \rceil)+\log(\lceil D^{-1}\rceil)),  
  \end{align*}
  $\mathcal{W}(\Psi_{g_f,\eps}), \mathcal{W}(\Psi_{h_f,\eps}) \le 16$, and 
  $\mathcal{B}(\Psi_{g_f,\eps}), \mathcal{B}(\Psi_{h_f,\eps})\leq 1$.
  Furthermore, Theorem \ref{sin} ensures the existence of a constant $C_2>0$ such that for all $D,a\in\R_+$, $\eps\in(0,1/2)$, there is a neural network $\Phi_{a,D,\eps}\in\cN_{1,1}$ satisfying
    \begin{align}\label{FaCosEst}
    \|\Phi_{a,D,\eps}-\cos(a\,\cdot\,)\|_{ L^{\infty}([-3/2,3/2])}\leq\tfrac{\eps}{3},
  \end{align}
  with $\L(\Phi_{a,D,\eps}) \le C_2((\log(\eps^{-1}))^2+\log(\lceil 3a/2 \rceil))$, $\mathcal{W}(\Phi_{a,D,\eps}) \le 9$, and $\mathcal{B}(\Phi_{a,D,\eps})\leq 1$.
  Moreover, due to Proposition \ref{relu_mult}, there exists a constant $C_3>0$ such that for all $\eps\in(0,1/2)$, there is a network $\mu_{\eps}\in\cN_{2,1}$ satisfying 
    \begin{align}\label{FamuEst}
    \sup_{x,y\in[-3/2,3/2]}|\mu_{\eps}(x,y)-xy|\leq\tfrac{\eps}{3},
  \end{align}
  with $\L(\mu_{\eps}) \le C_3\log(\eps^{-1})$, $\mathcal{W}(\mu_{\eps}) \le 5$, and $\mathcal{B}(\mu_{\eps}) \le 1$.
  By Lemma \ref{network_conc} there exists a network $\Psi^1$ satisfying $\Psi^1=\Phi_{a,D,\eps}\circ\Psi_{g_f,\eps}$ with $\mathcal{W}(\Psi^1) \leq 16$, $\L(\Psi^1) = \L(\Phi_{a,D,\eps}) + \L(\Psi_{g_f,\eps})$, and $\cB(\Psi^1)\leq 1$.
  Furthermore, combining Lemma~\ref{network_extension} and Lemma~\ref{lem:shared_input_para}, we can conclude the existence of a network  $\Psi^2(x)=(\Psi^1(x),\Psi_{h_f,\eps}(x)) = (\Phi_{a,D,\eps}(\Psi_{g_f,\eps}(x)),\Psi_{h_f,\eps}(x))$ with $\mathcal{W}(\Psi^2) \leq 32$,
  $\L(\Psi^2) = \max \{ \L(\Phi_{a,D,\eps}) + \L(\Psi_{g_f,\eps}), \L(\Psi_{h_f,\eps}) \}$, and $\cB(\Psi^2)\leq 1$.
  Next, for all $D,a\,\in\,\mathbb{R}_{+}$, $f\in\F_{D,a}$, $\eps\in(0,1/2)$, we define the network $\Gamma_{f,\eps}:= \mu_{\eps} \circ\Psi^2$.	
  By \eqref{FaghEst}, \eqref{FaCosEst}, and $\sup_{x\in\R}|\tfrac{d}{dx}\cos(ax)|=a$, we have, for all $x\in[-D,D]$,
  \begin{align*}\begin{split}
   |\Phi_{a,D,\eps}(\Psi_{g_f,\eps}(x))-\cos(ag_f(x))|&\leq|\Phi_{a,D,\eps}(\Psi_{g_f,\eps}(x))-\cos(a\Psi_{g_f,\eps}(x))|\\
   &\quad\,+|\cos(a\Psi_{g_f,\eps}(x))-\cos(ag_f(x))|\\
   &\leq\tfrac{\eps}{3}+a\tfrac{\eps}{12\lceil a \rceil}\leq\tfrac{5\eps}{12}.
  \end{split}\end{align*}
  Combining this with \eqref{FaghEst}, \eqref{FamuEst}, and $\|\cos\|_{L^\infty ([-D,D])},\|f\|_{L^\infty ([-D,D])} \le 1$ yields for all $x\,\in\,[-D,D]$,
  \begin{align*}
   |\Gamma_{f,\eps}(x)-f(x)|&=|\mu_{\eps}(\Phi_{a,D,\eps}(\Psi_{g_f,\eps}(x)),\Psi_{h_f,\eps}(x))-\cos(ag_f(x))h_f(x)|\\
   &\leq|\mu_{\eps}(\Phi_{a,D,\eps}(\Psi_{g_f,\eps}(x)),\Psi_{h_f,\eps}(x))-\Phi_{a,D,\eps}(\Psi_{g_f,\eps}(x))\Psi_{h_f,\eps}(x)|\\
   &\quad+|\Phi_{a,D,\eps}(\Psi_{g_f,\eps}(x))\Psi_{h_f,\eps}(x)-\cos(ag_f(x))\Psi_{h_f,\eps}(x)|\\
   &\quad+|\cos(ag_f(x))\Psi_{h_f,\eps}(x)-\cos(ag_f(x))h_f(x)|\\
   &\leq \tfrac{\eps}{3}+\tfrac{5\eps}{12}\left(1+\tfrac{\eps}{12\lceil a \rceil}\right)+\tfrac{\eps}{12\lceil a \rceil}\leq\eps.
  \end{align*}
  Finally, by Lemma \ref{network_conc} there exists a constant $C_4$ such that for all $D,a\,\in\,\mathbb{R}_{+}$, $f\in\F_{D,a}$, $\eps\in(0,1/2)$, it holds that $\Wcal(\Gamma_{f,\eps}) \leq 32$, 
  \begin{align*}
     \L(\Gamma_{f,\eps})&\leq\L(\mu_{\eps})+\max\{\L(\Phi_{a,D,\eps})+\L(\Psi_{g_f,\eps}),\L(\Psi_{h_f,\eps})\}\\
     &\leq C_4\lceil D\rceil((\log(\eps^{-1}) + \log(\lceil a\rceil))^2+\log(\lceil D \rceil)+\log(\lceil D^{-1}\rceil)),
  \end{align*}
  and $\cB(\Gamma_{f,\eps})\leq 1$.
\end{proof}
\pagebreak

\begin{figure}
	\includegraphics[width=0.5\textwidth]{./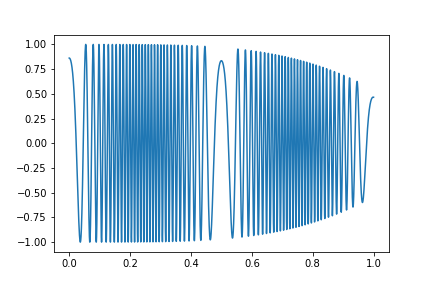}\hfill
	\includegraphics[width=0.5\textwidth]{./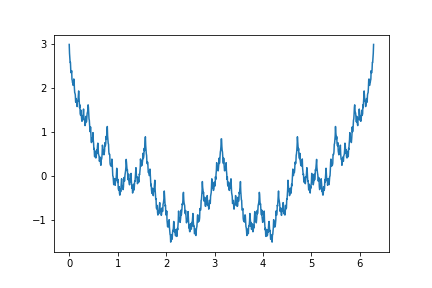}
	\caption{Left: A function in $\mathcal{F}_{1,100}$. Right: The function $W_{\frac{1}{\sqrt{2}},2}$.}
	\label{fig1}
\end{figure}

Finally, we show how the Weierstrass function---a fractal function, which is continuous everywhere but differentiable nowhere---can be approximated with exponential accuracy by deep ReLU networks. Specifically, we consider
\begin{align*}
W_{p,a}(x)=\sum_{k=0}^{\infty}p^k\cos(a^k\pi x), \quad \mbox{for}\,\,  p\in(0,1/2), \, a \in\R_+, \,\, \text{with} \,\, ap \ge 1,
\end{align*}
and let $\alpha = -\frac{\log(p)}{\log(a)}$, see Figure \ref{fig1} right for an example. It is well known \cite{zygmund2002trigonometric} that $W_{p,a}$ possesses H\"older smoothness $\alpha$ which may be made arbitrarily small by suitable choice of $a$.
While classical approximation methods achieve polynomial approximation rates only, it turns out
that finite-width deep networks yield exponential approximation rates. This is formalized as follows.
\begin{proposition}\label{prop:Weierstrass}
There exists a constant $C>0$ such that for all $\eps,p\in(0,1/2)$, $D,a\in \mathbb{R}_+$, there is a network $\Psi_{p,a,D,\eps}\in\cN_{1,1}$ satisfying
\begin{align*}
        \|\Psi_{p,a,D,\eps}-W_{p,a}\|_{ L^{\infty}([-D,D])}\leq\eps,
    \end{align*}
with $\cL(\Psi_{p,a,D,\eps}) \le C((\log(\eps^{-1}))^3+(\log(\eps^{-1}))^2\log(\lceil a \rceil)+\log(\eps^{-1})\log(\lceil D\rceil )),$ 
$\mathcal{W}(\Psi_{p,a,D,\eps}) \le 13$, ${\mathcal{B}(\Psi_{p,a,D,\eps})\leq 1}$.
\end{proposition}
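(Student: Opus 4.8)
The plan is to truncate the Weierstrass series, approximate the resulting finite sum of modulated cosines by a deep ReLU network, and bound the truncation error. First I would exploit that $p\in(0,1/2)$ to control the tail: for $K\in\N$, $\|W_{p,a}-\sum_{k=0}^{K-1}p^k\cos(a^k\pi\,\cdot\,)\|_{L^\infty(\R)}\le\sum_{k=K}^\infty p^k = \frac{p^K}{1-p}\le 2^{-K+1}$, so choosing $K:=\lceil\log(4\eps^{-1})\rceil$ makes the truncation error at most $\eps/2$. It thus remains to approximate $S_K(x):=\sum_{k=0}^{K-1}p^k\cos(a^k\pi x)$ on $[-D,D]$ to within $\eps/2$ by a finite-width network with the claimed depth bound.

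The key step is the network realization of $S_K$. For each $k\in\{0,\dots,K-1\}$, Theorem~\ref{sin} (with the parameter $a$ there set to $a^k\pi$) gives a network $\Psi^{(k)}\in\cN_{1,1}$ with $\cW(\Psi^{(k)})\le 9$, $\cB(\Psi^{(k)})\le 1$, and $\cL(\Psi^{(k)})\le C((\log((2K)^{-1}\eps))^{-2}\text{-type term})$; more precisely $\cL(\Psi^{(k)})\le C((\log((2K/\eps)))^2+\log(\lceil a^k\pi D\rceil))$, and $\|\Psi^{(k)}-\cos(a^k\pi\,\cdot\,)\|_{L^\infty([-D,D])}\le\frac{\eps}{2K}$. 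Since $\log(\lceil a^k\pi D\rceil)\le k\log(\lceil a\rceil)+\log(\lceil D\rceil)+C\le K\log(\lceil a\rceil)+\log(\lceil D\rceil)+C$, each layer count is bounded by $C((\log(\eps^{-1}))^2 + (\log(\eps^{-1}))\log(\lceil a\rceil) + \log(\lceil D\rceil))$ after absorbing $\log K\le\log\log(\eps^{-1})\lesssim\log(\eps^{-1})$. I would then take the linear combination $\sum_{k=0}^{K-1}p^k\Psi^{(k)}$. Because the networks have different depths, I first equalize depths via Lemma~\ref{network_extension} (depth increase at most the maximal depth, i.e.\ still within the budget), then apply Lemma~\ref{network_linearcombination} with coefficients $a_k=p^k\in(0,1)$, which keeps $\cB\le 1$, keeps the width at $\sum_k 9$ — which is \emph{not} finite-width — so instead I would use the ``shared-input'' finite-width linear combination lemma (Lemma~\ref{lem:finite_width_linearcombination}, referenced elsewhere in the paper) that realizes a sum of $K$ networks sequentially by carrying a running partial sum through an extra coordinate, yielding width $\le 9+\text{const}\le 13$ and depth $\sum_{k=0}^{K-1}\cL(\Psi^{(k)})\le K\cdot C((\log(\eps^{-1}))^2+(\log(\eps^{-1}))\log(\lceil a\rceil)+\log(\lceil D\rceil))$. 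With $K\asymp\log(\eps^{-1})$ this gives $\cL\le C((\log(\eps^{-1}))^3+(\log(\eps^{-1}))^2\log(\lceil a\rceil)+\log(\eps^{-1})\log(\lceil D\rceil))$, matching the statement, and $\cB\le 1$ is preserved throughout.

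For the error of the network $\Psi_{p,a,D,\eps}$ so constructed, the triangle inequality gives $\|\Psi_{p,a,D,\eps}-W_{p,a}\|_{L^\infty([-D,D])}\le \|\Psi_{p,a,D,\eps}-S_K\|_{L^\infty([-D,D])}+\|S_K-W_{p,a}\|_{L^\infty([-D,D])}\le \sum_{k=0}^{K-1}p^k\frac{\eps}{2K}+\frac{\eps}{2}\le\frac{\eps}{2K}\cdot K+\frac{\eps}{2}=\eps$, using $\sum p^k\le\frac{1}{1-p}\le 2$ actually would give $\eps$ too after adjusting the per-term tolerance to $\frac{\eps}{4K}$; I would simply set the per-term accuracy to $\frac{\eps}{4K}$ and the truncation index so each piece contributes $\le\eps/2$.

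The main obstacle I expect is the finite-width bookkeeping: naively summing $K$ width-$9$ networks in parallel blows the width up to $9K$, so the argument hinges on the sequential ``running-sum'' construction that keeps width constant while adding depths. This requires that the partial-sum coordinate be passed through each sub-network unchanged (via the identity $x=\rho(x)-\rho(-x)$, costing two extra neurons per layer) and that the affine map feeding $\Psi^{(k)}$ reads only the input coordinate $x$, which must also be carried along — hence width $\le 9+4=13$. Verifying that the weight magnitudes of this composite construction stay bounded by $1$ (the coefficients $p^k$ are $<1$, the identity-carry weights are $\pm1$, and Theorem~\ref{sin} already delivers $\cB\le1$) is routine but must be checked; everything else is a direct assembly of Theorem~\ref{sin}, Lemma~\ref{network_extension}, and the finite-width linear-combination lemma.
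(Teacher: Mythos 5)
Your proposal is correct and matches the paper's proof in all essential respects: both truncate $W_{p,a}$ after $K\asymp\log(\eps^{-1})$ terms, approximate each $\cos(a^k\pi\,\cdot\,)$ via Theorem~\ref{sin}, and keep the width bounded by carrying a running partial sum through an auxiliary coordinate; the only cosmetic differences are that the paper builds this running-sum network by hand with $3$-dimensional intermediate states (rather than invoking Lemma~\ref{lem:finite_width_linearcombination} as you do, which indeed yields $\cW\le 2d+2d'+\max\{2d,9\}=13$), and the paper uses per-term tolerance $\eps/4$ together with the bound $p^k\le 2^{-k}$ to control the sum, whereas you use the coarser but equally adequate $\eps/(4K)$ — both choices give the same depth scaling since $\log(K/\eps)\lesssim\log(\eps^{-1})$ when $K\lesssim\log(\eps^{-1})$.
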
 
\begin{proof}
  For every $N\in\N$, $p\in(0,1/2)$, $a\,\in\,\mathbb{R}_{+}$, $x\in\R$, let $S_{N,p,a}(x)=\sum_{k=0}^{N}p^k\cos(a^k\pi x)$ and note that
  \begin{align}\label{Weier1}
    |S_{N,p,a}(x)-W_{p,a}(x)|\leq \sum_{k=N+1}^{\infty}|p^k\cos(a^k\pi x)|\leq\sum_{k=N+1}^{\infty}p^k=\tfrac{1}{1-p}-\tfrac{1-p^{N+1}}{1-p}\leq 2^{-N}.
  \end{align}
  Let $N_{\eps}:=\lceil \log(2/\eps) \rceil$ for $\eps\in(0,1/2)$.
    Next, note that \Cref{sin} ensures the existence of a constant $C_1>0$ such that for all $D,a \in\R_+$, $k\in\N_0$, $\eps\in(0,1/2)$, 
    there is a network $\phi_{a^k,D,\eps}\in\cN_{1,1}$ satisfying
    \begin{align}\label{Weier42}
    \|\phi_{a^k,D,\eps}-\cos(a^k\pi\,\cdot\,)\|_{ L^{\infty}([-D,D])}\leq\tfrac{\eps}{4},
  \end{align}
    with ${\L(\phi_{a^k,D,\eps}) \le C_1((\log(\eps^{-1}))^2+\log(\lceil  a^k\pi D\rceil))}$, $\mathcal{W}(\phi_{a^k,D,\eps}) \le 9$, $\mathcal{B}(\phi_{a^k,D,\eps})\leq 1$. Let $A\colon\R^3\to\R^3$ and $B\colon\R^3\to\R$ be the affine transformations given by $A(x_1,x_2,x_3)=(x_1,x_1,x_2+x_3)^T$ and $B(x_1,x_2,x_3)=x_2+x_3$, respectively. We now define, for all
$p\in(0,1/2)$, $D,a \in\R_+$, $k\in\N_0$, $\eps\in(0,1/2)$, the networks
  \begin{align*}
    \psi^{p,a,0}_{D,\eps}(x)=\begin{pmatrix}x \\ p^0\phi_{a^0,D,\eps}(x) \\ 0\end{pmatrix}\quad\mathrm{and}\quad \psi^{p,a,k}_{D,\eps}(x_1,x_2,x_3)=\begin{pmatrix} x_1 \\ p^k\phi_{a^k,D,\eps}(x_2) \\ x_3\end{pmatrix},\, k>0,
  \end{align*}
 and, for all $p\in(0,1/2)$, $D,a\in\R_+$, $\eps\in(0,1/2)$, the network 
  \begin{align*}
    \Psi_{p,a,D,\eps}:=
    B\circ\psi^{p,a,N_{\eps}}_{D,\eps}\circ A \circ\psi^{p,a,N_{\eps}-1}_{D,\eps}\circ\dots\circ A\circ\psi^{p,a,0}_{D,\eps}.
 \end{align*}
  Due to \eqref{Weier42} we get, for all $p\in(0,1/2)$, $D,a\in\R_+$,  $\eps\in(0,1/2)$, $x\in[-D,D]$, that
    \begin{align*}
    |\Psi_{p,a,D,\eps}(x)-S_{N_{\eps},p,a}(x)|&=\left|\sum_{k=0}^{N_{\eps}}p^k\phi_{a^k,D,\eps}(x)-\sum_{k=0}^{N_{\eps}}p^k\cos(a^k\pi x)\right|\\
    &\leq \sum_{k=0}^{N_{\eps}}p^k|\phi_{a^k,D,\eps}(x)-\cos(a^k\pi x)|\leq \tfrac{\eps}{4}\sum_{k=0}^{N_{\eps}} 2^{-k} \le \tfrac{\eps}{2}.
  \end{align*}
  Combining this with \eqref{Weier1} establishes, for all $p\in(0,1/2)$, $D,a\in\R_+$, $\eps\in(0,1/2)$, $x\in[-D,D]$,
    \begin{align*}
   |\Psi_{p,a,D,\eps}(x)-W_{p,a}(x)|\leq 2^{-\lceil \log(\frac{2}{\eps})\rceil }+\tfrac{\eps}{2}\leq\tfrac{\eps}{2}+\tfrac{\eps}{2}=\eps.
  \end{align*}
  Applying Lemmas~\ref{network_conc}, \ref{network_extension}, and \ref{network_parallelization} establishes the existence of a constant $C_2$ such that for all $p\in(0,1/2)$, $D,a\in\R_+$, $\eps\in(0,1/2)$,
  \begin{align*}
   \L(\Psi_{p,a,D,\eps})&\leq \sum_{k=0}^{N_{\eps}}(\L(\phi_{a^k,D,\eps})+1) \leq N_\eps+1 + (N_{\eps}+1)C_1((\log(\eps^{-1}))^2+\log(\lceil  a^{N_{\eps}}\pi D\rceil))\\
    &\leq C_2((\log(\eps^{-1}))^3+(\log(\eps^{-1}))^2\log(\lceil a \rceil)+\log(\eps^{-1})\log(\lceil D \rceil)),
  \end{align*}
  $\Wcal(\Psi_{p,a,D,\eps}) \le 13$, and $\mathcal{B}(\Psi_{p,a,D,\eps})\leq 1$.
\end{proof}

We finally note that the restriction $p \in (0,1/2)$ in Proposition~\ref{prop:Weierstrass} was made for simplicity of exposition and can be relaxed to $p\in(0,r)$, with $r<1$, while only changing the constant $C$.

\section{Impossibility results for finite-depth networks}\label{sec:depth-width}

The recent successes of neural networks in machine learning applications have been enabled by various technological factors, but they all have in common the use of deep networks as opposed to shallow networks studied intensely in the 1990s. It is hence of interest to understand whether the use of depth offers fundamental advantages. In this spirit, the goal of 
this section is to make a formal case for depth in neural network approximation by establishing that, for nonconstant periodic functions, finite-width deep networks require asymptotically---in the function's ``highest frequency"---smaller connectivity than finite-depth wide networks. This statement is then extended to sufficiently smooth nonperiodic functions, thereby formalizing the benefit of deep networks over shallow networks for the approximation of a broad class of functions.

We start with preparatory material taken from \cite{telgarsky:2015Sawtooth}.
\begin{definition}[\cite{telgarsky:2015Sawtooth}]
\label{defsawtooth}
Let $k \in \mathbb{N}$. A function $f: \mathbb{R} \rightarrow \mathbb{R}$ is called $k$-sawtooth if it is piecewise linear with no more than $k$ pieces, i.e., its domain $\mathbb{R}$ can be partitioned into $k$ intervals such that $f$ is linear on each of these intervals.
\end{definition}

\begin{lemma}[\cite{telgarsky:2015Sawtooth}]\label{telgarsky2015}
\label{sawtoothness}
Every $\Phi \in \cN_{1,1}$ is $(2\Wcal(\Phi))^{\L(\Phi)}$-sawtooth.
\end{lemma}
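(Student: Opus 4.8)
The plan is to prove Lemma~\ref{sawtoothness} by induction on the depth $\L(\Phi)$, using two elementary closure properties of piecewise-linear functions: first, that an affine combination of $k_i$-sawtooth functions is $\left(1+\sum_i (k_i-1)\right)$-sawtooth (since the breakpoints of the sum are contained in the union of the breakpoints of the summands); and second, that composing a $k$-sawtooth function with an affine map does not increase the number of pieces, while applying $\rho$ adds at most one breakpoint per existing piece.

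First I would set up the base case. For $\L(\Phi)=1$, $\Phi=W_1$ is affine, hence $1$-sawtooth, and $1 \le (2\Wcal(\Phi))^1$. For the inductive step, write $\Phi = W_L \circ \rho \circ \Phi'$ where $\Phi' \in \cN_{1,\Wcal(\Phi)}$ collects the first $L-1$ affine maps and nonlinearities, so that $\L(\Phi')=L-1$. Each of the (at most $\Wcal(\Phi)$) component functions of $\Phi'$ is, by a componentwise version of the induction hypothesis, $(2\Wcal(\Phi))^{L-1}$-sawtooth. Applying $\rho$ to each component at most doubles the number of pieces, yielding functions that are $2(2\Wcal(\Phi))^{L-1}$-sawtooth. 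Finally, the output $\Phi = W_L \circ (\rho \circ \Phi')$ is an affine combination of at most $\Wcal(\Phi)$ such functions (plus a constant), so by the affine-combination bound it is $k$-sawtooth with
\[
k \le 1 + \Wcal(\Phi)\left(2(2\Wcal(\Phi))^{L-1}-1\right) \le 2\Wcal(\Phi)(2\Wcal(\Phi))^{L-1} = (2\Wcal(\Phi))^{L}.
\]

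The one point requiring care is making the induction hypothesis strong enough to push through: a scalar statement about $\cN_{1,1}$ does not directly compose, so I would phrase the inductive claim for vector-valued networks $\Phi \in \cN_{1,d'}$, asserting that every component function is $(2\Wcal(\Phi))^{\L(\Phi)}$-sawtooth, where $\Wcal(\Phi)$ is the maximal layer width. I expect the main (minor) obstacle to be bookkeeping the constants cleanly — in particular verifying that $\rho$ applied to a $k$-piece function gives at most $2k$ pieces (a new breakpoint can only appear where the function crosses zero, and within each linear piece this happens at most once), and checking that the crude bound $1 + \Wcal(\Phi)(2k-1) \le 2\Wcal(\Phi)\cdot k$ holds for $\Wcal(\Phi) \ge 1$, $k \ge 1$. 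These are routine but should be stated explicitly to keep the induction airtight.
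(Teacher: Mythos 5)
Your proof is correct and follows essentially the same inductive argument as Telgarsky's original (which the paper cites without reproving): induction on depth, combined with the facts that applying $\rho$ to a $k$-piece piecewise-linear function yields at most $2k$ pieces and that an affine combination of sawtooth functions has breakpoints contained in the union of the summands' breakpoints. The one point you flag—strengthening the induction hypothesis to a componentwise statement about $\cN_{1,d'}$—is indeed necessary and exactly what makes the composition step go through, and your arithmetic $1+\Wcal(\Phi)\bigl(2(2\Wcal(\Phi))^{L-1}-1\bigr)\le(2\Wcal(\Phi))^{L}$ checks out for $\Wcal(\Phi)\ge1$.
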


\begin{definition}
For a $u$-periodic function $f \in C(\mathbb{R})$, we define 
\[\xi(f):= \sup_{\delta \in [0,u)}\inf_{c,d \in \R} \| f(x) - (cx +d) \|_{L^{\infty}([\delta,\delta+u])}.\]
\end{definition}
The quantity $\xi(f)$ measures the error incurred by the best linear approximation of $f$ on any segment of length equal to the period of $f$; $\xi(f)$ can hence be interpreted as quantifying the nonlinearity of $f$. 
The next result states that finite-depth networks with width and hence also connectivity scaling polylogarithmically in the ``highest frequency'' of the periodic function to be approximated can not achieve
arbitrarily small approximation error.

\begin{proposition}
\label{log-not-enough}
Let $f \in C(\mathbb{R})$ be a nonconstant $u$-periodic function, $L\in\N$, and $\pi$ a polynomial. Then, there exists an $a\in\N$ such that for every network $\Phi \in \cN_{1,1}$ with $\cL(\Phi)\leq L$ and $\mathcal{W}(\Phi) \leq \pi(\log(a))$, we have
\[ \| f(a \,\cdot\,) - \Phi \|_{L^{\infty}([0,u])} \geq \xi(f)>0. \]
\end{proposition}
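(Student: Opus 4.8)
The plan is to use the sawtooth-counting bound (Lemma~\ref{sawtoothness}) to show that a finite-depth network of polylogarithmic width is piecewise linear with only polylogarithmically-many pieces in $a$, whereas $f(a\cdot)$ on $[0,u]$ consists of $a$ full periods of $f$; on each period $f(a\cdot)$ is genuinely non-linear in the sense quantified by $\xi(f)$. Counting periods against linear pieces will then force the approximation error to be at least $\xi(f)$ on some period where the network happens to be linear.

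\textbf{Key steps.} First I would observe that $\xi(f)>0$: if $\xi(f)=0$ then $f$ would agree with an affine function on some (hence, by periodicity, every) period-length interval, making $f$ both affine and $u$-periodic, hence constant, contradicting the hypothesis. Second, fix $L$ and $\pi$ and consider any $\Phi\in\cN_{1,1}$ with $\cL(\Phi)\le L$ and $\mathcal{W}(\Phi)\le\pi(\log(a))$. By Lemma~\ref{sawtoothness}, $\Phi$ is $(2\mathcal{W}(\Phi))^{\cL(\Phi)}$-sawtooth, hence $k$-sawtooth with $k\le (2\pi(\log(a)))^{L}$, which grows only polylogarithmically in $a$. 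Third, choose $a\in\N$ large enough that $a > k = (2\pi(\log(a)))^{L}$; such an $a$ exists because the right-hand side is $\mathrm{polylog}(a)=o(a)$. Fourth, the restriction of $\Phi$ to $[0,u]$ is piecewise linear with at most $k$ pieces, so its domain-partition on $[0,u]$ has at most $k$ breakpoints, which split $[0,u]$ into at most $k+1 \le a$ subintervals on each of which $\Phi$ is affine; meanwhile the $a$ intervals $I_j := [\tfrac{(j-1)u}{a},\tfrac{ju}{a}]$, $j=1,\dots,a$, each of length $u/a$, tile $[0,u]$. Since $a > k$, by pigeonhole at least one $I_{j_0}$ contains no breakpoint of $\Phi$ in its interior, i.e., $\Phi$ is affine on $I_{j_0}$, say $\Phi(x)=cx+d$ there.

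\textbf{Conclusion of the argument.} On $I_{j_0}$ we have $f(ax) = f(a x)$ ranging over exactly one full period of $f$: substituting $y=ax$ maps $I_{j_0}$ onto an interval $[\delta,\delta+u]$ with $\delta=(j_0-1)u$, and since $f$ is $u$-periodic this is (up to shift by a multiple of $u$) an arbitrary period-length segment. Hence
\begin{align*}
\| f(a\cdot)-\Phi\|_{L^{\infty}([0,u])}
&\ge \| f(a x)-(cx+d)\|_{L^{\infty}(I_{j_0})}
 = \sup_{x\in I_{j_0}} | f(ax)-(cx+d)| \\
&= \sup_{y\in[\delta,\delta+u]}\bigl| f(y)-\bigl(\tfrac{c}{a}y+d-\tfrac{c\delta}{a}\bigr)\bigr|
 \ge \inf_{c',d'\in\R}\| f(y)-(c'y+d')\|_{L^{\infty}([\delta,\delta+u])}
 \ge \xi(f),
\end{align*}
where the last step uses the definition of $\xi(f)$ as the supremum over all period-length segments of the best affine approximation error on that segment. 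This gives the claimed bound, and $\xi(f)>0$ was established in Step~1.

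\textbf{Main obstacle.} The only genuinely delicate point is the existence of a suitable $a$ in Step~3: one must verify that the inequality $a > (2\pi(\log a))^{L}$ has a solution in $\N$ (indeed all sufficiently large $a$ work), which is immediate since any fixed power of $\log a$ is $o(a)$, but it is worth stating cleanly because the bound on $k$ depends on $a$ through the width constraint. Everything else is a direct combination of Lemma~\ref{sawtoothness}, a pigeonhole count of linear pieces against periods, and the definition of $\xi(f)$; the continuity of $f$ ensures the suprema and infima above are well-behaved but plays no deeper role.
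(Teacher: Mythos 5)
Your setup---the sawtooth count from Lemma~\ref{sawtoothness}, the choice of $a$ with $a>(2\pi(\log a))^L$, the pigeonhole localization of a linear piece of $\Phi$, and the argument that $\xi(f)>0$---is sound and very close in spirit to the paper. The proof fails, however, at the final inequality, and the failure is not cosmetic.

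You find an interval $I_{j_0}=[(j_0-1)u/a,\, j_0 u/a]$ of length exactly $u/a$ on which $\Phi$ is affine, so that $y=ax$ ranges over exactly one period. You then write
\[
\inf_{c',d'}\|f(y)-(c'y+d')\|_{L^\infty([\delta,\delta+u])}\ \ge\ \xi(f),
\]
``by the definition of $\xi(f)$ as the supremum.'' This is backwards: $\xi(f)$ is the \emph{supremum} over $\delta$ of the best affine approximation error, so the value at any particular $\delta$ is only $\le\xi(f)$, not $\ge$. Worse, your $\delta=(j_0-1)u$ is always an integer multiple of $u$, so by periodicity the quantity you control is always $\inf_{c,d}\|f-(cy+d)\|_{L^\infty([0,u])}$, i.e.\ precisely the $\delta=0$ value, which can be strictly smaller than $\xi(f)$. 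A concrete example: let $u=4$ and let $f$ be the $4$-periodic piecewise linear function with $f(0)=0$, $f(1)=1$, $f(2)=0$, $f(3)=-1$, $f(4)=0$. On $[0,4]$, $f$ is odd about $(2,0)$; the best affine approximation is $g(y)=-\tfrac13(y-2)$, with equioscillation error $\tfrac23$. On $[1,5]$, $f$ is the symmetric tent from $(1,1)$ down to $(3,-1)$ and back to $(5,1)$; the best affine approximation is the constant $0$, with error $1$. Hence $\xi(f)\ge 1>\tfrac23$, and your argument only delivers the weaker bound $\tfrac23$.

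The fix is exactly the one the paper uses: secure a linear piece of $\Phi$ that covers \emph{two} full periods of $f(a\cdot)$, not one. Since a $k$-sawtooth network has at most $k$ linear pieces, choosing $a$ (even) with $a/2>(2\pi(\log a))^L$ forces some interval $[u_1,u_2]\subseteq[0,u]$ with $u_2-u_1\ge 2u/a$ on which $\Phi$ is affine. After the change of variables $y=ax$, this interval has length $\ge 2u$ in $y$, and for any $\delta\in[0,u)$ it contains a translate of $[\delta,\delta+u]$ by a multiple of $u$. Therefore, for any affine $g$,
\[
\|f-g\|_{L^\infty([au_1,au_2])}\ \ge\ \inf_{c,d}\|f(y)-(cy+d)\|_{L^\infty([\delta,\delta+u])}
\quad\text{for every }\delta\in[0,u),
\]
and taking the supremum over $\delta$ on the right gives the lower bound $\xi(f)$. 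Everything else in your proof (pigeonhole, Lemma~\ref{sawtoothness}, positivity of $\xi(f)$) then goes through unchanged, with the minor modification that you partition $[0,u]$ into $\lfloor a/2\rfloor$ intervals of length $2u/a$ each rather than $a$ intervals of length $u/a$.
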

\begin{proof}
First note that there exists an even $a\in\N$ such that $a/2>(2\pi(\log(a)))^L$. Lemma \ref{telgarsky2015} now implies that every network $\Phi \in \cN_{1,1}$ with $\cL(\Phi)\leq L$ and $\mathcal{W}(\Phi) \leq \pi(\log(a))$ is $(2\pi(\log(a)))^L$-sawtooth and therefore consists of no more than $a/2$ different linear pieces. Hence, there exists an interval $[u_1,u_2]\subseteq[0,u]$ with $u_2-u_1\geq (2u/a)$ on which $\Phi$ is linear. Since $u_2-u_1\geq (2u/a)$ the interval supports two full periods of $f(a\,\cdot\,)$ and we can therefore conclude that
\begin{align*}
\| f(a \,\cdot\,) - \Phi \|_{L^{\infty}([0,u])} & \geq \| f(a\, \cdot\,) - \Phi \|_{L^{\infty}([u_1,u_2])} \geq \inf_{c,d\in\R} \| f(x) - (cx +d) \|_{L^{\infty}([0,2u])}\\
& \ge \sup_{\delta \in [0,u)} \inf_{c,d\in\R} \| f(x) - (cx +d) \|_{L^{\infty}([\delta,u+\delta])} = \xi(f).
\end{align*}
Finally, note that $\xi(f)>0$ as $\xi(f)=0$ for $u$-periodic $f \in C(\mathbb{R})$ necessarily implies that $f$ is constant, which, however, is ruled out by assumption.
\end{proof}

Application of Proposition \ref{log-not-enough} to $f(x)=\cos(x)$ shows that finite-depth networks, owing to $\xi(\cos)>0$, require faster than polylogarithmic 
growth of connectivity in $a$ to approximate $x\mapsto\cos(ax)$ with arbitrarily small error, whereas finite-width networks, due to Theorem \ref{sin}, 
can accomplish this with polylogarithmic connectivity growth.

The following result from \cite{frenzen:2010Pl-approx} allows a similar observation for functions that are sufficiently smooth.

\begin{theorem}[\cite{frenzen:2010Pl-approx}]
\label{pl-approx}
Let $[a,b]\subseteq \R$, $f \in C^3([a, b])$, and for $\eps\in(0,1/2)$, let $s(\eps)\in\N$ denote the smallest number such that there exists a piecewise linear approximation of $f$ with $s(\eps)$ pieces and error at most $\eps$ in $L^{\infty}([a,b])$-norm. Then, it holds that
\[s(\epsilon) \sim \frac{c}{\sqrt{\epsilon}},\, \eps\to 0,\,
\mbox{ where } \, c = \frac{1}{4} \int_a^b \sqrt{| f''(x) |}dx.\]
\end{theorem}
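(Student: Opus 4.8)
This is a classical result on asymptotically optimal piecewise-linear $L^\infty$-approximation; the plan is to establish matching bounds $\limsup_{\eps\to0}\sqrt\eps\,s(\eps)\le c$ and $\liminf_{\eps\to0}\sqrt\eps\,s(\eps)\ge c$. The cornerstone is a sharp \emph{local} estimate: writing $e(p,h)$ for the best affine $L^\infty$-approximation error of $f$ on $[p,p+h]\subseteq[a,b]$, one has, uniformly in $p$ as $h\to0$,
$e(p,h)=\tfrac{h^2}{16}\,|f''(p)|+o(h^2)$.
This follows by subtracting the Taylor parabola $T(x)=f(p)+f'(p)(x-p)+\tfrac12 f''(p)(x-p)^2$, using $\|f-T\|_{L^\infty([p,p+h])}\le\tfrac{h^2}{2}\,\omega(h)$ with $\omega$ the modulus of continuity of $f''$ (so $e(p,h)$ differs from the corresponding quantity for $T$ by at most $\tfrac{h^2}{2}\omega(h)=o(h^2)$), together with the elementary computation, via Chebyshev equioscillation, that the best affine approximant of a parabola $\tfrac{A}{2}u^2$ on an interval of length $h$ has error exactly $\tfrac{|A|h^2}{16}$. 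I will also record the crude non-asymptotic bound $e(p,h)\ge\tfrac{m h^2}{16}$ whenever $|f''|\ge m$ with constant sign on $[p,p+h]$, which follows from a convexity argument (a $C^2$ function $\phi$ with $\phi''\ge m$ and $|\phi|\le\delta$ on an interval of length $h$ forces $\tfrac{m h^2}{8}\le 2\delta$).

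For the \textbf{upper bound}, I would fix $\tau>0$, set $w_\tau(x):=\sqrt{|f''(x)|+\tau}$ and $I_\tau:=\int_a^b w_\tau$, and for given $\eps$ pick $N:=\big\lceil I_\tau/(4\sqrt{\eps/(1+\theta)})\big\rceil$ breakpoints $a=x_0<\dots<x_N=b$ that equidistribute $w_\tau$, i.e.\ $\int_{x_{j-1}}^{x_j}w_\tau=I_\tau/N$ (here $\theta>0$ is an auxiliary parameter sent to $0$ at the end). Since $w_\tau\ge\sqrt\tau$, each piece has length $h_j\le (I_\tau/N)/\sqrt\tau=O(\sqrt\eps)\to0$; combining the local estimate with uniform continuity of $w_\tau$ gives $h_j\sqrt{\sup_{I_j}|f''|}\le h_j\inf_{I_j}w_\tau+h_j\omega_{w_\tau}(h_j)\le I_\tau/N+o(\sqrt\eps)$, hence $\tfrac{h_j^2}{16}\sup_{I_j}|f''|\le\tfrac{\eps}{1+\theta}(1+o(1))$, so the error of the piecewise-linear best-approximant built from this mesh is at most $\tfrac{\eps}{1+\theta}(1+o(1))+o(\eps)\le\eps$ for $\eps$ small. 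This yields $\limsup_{\eps\to0}\sqrt\eps\,s(\eps)\le \tfrac{I_\tau}{4}\sqrt{1+\theta}$, and letting $\theta\downarrow0$ and then $\tau\downarrow0$ (dominated convergence, since $w_\tau\downarrow\sqrt{|f''|}$) gives $\limsup_{\eps\to0}\sqrt\eps\,s(\eps)\le c$.

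For the \textbf{lower bound}, let $p_\eps$ be an optimal piecewise-linear approximant with $s(\eps)$ pieces $I_j=[y_{j-1},y_j]$, $h_j=y_j-y_{j-1}$, so $e(y_{j-1},h_j)\le\eps$ for all $j$. First, using the crude lower bound and uniform continuity of $f''$, I would show: for every $h_0,\eta>0$ there is $\eps_0>0$ such that for $\eps<\eps_0$ any piece with $h_j\ge h_0$ has $\sup_{I_j}|f''|<\eta^2$ — a long piece forces $f$ to be within $\eps$ of affine on a fixed-length subinterval, impossible where $|f''|$ stays bounded away from $0$ (one extracts a subinterval $J\subseteq I_j$ of length $\ge r(\eta,h_0)>0$ on which $f''$ is sign-definite and $\ge\eta^2/2$, so $\eps\ge e(J)\ge\tfrac{\eta^2 r^2}{32}$). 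Then split $c\le\tfrac14\sum_j\int_{I_j}\sqrt{|f''|}$ into long and short pieces: the long ones contribute $\le\tfrac14(b-a)\eta$, while for short pieces ($h_j\le h_0$) the local lower estimate gives $h_j\sqrt{\inf_{I_j}|f''|}\le4\sqrt\eps+\sqrt8\,h_j\sqrt{\omega(h_0)}$ and hence, via $\big|\sqrt{\sup_{I_j}|f''|}-\sqrt{\inf_{I_j}|f''|}\big|\le\sqrt{\omega(h_0)}$, $\int_{I_j}\sqrt{|f''|}\le4\sqrt\eps+(\sqrt8+1)h_j\sqrt{\omega(h_0)}$; summing over the at most $s(\eps)$ short pieces gives $c\le\sqrt\eps\,s(\eps)+\tfrac{b-a}{4}\big((\sqrt8+1)\sqrt{\omega(h_0)}+\eta\big)$. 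Taking $\liminf_{\eps\to0}$, then $h_0\to0$ and $\eta\to0$, yields $\liminf_{\eps\to0}\sqrt\eps\,s(\eps)\ge c$. Combining the two bounds gives $s(\eps)\sim c/\sqrt\eps$ (the degenerate case $f''\equiv0$, where $c=0$ and $s\equiv1$, being implicitly excluded).

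The \textbf{main obstacle} is the lower bound: the sharp local estimate is effective only on short pieces, so one must separately preclude an optimal approximant using a few very long pieces in regions where $|f''|$ is small — handled by the crude convexity bound plus uniform continuity of $f''$ — and one must control the accumulation of the $o(\cdot)$ remainders over $\sim c/\sqrt\eps$ pieces. It is precisely here that the equioscillation-sharp constant $\tfrac1{16}$ is needed rather than the cruder chord bound $\tfrac18$, which would lose a factor $\sqrt2$ in $c$. I note that $f\in C^3$ enters only through $f\in C^2$ and the uniform continuity of $f''$.
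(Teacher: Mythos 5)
The paper does not prove this statement; it is quoted verbatim from \cite{frenzen:2010Pl-approx}, so there is no in-paper argument to compare against. Your self-contained proof is correct and follows what I believe to be the standard (Birman--Solomjak / de~Boor type) route: the sharp uniform local estimate $e(p,h)=\tfrac{h^2}{16}\,|f''(p)|+o(h^2)$ obtained by comparing $f$ with its Taylor parabola and invoking Chebyshev equioscillation; an equidistribution mesh with respect to the regularized density $\sqrt{|f''|+\tau}$ for the $\limsup$ bound (with $\theta\downarrow0$, then $\tau\downarrow0$ by monotone convergence); and, for the $\liminf$ bound, a careful split of the optimal mesh into long pieces --- ruled out on regions where $|f''|$ is not small, via the midpoint convexity bound $\delta\ge mh^2/16$ --- and short pieces, on which the local lower estimate controls $\int_{I_j}\sqrt{|f''|}$ up to errors that vanish as $h_0,\eta\downarrow0$. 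All the estimates you invoke check out (in particular $h_j\sqrt{\sup_{I_j}|f''|}\le I_\tau/N+o(\sqrt\eps)$ uniformly in $j$, and the subadditivity $\sqrt{16\eps+8h_j^2\omega(h_0)}\le 4\sqrt\eps+\sqrt8\,h_j\sqrt{\omega(h_0)}$). Two small remarks. First, as you observe, only $f\in C^2([a,b])$ is used; $C^3$ in the statement is stronger than necessary for this argument. Second, your upper-bound approximant (best affine on each mesh cell) is generally discontinuous at breakpoints; if the theorem is read as requiring a continuous piecewise-linear approximant (as the ReLU application would suggest), one more step is needed to repair the jumps, which are of size $O(h^2\,\omega_{f''}(h))=o(\eps)$ and can be absorbed without affecting the asymptotic constant --- your lower bound already applies verbatim to continuous approximants, so only the $\limsup$ direction is touched. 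You also correctly flag the degenerate case $c=0$; note that the equality $\lim_{\eps\to0}\sqrt\eps\,s(\eps)=c$ that your proof actually establishes still holds there, it is only the $\sim$ notation that becomes ill-posed.
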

Combining this with Lemma \ref{sawtoothness} yields the following result on depth-width tradeoff for three-times continuously differentiable functions.
\begin{theorem}
\label{log-not-enough-NN2}
Let $f \in C^3([a, b])$ with $\int_a^b \sqrt{| f''(x) |}dx>0$, $L\in\N$, and $\pi$ a polynomial. Then, there exists $\epsilon>0$ such that for every network $\Phi \in \cN_{1,1}$ with $\cL(\Phi)\leq L$ and $\mathcal{W}(\Phi) \leq \pi(\log(\epsilon^{-1}))$, we have
\[ \| f - \Phi \|_{L^{\infty}([a,b])} > \epsilon. \]
\end{theorem}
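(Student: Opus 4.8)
The plan is to argue by contradiction, combining the sawtooth bound of Lemma~\ref{sawtoothness} (i.e., that every $\Phi\in\cN_{1,1}$ is $(2\Wcal(\Phi))^{\L(\Phi)}$-sawtooth) with the sharp asymptotic on piecewise-linear approximation from Theorem~\ref{pl-approx}. First I would fix $L\in\N$ and the polynomial $\pi$, and suppose towards a contradiction that for every $\eps\in(0,1/2)$ there is a network $\Phi_\eps\in\cN_{1,1}$ with $\L(\Phi_\eps)\leq L$, $\Wcal(\Phi_\eps)\leq\pi(\log(\eps^{-1}))$, and $\|f-\Phi_\eps\|_{L^\infty([a,b])}\leq\eps$.

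The key chain of inequalities is then as follows. By Lemma~\ref{sawtoothness}, $\Phi_\eps$ is $(2\Wcal(\Phi_\eps))^{L}$-sawtooth, hence it is in particular a piecewise linear function on $[a,b]$ with at most $s_\eps:=(2\pi(\log(\eps^{-1})))^{L}$ pieces that approximates $f$ to within error $\eps$. Therefore, with $s(\eps)$ the minimal number of pieces needed for an $\eps$-accurate piecewise linear approximation of $f$ on $[a,b]$ as in Theorem~\ref{pl-approx}, we must have $s(\eps)\leq s_\eps=(2\pi(\log(\eps^{-1})))^{L}$. On the other hand, Theorem~\ref{pl-approx} applies because $f\in C^3([a,b])$ and, by the hypothesis $\int_a^b\sqrt{|f''(x)|}\,dx>0$, the constant $c=\tfrac14\int_a^b\sqrt{|f''(x)|}\,dx$ is strictly positive; it gives $s(\eps)\sim c\,\eps^{-1/2}$ as $\eps\to0$, so in particular $s(\eps)\geq \tfrac{c}{2}\eps^{-1/2}$ for all sufficiently small $\eps$. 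Combining, we would need $\tfrac{c}{2}\eps^{-1/2}\leq (2\pi(\log(\eps^{-1})))^{L}$ for all small $\eps$, i.e., $\eps^{-1/2}$ bounded by a fixed power of $\log(\eps^{-1})$ times a constant, which is false as $\eps\to0$. This contradiction yields the existence of some $\eps>0$ for which no such $\Phi$ exists, which is precisely the claim (since the negation of ``$\|f-\Phi\|_{L^\infty([a,b])}>\eps$ for all admissible $\Phi$'' is ``there exists an admissible $\Phi$ with $\|f-\Phi\|_{L^\infty([a,b])}\leq\eps$'').

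The only mild subtlety, and the step I would be most careful about, is bookkeeping the quantifiers: Theorem~\ref{log-not-enough-NN2} asserts the existence of a \emph{single} threshold $\eps>0$ below which the impossibility holds, whereas a naive reading of the contradiction only rules out a sequence $\eps_n\to0$. This is handled by noting that $\eps^{-1/2}/(2\pi(\log(\eps^{-1})))^{L}\to\infty$ as $\eps\to0$, so there is a fixed $\eps_0>0$ such that $\tfrac{c}{2}\eps^{-1/2}>(2\pi(\log(\eps^{-1})))^{L}$ for all $\eps\in(0,\eps_0)$; then any $\eps$ in that range (also lying in $(0,1/2)$) works. A secondary bit of care is that Theorem~\ref{pl-approx} is an asymptotic ($\sim$) statement, so I would extract from it only the one-sided bound $s(\eps)\geq \tfrac{c}{2}\eps^{-1/2}$ for $\eps$ small, which is all that is needed. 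Everything else is immediate, so there is no real obstacle beyond this quantifier care.

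\begin{proof}
Set $c:=\tfrac14\int_a^b\sqrt{|f''(x)|}\,dx$, which is strictly positive by assumption. Suppose, towards a contradiction, that for every $\eps\in(0,1/2)$ there is a network $\Phi_\eps\in\cN_{1,1}$ with $\cL(\Phi_\eps)\leq L$, $\mathcal{W}(\Phi_\eps)\leq\pi(\log(\eps^{-1}))$, and $\|f-\Phi_\eps\|_{L^\infty([a,b])}\leq\eps$. By Lemma~\ref{sawtoothness}, $\Phi_\eps$ is $(2\mathcal{W}(\Phi_\eps))^{\cL(\Phi_\eps)}$-sawtooth, hence piecewise linear on $[a,b]$ with at most $s_\eps:=(2\pi(\log(\eps^{-1})))^{L}$ linear pieces. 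With $s(\eps)$ as in Theorem~\ref{pl-approx}, it follows that $s(\eps)\leq s_\eps$ for all $\eps\in(0,1/2)$. Since $f\in C^3([a,b])$ and $c>0$, Theorem~\ref{pl-approx} gives $s(\eps)\sim c\,\eps^{-1/2}$ as $\eps\to0$, so there is $\eps_1\in(0,1/2)$ with $s(\eps)\geq\tfrac{c}{2}\,\eps^{-1/2}$ for all $\eps\in(0,\eps_1)$. Combining the two bounds,
\begin{equation*}
\tfrac{c}{2}\,\eps^{-1/2}\leq (2\pi(\log(\eps^{-1})))^{L},\qquad\text{for all }\eps\in(0,\eps_1).
\end{equation*}
However, $\eps^{-1/2}/(2\pi(\log(\eps^{-1})))^{L}\to\infty$ as $\eps\to0$, so the displayed inequality fails for all sufficiently small $\eps$, a contradiction. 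Hence there exists $\eps>0$ such that for every network $\Phi\in\cN_{1,1}$ with $\cL(\Phi)\leq L$ and $\mathcal{W}(\Phi)\leq\pi(\log(\eps^{-1}))$, we have $\|f-\Phi\|_{L^\infty([a,b])}>\eps$.
\end{proof}
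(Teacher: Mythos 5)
Your proof is correct and follows essentially the same route as the paper's: contradiction via Lemma~\ref{sawtoothness} to bound the number of linear pieces polylogarithmically, combined with Theorem~\ref{pl-approx} to force $\Theta(\eps^{-1/2})$ pieces, yielding the required contradiction. Your added care in extracting a one-sided bound from the asymptotic $\sim$ and in handling the quantifiers is sound and only makes the argument more explicit.
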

\begin{proof}
The proof will be effected by contradiction. Assume that for every $\eps>0$, there exists a network $\Phi_\eps \in \cN_{1,1}$ with $\cL(\Phi_\eps)\leq L$, $\mathcal{W}(\Phi_\eps) \leq \pi(\log(\epsilon^{-1}))$, and
$\| f - \Phi_\eps \|_{L^{\infty}([a,b])} \leq \epsilon$. By Lemma \ref{telgarsky2015} every (ReLU) neural network realizes a piecewise linear function. Application of Theorem \ref{pl-approx} hence allows us to conclude the existence of a constant $C$ such that, for all $\eps>0$, the network $\Phi_\eps$ must have at least $C\eps^{-\frac{1}{2}}$ different linear pieces. This, however, leads to a contradiction as, by Lemma \ref{telgarsky2015}, $\Phi_\eps$ is at most $(2\pi(\log(\eps^{-1})))^L$-sawtooth 
and $\tilde{\pi}(\log(\epsilon^{-1})) \in o(\epsilon^{-1/2})$, $\epsilon \rightarrow 0$, for every polynomial $\tilde{\pi}$.
\end{proof}
In summary, we have hence established that any function which is at least three times continuously differentiable (and does not have a vanishing second derivative) cannot be approximated by finite-depth networks with connectivity scaling polylogarithmically in the inverse of the approximation error.
Our results in Section \ref{func-mult} establish that, in contrast, this ``is" possible with finite-width deep networks for various interesting types of smooth functions such as polynomials and sinusoidal functions. Further results on the limitations of finite-depth networks akin to Theorem \ref{log-not-enough-NN2} were reported in \cite{PetersenVoigtlaender}.

\section*{Acknowledgments}
The authors are indebted to R.~G\"ul and W.~Ou for their careful proofreading of the paper, to E.~Riegler and the reviewers for their constructive and insightful comments, and to the handling editor, P.~Narayan, for his helpful comments and his patience.

\newpage
\begin{appendices}

\section{Auxiliary neural network constructions}

The following three results are concerned with the realization of affine transformations of arbitrary weights by neural networks
with weights upper-bounded by $1$.

\begin{lemma}\label{lem:scalar_mult}
 Let $d\in\N$ and $a\in\R$. There exists a network $\Phi_a\in\cN_{d,d}$ satisfying $\Phi_a(x)=ax$, with $\cL(\Phi_a)\leq\lfloor \log(|a|)\rfloor+4$, 
 $\cW(\Phi_a)\leq3d$, $\cB(\Phi_a)\leq1$. 
\end{lemma}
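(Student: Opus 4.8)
The plan is first to reduce to the scalar case $d=1$: if $t\mapsto at$ is realized by a scalar network with hidden layers of width $3$, then running $d$ such networks in parallel with block--diagonal weight matrices (cf.\ Lemma~\ref{network_parallelization}) yields a network in $\cN_{d,d}$ realizing $x\mapsto ax$ with the same depth, the same weight bound $\cB\le1$, and width $3d$. We may also assume $|a|\ge1$ (the only regime in which the lemma is used; for $0<|a|<1$ the single affine map $x\mapsto ax$ has $\cB=|a|<1$, and $\Phi_0\equiv0$ settles $a=0$). So fix $a\in\R$ with $|a|\ge1$, set $k:=\lfloor\log(|a|)\rfloor\in\N_0$, and write $a=2^{k+1}b$ with $b:=a\,2^{-(k+1)}$, so that $|b|=|a|\,2^{-(k+1)}\in[\tfrac12,1)$.

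The heart of the construction, and its only genuinely nontrivial ingredient, is a realization of ``multiplication by $2$'' by a \emph{single} ReLU layer of width $3$ with all weights in $\{-1,0,1\}$. The obstacle is that the obvious identity $2y=\rho(y)+\rho(y)-\rho(-y)-\rho(-y)$ needs four neurons; to make do with three, I propagate along the hidden layers the triple $v(y):=(\rho(y),\rho(-y),|y|)\in\R^3$, exploiting that $|y|=\rho(y)+\rho(-y)$, that $\rho(y)\rho(-y)=0$, and that $\rho$ is positively homogeneous. Indeed, each of $\rho(y)-\rho(-y)+|y|$, $-\rho(y)+\rho(-y)+|y|$, and $\rho(y)+\rho(-y)+|y|$ is nonnegative and equals, respectively, $2\rho(y)=\rho(2y)$, $2\rho(-y)=\rho(-2y)$, and $2|y|=|2y|$; hence the ReLU layer with zero bias and weight matrix
\[
\begin{pmatrix} 1 & -1 & 1\\ -1 & 1 & 1\\ 1 & 1 & 1 \end{pmatrix}
\]
maps $v(y)$ to $v(2y)$. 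Since the third output component $2|y|$ is again the sum of the first two, the invariant $v(\cdot)$ reproduces itself, so these doubling layers can be chained.

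The network $\Phi_a$ is then assembled as follows: a first layer $W_1(t)=(t,-t,-t)^{\!\top}$ followed by $\rho$ produces $(\rho(t),\rho(-t),\rho(-t))$; a second layer with weights in $\{0,1\}$ replaces the last component by $\rho(\rho(t)+\rho(-t))=|t|$, yielding $v(t)$; then $k+1$ copies of the doubling layer produce $v(2^{k+1}t)$; finally the affine read--out layer $W(u_1,u_2,u_3)=b\,(u_1-u_2)$ returns $b\bigl(\rho(2^{k+1}t)-\rho(-2^{k+1}t)\bigr)=b\,2^{k+1}t=at$. All hidden layers have width $3$ and the input/output layers have width $1$, so $\cW(\Phi_a)=3$; every weight lies in $\{-1,0,1\}\cup\{b,-b\}$ with $|b|<1$, so $\cB(\Phi_a)\le1$; and the number of affine maps is $1+1+(k+1)+1=k+4=\lfloor\log(|a|)\rfloor+4$, whence $\cL(\Phi_a)=\lfloor\log(|a|)\rfloor+4$. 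A minor bookkeeping point, easily arranged by choosing pass--through weights appropriately, is to keep every node non--degenerate in the sense of Remark~\ref{remark:non-degeneracy}; passing back to general $d$ by parallelization then completes the proof.
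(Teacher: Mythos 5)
Your construction is correct and is, up to a permutation of coordinates, the paper's own proof: you propagate $(\rho(y),\rho(-y),\rho(y)+\rho(-y))$ through doubling layers with $\{-1,0,1\}$ weights, while the paper propagates $(\rho(y),\rho(y)+\rho(-y),\rho(-y))$ and uses the conjugate doubling matrix, and both finish with a width-$3$, $\cB\le1$, depth-$(\lfloor\log|a|\rfloor+4)$ scalar network followed by parallelization. The only incidental difference is that the paper's first affine map has output dimension $2$ rather than $3$, which changes none of the stated bounds.
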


\begin{proof}
    First note that for $|a|\leq 1$ the claim holds trivially, which can be seen by taking $\Phi_a$ to be the affine transformation $x\mapsto ax$ and interpreting it according to Definition~\ref{def:NN} as a depth-$1$ neural network.
    Next, we consider the case $|a|>1$ for $d=1$, set $K:=\lfloor \log(a)\rfloor$, $\alpha:=a2^{-(K+1)}$, and define $A_1:=(1,-1)^T\in\R^{2\times 1}$,
    \begin{align*}
        A_2:=\begin{pmatrix}
          1 & 0 \\ 1 & 1 \\ 0 & 1
        \end{pmatrix}\in\R^{3\times 2},\quad
        A_k:=\begin{pmatrix}
        1 & 1 & -1\\
        1 & 1 & 1\\
        -1 & 1 & 1
        \end{pmatrix}\in\R^{3\times 3}, \quad k\in\{3,\dots,K+3\},
    \end{align*}
     and $A_{K+4}:=(\alpha,0,-\alpha)$. Note that $(\rho \circ A_2 \circ \rho \circ A_1)(x)=(\rho(x),\rho(x)+\rho(-x),\rho(-x))$ and 
     $\rho(A_k(x,x+y,y)^T)=2(x,x+y,y)$, for $k\in\{3,\dots,K+3\}$. The network $\Psi_a:=A_{K+4}\circ\rho\circ\dots\circ\rho\circ A_1$ hence satisfies
     $\Psi_a(x)=ax$, 
     $\cL(\Psi_a)=\lfloor \log(a)\rfloor+4$, 
     $\cW(\Psi_a)=3$, and 
     $\cB(\Phi_a) \le 1$.
     Applying Lemma~\ref{network_parallelization} to get a parallelization of $d$ copies of $\Psi_a$ completes the proof.
    \end{proof}
    
\begin{corollary}\label{cor:matrix_mult}
    Let $d,d'\in\N$, $a\in\R_+$, $A\in[-a,a]^{d'\times d}$, and $b\in[-a,a]^{d'}$. There exists a network $\Phi_{A,b}\in\cN_{d,d'}$ satisfying $\Phi_{A,b}(x)=Ax+b$, with
    $\cL(\Phi_{A,b})\leq\lfloor \log(|a|)\rfloor+5$,
    $\cW(\Phi_{A,b})\leq\max\{d,3d'\}$, $\cB(\Phi_{A,b})\leq1$.
\end{corollary}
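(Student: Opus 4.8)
The plan is to deduce Corollary~\ref{cor:matrix_mult} from Lemma~\ref{lem:scalar_mult} by factoring the affine map $x\mapsto Ax+b$ through a scalar multiplication. First I would handle the trivial case $a\le 1$ (or more precisely $\|A\|_\infty,\|b\|_\infty\le 1$): here the map $x\mapsto Ax+b$ already has all entries bounded by $1$, so it can be interpreted directly as a depth-$1$ network per Definition~\ref{def:NN}, and the claimed bounds hold since $\lfloor\log(|a|)\rfloor+5\ge 1$. So assume $a>1$.

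The main idea for $a>1$ is to write $Ax+b=a\big(\tfrac{1}{a}Ax+\tfrac{1}{a}b\big)$. The inner map $W_0\colon x\mapsto \tfrac1a Ax+\tfrac1a b$ is an affine transformation whose matrix $\tfrac1aA\in[-1,1]^{d'\times d}$ and bias $\tfrac1a b\in[-1,1]^{d'}$ have all entries bounded by $1$ in absolute value; hence $W_0$ is a depth-$1$ network $\Phi_{W_0}\in\cN_{d,d'}$ with $\cL(\Phi_{W_0})=1$, $\cW(\Phi_{W_0})\le\max\{d,d'\}$, $\cB(\Phi_{W_0})\le 1$. Then I would apply Lemma~\ref{lem:scalar_mult} with dimension $d'$ and scalar $a$ to obtain $\Phi_a\in\cN_{d',d'}$ with $\Phi_a(y)=ay$, $\cL(\Phi_a)\le\lfloor\log(|a|)\rfloor+4$, $\cW(\Phi_a)\le 3d'$, $\cB(\Phi_a)\le 1$. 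Composing via Lemma~\ref{network_conc}, $\Phi_{A,b}:=\Phi_a\circ\Phi_{W_0}$ satisfies $\Phi_{A,b}(x)=a W_0(x)=Ax+b$ for all $x\in\R^d$.

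It remains to read off the size bounds from Lemma~\ref{network_conc}. Depth adds: $\cL(\Phi_{A,b})=\cL(\Phi_a)+\cL(\Phi_{W_0})\le \lfloor\log(|a|)\rfloor+4+1=\lfloor\log(|a|)\rfloor+5$. Width: Lemma~\ref{network_conc} gives $\cW(\Phi_{A,b})\le\max\{2d',\cW(\Phi_{W_0}),\cW(\Phi_a)\}\le\max\{2d',\max\{d,d'\},3d'\}=\max\{d,3d'\}$, since $2d'\le 3d'$. Weight magnitude: $\cB(\Phi_{A,b})=\max\{\cB(\Phi_{W_0}),\cB(\Phi_a)\}\le 1$. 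This yields all the claimed bounds and completes the proof.

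The only mild subtlety — and the one point that deserves a line of care rather than a routine calculation — is the width bound: one must notice that the intermediate dimension in the composition is $d'$ (not $d$), so Lemma~\ref{network_conc}'s term $2d_2=2d'$ is dominated by the $3d'$ coming from $\Phi_a$, and the $\max\{d,d'\}$ from the first-layer affine map is also absorbed into $\max\{d,3d'\}$. I do not expect any genuine obstacle here; the statement is essentially a bookkeeping corollary of Lemma~\ref{lem:scalar_mult} and Lemma~\ref{network_conc}.
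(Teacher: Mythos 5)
Your proposal is correct and takes essentially the same route as the paper: the paper's proof likewise writes $Ax+b=a\cdot\bigl(a^{-1}(Ax+b)\bigr)$, treats $W(x):=a^{-1}(Ax+b)$ as a depth-$1$ network with weights in $[-1,1]$, and composes it with the scalar multiplication network $\Phi_a\in\cN_{d',d'}$ from Lemma~\ref{lem:scalar_mult} via Lemma~\ref{network_conc}. Your bookkeeping of the depth, width (in particular noting that the intermediate dimension $d_2=d'$ makes the $2d'$ term from Lemma~\ref{network_conc} harmless), and weight bounds matches the paper's.
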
    
\begin{proof}
  Let $\Phi_a \in \cN_{d',d'}$ be the multiplication network from Lemma~\ref{lem:scalar_mult}, consider $W(x):=a^{-1}(Ax+b)$ as a $1$-layer network, and take $\Phi_{A,b}:=\Phi_a\circ W$ according to Lemma~\ref{network_conc}.
\end{proof}

\begin{proposition}\label{WDtradeoff}
Let $d,d'\in\N$ and $\Phi\in\cN_{d,d'}$. There exists a network $\Psi\in\cN_{d,d'}$ satisfying $\Psi(x)=\Phi(x)$, for all $x\in \R^d$, and with $\cL(\Psi)\leq(\lceil \log(\cB(\Phi))\rceil+5)\cL(\Phi)$, $\cW(\Psi)\leq\max\{3d',\cW(\Phi)\}$, $\cB(\Psi)\leq1$.
\end{proposition}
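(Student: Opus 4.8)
\emph{Proof idea.} The plan is to exploit two structural features of ReLU networks: the positive homogeneity of $\rho$, namely $\rho(cz)=c\,\rho(z)$ for every $c>0$, and the scalar‑multiplication gadget of \Cref{lem:scalar_mult}. Write $\Phi=W_L\circ\rho\circ W_{L-1}\circ\rho\circ\dots\circ\rho\circ W_1$ with $W_\ell(x)=A_\ell x+b_\ell$ and $\|A_\ell\|_\infty,\|b_\ell\|_\infty\le B:=\cB(\Phi)$; we may assume $B\ge 1$, since for $\cB(\Phi)\le1$ one simply takes $\Psi:=\Phi$. The naive idea would be to replace each $W_\ell$ by a weight‑magnitude‑$\le1$ sub‑network realizing it via \Cref{cor:matrix_mult}; this works but forces a width of roughly $3\cW(\Phi)$, because the factor‑$3$ blow‑up of \Cref{lem:scalar_mult} then occurs at \emph{every} hidden layer. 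To reach the sharper bound $\max\{3d',\cW(\Phi)\}$ I would instead perform all the weight reduction at the original width and defer the single width‑tripling multiplication to the output layer, whose dimension is only $d'$.

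Concretely, first I would push a growing scale factor through the network. Set, for $\ell\in\{1,\dots,L\}$,
\begin{align*}
\widetilde{W}_\ell(y):=B^{-1}A_\ell\,y+B^{-\ell}b_\ell,
\end{align*}
so that $\cB(\widetilde{W}_\ell)\le 1$ (entries of $B^{-1}A_\ell$ are bounded by $B^{-1}B=1$ and those of $B^{-\ell}b_\ell$ by $B^{1-\ell}\le 1$), and let $\Psi_0:=\widetilde{W}_L\circ\rho\circ\dots\circ\rho\circ\widetilde{W}_1\in\cN_{d,d'}$. By construction $\widetilde{W}_\ell\!\left(B^{-(\ell-1)}u\right)=B^{-\ell}W_\ell(u)$, and $\rho\!\left(B^{-(\ell-1)}u\right)=B^{-(\ell-1)}\rho(u)$ by homogeneity; a straightforward induction on $\ell$ then shows that the $\ell$‑th post‑activation of $\Psi_0$ equals $B^{-\ell}$ times that of $\Phi$, whence $\Psi_0=B^{-L}\Phi$ on $\R^d$. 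Note that $\Psi_0$ has exactly the layer dimensions $N_0,\dots,N_L$ of $\Phi$, so $\cL(\Psi_0)=\cL(\Phi)$ and $\cW(\Psi_0)=\cW(\Phi)$, while $\cB(\Psi_0)\le 1$.

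It then remains to restore the scale: $\Phi=B^L\Psi_0$. I would invoke \Cref{lem:scalar_mult} with $a=B^L$ to obtain $\Phi_{B^L}\in\cN_{d',d'}$ with $\Phi_{B^L}(z)=B^Lz$, $\cL(\Phi_{B^L})\le\lfloor \log(B^L)\rfloor+4$, $\cW(\Phi_{B^L})\le 3d'$, $\cB(\Phi_{B^L})\le 1$, and set $\Psi:=\Phi_{B^L}\circ\Psi_0$ via \Cref{network_conc}. Then $\Psi(x)=B^L\Psi_0(x)=\Phi(x)$ for all $x$, $\cB(\Psi)=\max\{\cB(\Psi_0),\cB(\Phi_{B^L})\}\le 1$, and
\begin{align*}
\cW(\Psi)&\le\max\{2d',\cW(\Psi_0),\cW(\Phi_{B^L})\}=\max\{2d',\cW(\Phi),3d'\}=\max\{3d',\cW(\Phi)\},\\
\cL(\Psi)&=\cL(\Psi_0)+\cL(\Phi_{B^L})\le \cL(\Phi)+\lfloor L\log B\rfloor+4\le L\lceil\log B\rceil+5L=(\lceil\log(\cB(\Phi))\rceil+5)\,\cL(\Phi),
\end{align*}
where the last line uses $\lfloor L\log B\rfloor\le L\lceil\log B\rceil$ and $L=\cL(\Phi)\ge1$ (so $L+4\le 5L$). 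The only genuinely non‑mechanical point, and hence the main obstacle, is realizing that the width constraint dictates the \emph{placement} of the scaling — all reduction at the native width, a single width‑$3d'$ gadget at the output — rather than layer‑by‑layer insertion; once the homogeneity‑based rescaling is in place, the size bookkeeping via \Cref{lem:scalar_mult} and \Cref{network_conc} completes the proof.
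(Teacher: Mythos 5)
Your proof is correct and follows essentially the same strategy as the paper: rescale the interior weight matrices by $\cB(\Phi)^{-1}$ so that every hidden layer has weight magnitude at most $1$, push the accumulated factor through the ReLUs by positive homogeneity, and restore the scale with a single application of \Cref{lem:scalar_mult} at the output followed by \Cref{network_conc} for the bookkeeping. One point where you are actually more careful than the paper: the paper sets $\widetilde{W}_\ell := \cB(\Phi)^{-1}W_\ell$, which scales the bias vector by $\cB(\Phi)^{-1}$ at every layer; if one then traces the homogeneity argument, the $\ell$-th bias ends up multiplied by $B^{-1}$ instead of the required $B^{-\ell}$, so $\widetilde\Phi$ would equal $B^{-L}\Phi$ only when all biases vanish or $B=1$ (a concrete counterexample: $W_1(x)=2x$, $W_2(y)=y+2$, $B=2$). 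Your choice $\widetilde{W}_\ell(y) = B^{-1}A_\ell y + B^{-\ell}b_\ell$, together with the observation $\|B^{-\ell}b_\ell\|_\infty \le B^{1-\ell}\le 1$ for $B\ge 1$, is the correct fix, and your explicit reduction to the case $B\ge 1$ (taking $\Psi:=\Phi$ otherwise) is cleaner than the paper's implicit treatment. The width and depth accounting you give matches the claimed bounds.
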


\begin{proof}
    We write $\Phi=W_{\cL(\Phi)}\,\circ\,\rho\,\circ\,\dots\,\circ\,\rho\,\circ W_1$ and set $\widetilde{W}_{\ell}:=(\cB(\Phi))^{-1}W_\ell$, for $\ell\in\{1,\dots,\cL(\Phi)\}$, and $a:=\cB(\Phi)^{\cL(\Phi)}$. Let $\Phi_{a}\in\cN_{d',d'}$ be the multiplication network from Lemma~\ref{lem:scalar_mult} and define 
    \begin{align*}
        \widetilde{\Phi}:=\widetilde{W}_{\cL(\Phi)}\circ\rho\circ\dots\circ\rho\circ \widetilde{W}_1,
    \end{align*}
    and $\Psi:=\Phi_{a}\circ\widetilde{\Phi}$ according to Lemma~\ref{network_conc}. Note that $\widetilde{\Phi}$ has weights upper-bounded by $1$ and is of the same depth and width as $\Phi$.
    As $\rho$ is positively homogeneous, i.e., $\rho(\lambda x)=\lambda\rho(x)$, for all $\lambda\geq 0$, $x\in\R$, we have $\Psi(x)=\Phi(x)$, for all $x\in\R^d$.
    Application of Lemma~\ref{network_conc} and Lemma~\ref{lem:scalar_mult} completes the proof.
\end{proof}

Next we record a technical Lemma on how to realize a sum of networks with the same input by a network whose width is independent of the number of constituent networks.

\begin{lemma}\label{lem:finite_width_linearcombination}
 Let $d,d'\in\N$, $N\in\N$, and $\Phi_i\in\cN_{d,d'}$, $i\in\{1,\dots,N\}$. There exists a network $\Phi\in\cN_{d,d'}$ satisfying
 \begin{align*}
     \Phi(x)=\sum_{i=1}^N\Phi_i(x), \quad \mbox{for all}\, x\in\R^d,
 \end{align*}
 with $\cL(\Phi)=\sum_{i=1}^N\cL(\Phi_i)$, $\cW(\Phi)\leq 2d+2d'+\max\{2d,\max_i\{\cW(\Phi_i)\}\}$, $\cB(\Phi)=\max\{1,\max_{i}\cB(\Phi_i)\}$.
\end{lemma}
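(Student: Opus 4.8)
The plan is to avoid the straightforward parallelization of Lemma~\ref{network_parallelization} (whose width grows linearly in $N$) and instead build $\Phi$ by \emph{stacking} the $\Phi_i$ one after another, threading the input $x$ and a running partial sum $s_k:=\sum_{i=1}^k\Phi_i(x)$ through the network as side channels. Since a ReLU layer can only carry non-negative quantities through unharmed, every real vector we need to preserve is stored as a pair via $t=\rho(t)-\rho(-t)$: the input occupies $2d$ channels holding $(\rho(x),\rho(-x))$, the accumulator occupies $2d'$ channels holding $(\rho(s_k),\rho(-s_k))$, and a third ``scratch'' group of at most $\max\{2d,\max_i\cW(\Phi_i)\}$ channels is used to evaluate the currently active network $\Phi_i$. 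This three-group layout is exactly what yields the claimed width bound $2d+2d'+\max\{2d,\max_i\cW(\Phi_i)\}$, which crucially does not depend on $N$.

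Concretely, I would write $\Phi_i=W^i_{L_i}\circ\rho\circ W^i_{L_i-1}\circ\dots\circ\rho\circ W^i_1$ with $L_i=\cL(\Phi_i)$, and let block~$i$ of $\Phi$ consist of exactly $L_i$ affine maps $V^i_1,\dots,V^i_{L_i}$, so that composing over $i=1,\dots,N$ gives $\cL(\Phi)=\sum_i L_i$. The map $V^i_1$ copies the input and accumulator channels unchanged and writes $W^i_1(\rho(x)-\rho(-x))=W^i_1(x)$ into the scratch channels (folding the differencing into $W^i_1$, so its matrix becomes $(A^i_1\mid -A^i_1)$); the maps $V^i_2,\dots,V^i_{L_i-1}$ forward input and accumulator and apply $W^i_2,\dots,W^i_{L_i-1}$ on scratch, reproducing the hidden layers of $\Phi_i$; and $V^i_{L_i}$ forwards the input channels, zeroes the scratch channels, and writes $\bigl(z,-z\bigr)$ into the accumulator channels with $z:=\bigl(\rho(s_{i-1})-\rho(-s_{i-1})\bigr)+W^i_{L_i}(\text{scratch})=s_{i-1}+\Phi_i(x)=s_i$, so that after the next $\rho$ the accumulator holds $(\rho(s_i),\rho(-s_i))$ (with the obvious merging of $V^i_1$ and $V^i_{L_i}$ when $L_i=1$). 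The accumulator is seeded with $s_0=0$ at $V^1_1$, and the final affine map $V^N_{L_N}$ simply outputs $\bigl(\rho(s_{N-1})-\rho(-s_{N-1})\bigr)+W^N_{L_N}(\text{scratch})=s_N=\sum_{i=1}^N\Phi_i(x)\in\R^{d'}$. A direct induction on the blocks shows $\Phi(x)=\sum_{i=1}^N\Phi_i(x)$ for all $x\in\R^d$.

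Reading off the size parameters: depth is $\sum_i L_i=\sum_i\cL(\Phi_i)$ by construction; the number of nodes in any layer is at most (input channels) $+$ (accumulator channels) $+$ (scratch channels) $\le 2d+2d'+\max\{2d,\max_i\cW(\Phi_i)\}$; and every entry of every $V^i_\ell$ is either $\pm1$ (copying, differencing, zeroing, or the accumulator update) or a (possibly sign-flipped) entry of some $W^i_\ell$, so $\cB(\Phi)\le\max\{1,\max_i\cB(\Phi_i)\}$, while these same weights all actually occur in $\Phi$, giving the matching lower bound (the unit weights arising from threading the side channels, present whenever $N\ge2$; for $N=1$ one may simply take $\Phi=\Phi_1$). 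The zeroed scratch nodes are dropped so that the non-degeneracy conventions of Definition~\ref{def:NN} and Remark~\ref{remark:non-degeneracy} are respected.

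I do not expect a genuine obstacle here: this is a careful-but-routine ``threaded accumulator'' construction. The only delicate point is precisely the requirement that the width be independent of $N$, which forces keeping at most one $\Phi_i$ active at a time and carrying $x$ and the partial sum through all intervening layers rather than storing or recomputing them—achieved by the fixed three-group channel layout above—and the accompanying bookkeeping to verify that the depth comes out exactly $\sum_i\cL(\Phi_i)$ and the scratch width never exceeds $\max\{2d,\max_i\cW(\Phi_i)\}$.
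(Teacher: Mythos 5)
Your ``threaded accumulator'' construction is correct and is essentially the construction used in the paper: a serial stack of $N$ blocks, each carrying a copy of the input, a running partial sum, and a scratch region in which exactly one $\Phi_i$ is evaluated, so that the width stays independent of $N$ while the depths add. The only difference is presentational—the paper assembles each block as $\Psi_i := (\mathbb{I}^i_d, \mathbb{I}^i_{d'}, \Phi_i)$ via Lemmas~\ref{network_extension} and~\ref{network_parallelization} and then modifies the boundary affine maps with fixed matrices $A_{\text{in}}$, $A$, $A_{\text{out}}$ before composing with Lemma~\ref{network_conc}, whereas you write out the affine maps $V^i_\ell$ directly (which also explains why the $\max\{2d,\cdot\}$ in the width bound, which in the paper's assembly comes from the intermediate doubling in Lemma~\ref{network_conc}, is not actually forced by your construction; your bound is simply at least as tight).
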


\begin{proof}
    We set $L_i=\cL(\Phi_i)$ and write the networks $\Phi_i$ as
    \begin{align*}
        \Phi_i = W^i_{L_i}\circ\rho \circ W^i_{L_i-1} \circ \rho \circ \dots \circ \rho \circ W^i_{1},
    \end{align*}
    with $W^i_\ell(x)=A^i_\ell x+b^i_\ell$, where $A^i_\ell\in\R^{N^i_\ell\times N^i_{\ell-1}}$ and $b^i_\ell\in\R^{N^i_\ell}$. 
    Next, using Lemma~\ref{network_extension}, we turn the identity matrices $\mathbb{I}_d$ and $\mathbb{I}_{d'}$ into
    networks $\mathbb{I}^i_{d}$ and $\mathbb{I}^i_{d'}$, respectively, of depth $L_i$ and then parallelize these networks, according to Lemma~\ref{network_parallelization}, to get $\Psi_i:=(\mathbb{I}^i_{d},\,\mathbb{I}^i_{d'},\,\Phi_i)$. Let $V^i_1(x)=E^i_1 x + f^i_1$ and $V^i_{L_i}(x)=E^i_{L_i} x + f^i_{L_i}$ denote the first and last, respectively, affine transformation of the network $\Psi_i$. By construction we have 
    \begin{align*}
        E^i_1=\begin{pmatrix}
        \mathbb{I}_d & 0 & 0\\
        -\mathbb{I}_d & 0 & 0\\
        0 &\mathbb{I}_{d'} & 0\\
        0 &-\mathbb{I}_{d'} & 0\\
        0 & 0 & A^i_1
        \end{pmatrix}\in\R^{(2d+2d'+N^i_1)\times(2d+d')}, \quad f^i_{1}=\begin{pmatrix}
        0\\0\\0\\0\\b^i_1
        \end{pmatrix}\in\R^{2d+2d'+N^i_1}
    \end{align*}
    and
    \begin{align*}
        E^i_{L_i}=\begin{pmatrix}
        \mathbb{I}_d & -\mathbb{I}_d &  0 & 0 & 0\\
        0 & 0 & \mathbb{I}_{d'} & -\mathbb{I}_{d'} & 0\\
        0 & 0 & 0 & 0 & A^i_{L_i}
        \end{pmatrix}\in\R^{(d+2d')\times(2d+2d'+N^i_{L_i-1})}, \quad f^i_{L_i}=\begin{pmatrix}
        0\\0\\b^i_{L_i}
        \end{pmatrix}\in\R^{d+2d'}.
    \end{align*}
    Next, we define the matrices 
    \begin{align*}
        A_{\text{in}}&:=\begin{pmatrix}\mathbb{I}_d \\ 0 \\ \mathbb{I}_d \end{pmatrix}\in\R^{(2d+d')\times d}, 
        \quad 
        A:=\begin{pmatrix}
        \mathbb{I}_d & 0 & 0\\
        0 & \mathbb{I}_{d'} & \mathbb{I}_{d'}\\
        \mathbb{I}_d & 0 & 0
        \end{pmatrix}\in\R^{(2d+d')\times(d+2d')},\\
        A_{\text{out}}&:=\begin{pmatrix}
        0 & \mathbb{I}_{d'} & \mathbb{I}_{d'}
        \end{pmatrix}\in\R^{d'\times(d+2d')},
    \end{align*}
    and note that $A_{\text{in}}x=(x,0,x)$, $A(x,y,z)^T=(x,y+z,x)^T$, and $A_{\text{out}}(x,y,z)^T=y+z$, for $x\in\R^d,y,z\in\R^{d'}$.
    We construct
    \begin{itemize}
        \item the network $\widetilde{\Psi}_1$ by taking $\Psi_1$ and replacing $E^1_1$ with $E^1_1 A_{\text{in}}$, $E^1_{L_1}$ with $A E^1_{L_1}$, and $f^1_{L_1}$ with $A f^1_{L_1}$,
        \item the network $\widetilde{\Psi}_N$ by taking $\Psi_N$ and replacing $E^N_{L_N}$ with $A_{\text{out}}E^N_{L_N}$ and $f^N_{L_N}$ with $A_{\text{out}}f^N_{L_N}$,
        \item the networks $\widetilde{\Psi}_i$, $i\in\{2,\dots, N-1\}$ by taking $\Psi_i$ and replacing $E^i_{L_i}$ with $A E^i_{L_i}$ and $f^i_{L_i}$ with $A f^i_{L_i}$. 
    \end{itemize}
    We can now verify that 
    \begin{align*}
        \Phi=\widetilde{\Psi}_N\circ\widetilde{\Psi}_{N-1}\circ\dots\circ\widetilde{\Psi}_1,
    \end{align*}
     when the compositions are taken in the sense of Lemma~\ref{network_conc}. Due to Lemmas~\ref{network_extension} and~\ref{network_parallelization}, we have $\cL(\Psi_i)=\cL(\Phi_i)$, $\cW(\Psi_i)=2d+2d'+\cW(\Phi_i)$, and $\cB(\Psi_i)=\max\{1,\cB(\Phi_i)\}$. The proof is finalized by noting that, owing to the structure of the involved matrices, the depth and the weight magnitude remain unchanged by turning $\Psi_i$ into $\widetilde{\Psi}_i$, whereas the width can not increase, but may decrease owing to the replacement of $E^{1}_1$ by $E^{1}_{1}A_{\text{in}}$.
\end{proof}

The following lemma shows how to patch together local approximations using multiplication networks and a partition of unity consisting of hat functions. We note that this argument can be extended to higher dimensions using tensor products (which can be realized efficiently through multiplication networks) of the one-dimensional hat function.

\begin{lemma}\label{lem:stitching}
Let $\eps \in (0,1/2)$, $n\in\N$, $a_0<a_1<\dots<a_n\in\R$, $f\in L^\infty([a_0,a_n])$, and 
\begin{align*}
A:=\big\lceil\max\{|a_0|,|a_n|,2\max_{i\in\{2,\dots,n-1\}}\tfrac{1}{|a_i-a_{i-1}|}\}\big\rceil,\quad B:=\max\{1,\|f\|_{L^\infty([a_0,a_n])}\}.\end{align*}
Assume that for every $i\in\{1,\dots,n-1\}$, there exists a network $\Phi_i\in\cN_{1,1}$ with $\|f-\Phi_i\|_{L^\infty([a_{i-1},a_{i+1}])}\leq\eps/3$.
Then, there is a network $\Phi\in\cN_{1,1}$ satisfying
\begin{align*}
    \|f-\Phi\|_{L^\infty([a_0,a_n])}\leq\eps,
\end{align*}
with $\cL(\Phi)\leq \sum_{i=1}^{n-1}\cL(\Phi_i)+ Cn(\log(\eps^{-1})+\log(B)+\log(A))$, {$\displaystyle\cW(\Phi)\leq 7+\max\{2,\max_{i\in\{1,\dots,n-1\}}\cW(\Phi_i)$}\}, $\cB(\Phi) = \max\{1,\max_{i}\cB(\Phi_i)\}$, and with $C>0$ an absolute constant, i.e., independent of $\eps,n,f,a_0,\dots,a_n$.
\end{lemma}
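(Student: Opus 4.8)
The plan is to construct $\Phi$ by forming a partition of unity from translated and scaled hat functions, multiplying each $\Phi_i$ by the corresponding bump via the multiplication networks of Proposition~\ref{relu_mult}, and summing the results using the width-efficient linear combination of Lemma~\ref{lem:finite_width_linearcombination}. First I would fix, for each $i\in\{1,\dots,n-1\}$, a continuous piecewise-linear ``bump'' $\Lambda_i\colon\R\to[0,1]$ that equals $1$ at $a_i$, is supported in $[a_{i-1},a_{i+1}]$, is affine on $[a_{i-1},a_i]$ and $[a_i,a_{i+1}]$, and such that $\sum_{i=1}^{n-1}\Lambda_i(x)=1$ for all $x\in[a_0,a_n]$ (with the convention that $\Lambda_1$ is extended to equal $1$ to the left of $a_1$ up to $a_0$, and $\Lambda_{n-1}$ to equal $1$ to the right of $a_{n-1}$ up to $a_n$; this only requires adding two more ReLU pieces). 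Each $\Lambda_i$ can be written as a linear combination of at most four terms $\rho(x-a_j)$, hence realized by a depth-$2$ network of width $\le 4$ with weights bounded by $\max\{1,2\max_i|a_i-a_{i-1}|^{-1}\}\le A$; applying Proposition~\ref{WDtradeoff} converts this into a network with weights bounded by $1$ at the cost of a factor $\mathcal{O}(\log(A))$ in depth.

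Next I would estimate the approximation error. On $[a_0,a_n]$ we have
\begin{align*}
\Big|f(x)-\sum_{i=1}^{n-1}\Lambda_i(x)\Phi_i(x)\Big|
=\Big|\sum_{i=1}^{n-1}\Lambda_i(x)\big(f(x)-\Phi_i(x)\big)\Big|
\le\sum_{i\,:\,x\in\supp\Lambda_i}\Lambda_i(x)\,|f(x)-\Phi_i(x)|\le\eps/3,
\end{align*}
since at any $x$ at most two of the $\Lambda_i$ are nonzero, they sum to $1$, and for those $i$ we have $x\in[a_{i-1},a_{i+1}]$, so $|f(x)-\Phi_i(x)|\le\eps/3$ by hypothesis. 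The actual network replaces $\Lambda_i(x)\Phi_i(x)$ by $\Phi^{\mathrm{mult}}_{B',\eps/(3(n-1))}(\Lambda_i(x),\Phi_i(x))$ where $B':=\lceil B+1\rceil$, using that $|\Lambda_i(x)|\le 1$ and $|\Phi_i(x)|\le |f(x)|+\eps/3\le B+1/2$ on $[a_{i-1},a_{i+1}]$. Outside $[a_{i-1},a_{i+1}]$ we have $\Lambda_i=0$, and since $\Phi^{\mathrm{mult}}_{D,\eta}(0,y)=0$ for all $y$ by Proposition~\ref{relu_mult}, the $i$-th summand vanishes there and contributes no error; this is the reason for multiplying rather than merely adding. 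The per-term multiplication error is at most $\eps/(3(n-1))$, summing to at most $\eps/3$; adding the partition-of-unity error $\eps/3$ gives a total of $2\eps/3<\eps$ (leaving slack, which I would absorb).

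For the size bounds I would proceed as follows. Each summand network is a composition, via Lemma~\ref{network_conc} and Lemma~\ref{lem:shared_input_para}, of $\Phi_i$ in parallel with the $\mathcal{O}(\log(A))$-depth unit-weight realization of $\Lambda_i$, followed by $\Phi^{\mathrm{mult}}_{B',\eps/(3(n-1))}$, whose depth is $\mathcal{O}(\log\lceil B'\rceil+\log(n-1)+\log(\eps^{-1}))=\mathcal{O}(\log(\eps^{-1})+\log(B)+\log(n))$ and whose width is $\le 5$; using Remark following Proposition~\ref{relu_poly} or simply $\log(n)\le C\log(A)$ when $n\ge 2$ (indeed $A\ge n/2$ roughly, since $a_n-a_0\ge (n-2)\min_i|a_i-a_{i-1}|$ forces one gap to be small relative to the spread, hence $A$ large — more carefully $A\ge 2/|a_i-a_{i-1}|$ for the smallest gap and $\max_i|a_i-a_{i-1}|\ge (a_n-a_0)/n$, so one can bound $\log n$ by a constant times $\log(\eps^{-1})+\log(B)+\log(A)$ after noting $n-1$ terms each contribute), I would get each summand of depth $\mathcal{L}(\Phi_i)+C(\log(\eps^{-1})+\log(B)+\log(A))$ and width $\le 2+\max\{2,\mathcal{W}(\Phi_i)\}$ or so. Finally Lemma~\ref{lem:finite_width_linearcombination} combines the $n-1$ summands into one network whose depth is the sum of the summand depths, i.e.\ $\big(\sum_{i=1}^{n-1}\mathcal{L}(\Phi_i)\big)+Cn(\log(\eps^{-1})+\log(B)+\log(A))$, whose width is $2d+2d'+\max\{2d,\max_i\cW(\text{summand}_i)\}\le 7+\max\{2,\max_i\mathcal{W}(\Phi_i)\}$ with $d=d'=1$ after bookkeeping the constant, and whose weight magnitude is $\max\{1,\max_i\mathcal{B}(\Phi_i)\}$ since all auxiliary networks have weights bounded by $1$. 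The main obstacle I anticipate is the careful constant-chasing to land exactly on the width bound $7+\max\{2,\max_i\mathcal{W}(\Phi_i)\}$ and to control $\log(n)$ by $\log(\eps^{-1})+\log(B)+\log(A)$ so that $C$ can be chosen independent of $n$; the approximation-error part is routine given the multiplication network's zero-preserving property.
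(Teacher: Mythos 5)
Your construction follows essentially the same route as the paper's: a piecewise-linear hat-function partition of unity realized with unit weights via Proposition~\ref{WDtradeoff}, multiplication networks from Proposition~\ref{relu_mult} to localize each $\Phi_i$ (exploiting the zero-preservation $\Phi^{\mathrm{mult}}_{D,\eta}(0,y)=0$ so that each product is supported only where the corresponding bump is), and a width-controlled sum via Lemma~\ref{lem:finite_width_linearcombination}. The one substantive difference is your per-product tolerance $\eps/(3(n-1))$. The paper instead takes $\eps/3$ per multiplication and closes the error estimate by observing that at any $x$ only the (at most) two active indices $i\in I(x)$ contribute, with $\sum_{i\in I(x)}\Psi_i(x)=1$, so $\sum_{i\in I(x)}(\Psi_i(x)+1)\tfrac{\eps}{3}\le\eps$; this keeps $\log n$ out of the multiplication-network depth entirely and makes the $\log n$ absorption step unnecessary.

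On that absorption step, which you identify yourself as the main obstacle: your conclusion is right but the intermediate estimate ``$A\ge n/2$ roughly'' is not. Since $A\ge 2/\min_{i\in\{2,\dots,n-1\}}|a_i-a_{i-1}|$ and $\min_{i\in\{2,\dots,n-1\}}|a_i-a_{i-1}|\le(a_n-a_0)/(n-2)\le 2A/(n-2)$, one gets $A\ge (n-2)/A$, i.e.\ $A^2\ge n-2$, so $A\gtrsim\sqrt{n}$ rather than $A\gtrsim n$. That is still enough for $\log n\le 2\log A+O(1)$, so the $Cn(\log(\eps^{-1})+\log(B)+\log(A))$ depth bound survives, but you should state the scaling correctly. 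The side reference to a ``Remark following Proposition~\ref{relu_poly}'' does not bear on this point and should be dropped. Otherwise the argument is sound and lands on the stated width and weight bounds after the bookkeeping you sketch.
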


\begin{proof}
    We first define the neural networks $(\Psi_i)_{i=1}^{n-1}\in\cN_{1,1}$ forming a partition of unity according to
    \begin{align*}
        \Psi_1(x)&:=1-\tfrac{1}{a_2-a_1}\,\rho(x-a_1)+\tfrac{1}{a_2-a_1}\,\rho(x-a_2),\\
        \Psi_i(x)&:=\tfrac{1}{a_i-a_{i-1}}\,\rho(x-a_{i-1})-(\tfrac{1}{a_i-a_{i-1}}+\tfrac{1}{a_{i+1}-a_i})\,\rho(x-a_i)+\tfrac{1}{a_{i+1}-a_i}\,\rho(x-a_{i+1}),\quad i\in\{2,\dots,n-2\},\\
        \Psi_{n-1}(x)&:=\tfrac{1}{a_{n-1}-a_{n-2}}\,\rho(x-a_{n-2})-\tfrac{1}{a_{n-1}-a_{n-2}}\,\rho(x-a_{n-1}).
    \end{align*}
    Note that $\supp(\Psi_1)=(\infty,a_2)$, $\supp(\Psi_{n-1})=[a_{n-2},\infty)$, and $\supp(\Psi_i)=[a_{i-1},a_{i+1}]$.
    Proposition~\ref{WDtradeoff} now ensures that, for all $i\in\{1,\dots,n-1\}$, $\Psi_i$ can be realized as a network with $\cL(\Psi_i)\leq 2(\lceil \log(A)\rceil +5)$, $\cW(\Psi_i)\leq 3$, and $\cB(\Psi_i)\leq 1$. Next, let $\Phi_{B+1/6,\eps/3}\in\cN_{2,1}$ be the multiplication network according to Proposition~\ref{relu_mult}
    and define the networks
     \begin{align*}
         \widetilde{\Phi}_i(x):=\Phi_{B+1/6,\eps/3}(\Phi_i(x),\Psi_i(x))
     \end{align*}
     according to Lemma~\ref{network_parallelization} and Lemma~\ref{network_conc}, along with their sum
    \begin{align*}
        \Phi(x):=\sum_{i=1}^{n-1}\widetilde{\Phi}_i(x)
    \end{align*}
    according to Lemma~\ref{lem:finite_width_linearcombination}.
    Proposition~\ref{relu_mult} ensures, for all $i\in\{1,\dots,n-1\}$, $x\in[a_{i-1},a_{i+1}]$, that
    \begin{align*}
        |f(x)\Psi_i(x)-\widetilde{\Phi}_i(x)|
        &\leq |f(x)\Psi_i(x)-\Phi_i(x)\Psi_i(x)|
        +|\Phi_i(x)\Psi_i(x)-\Phi_{B+1/6,\eps/3}(\Phi_i(x),\Psi_i(x))|\\
        &\leq (\Psi_i(x)+1)\tfrac{\eps}{3}
    \end{align*}
    and $\supp(\widetilde{\Phi}_i)=[a_{i-1},a_{i+1}]$. In particular, for every $x\in[a_0,a_n]$, the set
    \begin{align*}
    I(x):=\{i\in\{1,\dots,n-1\}\colon\widetilde{\Phi}_i(x)\neq 0\}    
    \end{align*}
    of active indices contains at most two elements. Moreover, we have $\sum_{i\in I(x)} \Psi_i(x)=1$ by construction, which implies that, for all $x\in\R$, 
   \begin{align*}
        |f(x)-\Phi(x)|=\left|\sum_{i\in I(x)}\Psi_i(x)f(x)-\sum_{i\in I(x)}\tilde{\Phi}_{i}(x)\right|\leq \sum_{i\in I(x)}(\Psi_{i}(x)+1)\tfrac{\eps}{3}\leq\eps.
   \end{align*}
   Due to Lemma~\ref{network_conc}, Lemma~\ref{network_parallelization}, Proposition~\ref{relu_mult}, and Lemma~\ref{lem:finite_width_linearcombination}, we can conclude that $\Phi$, indeed, satisfies the claimed properties.
\end{proof}

Next, we present an extension of Lemma~\ref{Sfunctions} to arbitrary (finite) intervals.

\begin{lemma}\label{lem:Sfunctions_general}
 For $a,b\in\R$ with $a<b$, let
 \begin{align*}
   \Ss_{[a,b]}:=\left\{f\in C^{\infty}([a,b],\R)\colon \|f^{(n)}(x)\|_{ L^{\infty}([a,b])} \leq n!,\, \text{\emph{ for all }} n\in\N_0\right\}.
 \end{align*}
  There exists a constant $C>0$ such that for all $a,b\in\R$ with $a<b$, $f\in\Ss_{[a,b]}$, and $\eps\in(0,1/2)$, there is a network $\Psi_{f,\eps}\in\cN_{1,1}$ satisfying 
  \begin{align*}
    \|\Psi_{f,\eps}-f\|_{ L^{\infty}([a,b])}\leq\eps,
  \end{align*}
  with $\mathcal{L}(\Psi_{f,\eps}) \le C\max\{2,(b-a)\}((\log(\eps^{-1}))^2+\log(\lceil\max\{|a|,|b|\}\rceil)+\log(\lceil\tfrac{1}{b-a}\rceil))$, $\mathcal{W}(\Psi_{f,\eps}) \le 16$,
  $\mathcal{B}(\Psi_{f,\eps})\leq 1$.
\end{lemma}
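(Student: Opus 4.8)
The plan is to reduce to Lemma~\ref{Sfunctions} by covering $[a,b]$ with finitely many overlapping subintervals of length at most $2$, approximating $f$ separately on each after an affine rescaling onto $[-1,1]$, and then patching the local approximants together via Lemma~\ref{lem:stitching}. Since Lemma~\ref{lem:stitching} needs at least two subintervals, I would first dispose of the case $b-a\le 2$ directly: the function $g(y):=f\big(\tfrac{a+b}{2}+\tfrac{b-a}{2}y\big)$ satisfies $g^{(k)}(y)=\big(\tfrac{b-a}{2}\big)^k f^{(k)}\big(\tfrac{a+b}{2}+\tfrac{b-a}{2}y\big)$, so $\|g^{(k)}\|_{L^{\infty}([-1,1])}\le k!$ because $\tfrac{b-a}{2}\le 1$; hence $g\in\Ss_{[-1,1]}$, and Lemma~\ref{Sfunctions} yields $\Psi_{g,\eps}\in\cN_{1,1}$ with $\|\Psi_{g,\eps}-g\|_{L^{\infty}([-1,1])}\le\eps$, $\cL(\Psi_{g,\eps})\le C(\log(\eps^{-1}))^2$, $\cW(\Psi_{g,\eps})\le 9$, $\cB(\Psi_{g,\eps})\le 1$. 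Composing (Lemma~\ref{network_conc}) with the affine map $x\mapsto\tfrac{2}{b-a}\big(x-\tfrac{a+b}{2}\big)$, realized with weights bounded by $1$ through Corollary~\ref{cor:matrix_mult} at the cost of $\lfloor\log(\cdot)\rfloor+5$ extra layers, produces $\Psi_{f,\eps}$ with $\|f-\Psi_{f,\eps}\|_{L^{\infty}([a,b])}\le\eps$; the extra depth is $O\big(\log\lceil\max\{|a|,|b|\}\rceil+\log\lceil\tfrac{1}{b-a}\rceil+1\big)$ since the weights $\tfrac{2}{b-a}$ and $\tfrac{|a+b|}{b-a}$ are bounded by $\tfrac{2\max\{1,|a|,|b|\}}{b-a}$, and $\cW(\Psi_{f,\eps})\le\max\{2,9,3\}=9\le16$.

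For $b-a>2$, set $n:=\lceil b-a\rceil\ge 3$ and $a_i:=a+i\tfrac{b-a}{n}$ for $i=0,\dots,n$, so that the spacing $h:=\tfrac{b-a}{n}$ lies in $(\tfrac23,1]$ and $a_{i+1}-a_{i-1}=2h\le 2$. For each $i\in\{1,\dots,n-1\}$ the function $g_i(y):=f(a_i+hy)$ lies in $\Ss_{[-1,1]}$ by the same derivative computation (using $h\le 1$ and $a_i+hy\in[a_{i-1},a_{i+1}]\subseteq[a,b]$), so Lemma~\ref{Sfunctions} provides $\Psi_{g_i,\eps/3}\in\cN_{1,1}$ of depth $O((\log\eps^{-1})^2)$, width $\le 9$, and weights $\le 1$, approximating $g_i$ on $[-1,1]$ to within $\eps/3$. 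Pre-composing with the affine map $x\mapsto h^{-1}(x-a_i)$ — whose weights $h^{-1}<2$ and $|a_i|h^{-1}<2\max\{1,|a|,|b|\}$ are realizable with weights $\le 1$ and depth $O(\log\lceil\max\{|a|,|b|\}\rceil+1)$ via Corollary~\ref{cor:matrix_mult} — yields $\Phi_i\in\cN_{1,1}$ with $\|f-\Phi_i\|_{L^{\infty}([a_{i-1},a_{i+1}])}\le\eps/3$, $\cL(\Phi_i)=O((\log\eps^{-1})^2+\log\lceil\max\{|a|,|b|\}\rceil)$, $\cW(\Phi_i)\le 9$, $\cB(\Phi_i)\le 1$.

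It then remains to feed these $\Phi_i$, together with the points $a_0<\dots<a_n$, into Lemma~\ref{lem:stitching}. Here $\|f\|_{L^{\infty}([a,b])}\le 1$ (take $n=0$ in the definition of $\Ss$), so $B=1$, and $A=\lceil\max\{|a|,|b|,2h^{-1}\}\rceil\le\lceil\max\{|a|,|b|,4\}\rceil$, whence $\log A=O(\log\lceil\max\{|a|,|b|\}\rceil+1)$. Lemma~\ref{lem:stitching} outputs $\Phi$ with $\|f-\Phi\|_{L^{\infty}([a,b])}\le\eps$, $\cW(\Phi)\le 7+9=16$, $\cB(\Phi)=1$, and $\cL(\Phi)\le\sum_{i=1}^{n-1}\cL(\Phi_i)+Cn(\log(\eps^{-1})+\log A)=O\big(n\,((\log\eps^{-1})^2+\log\lceil\max\{|a|,|b|\}\rceil)\big)$. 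Since $n=\lceil b-a\rceil\le 2(b-a)=2\max\{2,b-a\}$ in this regime and $(\log\eps^{-1})^2\ge 1$ for $\eps\in(0,1/2)$ absorbs the additive constants, this matches the claimed bound, the $\log\lceil\tfrac1{b-a}\rceil$ term being vacuous since $b-a>1$; in the case $b-a\le2$ the bound from the first paragraph is covered because $\max\{2,b-a\}\ge 1$.

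The main obstacle is not a genuine difficulty but careful bookkeeping: I must keep the dependence on $|a|$, $|b|$, and $\tfrac{1}{b-a}$ strictly logarithmic. Each affine rescaling introduces weights that are only polynomially (in fact linearly) large in these quantities, so Corollary~\ref{cor:matrix_mult} (equivalently Proposition~\ref{WDtradeoff}) converts them into weights $\le 1$ at a merely logarithmic cost in depth, and one must verify that the number $n$ of patches scales as $O(\max\{2,b-a\})$ and that the constant $A$ appearing in Lemma~\ref{lem:stitching} scales as $O(\max\{|a|,|b|,1\})$ — the latter relying precisely on the uniform lower bound $h>\tfrac23$ on the spacing, which is why $n=\lceil b-a\rceil$ (rather than a finer partition) is the right choice.
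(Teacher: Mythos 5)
Your proposal is correct and follows essentially the same route as the paper: reduce $b-a\le 2$ to Lemma~\ref{Sfunctions} via an affine rescaling onto $[-1,1]$ (with weights controlled by Corollary~\ref{cor:matrix_mult}), and for $b-a>2$ partition with spacing $h=(b-a)/\lceil b-a\rceil\in(2/3,1]$ and glue the local approximants via Lemma~\ref{lem:stitching}. If anything, your bookkeeping of the $\log\lceil\max\{|a|,|b|\}\rceil$ contribution from the shift in the $b-a\le 2$ case is slightly more explicit than the paper's.
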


\begin{proof}
  We first recall that the case $[a,b]=[-1,1]$ has already been dealt with in Lemma~\ref{Sfunctions}. 
  Here, we will first prove the statement for the interval $[-D,D]$ with $D\in(0,1)$ and then use this result to establish the general case through a patching argument according to Lemma~\ref{lem:stitching}. We start by noting that for $g\in\Ss_{[-D,D]}$, the function $f_g\colon [-1,1]\to\R, x\mapsto g(Dx)$ is in $\Ss_{[-1,1]}$ due to $D<1$.
  Hence, by Lemma~\ref{Sfunctions}, there exists a constant $C>0$ such that for all 
  $g\in\Ss_{[-D,D]}$ and $\eps \in (0,1/2)$, there is a network $\widetilde{\Psi}_{g,\eps}\in\cN_{1,1}$ satisfying $\|\widetilde{\Psi}_{g,\eps}-f_g\|_{L^{\infty}([-1,1])}\leq\eps$, with
  $\mathcal{L}(\widetilde{\Psi}_{g,\eps}) \leq C(\log(\eps^{-1}))^2$, $\mathcal{W}(\widetilde{\Psi}_{g,\eps}) \le 9$, $\mathcal{B}(\widetilde{\Psi}_{g,\eps})\leq 1$.
  The claim is then established by taking the network approximating $g$ to be $\Psi_{g,\eps} := \widetilde{\Psi}_{g,\eps}\circ \Phi_{D^{-1}}$, where $\Phi_{D^{-1}}$ is the scalar multiplication network from Lemma~\ref{lem:scalar_mult}, and noting that 
  \begin{align*}
    \|\Psi_{g,\eps}(x)-g(x)\|_{L^{\infty}([-D,D])}&=\sup_{x\in[-D,D]}|\widetilde{\Psi}_{g,\eps}(\tfrac{x}{D})-f_g(\tfrac{x}{D})|\\
    &=\sup_{x\in[-1,1]}|\widetilde{\Psi}_{g,\eps}(x)-f_g(x)|\leq\eps. 
  \end{align*} 
  Due to Lemma~\ref{network_conc}, we have
  $\mathcal{L}(\Psi_{g,\eps}) \leq C((\log(\eps^{-1}))^2+\log(\lceil\tfrac{1}{D}\rceil))$, $\mathcal{W}(\Psi_{g,\eps}) \le 9$, and $\mathcal{B}(\Psi_{g,\eps})\leq 1$.
  We are now ready to proceed to the proof of the statement for general intervals $[a,b]$.
  This will be accomplished by approximating $f$ on intervals of length no more than $2$ and stitching the resulting approximations together according to Lemma~\ref{lem:stitching}.
  We start with the case $b-a\leq2$ and note that here we can simply shift the function by $(a+b)/2$ to center its domain around the origin and then use the result above for approximation on $[-D,D]$ with $D\in(0,1)$ or Lemma~\ref{Sfunctions} if $b-a=2$, both in combination with Corollary~\ref{cor:matrix_mult} to realize the shift through a neural network with weights bounded by $1$.
  Using Lemma~\ref{network_conc} to implement the composition of the network realizing this shift with that realizing $g$, we can conclude the existence of a constant $C'>0$ such that, for all $[a,b]\subseteq\R$ with $b-a\leq 2$, $g\in\Ss_{[a,b]}$, $\eps\in(0,1/2)$, there is a network satisfying $\|g-\Psi_{g,\eps}\|_{L^\infty([a,b])}\leq\eps$ with  $\mathcal{L}(\Psi_{g,\eps}) \leq C'((\log(\eps^{-1}))^2+\log(\lceil\tfrac{1}{b-a}\rceil))$, $\mathcal{W}(\Psi_{g,\eps}) \le 9$, and $\mathcal{B}(\Psi_{g,\eps})\leq 1$.
  Finally, for $b-a>2$, we partition the interval $[a,b]$ and apply Lemma~\ref{lem:stitching} as follows.
  We set $n:=\lceil b-a \rceil$ and define
  \begin{align*}
      a_i&:=a+i\tfrac{b-a}{n}, \quad\, i\in\{0,\dots,n\}.
   \end{align*}
  Next, for $i\in\{1,\dots,n-1\}$, let $g_i\colon[a_{i-1},a_{i+1}]\to\R$ be the restriction of $g$ to the interval $[a_{i-1},a_{i+1}]$, and note that $a_{i+1}-a_{i-1}=\tfrac{2(b-a)}{n}\in(\tfrac{4}{3},2]$.
  Furthermore, for $i\in\{1,\dots,n-1\}$, let $\Psi_{g_i,\eps/3}$ be the network approximating $g_i$ with error $\eps/3$ as constructed above. 
  Then, for every $i\in\{1,\dots,n-1\}$, it holds that $\|g-\Psi_{g_i,\eps/3}\|_{L^\infty([a_{i-1},a_{i+1}])}\leq\tfrac{\eps}{3}$ and application of Lemma~\ref{lem:stitching} yields the desired result.
\end{proof}

We finally record, for technical purposes, slight variations of Lemmas~\ref{network_parallelization} and~\ref{network_linearcombination} to account for parallelizations and linear combinations, respectively, of neural networks with shared input.

\begin{lemma}
\label{lem:shared_input_para}
Let $n,d,L\in\N$ and, for $i\in\{1,2,\dots,n\}$, let $d'_i\in\N$ and $\Phi_i\in\cN_{d,d'_i}$ with $\cL(\Phi_i)=L$.
Then, there exists a network $\Psi \in \cN_{d,\sum_{i=1}^n d'_i}$ with $\cL(\Psi) = L$, $\cM(\Psi) = \sum_{i=1}^n\cM(\Phi_i)$, $\mathcal{W}(\Psi) \le \sum_{i=1}^n \mathcal{W}(\Phi_i)$, $\cB(\Psi)=\max_i\cB(\Phi_i)$, and satisfying
\begin{align*}
    \Psi(x)&=(\Phi_1(x),\Phi_2(x),\dots,\Phi_n(x))\in\R^{\sum_{i=1}^n d'_i},
\end{align*}
for $x\in\R^d$.
\end{lemma}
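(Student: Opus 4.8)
The plan is to mimic the proof of Lemma~\ref{network_parallelization}, the only modification being that the very first affine transformation must read a single shared copy of the input rather than a block-structured one. First I would write each constituent network as
$\Phi_i = W^i_L\circ\rho\circ W^i_{L-1}\circ\rho\circ\dots\circ\rho\circ W^i_1$
with $W^i_\ell(x)=A^i_\ell x+b^i_\ell$, $A^i_\ell\in\R^{N^i_\ell\times N^i_{\ell-1}}$, $b^i_\ell\in\R^{N^i_\ell}$, and $N^i_0=d$, $N^i_L=d'_i$. I set $N_0:=d$ and $N_\ell:=\sum_{i=1}^n N^i_\ell$ for $\ell\in\{1,\dots,L\}$. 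For $\ell\in\{2,\dots,L\}$ I use, exactly as in Lemma~\ref{network_parallelization}, the block-diagonal matrices $A_\ell:=\diag(A^1_\ell,\dots,A^n_\ell)\in\R^{N_\ell\times N_{\ell-1}}$ and the vectors $b_\ell:=(b^1_\ell,\dots,b^n_\ell)\in\R^{N_\ell}$. For the first layer I instead take $A_1\in\R^{N_1\times d}$ to be the matrix whose rows are obtained by stacking the rows of $A^1_1,\dots,A^n_1$ in this order, so that $A_1x=(A^1_1x,A^2_1x,\dots,A^n_1x)$ for $x\in\R^d$, together with $b_1:=(b^1_1,\dots,b^n_1)\in\R^{N_1}$. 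Setting $W_\ell(x):=A_\ell x+b_\ell$ and $\Psi:=W_L\circ\rho\circ W_{L-1}\circ\rho\circ\dots\circ\rho\circ W_1$ then produces a network in $\cN_{d,\sum_{i=1}^n d'_i}$, and the case $L=1$ is the degenerate instance where $\Psi$ is simply the stacked affine map.

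Then I would verify the claimed properties. By induction on $\ell$ one shows that the $\ell$-th layer of $\Psi$ outputs the concatenation of the outputs of the $\ell$-th layers of the $\Phi_i$, all evaluated at the common input $x$; the base case $\ell=1$ is immediate from the definition of $A_1,b_1$, and the induction step uses that $\rho$ acts componentwise together with the block-diagonality of $A_\ell$ for $\ell\ge2$. This gives $\Psi(x)=(\Phi_1(x),\dots,\Phi_n(x))$. The depth is $\cL(\Psi)=L$ by construction. For the connectivity, I observe that stacking the rows of $A^1_1,\dots,A^n_1$ neither creates nor destroys nonzero entries, so $A_1$ has exactly $\sum_i$ (number of nonzeros of $A^i_1$) nonzero entries and $b_1$ exactly $\sum_i$ (number of nonzeros of $b^i_1$); for $\ell\ge2$ block-diagonality gives the same additivity; summing over all layers yields $\cM(\Psi)=\sum_{i=1}^n\cM(\Phi_i)$. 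For the width, $N_0=d=N^i_0\le\cW(\Phi_i)\le\sum_{i=1}^n\cW(\Phi_i)$ (using $n\ge1$) and $N_\ell=\sum_{i=1}^n N^i_\ell\le\sum_{i=1}^n\cW(\Phi_i)$ for $\ell\ge1$, hence $\mathcal{W}(\Psi)\le\sum_{i=1}^n\cW(\Phi_i)$. Since every entry of every $A_\ell$ and $b_\ell$ is an entry of some $A^i_\ell$ or $b^i_\ell$, we get $\cB(\Psi)=\max_i\cB(\Phi_i)$. Finally I would note that the non-degeneracy conventions of Definition~\ref{def:NN} and Remark~\ref{remark:non-degeneracy} are inherited from the $\Phi_i$ (or restored by pruning the finitely many nodes that become degenerate, which affects none of the above bounds).

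There is essentially no hard step here; the argument is a routine bookkeeping variant of Lemma~\ref{network_parallelization}. The only points requiring a moment's care are the connectivity count for the first layer—where one must use that row-stacking of the $A^i_1$ preserves the number of nonzeros, rather than placing them block-diagonally over $nd$ notional input coordinates as Lemma~\ref{network_parallelization} does—and the fact that the width statement is an inequality rather than an equality, because the input layer of $\Psi$ may fail to be its widest layer.
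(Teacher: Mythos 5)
Your construction is exactly the paper's: replace the block-diagonal first-layer matrix of Lemma~\ref{network_parallelization} by the vertical stack $A_1=(A^1_1;\dots;A^n_1)\in\R^{N_1\times d}$ and leave all subsequent layers block-diagonal. The verification of $\cM,\cW,\cB$ and the output formula matches the intent of the paper's (terse) proof, and your remark about why $\mathcal{W}$ becomes an inequality rather than an equality is a correct observation.
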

\begin{proof}
The claim is established by following the construction in the proof of Lemma~\ref{network_parallelization}, but with the matrix $A_1=\diag(A^1_1,A^2_1,\dots,A^n_1)$ replaced by 
\begin{align*}
   A_1=\begin{pmatrix}A^1_1 \\ \vdots \\ A^n_1\end{pmatrix}\in \R^{(\sum_{i=1}^n N_1^i)\times d},
\end{align*}
where $N_1^i$ is the dimension of the first layer of $\Phi_i$.
\end{proof}

\begin{lemma}
\label{lem:shared_input_lc}
Let $n,d,d',L\in\N$ and, for $i\in\{1,2,\dots,n\}$, let $a_i\in\R$ and $\Phi_i\in\cN_{d,d'}$ with $\cL(\Phi_i)=L$.
Then, there exists a network $\Psi \in \cN_{d,d'}$ with $\cL(\Psi) = L$, $\cM(\Psi) \le \sum_{i=1}^n\cM(\Phi_i)$, $\mathcal{W}(\Psi) \leq \sum_{i=1}^n \mathcal{W}(\Phi_i)$, $\cB(\Psi)=\max_i\{|a_i|\cB(\Phi_i)\}$, and satisfying
\begin{align*}
    \Psi(x)&=\sum_{i=1}^n a_i \Phi_i(x)\in\R^{d'},
\end{align*}
for $x\in\R^d$.
\end{lemma}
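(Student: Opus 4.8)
The plan is to merge the two constructions already used in the paper: the \emph{stacked first layer} from the proof of Lemma~\ref{lem:shared_input_para} (which lets all $n$ sub-networks be fed the common input $x\in\R^d$) together with the \emph{collapsed last layer} from the proof of Lemma~\ref{network_linearcombination} (which scales the $i$-th sub-network's output by $a_i$ and sums). Concretely, write each $\Phi_i = W^i_L\circ\rho\circ W^i_{L-1}\circ\rho\circ\dots\circ\rho\circ W^i_1$ with $W^i_\ell(x)=A^i_\ell x + b^i_\ell$, $A^i_\ell\in\R^{N^i_\ell\times N^i_{\ell-1}}$, $b^i_\ell\in\R^{N^i_\ell}$, and $N^i_0=d$, $N^i_L=d'$. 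I would then build $\Psi=W_L\circ\rho\circ\dots\circ\rho\circ W_1$ as follows. Take the first-layer matrix to be the vertical stack $A_1=(A^1_1;A^2_1;\dots;A^n_1)\in\R^{(\sum_i N^i_1)\times d}$ with bias $b_1=(b^1_1,\dots,b^n_1)$, exactly as in Lemma~\ref{lem:shared_input_para}; for $\ell\in\{2,\dots,L-1\}$ take the block-diagonal $A_\ell=\diag(A^1_\ell,\dots,A^n_\ell)$ and $b_\ell=(b^1_\ell,\dots,b^n_\ell)$, exactly as in Lemma~\ref{network_parallelization}; and for the last layer take the block row $A_L=(a_1A^1_L,\dots,a_nA^n_L)\in\R^{d'\times(\sum_i N^i_{L-1})}$ and $b_L=\sum_{i=1}^n a_i b^i_L\in\R^{d'}$, exactly as in Lemma~\ref{network_linearcombination}. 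Since the forward pass keeps the $n$ activation streams in disjoint coordinate blocks through layer $L-1$ and the final affine map applies $a_i$ to block $i$ and adds, a direct calculation gives $\Psi(x)=\sum_{i=1}^n a_i\Phi_i(x)$ for all $x\in\R^d$.

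Next I would verify the four claimed quantities. Depth: $\cL(\Psi)=L$ by construction (with the degenerate cases $L=1$, where $\Psi$ is the single affine map $x\mapsto(\sum_i a_iA^i_1)x+\sum_i a_ib^i_1$, and $L=2$, where only the stacked and the block-row layers occur, handled just as in the cited lemmas). Width: the layer dimensions of $\Psi$ are $N_0=d$, $N_\ell=\sum_i N^i_\ell$ for $\ell\in\{1,\dots,L-1\}$, and $N_L=d'$; since $d\le\cW(\Phi_1)$, $d'\le\cW(\Phi_1)$, and $\sum_i N^i_\ell\le\sum_i\cW(\Phi_i)$, we get $\cW(\Psi)=\max_\ell N_\ell\le\sum_{i=1}^n\cW(\Phi_i)$. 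Connectivity: stacking the $A^i_1$ produces exactly the union of their nonzero entries, the interior block-diagonal layers contribute exactly $\sum_i$ of the nonzeros of $(A^i_\ell,b^i_\ell)$, scaling the rows of $A^i_L$ by $a_i$ cannot increase the number of nonzeros, and summing the biases can only cause cancellation; hence $\cM(\Psi)\le\sum_{i=1}^n\cM(\Phi_i)$. Weight magnitude: as in Lemma~\ref{network_linearcombination}, taking the maximum over all layers of $\|A_\ell\|_\infty$ and $\|b_\ell\|_\infty$ yields $\cB(\Psi)=\max_i\{|a_i|\cB(\Phi_i)\}$.

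There is no genuine obstacle here — the statement is a routine bookkeeping variant of Lemmas~\ref{network_parallelization},~\ref{network_linearcombination}, and~\ref{lem:shared_input_para}, all already proved. The only points that require a little care are (i) making sure the small-depth cases $L\in\{1,2\}$ are treated consistently with the case split in Definition~\ref{def:NN}, and (ii) confirming that replacing the block-diagonal first layer of Lemma~\ref{network_linearcombination} by the stacked first layer of Lemma~\ref{lem:shared_input_para} does not disturb the interior-layer or output-layer estimates, so that all four bounds still hold with the inequalities in exactly the stated directions. Both are immediate by inspection of the matrices involved.
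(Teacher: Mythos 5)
Your construction is exactly the paper's: the paper's (one-line) proof says to apply, to the stacked-first-layer construction of Lemma~\ref{lem:shared_input_para}, the same last-layer modification (block row $A_L=(a_1A^1_L,\dots,a_nA^n_L)$, bias $b_L=\sum_i a_ib^i_L$) used to pass from Lemma~\ref{network_parallelization} to Lemma~\ref{network_linearcombination}, which is precisely what you did. Your verification of the four bounds and the remarks on the degenerate depths $L\in\{1,2\}$ are consistent with the paper's terser treatment.
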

\begin{proof}
The proof follows directly from that of Lemma~\ref{lem:shared_input_para} with the same modifications as those needed in the proof of Lemma~\ref{network_linearcombination} relative to that of Lemma~\ref{network_parallelization}.
\end{proof}

\section{Tail compactness for Besov spaces}\label{Besov_tail}
We consider the Besov space $B^m_{p,q}([0,1])$ \cite{mallat_wavelet_tour} given by the set of functions $f\in L^2([0,1])$ satisfying
\begin{align}\label{eq:Besov_wave_rep}
    \|f\|_{m,p,q}:=\|(2^{n(m+\frac{1}{2}-\frac{1}{p})}\|(\langle f,\psi_{n,k}\rangle)_{k=0}^{2^n-1}\|_{\ell^p})_{n\in\N_0}\|_{\ell^q}<\infty,
\end{align}
with $\mathcal{D}=\{\psi_{n,k}\colon n\in\N_0, k=0,\dots,2^n - 1\}$ an orthonormal wavelet basis\footnote{The space does not depend on the particular choice of mother wavelet $\psi$ as long as $\psi$ has at least $r$ vanishing moments and is in $C^r([0,1])$ for some $r>m$. For further details we refer to Section~9.2.3 in \cite{mallat_wavelet_tour}.} for
$L^2([0,1])$ and $\ell^p$ denoting the usual sequence norm
\begin{align*}
    \|(a_i)_{i\in I}\|_{\ell^p}=
    \begin{cases}
     \left(\sum_{i\in I}|a_i|^p\right)^{\frac{1}{p}}, & 1\leq p < \infty\\
     \sup_{i\in I}|a_i|, & p = \infty
     \end{cases}.
\end{align*}
The unit ball in $B^m_{p,q}([0,1])$ is 
\begin{align}\label{eq:Besov_unitball}
\mathcal{U}(B^m_{p,q}([0,1])) = \{f\in L^2([0,1])\colon \|f\|_{m,p,q}\leq  1\}.
\end{align}
For simplicity of notation, we set $a_{n,k}(f):=\langle f,\psi_{n,k}\rangle$ and $A_n(f):=(a_{n,k}(f))_{k=0}^{2^n-1}\in\R^{2^n}$, for $n\in\N_0$.
We now want to verify that for $q \in [1,2]$ tail compactness holds for the pair ($\mathcal{U}(B^m_{p,q}([0,1])),\mathcal{D})$ under the ordering $\mathcal{D}=(\mathcal{D}_0,\mathcal{D}_1,\dots)$, where $\mathcal{D}_n:=\{\psi_{n,k}\colon  k=0,\dots,2^n-1\}$. To this end, we first note that owing to $\sum_{n=0}^N|\mathcal{D}_n|=2^{N+1}-1$, we have tail compactness according to \eqref{eq:tailcompactness} if there exist $C,\beta>0$ such that for all $f\in\mathcal{U}(B^m_{p,q}([0,1]))$, $N\in\N$,
\begin{align}\label{eq:Besov_tail_comp}
\left\|f-\sum_{n=0}^N\sum_{k=0}^{2^n-1}a_{n,k}(f)\psi_{n,k}\right\|_{L^2([0,1])}\leq C (2^{N+1})^{-\beta}.
\end{align}
To see that \eqref{eq:Besov_wave_rep} implies \eqref{eq:Besov_tail_comp}, we note that by orthonormality of $\mathcal{D}$,
\begin{align*}
    \left\|f-\sum_{n=0}^N\sum_{k=0}^{2^n-1}a_{n,k}(f)\psi_{n,k}\right\|_{L^2([0,1])}
    &= \left\|\sum_{n=N+1}^\infty\sum_{k=0}^{2^n-1}a_{n,k}(f)\psi_{n,k}\right\|_{L^2([0,1])}
    =\left(\sum_{n=N+1}^\infty\sum_{k=0}^{2^n-1}|a_{n,k}(f)|^2\right)^{\frac{1}{2}}
    \\
    &=\|(\|A_n(f)\|_{\ell^2})_{n=N+1}^\infty\|_{\ell^2}.
\end{align*}
As the $A_n(f)$ are finite sequences of length $|\mathcal{D}_n|=2^n$, it follows, by application of Hölder's inequality, that $\|A_n(f)\|_{\ell^2} \leq 2^{n(\frac{1}{2}-\frac{1}{p})}\|A_n(f)\|_{\ell^p}$. Together with $\|\cdot\|_{\ell^2}\leq\|\cdot\|_{\ell^q}$, for $q\leq 2$, \eqref{eq:Besov_wave_rep} then ensures, for all $f\in\mathcal{U}(B^m_{p,q}([0,1]))$ and $q \in [1,2]$, that
\begin{align*}
    \|(\|A_n(f)\|_{\ell^2})_{n=N+1}^\infty\|_{\ell^2}
    &\leq\|(2^{n(\frac{1}{2}-\frac{1}{p})}\|A_n(f)\|_{\ell^p})_{n=N+1}^\infty\|_{\ell^q}
    \leq 2^{-(N+1)m}\|(2^{n(m+\frac{1}{2}-\frac{1}{p})}\|A_n(f)\|_{\ell^p})_{n=N+1}^\infty\|_{\ell^q}\\
    &\leq 2^{-(N+1)m}\|f\|_{m,p,q} \leq (2^{N+1})^{-m},
\end{align*}
which establishes (\ref{eq:Besov_tail_comp}) with $C=1$ and $\beta=m$.

\section{Tail compactness for modulation spaces}\label{Modulation_tail}

We consider tail compactness for unit balls in (polynomially) weighted modulation spaces, which, for $p,q\in[1,\infty)$, are defined as follows
\begin{align*}
    M^s_{p,q}(\R):=\{f\colon\|f\|_{M^s_{p,q}(\R)}<\infty\},
\end{align*}
with
\begin{align*}
    \|f\|_{M^s_{p,q}(\R)}:=\left(\int_\R\left(\int_\R|V_w f(x,\xi)|^p (1+|x|+|\xi|)^{sp}\mathrm{d}x\right)^{\frac{q}{p}}\mathrm{d}\xi\right)^{\frac{1}{q}},
\end{align*}
where 
\begin{align*}
    V_w f(x,\xi):=\int_\R f(t)\,\overline{w(t-x)}e^{-2\pi i t\xi}\mathrm{d}t, \quad x,\xi\,\in\,\R,
\end{align*}
is the short-time Fourier transform of $f$ with respect to the window function\footnote{The resulting modulation space does not depend on the specific choice of window function $w$ as long as $w$ is in the Schwartz space $\mathcal{S}(\R)=\{f\in C^\infty(\R)\colon \sup_{x\in\R}|x^\alpha f^{(\beta)}(x)|<\infty, \, \mathrm{for\,\, all}\ \alpha,\beta\in\N_0\}$, where $f^{(n)}$ stands for the $n$-th derivative of $f$.} $w\in\mathcal{S}(\R)$.  

Next, let $g\in L^2(\R)$ with $\|g\|_{L^2(\R)}=1$ and $g(x)=\overline{g(-x)}$ such that the Gabor dictionary $\mathcal{G}(g,\frac{1}{2},1,\R)$ is a tight frame \cite{Morgenshtern-Boelcskei-2012} for $L^2(\R)$. Then, the Wilson dictionary $\mathcal{D}=\{\psi_{k,n}\colon (k,n)\in\Z\times\N_0\}$ with
\begin{align*}
    \psi_{k,0}&=T_kg , &k\in\Z,\\
    \psi_{k,n}&=\tfrac{1}{\sqrt{2}}T_{\frac{k}{2}}(M_n+(-1)^{k+n}M_{-n})g, &(k,n)\in\Z\times\N,
\end{align*}
is an orthonormal basis for $L^2(\R)$ (see \cite[Thm.~8.5.1]{grochenig2013foundations}).
We have, for every $f\in M^s_{p,q}(\R)$, the expansion \cite[Thm.~12.3.4]{grochenig2013foundations}
\begin{align*}
    f=\sum_{(k,n)\in\Z\times\N_0} c_{k,n}(f) \psi_{k,n}, \quad \text{where}\quad c_{k,n}(f)=\langle f,\psi_{k,n} \rangle, \quad c(f)\,\in\,\ell^s_{p,q}(\Z\times\N_0), 
\end{align*}
with $\ell^s_{p,q}(\Z\times\N_0)$ the space of sequences $c\in\R^{\Z\times\N_0}$ satisfying
\begin{align*}
    \|c\|_{\ell^s_{p,q}(\Z\times\N_0)}:=\left(\sum_{n\in\N_0}\left(\sum_{k\in\Z}|c_{k,n}|^p(1+|\tfrac{k}{2}|+|n|)^{sp}
    \right)^{\frac{q}{p}}\right)^{\frac{1}{q}}<\infty.
\end{align*}
Moreover, there exists \cite[Thm.~12.3.1]{grochenig2013foundations} a constant $D\geq 1$ such that, for all $f\in M^s_{p,q}(\R)$,
\begin{align*}
   \tfrac{1}{D} \|f\|_{M^s_{p,q}(\R)}\leq \|c(f)\|_{\ell^s_{p,q}(\Z\times\N_0)} \leq D \|f\|_{M^s_{p,q}(\R)}.
\end{align*}
In particular, we can characterize the unit ball of $M^s_{p,q}(\R)$ according to
\begin{align*}
\mathcal{U}(M^s_{p,q}(\R))=\{f\colon  \|c(f)\|_{\ell^s_{p,q}(\Z\times\N_0)} \le D\}.
\end{align*}
We now order the Wilson basis dictionary as follows.
Define $\mathcal{D}_0:=\{\psi_{0,0}\}$ and 
\begin{align*}
\mathcal{D}_\ell:=\{\psi_{k,n}\colon |k|,n\leq \ell\}\setminus\bigcup_{i=0}^{\ell-1}\mathcal{D}_{i}
\end{align*} for $\ell \ge 1$, and order the overall dictionary according to $\mathcal{D}=(\mathcal{D}_0,\mathcal{D}_1,\dots)$. Owing to 
$\sum_{\ell=0}^N|\mathcal{D}_\ell|=(2N+1)(N+1)$, we have tail compactness for the pair $(\mathcal{U}(M^s_{p,q}(\R)),\mathcal{D})$ 
if there exist $C,\beta>0$ such that, for all $f\in\mathcal{U}(M^s_{p,q}(\R))$, $N\in\N$,
\begin{align}\label{eq:Besov_tail_comp_Gabor}
\left\|f-\sum_{n=0}^N\sum_{k=-N}^N c_{k,n}(f)\psi_{k,n}\right\|_{L^2(\R)}\leq CN^{-\beta}.
\end{align}
We restrict our attention to $p,q\leq 2$ and use orthonormality of $\mathcal{D}$ and
the fact that $\|\cdot\|_{\ell^2}\leq\|\cdot\|_{\ell^p}$, for $p\leq 2$, to obtain, for all $f\in\mathcal{U}(M^s_{p,q}(\R))$,
\begin{align*}
  \left\|f-\sum_{n=0}^N\sum_{k=-N}^N c_{k,n}(f)\psi_{k,n}\right\|_{L^2(\R)} &=  
  \left\|\sum_{n>N}\sum_{|k|>N} c_{k,n}(f)\psi_{k,n}\right\|_{L^2(\R)} = 
  \left(\sum_{n>N}\sum_{|k|>N} |c_{k,n}(f)|^2\right)^{\frac{1}{2}}\\
  &\leq  \left(\sum_{n>N}\left(\sum_{|k|>N} |c_{k,n}(f)|^p\right)^{\frac{q}{p}}\right)^{\frac{1}{q}}\\
  &\leq (1+\tfrac{3}{2}N)^{-s}\left(\sum_{n>N}\left(\sum_{|k|>N} |c_{k,n}(f)|^p(1+|\tfrac{k}{2}|+|n|)^{sp}\right)^{\frac{q}{p}}\right)^{\frac{1}{q}
  }\\
  & \leq (1+\tfrac{3}{2}N)^{-s}\|c(f)\|_{\ell^s_{p,q}(\Z\times\N_0)}\leq (3/2)^{-s}DN^{-s},
\end{align*}
which establishes tail compactness with $C=(3/2)^{-s}D$ and $\beta=s$.

\end{appendices}
\newpage 
\bibliographystyle{IEEEtran}
\bibliography{references}

\end{document}